\newcommand{\doublewidetilde}[1]{{%
  \mathpalette\double@widetilde{#1}%
}}
\newcommand{\double@widetilde}[2]{%
  \sbox\z@{$\m@th#1\widetilde{#2}$}%
  \ht\z@=.9\ht\z@
  \widetilde{\box\z@}%
}
\newcommand{\inner}[1]{\left\langle #1 \right\rangle}
\newcommand{\norm}[1]{\left\|#1\right\|}
\newcommand{\by}{\boldsymbol{y}}
\newcommand{\bw}{\boldsymbol{w}}
\newcommand{\bepsilon}{\boldsymbol{\epsilon}}
\def\ddefloop#1{\ifx\ddefloop#1\else\ddef{#1}\expandafter\ddefloop\fi}
\def\ddef#1{\expandafter\def\csname c#1\endcsname{\ensuremath{\mathcal{#1}}}}
\def\ddef#1{\expandafter\def\csname s#1\endcsname{\ensuremath{\mathsf{#1}}}}
\def\ddef#1{\expandafter\def\csname b#1\endcsname{\ensuremath{\mathbf{#1}}}}
\renewcommand{\P}{\mathbb{P}}
\newcommand{\R}{\mathbb{R}}
\newcommand{\E}{\mathbb{E}}
\newcommand{\N}{\mathbb{N}}
\def\bI{{\mathbf I}}
\newcommand{\diag}{\text{diag}}
\newcommand{\tr}{\text{tr}}
\newcommand{\ones}{\mathbf{1}}
\def\bA{{\boldsymbol A}}
\def\bB{{\boldsymbol B}}
\def\bD{{\boldsymbol D}}
\def\bF{{\boldsymbol F}}
\def\bG{{\boldsymbol G}}
\def\vf{{\boldsymbol f}}
\def\bK{{\boldsymbol K}}
\def\bL{{\boldsymbol L}}
\def\bR{{\boldsymbol R}}
\def\bT{{\boldsymbol T}}
\def\bU{{\boldsymbol U}}
\def\bV{{\boldsymbol V}}
\def\bX{{\boldsymbol X}}
\def\bZ{{\boldsymbol Z}}
\def\bM{{\boldsymbol M}}
\def\bb{{\boldsymbol b}}
\def\bg{{\boldsymbol g}}
\def\bu{{\boldsymbol u}}
\def\bv{{\boldsymbol v}}
\def\bw{{\boldsymbol w}}
\def\bx{{\boldsymbol x}}
\def\by{{\boldsymbol y}}
\def\bz{{\boldsymbol z}}
\def\bbeta{{\boldsymbol \beta}}
\def\bDelta{{\boldsymbol \Delta}}
\def\bSigma{{\boldsymbol \Sigma}}
\def\supp{{\rm supp}}
\def\cV{{\mathcal V}}
\def\cP{{\mathcal P}}
\def\cE{{\mathcal E}}
\def\cV{{\mathcal V}}
\def\cP{{\mathcal P}}
\def\K{{\mathbb K}}
\def\T{{\mathbb T}}
\def\T{{\mathbb T}}
\def\K{{\mathbb K}}
\def\rank{{\rm rank}}
\newcommand{\bKquad}{\bK^{(2)}} 
\renewcommand{\bK}{\boldsymbol{K}}
\renewcommand{\K}{\text{K}}
\renewcommand{\T}{\boldsymbol{T}}
\newcommand{\F}{\boldsymbol{F}}
\newcommand{\V}{\boldsymbol{V}}
\renewcommand{\L}{\boldsymbol{L}}
\newcommand{\ignore}[1]{}
\newcommand{\Etrain}{\cE_{\textnormal{train}}}
\newcommand{\train}{\text{train}}
\newcommand{\Tr}{\operatorname{Tr}}
\newcommand{\x}{\boldsymbol{x}}
\newcommand{\y}{\boldsymbol{y}}
\newcommand{\z}{\boldsymbol{z}}
\newcommand{\vmu}{\mbi{\mu}}
\newcommand{\vnu}{\mbi{\nu}}
\newcommand{\mix}{\textnormal{mix}}
\newcommand{\w}{\mbi{w}}
\renewcommand{\P}{\mathbb{P}}
\newcommand{\1}{\mathbf{1}}
\renewcommand{\Re}{\operatorname{Re}}
\def\mbi#1{\boldsymbol{#1}}
\def\v#1{\mbi{#1}} 
\def\mbi#1{\boldsymbol{#1}} 
\def\mc#1{\mathcal{#1}}
\def\mc{\mathcal}
\def\msf{\mathsf}
\def\tran{^\top}
\def\Real{\mathbb{R}}
\def\Hilbert{\mc H}
\newcommand{\abs}[1]{\left|#1\right|}
\def\one{\mathbf{1}}
\newcommand{\round}[1]{\left(#1\right)}
\newcommand{\Exp}{\mathbb{E}}
\newcommand{\lbX}{\overline{\bX}^{(2)}}
\newcommand{\lbXt}{\overline{\bX}^{(2)\top}}
\newenvironment{proofoftheorem}[1]{\par\noindent{\bf Proof of Theorem #1\ }}{\hfill\BlackBox\\[2mm]}
\newenvironment{proofoflemma}[1]{\par\noindent{\bf Proof of Lemma #1\ }}{\hfill\BlackBox\\[2mm]}
\begin{document}

\title{Universality of Kernel Random Matrices and Kernel Regression in the Quadratic Regime}

\author{\name Parthe Pandit \email pandit@iitb.ac.in \\
       \addr Center for Machine Intelligence and Data Science\\
       Indian Institute of Technology Bombay\\
       Mumbai, Maharashtra 400076, India
       \AND
       \name Zhichao Wang \email zhichao.wang@berkeley.edu \\
       \addr International Computer Science Institute\\
       Department of Statistics\\
       University of California, Berkeley\\
       Berkeley, CA 94720, USA
       \AND
       \name Yizhe Zhu \email yizhezhu@usc.edu\\
       \addr Department of Mathematics\\
       University of Southern California\\
       Los Angeles, CA 90089, USA
}

\maketitle

\begin{abstract}%
Kernel ridge regression (KRR) is a popular class of machine learning models that has become an important tool for understanding deep learning.  Much of the focus thus far has been on studying the proportional asymptotic regime, $n \asymp d$, where $n$ is the number of training samples and $d$ is the dimension of the dataset. In the proportional regime, under certain conditions on the data distribution, the kernel random matrix involved in KRR exhibits behavior akin to that of a linear kernel. In this work, we extend the study of kernel regression to the quadratic asymptotic regime, where $n \asymp d^2$. In this regime, we demonstrate that a broad class of inner-product kernels exhibits behavior similar to a quadratic kernel. Specifically, we establish an operator norm approximation bound for the difference between the original kernel random matrix and a quadratic kernel random matrix with additional correction terms compared to the Taylor expansion of the kernel functions. The approximation works for general data distributions under a Gaussian-moment-matching assumption with a covariance structure. This new approximation is utilized to obtain a limiting spectral distribution of the original kernel matrix and characterize the precise asymptotic training and test errors for KRR in the quadratic regime when $n/d^2$ converges to a non-zero constant. The generalization errors are obtained for (i) a random teacher model, (ii) a deterministic teacher model where the weights are perfectly aligned with the covariance of the data. Under the random teacher model setting, we also verify that the generalized cross-validation (GCV) estimator can consistently estimate the generalization error in the quadratic regime for anisotropic data. Our proof techniques combine moment methods, Wick's formula, orthogonal polynomials, and resolvent analysis of random matrices with correlated entries. 
\end{abstract}

\begin{keywords}
  kernel ridge regression, random matrix theory, random tensor, high-dimensional statistics, generalization theory.
\end{keywords}


\section{Introduction}
Deep neural networks have become the dominant class of models in machine learning, breaking new benchmarks every few weeks. A certain architecture of deep neural networks, wide neural networks, is closely related to the kernel methods \citep{jacot2018neural}. Kernel methods \citep{scholkopf2002learning,williams2006gaussian} also exhibit many phenomena previously thought to be specific to deep neural networks \citep{belkin2018understand}.
Consequently, understanding kernel models in high-dimensional limits has gathered a lot of renewed attention due to their analytical traceability \citep{radhakrishnan2024mechanism}. 

In recent years, the study of kernel ridge regression (KRR) in high-dimensional settings has gained attention due to its relevance in understanding modern machine learning phenomena such as benign overfitting \citep{bartlett2020benign,tsigler2023benign,bartlett2021deep} and the double descent risk curve \citep{belkin2019reconciling,mei2019generalization}. High-dimensional asymptotics reveal that models can generalize well even in regimes where the number of parameters far exceeds the number of data points.  The multiple descent curves \citep{liang2020multiple} observed in some settings further enrich this landscape. Notably, the emergence of the neural tangent kernel (NTK) framework \citep{jacot2018neural} has provided a powerful framework to analyze the training dynamics and generalization behavior of overparameterized neural networks. NTK connects infinite-width neural networks to kernel methods, such as KRR, allowing for a tractable theoretical analysis and shedding light on how such overparameterized models exhibit generalization.

A particular line of attack towards understanding kernel methods has been using asymptotic analysis via random matrix theory \citep{el2010spectrum,mei2019generalization,bartlett2021deep,montanari2020interpolation}.  The key argumentative piece in these results is that kernel matrices in the \textit{proportional asymptotic regime}, i.e., $n\asymp d$ where $n$ is the sample size and $d$ is the feature dimension of the dataset $\bX$, are well approximated by the Gram matrix of the input data. 
Consequently, in this regime, the kernel models are somewhat degenerate and can only be as powerful as linear models \citep{bartlett2021deep,ba2022high}.
While this has provided us with many interesting insights, intuitions, and limitations of kernel methods, the scope of this asymptotic regime is limited. Many researchers have analyzed the more general polynomial regime of $n\asymp d^\ell$, for $\ell>1$, e.g., \cite{mei2021generalization,donhauser2021rotational,xiao2022precise,lu2022equivalence,dubova2023universality,wang2023overparameterized}. However, general covariance structures of the data distribution were not considered in most of the previous works beyond the linear regime. One of our motivating questions in this paper is to tackle this situation:
\begin{center}
    \textit{What is the asymptotic behavior of kernel regression beyond the proportional regime for general data distribution with a covariance structure?}
\end{center}

In this work, we make headway into this question in the \textit{asymptotic quadratic regime}, i.e., $n\asymp d^2$. For a large class of inner-product kernels, the kernel matrices for high-dimensional datasets are well approximated by a degree-2 polynomial kernel matrix, which depends on the data matrix $\bX$ and the kernel function $f$. Using this approximation, we derive the precise description of the limiting eigenvalue distribution of the kernel random matrix under this asymptotic quadratic regime and study the corresponding kernel regression problem with precise asymptotics for training and generalization errors.

\subsection{Main contributions}
We study a large class of inner-product kernels
\begin{align} \label{eq:kernel_function}
    K(\bx,\bz) = f\round{\frac{\inner{\x,\z}}{d}}, \quad  \bx, \bz  \in \mathbb R^d.
\end{align}
Consider independent random vectors $\bx_1,\dots, \bx_n$ in $\mathbb R^{d}$ with a  covariance  structure $\bSigma$.  Denote the data matrix by $\bX\in\R^{n\times d}$. The kernel function in \eqref{eq:kernel_function} applied to the dataset induces a kernel random matrix $\bK\in \mathbb R^{n\times n}$ such that 
$K_{ij}= f\round{\frac{\inner{\bx_i,\bx_j}}{d}}$. We prove that under regularity assumptions for $f$ and certain moment conditions on $\bx_i,$ for $i\in [n]$, when $n\asymp d^2$, the kernel matrix behaves as a quadratic kernel.

In summary, we show the following three main results:
\begin{itemize}
    \item When $n=O(d^2)$, with high probability, the kernel random matrix $\bK$ can be approximated by a quadratic kernel random matrix $\bK^{(2)}$ under the spectral norm, where 
\begin{align}\label{eq:Kquad_intro}
    \bKquad = a_0 \one\one\tran + a_1 \bX\bX\tran + a_2 (\bX\bX\tran)^{\odot 2} + a\bI_n,
\end{align}
and  $a_0,a_1,a_2,a$ are constants depending on $f$ and the covariance $\bSigma$ given in \eqref{eq:Kquad}. Here $(\bX\bX\tran)^{\odot 2}$ is the Hadamard product of $\bX\bX^\top$  with itself.   
 Our non-asymptotic concentration bound works for non-isotropic data under a mild moment-matching condition. In particular, it holds for  Gaussian data with a covariance matrix $\bSigma$. The precise statement is given in Theorem \ref{thm:concentration}. The spectral norm approximation bound shows that $\bK$ can be asymptotically decomposed as a low-rank part, a quadratic kernel, and a regularization term. The structural result is important for understanding kernel ridge regression (KRR) in the quadratic regime.
\item When $n\to\infty$ and $\frac{d^2}{2n}\to\alpha$, we show the limiting spectral distribution of $\bK$ is given by a deformed Marchenko-Pastur law, which depends on the aspect ratio $\alpha$ and the covariance structure $\bSigma$. The detailed statement can be found in Theorem~\ref{thm:globallaw}.
\item Based on the above results, we study the performance of KRR with the kernel function $K$ in \eqref{eq:kernel_function} and random training data $\bX$. Our analysis reveals that the training and generalization error for KRR with kernel $\bK$ can be approximated by the quadratic kernel  $\bK^{(2)}$.
The asymptotic training error is presented in Theorem~\ref{thm:train_limit}. The asymptotic generalization error is characterized in Theorems~\ref{thm:test_limit} and~\ref{thm:test_limit_deterministic} for different teacher models.
To fulfill the proofs in generalization error, we provide a novel concentration inequality for quadratic forms of centered random tensor vectors and a general deterministic equivalence for spectral functions of a centered version of $(\bX\bX\tran)^{\odot 2}$; see Section~\ref{sec:appendix_prelim} for more details.  
\end{itemize}

\subsection{Related work}

\paragraph{Kernel random matrices.} The study of kernel random matrices has been an important topic in random matrix theory and high-dimensional statistics. For inner-product kernels, in the proportional regime where $n\asymp d$, there are two types of random matrix models in the literature. For $\bK_{ij}=f(\langle \bx_i, \bx_j\rangle /\sqrt{d})$, the limiting spectral distribution was first studied by \cite{cheng2013spectrum,do2013spectrum}. The concentration of the spectral norm was then analyzed by \cite{fan2019spectral}. For a different scaling where $\bK_{ij}=f(\langle \bx_i, \bx_j\rangle /d)$, the limiting spectral distribution and spectral norm bound were investigated by \cite{do2013spectrum,el2010spectrum,el2010information,amini2021concentration}.  When $f=x^k$,  $\bK$ is related to random tensor models recently considered in random matrix literature \citep{ambainis2012random,bryson2021marchenko,collins2022spectral,yaskov2023marchenko,baslingker2023hadamard,goulart2022random,au2023spectral}. 
In the polynomial regime, recently, \cite{lu2022equivalence,dubova2023universality} considered the spectrum of inner-product kernel matrices and proved a spectral universality result. Their kernel matrix is of the form $\bK_{ij}=f(\langle \bx_i, \bx_j\rangle /\sqrt{d})$ whose scaling is different from ours, which is $\bK_{ij}=f(\langle \bx_i, \bx_j\rangle /d)$. Although their scaling may better exhibit the bulk information from the nonlinear function, our matrix concentration and limiting law results can be directly applied to kernel regression training and generalization errors. An example class of inner-product kernels is of the form $K(\bx,\bz) = \Exp_{\w}[\sigma(\w\tran \x)\sigma(\w\tran \z)],$
where $\w$ is drawn from an isotropic Gaussian distribution when data vectors are of unit length \citep{wang2021deformed,murray2023characterizing}.

\paragraph{Kernel ridge regression in the polynomial regime.}
 When $n\asymp d$, the spectral analysis of rotational invariant kernels including \eqref{eq:kernel_function}, as studied by \cite{el2010spectrum}, has been applied to the study of KRR by \cite{liang2020just,elkhalil2020risk,liu2021kernel,bartlett2021deep,sahraee2022kernel}. Under the same regime, kernel spectral clustering has also been analyzed by \cite{couillet2016kernel,liao2019inner,seddik2019kernel,seddik2019kernelb,liao2021sparse,li2025eigen} in terms of informative and non-informative eigenstructures in the kernel matrices induced by nonlinearity. Beyond the proportional case, for general data distribution, \cite{liang2020multiple,donhauser2021rotational,aerni2023strong,lu2023optimal} provided bias and variance bounds of the generalization error for the consistency of KRR; and under certain data assumptions, \cite{ghorbani2020neural,ghorbani2021linearized,mei2021generalization} precisely showed that KRR can only learn low-degree polynomials based on the sample complexity $n$. When $n\asymp d^k$, for $k\in\N$, the performance of inner-product kernel with data uniformly drawn from the unit sphere $\mathbb{S}^{d-1}$ has been recently studied by \cite{xiao2022precise}, then, \cite{misiakiewicz2024non} proved a dimension-free approximation of KRR via a non-asymptotic deterministic equivalence given some concentration of the eigenfunctions in the spectral decomposition of the kernel. Recently, \cite{barzilai2023generalization,cheng2024characterizing} considered a non-asymptotic generalization error bound for KRR under a general setting and obtained conditions for benign over-fitting. Building on the work of \cite{liang2020multiple,ghorbani2021linearized}, \cite{gavrilopoulos2024geometrical} provided a more precise upper bound for the test error of KRR under a sub-Gaussian design. This advancement has been applied to data-dependent conjugate kernels, contributing to the research on trained features in feature learning \citep{ba2022high,gavrilopoulos2024geometrical}.

\paragraph{Random feature models.} Random feature models, as an efficient approximation of limiting kernel random matrices \citep{rahimi2007random,liu2021random}, have gained significant interest in deep learning \citep{pennington2017nonlinear,louart2018random}. In the ultra-wide neural networks \citep{arora2019exact}, random feature ridge regression (RFRR) is asymptotically equivalent to a kernel ridge regression (KRR) model  \citep{jacot2018neural,novak2018bayesian,matthews2018gaussian,wang2021deformed,wang2023overparameterized}, whose kernel is in the form of $K(\bx,\bz) = \Exp_{\w}[\sigma(\w\tran \x)\sigma(\w\tran \z)],$ with a Gaussian random vector $\w$. When the width is proportional to $n$ and $d$, while the random feature matrix will not converge to the corresponding kernel, the asymptotic behavior of RFRR remains tractable via random matrix theory. \cite{mei2019generalization,adlam2020neural,liao2020random,gerace2020generalisation,goldt2022gaussian,hu2020universality} showed that it is comparable to that of a linear model. Moreover, \cite{hu2020universality} concerns Gaussian equivalence of random feature models beyond the regression setting and
proves a conjecture from \cite{gerace2020generalisation,goldt2022gaussian}  that in the proportional limit, Gaussian universality holds for random feature models beyond the square loss. In the proportional regime, deterministic equivalence and generalization errors of deep random features were studied in \citep{schroder2023deterministic,schroder2024asymptotic}. Notably, their random matrix results hold under general distributional assumptions of the feature vectors $\phi(\bx)$ in the proportional regime, while this work studies KRR in the quadratic regime under distributional assumptions on data vectors $\bx$.

Beyond the proportional regime, most of these results considered the RFRR with the data points independently drawn from a specific high-dimensional distribution, e.g., uniform measure on the hypercube or $\mathbb{S}^{d-1}$ \citep{ghorbani2021linearized,hu2024asymptotics} or under the hypercontractivity assumption from \cite{mei2021generalization}. 
Very recently, \cite{latourelle2023matrix} studied the generalization error of RFRR for deterministic datasets, and \cite{defilippis2024dimension} studied the deterministic equivalence of the generalization error under the concentration property of eigenfunctions.
The asymptotic spectra of these random features or empirical NTK in neural networks have been investigated by
\cite{pennington2017nonlinear,louart2018random,mei2019generalization,fan2020spectra,benigni2019eigenvalue,benigni2022largest,wang2021deformed,wang2024nonlinear,benigni2025eigenvalue,liao2025random}. Additionally, \cite{liao2018spectrum} studied the inner-product kernel induced by random features in the proportional limit.

\paragraph{Quadratic regime and learning a quadratic function.} 
The quadratic regime has appeared in various tasks as an extension of the linear regime. \cite{chetelat2019middle} analyzed phase transition behavior for the GOE approximation of Wishart distributions in the regimes where $d=n^{\frac{k+1}{k+3}},k\in\mathbb N$ with $k=1$ corresponding to the quadratic regime. As another example, the ellipsoid fitting conjecture \citep{saunderson2013diagonal} with a threshold $n=d^2/4$ lies within this regime and was resolved by \cite{hsieh2023ellipsoid,tulsiani2023ellipsoid,bandeira2023fitting} up to a constant. Here, \cite{hsieh2023ellipsoid} utilized a constructed random matrix closely related to our model \eqref{eq:Kquad_intro}. 
In our results, we evaluate KRR under the quadratic regime to learn a quadratic function. 
The classical phase retrieval model \citep{walther1963question,balan2006signal} belongs to this learning problem. The learning dynamic of two-layer neural networks 
to learn a quadratic target function has been studied by \cite{sarao2020complex,arnaboldi2023escaping,martin2024impact}. More closely related to our work, \cite{ghorbani2019limitations} examined the population loss of random features with quadratic activation functions to learn a quadratic teacher.

\subsection{Technical novelties} 
\begin{table}[htbp] 
\centering
\scriptsize
\begin{tabular}{@{}llllll@{}}
\toprule
\textbf{Paper} & \textbf{Regime} & \textbf{Data Assumptions} & \textbf{Kernel Approximation}  \\
\midrule
\cite{el2010spectrum} &  $n \sim d$ & General covariance & Firsr-order Taylor expansion \\
\midrule
\begin{tabular}{@{}c@{}}\cite{ghorbani2021linearized} \\
\& \cite{mei2021generalization}
\end{tabular}
&  $ d^{k+\delta}\le n \le d^{k+1-\delta}$ & \begin{tabular}{@{}c@{}}
Specific distributions with \\hypercontractivity conditions \end{tabular}
& $k$-th orthogonal polynomials \\
\midrule
\cite{xiao2022precise} &  $n \sim d^k$ & Uniform measure on the  sphere & 
Gegenbauer polynomials 
\\
\midrule
\textbf{This paper} & $n \sim d^2$ & \begin{tabular}{@{}c@{}}
General covariance \\ under the moment-matching condition
\end{tabular}
& \begin{tabular}{@{}c@{}}
Second-order Taylor  \\
expansion with corrections \end{tabular}\\
\bottomrule
\end{tabular}
\caption{Comparison of related work on KRR under the polynomial regimes ($\delta\in(0,\frac12)$).}\label{table:compare}
\end{table}

This paper advances the theoretical understanding of kernel ridge regression (KRR) by extending analysis beyond the commonly studied proportional regime (where sample size $n\asymp d$ to the quadratic regime $n\asymp d^2$. The central contribution is a rigorous approximation of a broad class of inner-product kernel matrices by a quadratic kernel matrix, under general covariance structures. 
This includes:
\begin{itemize}
    \item Spectral norm approximation: A non-asymptotic bound that shows kernel matrices behave like quadratic kernel matrices with correction terms, not just Taylor approximations.
    \item Limiting spectral distribution: A novel characterization of the eigenvalue distribution of the kernel matrix using deformed Marchenko-Pastur laws.
    \item Precise training and generalization error analysis: Asymptotic formulas for training and generalization errors of KRR with both random and deterministic quadratic teacher functions.
\end{itemize}

Compared to the existing work \citep{mei2019generalization,xiao2022precise,montanari2020interpolation,mei2021generalization} on the precise asymptotic performance of KRR under specific distribution assumptions, e.g., uniform measure on $\mathbb{S}^{d-1}$ and the hypercube, we make no specific distribution assumption and do not require all moments of the data distribution to be bounded. Instead, we require a moment-matching condition with a Gaussian distribution. Our result does not share the same condition as \cite{xiao2022precise} since their data satisfies the uniform measure on the sphere, whose first 8 moments do not match those of a Gaussian. But formally, our asymptotic generalization error formula in Theorem~\ref{thm:test_limit}, when taking $\bSigma=\bI$, agrees with their result in the quadratic regime $n\asymp d^2$.  Our result is new even for isotropic Gaussian data when $n\asymp d^2$. We provide the first asymptotic analysis of KRR beyond the linear regime for anisotropic data with a covariance structure. Our technical assumption is the \textit{Gaussian moment matching} condition, which is necessary in our moment method proof of kernel approximation in Theorem~\ref{thm:concentration}. It is used to explore the orthogonal properties of the Hermite polynomial in the proof of Theorem~\ref{thm:test_limit}. In addition, compared with \cite{xiao2022precise,ghorbani2021linearized,mei2021generalization}, we impose a stronger smoothness condition on the kernel function $f$. We view this as a technical assumption that will likely be relaxed in future work. We summarized the comparison in Table~\ref{table:compare}.

To prove the concentration result,  we revisit the idea of Taylor expansion of kernel functions in \citep{el2010spectrum}. Different from \cite{el2010spectrum}, the higher-order error terms from the Taylor expansion are more challenging to bound, and new ``correction terms" not seen from the Taylor approximation appear in our corresponding quadratic kernel $\bK^{(2)}$. 
We then apply a trace method to control the error from higher-order expansion. 
Although a direct Hermite expansion relies on weaker regularity assumptions on the kernel function $f$ \citep{mei2021generalization}, without the isotropic Gaussian data assumption, controlling the approximation error of the $\mathbf{K}-\mathbf{K}^{(2)}$ becomes more challenging since each degree$-\ell$-Hermite polynomial contains lower order terms and it's difficult to argue they have negligible contribution.

Under the spectral norm, we can approximate $\bK$ by a simpler quadratic kernel $\bK^{(2)}$ defined in \eqref{eq:Kquad_intro}. By standard perturbation analysis,  $(\bX\bX\tran)^{\odot 2}$ is the leading term in the limiting spectrum of $\bK$. With the ``kernel trick" (see, e.g., \cite[Exercise 3.7.4]{vershynin2010introduction}), we can write $(\bX\bX\tran)^{\odot 2}$ as a Gram matrix with tensor vectors $\bx_i^{\otimes 2}, i\in [n]$.  We then use the result of \cite{bai2008large} for sample covariance matrices to study their limiting spectrum.

Finally, equipped with the random matrix results above, we characterize the asymptotic performance of KRR. The analysis relies on the connection between the spectrum of $\bK$ and the prediction risks of KRR. We carefully quantify the approximation error when replacing $\bK$ with $\bK^{(2)}$ in the training and generalization errors for KRR with $\bK$. After this simplification, we analyze the asymptotic behavior of KRR with a quadratic kernel $\bK^{(2)}$. Then, the challenge becomes to establish the deterministic equivalences of some functional of $\bK^{(2)}$ and its resolvent. To fulfill this, we establish a new concentration inequality (Lemma~\ref{lem:quad_high_power}) related to random quadratic forms of $\bx_i^{\otimes 2}$.

\subsection{Preliminaries} \label{sec:prelim}
 \paragraph{Notation.} We refer to vectors in boldcase ($\bx$), matrices in bold uppercase ($\bX$), scalars in normalcase ($x$). We use $\|\bx\|$ as the $\ell_2$-norm of a vector. For a matrix $\bX$, $\norm{\bX}$ is its operator norm and $\norm{\bX}_{\msf F}$ is its Frobenius norm. We use $K$ to represent a kernel function and $\bK$ to denote a kernel random matrix. $\bI_n$ denotes the $n\times n$ identity matrix.  $\mathbb E_{\bx}[\cdot]$ means the expectation is only taken over the random vector $\bx$, conditioned on everything else.  we use $a_n\lesssim b_n$ to indicate $a_n\leq Cb_n$ for some  constant $C$ independent of $n,d$.
 
  For a  vector $\bx\in\Real^{d}$ we denote its \textit{tensor product} by $\bx^{ \otimes 2}\in\Real^{ d^2}$ whose index set is $\{(i,j): i,j\in [d] \}$ such that  $\left(\bx^{ \otimes 2}\right)_{i,j}=\bx(i)\bx(j)$, where $\bx(j)$ is the $j$-th entry of vector $\bx$.
 For a matrix $\bA $ whose $(i,j)$-th entry is $a_{i,j}$, we denote the $k$-th \textit{Hadamard product} of $\bA$ as $\bA^{\odot k}$ whose $(i,j)$-th entry is $a_{ij}^k$, for any $k\in\N$. We will use the following equation: given a matrix $\bX\in \R^{n\times d}$, the $(i,j)$-th entry of $(\bX\bX\tran)^{\odot k}$ is
\begin{align}\label{eq:tensor_representation}
    [(\bX\bX\tran)^{\odot k}]_{ij} := \inner{\x_i,\x_j}^k = \inner{\x_i^{\otimes k},\x_j^{\otimes k}},
\end{align}
for $i,j\in[n]$, where $\x_i\tran$ is the $i$-th row of $\bX$, and the the inner product between $\x_i^{\otimes k}$ and $\x_j^{\otimes k}$ is the vector inner product in $\mathbb R^{d^k}$.

\paragraph{Random matrix theory.}
We include several definitions from random matrix theory.  For any $n\times n$ Hermitian matrix $\bA_n$ with eigenvalues $\lambda_1,\dots, \lambda_n$, the empirical spectral distribution of $\bA_n$ is defined by 
$ \mu_{\bA_n}=\frac{1}{n}\sum_{i=1}^n \delta_{\lambda_i}$. If $\mu_{\bA_n}\to\mu$ weakly as $n\to\infty$, then we call $\mu$ the limiting spectral distribution of $\bA_n$.  The \textit{Marchenko-Pastur law} \citep{marchenko1967distribution} with a parameter $\gamma\in (0,+\infty)$ has a  density:
\begin{align}\label{eq:def_mualpha}
   \mu_{\gamma}^{\mathrm{MP}}&= \begin{cases}
        (1-\gamma^{-1})\delta_0+\nu_{\gamma},  & \gamma>1,\\
        \nu_{\gamma}, & \gamma\in (0,1], 
    \end{cases} \quad \text{ where}\\
 d\nu_{\gamma}(x)&=\frac{1}{2\pi}\frac{\sqrt{(\gamma_{+}-x)(x-\gamma_-)}}{\gamma x} \mathbf{1}_{x\in [\gamma_{-},\gamma_+]} dx,  \quad \gamma_{\pm} :=(1\pm \sqrt{\gamma})^2. \label{eq:def_nualpha}
\end{align}
Note that when $\gamma>1$, the total mass of $\nu_{\gamma}$ is $\gamma^{-1}$ and when $\gamma \in (0,1)$, its total mass  is 1.

\subsection{Organization of the paper} The rest of the paper is organized as follows. Precise and detailed statements of our main results are given in Section~\ref{section:main_result}. Additional definitions and lemmas are given in Appendix~\ref{sec:appendix}. Proof of the result for spectral norm approximation (Theorem \ref{thm:concentration}) is given in Appendix \ref{sec:concentration}.  The proof of the limiting spectral distribution (Theorem \ref{thm:globallaw}) is provided in Appendix \ref{sec:limiting_law}. In Appendices \ref{sec:krr} and \ref{sec:krr_test}, we provide the proof for the results on training error (Theorem \ref{thm:train_limit}) and generalization error (Theorem \ref{thm:test_limit} and Theorem~\ref{thm:test_limit_deterministic}) for kernel ridge regression, respectively.

\section{Main results}
\label{section:main_result}

\subsection{Quadratic approximation of inner-product kernel matrices}
Consider kernel function of the form $K(\bx,\bz) = f\round{\frac{\inner{\bx,\bz}}{d}}$, where $f$ is a function independent of $n, d$.
 Let $\bx_i$ be independent random vectors in $\mathbb R^d$ $i\in [n]$. Consider random kernel matrix $\bK\in\R^{n\times n}$ such that it $(i,j)$-th entry is defined by  $\bK_{ij} = K(\x_i,\x_j), i,j\in[n]$.

Our results will be stated under the following assumptions on the data distribution and the kernel function $f$.
\begin{assumption}\label{assump:ratio}
We assume that, for some absolute  constant $C_1>0$, $\frac{n}{d^{2}}\leq C_1$.
\end{assumption}
 \begin{assumption}\label{assump:data}
We assume that $\bx_i=\bSigma^{1/2} \bz_i\in\R^d$, where $\bSigma$  is a $d\times d$ positive semi-definite  matrix, and $\z_i\in\Real^d$ is a random vector with independent entries. Furthermore, for $i\in [n]$,  $k\in [d]$, 
$\mathbb E[(\bz_{i}(k))^{t}]=\mathbb E[g^t], t=1,2,\dots, 8$, where  $g\sim \cN(0,1)$. And $\mathbb E[|\bz_{i }(k)|^{90}]^{\frac{1}{90}}\leq C_2$  for some constant $C_2> 0$, and $\z_1,\dots,\z_n$ are independent.
\end{assumption}
Note that in Assumption~\ref{assump:data}, $\z_1,\dots,\z_n$ can have different distributions. Similar to Assumption~\ref{assump:data}, Gaussian moment matching assumptions also appear in non-Gaussian component analysis \citep{dudeja2022statistical} and the universality of local spectral statistics in random matrix theory \citep{tao2011random}.  We did not try to optimize the bounded moment assumption. The finite $90$-th moment condition in Assumption~\ref{assump:data} is convenient for deriving a $1-O(d^{-1/2})$ probability tail bound in Theorem~\ref{thm:concentration}.

\begin{assumption}\label{assump:sigma} $\|\bSigma\|\leq C_3$ for some constant $C_3>0$, and there exists   $\tau>0$ such that 
$\tau=\lim_{d\to\infty} \frac{\Tr\bSigma}{d}$.
\end{assumption}
\begin{assumption}\label{assump:nonlinear_f}
Kernel function $f: \mathbb R \to \mathbb R$ is a $C^{2}$-function in a neighborhood of $\tau$, and is $C^{5}$ in a neighborhood of $0$.
\end{assumption}

Denote the data matrix by $\bX\in\R^{n\times d}$, where all row vectors in $\bX$ are independent and satisfy Assumption~\ref{assump:data}. Under all the assumptions above, we introduce the following \textit{quadratic kernel matrix} $\bK^{(2)}$ as an approximation of $\bK$, where
\begin{align}
\bKquad=&\left(f(0)-\frac{f^{(4)}(0)(\Tr(\bSigma^2))^2}{8d^4}\right)\mathbf{1}\mathbf{1}^\top+\left(\frac{f'(0)}{d}+\frac{f^{(3)}(0)\Tr(\bSigma^2)}{2d^3} \right)\bX\bX ^\top 
\\&+\left(\frac{f''(0)}{2d^2}+ \frac{f^{(4)}(0)\Tr(\bSigma^2)}{4d^4}\right)   \left(\bX\bX^\top \right)^{\odot 2} \notag\\
&+ \left[f\left(\frac{\Tr\bSigma}{d} \right)-f(0)-f'(0)\frac{\Tr\bSigma}{d}-\frac{f''(0)}{2}    \left(\frac{\Tr\bSigma}{d}\right)^2\right] \bI_n.     \label{eq:K2}
\end{align}
For ease of notation, we write \eqref{eq:K2} as
\begin{align}\label{eq:Kquad}
\bKquad= a_0 \one\one\tran + a_1 \bX\bX\tran + a_2 (\bX\bX\tran)^{\odot 2} + a\bI_n,
\end{align}
where
\begin{align}
    a_0 &:= f(0)-\frac{f^{(4)}(0)(\Tr(\bSigma^2))^2}{8d^4}, \label{def:a0}\\  
    a_1 &:= \frac{f'(0)}{d}+\frac{f^{(3)}(0)\Tr(\bSigma^2)}{2d^3}, \label{def:a1}\\
    a_2 &:= \frac{f''(0)}{2d^2}+ \frac{f^{(4)}(0)\Tr(\bSigma^2)}{4d^4}, \label{def:a2}\\
    a &:= f\left(\frac{\Tr\bSigma}{d} \right)-f(0)-f'(0)\frac{\Tr\bSigma}{d}-\frac{f''(0)}{2}    \left(\frac{\Tr\bSigma}{d}\right)^2.\label{def:a}
\end{align}
Here, $a_0,a_1,a_2$ and $a$ are of different orders depending on $d$. These parameters are important to yield a sharp approximation of $\bK$. Notably, these coefficients are different from a direct, entrywise Taylor approximation of $\bK$. In $a_0, a_1$, and $a_2$, the first terms $f(0), \frac{f'(0)}{d}, $ and $\frac{f''(0)}{2d^2}$ are from Taylor expansion of $f$ at $0$, respectively. The additional terms in \eqref{def:a0}-\eqref{def:a2} appear in the proof when we aim to minimize the approximation error under the \textit{spectral norm}. 

Our first result is a non-asymptotic approximation error bound of $\bKquad-\bK$. 
\begin{theorem}[Quadratic kernel approximation]\label{thm:concentration}
Under Assumptions \ref{assump:ratio}-\ref{assump:nonlinear_f}, there exist constants $c,C>0$ depending only on $f, C_1,C_2$, and $C_3$ from the assumptions such that with probability at least $1-cd^{-1/2}$, we have 
    \begin{align}\label{eq:K_matrix}
        \norm{\bK-\bKquad}\leq Cd^{-\frac{1}{12}}.
    \end{align}
\end{theorem}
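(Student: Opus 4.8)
\textbf{Proof strategy for Theorem \ref{thm:concentration}.}

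The plan is to follow the Taylor-expansion philosophy of \cite{el2010spectrum} but push it to higher order, carefully tracking which error terms genuinely contribute to the operator norm and which can be absorbed into the correction terms that distinguish $\bKquad$ from the naive entrywise Taylor polynomial. First I would split the analysis into the diagonal and off-diagonal behavior of $\bK$. On the diagonal, $\bK_{ii}=f(\|\bx_i\|^2/d)$ and by Assumption~\ref{assump:data}--\ref{assump:sigma} one has $\|\bx_i\|^2/d \to \tau$ with fluctuations of order $d^{-1/2}$; a Taylor expansion of $f$ around $\tau$ (using that $f$ is $C^2$ near $\tau$) shows that the diagonal of $\bK$ equals $f(\Tr\bSigma/d)\bI_n$ up to an error whose operator norm is controlled by $\max_i |\|\bx_i\|^2/d-\Tr\bSigma/d|$, which is $O(d^{-1/2+o(1)})$ with the required probability after a union bound over $n=O(d^2)$ rows using the $90$-th moment hypothesis. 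The choice $a = f(\Tr\bSigma/d)-f(0)-f'(0)\Tr\bSigma/d-\tfrac{1}{2}f''(0)(\Tr\bSigma/d)^2$ in \eqref{def:a} is exactly what makes the diagonal of $\bK^{(2)}$ (which also picks up diagonal contributions $a_0 + a_1\|\bx_i\|^2/d + a_2(\|\bx_i\|^2/d)^2$ from the other three terms, all of which are $O(\tau^k)$ on the diagonal) match the diagonal of $\bK$ to leading order.

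For the off-diagonal part, write $u_{ij} = \inner{\bx_i,\bx_j}/d$ for $i\ne j$; these are of typical size $d^{-1/2}$ (since $\Var(\inner{\bx_i,\bx_j}) = \Tr(\bSigma^2) \asymp d$). Taylor-expanding $f(u_{ij}) = f(0) + f'(0)u_{ij} + \tfrac12 f''(0)u_{ij}^2 + \tfrac16 f^{(3)}(0)u_{ij}^3 + \tfrac{1}{24}f^{(4)}(0)u_{ij}^4 + R_{ij}$ with the order-5 remainder $R_{ij}$, the matrix with off-diagonal entries $f(0)+f'(0)u_{ij}+\tfrac12 f''(0)u_{ij}^2$ is already $a_0\one\one\tran + a_1\bX\bX\tran + a_2(\bX\bX\tran)^{\odot2}$ minus the small correction coefficients and minus its own diagonal. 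The key point is that the degree-3 and degree-4 terms $\tfrac16 f^{(3)}(0)(\bX\bX\tran/d)^{\odot3}$ and $\tfrac{1}{24}f^{(4)}(0)(\bX\bX\tran/d)^{\odot4}$ are \emph{not} negligible in operator norm on their own, but their ``non-oscillating'' parts are: using the tensor representation \eqref{eq:tensor_representation} and centering, $(\bX\bX\tran)^{\odot3}$ decomposes (via a Hermite-polynomial / Wick-type expansion of each entry in the Gaussian-matching moments) into a genuinely fluctuating piece of small operator norm plus a rank-deficient piece proportional to $\Tr(\bSigma^2)\,\bX\bX\tran$ (this is the $\tfrac{f^{(3)}(0)\Tr(\bSigma^2)}{2d^3}$ correction to $a_1$), and similarly $(\bX\bX\tran)^{\odot4}$ contributes a piece proportional to $\Tr(\bSigma^2)(\bX\bX\tran)^{\odot2}$, a piece proportional to $(\Tr(\bSigma^2))^2\one\one\tran$, and a fluctuating remainder — these give the correction terms in $a_2$ and $a_0$. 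I would make this rigorous by the moment (trace) method: to bound $\|\bE\|$ for the fluctuating error matrix $\bE$ it suffices to bound $\En\Tr(\bE^{2p})$ for $p$ a slowly growing power, expand into a sum over closed walks $i_1\to i_2 \to\cdots\to i_{2p}\to i_1$, and use Wick's formula under the Gaussian-moment-matching Assumption~\ref{assump:data} to see that the walks with ``new'' vertices at each step (which would otherwise dominate) cancel precisely against the correction terms, leaving only $O(d^{-\Omega(1)})$ per walk. This is where the moment matching up to order $8$ is used — it guarantees the relevant fourth- and higher-order joint moments of the $\bz_i(k)$ agree with the Gaussian ones so that Wick's formula applies at the precision needed.

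The remainder term $R_{ij}$ of the Taylor expansion is handled separately: $|R_{ij}| \lesssim |u_{ij}|^5$, and the matrix $(\bX\bX\tran/d)^{\odot 5}$ (and higher) has operator norm $O(d^{-\Omega(1)})$ because its entries are typically $d^{-5/2}$ while the matrix is $n\times n$ with $n\asymp d^2$; more carefully one bounds $\|(\bX\bX\tran/d)^{\odot 5}\|$ by its Frobenius norm or by a crude trace bound, getting something like $d^{-1/2}$, and then controls the tail event $\max_{i\ne j}|u_{ij}| \le d^{-1/2+o(1)}$ using the moment bound to restrict to the regime where the Taylor expansion is valid. I expect the \textbf{main obstacle} to be the trace-method bookkeeping in the previous paragraph: organizing the walk sum so that one sees \emph{all} the non-negligible contributions and verifies they are exactly captured by the four correction coefficients $a_0,a_1,a_2,a$ (with no leftover divergent term), while simultaneously tracking enough powers of $d$ to land at the stated $d^{-1/12}$ rate — the exponent $1/12$ being a non-obvious artifact of balancing the number of free indices against the decay of each Hadamard power and the growth of $p$, and of the weak $90$-th-moment truncation. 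The non-isotropic covariance $\bSigma$ adds a further layer: every combinatorial factor now carries traces of powers of $\bSigma$ (e.g. $\Tr(\bSigma^2)$, $\Tr(\bSigma^3)$, $\ldots$), which must be shown to be of the claimed order under Assumption~\ref{assump:sigma} and either cancel or feed into the corrections, rather than producing new structure.
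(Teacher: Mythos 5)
Your plan matches the paper's proof in all essentials: off-diagonal degree-5 Taylor expansion at $0$ plus a degree-2 expansion of the diagonal at $\Tr\bSigma/d$, identification of the extra coefficients in $a_0,a_1,a_2$ as the systematic (Hermite/Wick) parts of the cubic and quartic Hadamard powers under the Gaussian moment-matching condition, a trace-moment bound on the centered fluctuating matrices, and a Frobenius/max-entry bound on the Taylor remainder. The one real execution difference is that you propose bounding $\E\Tr(\bE^{2p})$ with a slowly growing $p$; the paper instead first subtracts the Wick corrections to form the explicitly centered Hermite-type matrices $\widetilde\T$ and $\widetilde\F$ and then bounds them with \emph{fixed} low powers ($\E\Tr\widetilde\T^{6}$ and $\E\Tr\widetilde\F^{4}$), which is what the finite $90$-th moment in Assumption~\ref{assump:data} realistically permits and is exactly the source of the $d^{-1/12}$ exponent.
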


Theorem \ref{thm:concentration} shows that for sufficiently large $n$, the random kernel matrix $\bK$ can be approximated by a much simpler quadratic kernel matrix $\bK^{(2)}$, which can be decomposed into a low-rank part, a Hadamard product term, and a regularization term. This extends the linear approximation result of \cite{el2010spectrum,couillet2016kernel,bartlett2021deep,sahraee2022kernel,ardakan2022equivalence,couillet2022random}. The polynomial error rate $d^{-\frac{1}{12}}$ might not be optimal (see Figure~\ref{fig:approx}); however, it suffices to have an $o(1)$ error bound for the asymptotic analysis of kernel ridge regression.

\subsection{The limiting eigenvalue distribution for the kernel matrix}
Since the asymptotic structure of $\bK$ can be represented by $\bK^{(2)}$, from standard perturbation analysis in random matrix theory \citep{bai2008large}, we can compute the limiting spectral distribution of $\bK$ by understanding the limiting spectral distribution of the Hadamard product $(\bX\bX\tran)^{\odot 2} $. 

From the tensor representation given in \eqref{eq:tensor_representation}, it suffices to study sample covariance matrices with independent row vectors given by $\bx_i^{\otimes 2}$. 
For any $k,\ell\in [d]$, 
   $(\bx_1^{\otimes 2})_{k\ell}= \bx_1(k) \bx_1(\ell)=(\bx_1^{\otimes 2})_{\ell k}$,
hence there are only $\binom{d+1}{2}$ many distinct coordinates in $\bx_1^{\otimes 2}$. We can define a \textit{reduced tensor product} (introduced by \cite{yaskov2023marchenko}), $\bx_i^{(2)}\in \mathbb R^{\binom{d+1}{2}}$ indexed by $\{(k,\ell): 1\le k\leq \ell\le d \}$ such that 
    \begin{equation}\label{eq:def_reduced}
    \bx_{i}^{(2)} (k,\ell)=
    \begin{cases}
        \sqrt{2}\bx_i(k)\bx_i(\ell) & k<\ell,\\
        \abs{\bx_i(k)}^2 & k=\ell.
    \end{cases}
    \end{equation}
Note that $\bx_i^{(2)}$ is not centered, e.g., if $\bSigma$ is diagonal, then for $k\le \ell\in [d]$, \begin{align}\label{eq:Ex2}
    \E[\bx_{i}^{(2)}(k,\ell)]=\delta_{k,\ell}\bSigma_{kk}.
    \end{align}
With \eqref{eq:def_reduced},  the following identity holds while reducing the dimension of the tensor vectors:
    \begin{align}\label{eq:tensor_product_identity}
        \langle \bx_i^{\otimes 2}, \bx_j^{\otimes 2}\rangle=\langle \bx_i^{(2)}, \bx_j^{(2)}\rangle.
    \end{align}
Let $\bSigma^{(2)}:=\E\left[(\bx_1^{(2)}-\E\bx_1^{(2)})  (\bx_1^{(2)}-\E\bx_1^{(2)}) ^\top \right]\in \mathbb R^{\binom{d+1}{2}\times \binom{d+1}{2}}$. This matrix encodes the covariance information of $\bx_1^{(2)}$. Under the Gaussian moment matching condition for $\bz_1$ in Assumption \ref{assump:data} and an additional assumption that  $\bSigma$ is diagonal, a quick calculation implies 
\begin{align} \label{eq:defSigma2}
\bSigma_{ij, k\ell}^{(2)}=\begin{cases}
    0 & \text{ if } (i,j)\not=(k,\ell),\\
    2\bSigma_{ii}\bSigma_{jj} &\text{ if } i\not=j, (i,j)=(k,\ell), \\
    3\bSigma_{ii}^2 &\text{ if } i=j=k=\ell.
\end{cases}
\end{align} 
 When  $\bSigma=\E\bx_1 \bx_1^\top$ is diagonal with bounded operator norm,  the matrix $\bSigma^{(2)}$ is also diagonal and has a bounded operator norm.   In this section, we need the following additional assumptions for our asymptotic analysis.
        
\begin{assumption}\label{assump:limitratio}
There exists $\alpha>0$ such that
  $\lim_{d\to\infty}\frac{d^2}{2n}=\alpha$.  
\end{assumption}

\begin{assumption}\label{assump:limitsigma}
We assume that $f''(0)\not=0$, $\bSigma$ is a diagonal matrix, and  $\bSigma^{(2)}$ has a limiting spectral distribution denoted by $\mu_{\bSigma^{(2)}}$.
\end{assumption}

Our next theorem characterizes the limiting eigenvalue distribution of $\bK$ after proper centering and scaling. 

\begin{theorem} [Limiting eigenvalue distribution]\label{thm:globallaw}
Under Assumptions \ref{assump:data}-\ref{assump:nonlinear_f} and Assumptions \ref{assump:limitratio}-\ref{assump:limitsigma},
the empirical spectral distribution of  $\frac{4\alpha}{f''(0)}(\bK-a \bI_n)$ converges in probability to a deformed Marchenko-Pastur law $\mu_{\alpha,\bSigma^{(2)}}$ defined as
   \begin{align}\label{eq:defmu}
       \mu_{\alpha,\bSigma^{(2)}}=  \begin{cases}
             (1-\alpha)\delta_0+\alpha\left(\nu_{\alpha}\boxtimes \mu_{\bSigma^{(2)}}\right) & \text{ if }\quad  0<\alpha <1,\\
            \alpha\left(\nu_{\alpha}\boxtimes \mu_{\bSigma^{(2)}}\right) & \text{ if } \quad \alpha\geq 1,
         \end{cases}
     \end{align}   
     where $\boxtimes$ denotes the multiplicative free convolution defined in Definition \ref{def:free_convolution} and $\nu_{\alpha}$ is defined in \eqref{eq:def_nualpha}. The same limit  holds for $\frac{4\alpha}{f''(0)}(\bK^{(2)}-a \bI_n)$.
     In particular, when $\bSigma=\bI_d$,  the empirical spectral distribution of  $\frac{2\alpha}{f''(0)}(\bK-a \bI_n)$ converges in probability to a distribution given by
     $
     \mu=
     \begin{cases}
             (1-\alpha)\delta_0+\alpha \nu_{\alpha} &\text{ if } \quad 0<\alpha <1,\\
             \alpha\nu_{\alpha}& \text{ if } \quad\alpha\geq 1,
         \end{cases}
     $
     where $\nu_\alpha$ is defined by \eqref{eq:def_nualpha}. 
\end{theorem}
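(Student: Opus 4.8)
\textbf{Proof plan for Theorem~\ref{thm:globallaw}.}
The starting point is Theorem~\ref{thm:concentration}, which gives $\|\bK-\bKquad\|\leq Cd^{-1/12}\to 0$ with high probability. Since the spectral norm of the difference goes to zero, the empirical spectral distributions of $\frac{4\alpha}{f''(0)}(\bK-a\bI_n)$ and $\frac{4\alpha}{f''(0)}(\bKquad-a\bI_n)$ have the same weak limit (the bounded-Lipschitz or Lévy distance between two ESDs is controlled by the operator norm of the difference, e.g. via the rank/Hoffman--Wielandt or the Bai inequality). So it suffices to analyze $\frac{4\alpha}{f''(0)}(\bKquad-a\bI_n) = \frac{4\alpha}{f''(0)}\big(a_0\one\one\tran + a_1\bX\bX\tran + a_2(\bX\bX\tran)^{\odot 2}\big)$. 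Next I would argue that the first two summands are negligible for the bulk: $a_0\one\one\tran$ is rank one, hence contributes nothing to the limiting ESD; and $a_1\bX\bX\tran$ has $\|a_1\bX\bX\tran\| \lesssim \frac{1}{d}\cdot\|\bX\bX\tran\| \lesssim \frac{1}{d}\cdot\max(n,d)\cdot\|\bSigma\| = O(d)\cdot\frac{1}{d}$... more carefully, $a_1 = O(1/d)$ and $\|\bX\bX^\top\|_{\op} = O(n/d + 1) = O(d)$ under Assumption~\ref{assump:limitratio}, so $\|a_1\bX\bX^\top\| = O(1)$; but after the rescaling by $\frac{4\alpha}{f''(0)}$ (an $O(1)$ factor) this term is only $O(1)$ in operator norm, which is \emph{not} automatically negligible. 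The cleaner route is that $a_1\bX\bX^\top$ has rank at most $d = o(n)$, so it perturbs the ESD of the $n\times n$ matrix on only an $O(d/n)=o(1)$ fraction of eigenvalues; by the rank inequality for ESDs it does not affect the weak limit. The same rank argument kills $a_0\one\one^\top$. Hence the limiting ESD equals that of $\frac{4\alpha a_2}{f''(0)}(\bX\bX\tran)^{\odot 2}$, and since $a_2 = \frac{f''(0)}{2d^2} + O(d^{-2}\cdot\frac{\Tr\bSigma^2}{d^2}) = \frac{f''(0)}{2d^2}(1+o(1))$, this is the ESD of $\frac{2\alpha}{d^2}(\bX\bX\tran)^{\odot 2}(1+o(1))$.

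The core of the argument is the limiting spectrum of $\frac{2\alpha}{d^2}(\bX\bX\tran)^{\odot 2}$. Using the kernel-trick identity \eqref{eq:tensor_representation} and \eqref{eq:tensor_product_identity}, write $(\bX\bX\tran)^{\odot 2} = \bY\bY\tran$ where $\bY\in\R^{n\times\binom{d+1}{2}}$ has rows $\bx_i^{(2)}$. This is a sample covariance (Gram) matrix with $n$ independent rows, each of intrinsic dimension $p:=\binom{d+1}{2}\sim d^2/2$, and $n/p \to 1/(2\alpha)$ (equivalently $p/n\to 2\alpha$). The rows are not centered — $\E\bx_i^{(2)} = \bmu$ with $\bmu(k,\ell)=\delta_{k\ell}\bSigma_{kk}$ under the diagonal assumption — but the mean vector $\bmu$ has only $d$ nonzero coordinates, so $\bY = \tilde\bY + \one\bmu\tran$ where $\one\bmu\tran$ is rank one; removing it changes the ESD of the $n\times n$ Gram matrix on at most two eigenvalues, hence does not affect the limit. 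So it suffices to analyze $\frac{2\alpha}{d^2}\tilde\bY\tilde\bY\tran$ with $\tilde\bY$ having i.i.d.\ centered rows with covariance $\bSigma^{(2)}$. To invoke \cite{bai2008large} I need to verify its hypotheses for the columns of $\tilde\bY$ (independence of columns, a covariance/normalization condition, and a moment/concentration bound such as a law of large numbers for quadratic forms $\frac{1}{d^2}\bx_i^{(2)\top}\bA\bx_i^{(2)}$); here I would use the entrywise moment bounds of Assumption~\ref{assump:data} (finite moments up to order $8$, and finite $90$-th moment) to check the required conditions — in particular a Marcinkiewicz--Zygmund / trace-moment estimate giving $\frac{1}{d^2}\bx_i^{(2)\top}\bA\bx_i^{(2)} - \frac{1}{d^2}\Tr(\bSigma^{(2)}\bA)\to 0$. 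The conclusion of \cite{bai2008large} is that $\frac{1}{p}\tilde\bY\tilde\bY\tran$ (appropriately scaled) has limiting ESD given by the free multiplicative convolution of the Marchenko--Pastur law with ratio $\lim p/n = 2\alpha$ and the limiting distribution $\mu_{\bSigma^{(2)}}$ of the population covariance. Matching the normalizations: $\frac{2\alpha}{d^2} = \frac{2\alpha}{d^2}$, and with $p\sim d^2/2$, $\frac{2\alpha}{d^2}\tilde\bY\tilde\bY\tran = \frac{\alpha}{p}\tilde\bY\tilde\bY\tran(1+o(1))$; tracking the $\alpha$ prefactor and the $(1-\alpha)\delta_0$ atom (present iff $p>n$, i.e. $2\alpha>1$... wait, we need to be careful: the Gram matrix $\tilde\bY\tilde\bY\tran$ is $n\times n$ of rank $\le \min(n,p)$, so it has an atom at $0$ of mass $1-p/n = 1-2\alpha$ when $2\alpha<1$; but the statement has the atom $(1-\alpha)\delta_0$ for $\alpha<1$) — here I would reconcile the exact form by keeping careful track of whether we normalize by $n$ or by $p$ in defining $\mu_{\alpha,\bSigma^{(2)}}$, using that $\mu_\alpha$ in \eqref{eq:def_nualpha} is the MP law with the convention stated there and that $\alpha\cdot\nu_\alpha$ has total mass $\alpha\cdot\min(1,\alpha^{-1})\cdot$(something) — the bookkeeping of masses is exactly the content of \eqref{eq:defmu} and follows from the standard MP computation with ratio $\gamma = 1/(2\alpha)$ after rescaling.

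The isotropic special case $\bSigma=\bI_d$ then follows by specialization: from \eqref{eq:defSigma2}, $\bSigma^{(2)}$ is diagonal with entries $2$ (off-diagonal index pairs, of which there are $\binom{d}{2}\sim d^2/2$) and $3$ (diagonal index pairs, of which there are $d = o(d^2)$). Hence $\mu_{\bSigma^{(2)}} \to \delta_2$ weakly (the mass-$o(1)$ atom at $3$ vanishes in the limit). Then $\nu_\alpha\boxtimes\delta_2 = $ the dilation of $\nu_\alpha$ by factor $2$, and the overall scaling $\frac{4\alpha}{f''(0)}$ versus $\frac{2\alpha}{f''(0)}$ absorbs this factor of $2$, yielding the stated $\mu = (1-\alpha)\delta_0 + \alpha\nu_\alpha$ (or $\alpha\nu_\alpha$ for $\alpha\ge1$). \textbf{Main obstacle.} The technically delicate step is verifying the hypotheses of the Bai--Zhou type theorem \cite{bai2008large} for the rows $\bx_i^{(2)}$: these have dependent coordinates (the entry at $(k,\ell)$ and at $(k,\ell')$ share the factor $\bx_i(k)$) and are built from only-$8$-moment-matching variables, so one must establish the relevant quadratic-form concentration $\mathrm{Var}\big(\frac{1}{d^2}\bx_i^{(2)\top}\bA\bx_i^{(2)}\big)\to0$ uniformly over $\|\bA\|\le1$ by hand, controlling the fourth-moment tensor of $\bx_i^{(2)}$ via the moment assumptions; this is where the Gaussian-moment-matching of the first $8$ moments and the diagonality of $\bSigma$ are used, and where most of the work in Appendix~\ref{sec:limiting_law} will go.
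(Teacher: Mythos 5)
Your plan matches the paper's proof essentially step for step: reduce $\bK$ to $\bKquad$ via Theorem~\ref{thm:concentration} together with the norm-stability Lemma~\ref{lem:BS10A45}, discard the rank-$1$ term $a_0\one\one\tran$ and the rank-$d$ term $a_1\bX\bX\tran$ via the rank inequality Lemma~\ref{lem:lowrank}, write $(\bX\bX\tran)^{\odot 2}=\bX^{(2)}\bX^{(2)\top}$, subtract the (low-rank) mean $\E\bX^{(2)}$, verify the Bai--Zhou quadratic-form concentration hypothesis for $\overline\bx_i^{(2)}$ (the paper's Section~\ref{sec:limiting_law} does this by a hand computation exploiting the diagonality of $\bSigma$ and the $8$-moment bounds, exactly the obstacle you identify), and specialize $\mu_{\bSigma^{(2)}}\to\delta_2$ when $\bSigma=\bI_d$. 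The one slip is the aspect ratio: since $d^2/(2n)\to\alpha$ and $p:=\binom{d+1}{2}\sim d^2/2$, one has $p/n\to\alpha$ (not $2\alpha$); correcting this resolves the tension you flagged about the atom at $0$ and makes the masses match \eqref{eq:defmu} exactly, with $(1-\alpha)\delta_0$ appearing precisely when $\alpha<1$.
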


See~Figure~\ref{fig:cos} for a simulation of the result in Theorem~\ref{thm:globallaw} when $f=\cos(x)$. We consider both the isotropic case when $\bSigma=\bI_{d}$ and the anisotropic case with \begin{equation}\label{eq:Sigma_0}
    \bSigma=\bSigma_0=\operatorname{diag}(\sigma_1, \dots, \sigma_d), \quad \text{where} \quad
\sigma_i =
\begin{cases}
0.1, & \text{for } i = 1, \dots, 0.2d \\
1.0, & \text{for } i = 0.2d + 1, \dots, 0.6d \\
1.5, & \text{for } i = 0.6d + 1, \dots, d
\end{cases}.
\end{equation} For more simulations, see Section~\ref{sec:simulation}.

\begin{figure}
\centering
\begin{minipage}[t]{0.48\linewidth}
\centering
{\includegraphics[width=1\textwidth]{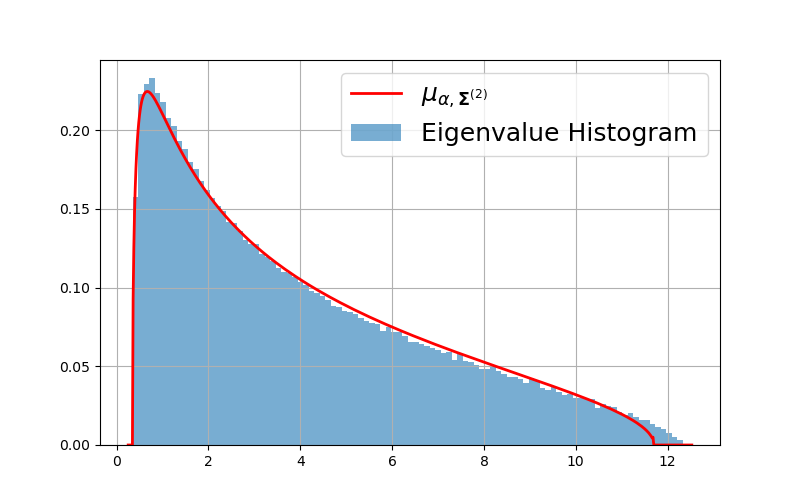}}  \\
\small (a) $\bSigma=\bI_d$. 
\end{minipage}
\begin{minipage}[t]{0.48\linewidth}
\centering
{\includegraphics[width=1\textwidth]{Fig/cosn10000d200_isotropic.png}}  \\
\small (b) $\bSigma=\bSigma_0$. 
\end{minipage}
\caption{Spectral distributions of $\frac{2\alpha}{f''(0)}(\bK-a \bI_n)$ for $f(x)=\cos(x)$, $n=10000$ and $d=200$, and limiting density function of \eqref{eq:defmu} in red curves. For dataset $\bX$, we use Gaussian data with population covariance: $\bSigma=\bI_d$ and $\bSigma= \bSigma_0$ which is defined by \eqref{eq:Sigma_0}.}
    \label{fig:cos}
\end{figure}

\subsection{Training and generalization errors for kernel ridge regression}
Consider a  dataset $\bX=[\bx_1,\ldots\bx_n]^\top$ with $\bx_1,\dots,\bx_n$ satisfying Assumption~\ref{assump:data}. Let 
\begin{equation}\label{eq:training_labels}
    \by=[y_,\ldots,y_n]^\top=[f_*(\bx_1),\ldots,f_*(\bx_n)]^\top+\bepsilon
\end{equation}
be noisy training labels generated by an unknown teacher function $f_*:\R^d\to\R$, and $\bepsilon\in\R^n$ where $\bepsilon_i$ are i.i.d. sub-Gaussian random variable with  \begin{align}\label{eq:def_sigma_eps}
  \E\bepsilon_i=0,  \quad \E\bepsilon_i^2=\sigma^2_{\bepsilon}.   
\end{align} 
With dataset $\bX$ and training labels $\by$, we are interested in the asymptotic behavior of kernel ridge regression (KRR) 
 $$\hat f^{(\K)}_\lambda = \underset{f\in\Hilbert}{\rm argmin}\ \sum_{i=1}^n (y_i - f(\bx_i))^2 + \lambda\norm{f}_\Hilbert^2,$$
for certain Reproducing Kernel Hilbert Spaces $\Hilbert(\Real^d)$, associated with inner product kernels, under the quadratic regime $n\asymp d^2.$ 
Here, $\lambda\ge 0$ is called the ridge parameter in KRR.
The estimator of KRR can be written as 
$$ \hat f^{(\K)}_\lambda(\bx)= K(\bx,\bX)(\bK+\lambda\bI_n)^{-1}\by,$$
where $K(\bx,\bX)=[K(\bx,\bx_1),\ldots,K(\bx,\bx_n)]\in\R^n$ and $\bK$ is defined by \eqref{eq:K_matrix} on dataset $\bX$. In the following sections, we present the asymptotic training and generalization errors of KRR, given some conditions of $f_*$.

\subsubsection{Training  errors}
The prediction of KRR on the training dataset $\bX$ is a $n$-dimensional vector given by
\begin{align}\label{eq:predictor}
\hat f^{(\K)}_\lambda( \bX)=(\hat f^{(\K)}_\lambda(\bx_1),\ldots,\hat f^{(\K)}_\lambda(\bx_n))^\top=\bK(\bK+\lambda\bI_n)^{-1}\by.
\end{align}
Then, we can define the \textit{training error} for this KRR as
\begin{equation}\label{eq:Etrain}
    \cE_{\train}(\lambda):=\frac{1}{n}\|\hat f^{(\K)}_\lambda( \bX)- \by\|_2^2=\frac{\lambda^2}{n}\by^\top(\bK +\lambda\bI_n)^{-2} \by.
\end{equation}
Recall the coefficient $a$ defined in \eqref{def:a}. We need the following additional assumption on the kernel function $f$.
\begin{assumption}\label{assump:analytic}
Assume that $a_0\ge 0, a_1\ge 0$ and $a_2\ge0$ for sufficiently large $d$, where $a_0,a_1,a_2$ are defined in \eqref{def:a0}-\eqref{def:a2}, and $f$ defined by \eqref{eq:kernel_function} satisfies Assumptions~\ref{assump:nonlinear_f} and ~\ref{assump:limitsigma}. We denote that
\begin{align}\label{def:a_star}
    a_*:=\lim_{n\to\infty} a=f(\tau)-f(0)-f'(0)\tau -\frac{1}{2} f''(0)\tau^2.
\end{align} 
\end{assumption}

In this paper, we aim to show that Kernel Ridge Regression (KRR) in the quadratic regime can learn more complex functions compared to the proportional regime \citep{el2010spectrum, bartlett2021deep}. The simplest setting to observe this difference is with a quadratic teacher function. Therefore, we adopt the following assumption for the teacher model, which is similar to the one from \cite{mei2019generalization}.
\begin{assumption}\label{assump:teacher}
Assume that the teacher model $f_*:\R^d\to\R$ is defined by
\begin{equation}\label{eq:teacher} 
    f_*(\bx):= c_0+c_1\langle \bx, \bbeta\rangle +\frac{c_2}{d} \bx^\top\bG\bx.
\end{equation}
where $c_0, c_1,c_2\in \R$ are constants independent of $n,d$, $\bbeta\in \R^d$ is a deterministic vector with $\|\bbeta\|=1$,  and $\bG\in \mathbb R^{d\times d}$ is a symmetric random matrix with independent sub-Gaussian entries of mean zero, variance 1.  
\end{assumption}

The asymptotic training error can be obtained in the next theorem.

\begin{theorem}[Asymptotic training error]\label{thm:train_limit}
Suppose $\lambda+a_*> 0$. Under the assumptions in Theorem~\ref{thm:globallaw} and Assumptions~\ref{assump:analytic} and \ref{assump:teacher}, as $d^2/(2n)\to \alpha\in (0,\infty)$ and $n,d\to \infty$, we have, in probability,
    \begin{align}\label{eq:train_limit}
    \cE_{\textnormal{train}}(\lambda)\to \lambda^2\int \frac{\frac{c_2^2}{\alpha} x+\sigma_{\bepsilon}^2}{\left(\frac{f''(0)}{4\alpha}x+a_*+\lambda\right)^2}~d\mu_{\alpha,\bSigma^{(2)}}(x),
    \end{align}
 where $a_*$ is defined in \eqref{def:a_star},
  $\mu_{\alpha,\bSigma^{(2)}}$ is  defined in \eqref{eq:defmu}, and $\sigma_{\bepsilon}^2$ is defined in \eqref{eq:def_sigma_eps}.
\end{theorem}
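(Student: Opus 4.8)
The plan is to reduce the training error for the kernel $\bK$ to that of the quadratic kernel $\bKquad$, and then compute the latter by resolvent analysis. First I would use Theorem~\ref{thm:concentration} to replace $\bK$ with $\bKquad$ throughout. Since $\cE_{\train}(\lambda)=\frac{\lambda^2}{n}\by^\top(\bK+\lambda\bI_n)^{-2}\by$, and $\|\bK-\bKquad\|\le Cd^{-1/12}\to 0$ with high probability, the resolvent difference $(\bK+\lambda\bI_n)^{-2}-(\bKquad+\lambda\bI_n)^{-2}$ has vanishing operator norm (using $\lambda>0$ or, in the $\lambda=0$ degenerate case, using $a_*>0$ from Assumption~\ref{assump:analytic} so that $\bKquad+\lambda\bI_n\succeq a_*\bI_n$ is uniformly invertible). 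Combined with $\frac1n\|\by\|^2=O_P(1)$ — which follows from Assumption~\ref{assump:teacher} since $y_i = c_0 + c_1\langle\bx_i,\bbeta\rangle + \frac{c_2}{d}\bx_i^\top\bG\bx_i + \epsilon_i$ has bounded second moment — this shows $\cE_{\train}(\lambda) = \frac{\lambda^2}{n}\by^\top(\bKquad+\lambda\bI_n)^{-2}\by + o_P(1)$.

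Next I would analyze $\frac{\lambda^2}{n}\by^\top(\bKquad+\lambda\bI_n)^{-2}\by$ with $\bKquad = a_0\one\one^\top + a_1\bX\bX^\top + a_2(\bX\bX^\top)^{\odot 2} + a\bI_n$. The strategy is to first strip off the low-rank terms $a_0\one\one^\top + a_1\bX\bX^\top$ via the Sherman–Morrison–Woodbury formula (as advertised in the technical-novelties section), showing they contribute negligibly to the quadratic form since $\by$ is essentially (asymptotically) aligned with the quadratic component; the pure-quadratic teacher $\frac{c_2}{d}\bx_i^\top\bG\bx_i$ projects onto the tensor directions $\bx_i^{(2)}$, and the constant/linear teacher parts interact with $a_0\one\one^\top+a_1\bX\bX^\top$ in a controlled low-rank way. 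After this reduction the problem becomes a quadratic form in the resolvent of $a_2(\bX\bX^\top)^{\odot 2} + a\bI_n$. Using the tensor representation $(\bX\bX^\top)^{\odot 2}_{ij}=\langle\bx_i^{(2)},\bx_j^{(2)}\rangle$ from \eqref{eq:tensor_product_identity}, this is a sample-covariance-type matrix built from the (centered) reduced tensor vectors, whose limiting spectrum is $\mu_{\alpha,\bSigma^{(2)}}$ by Theorem~\ref{thm:globallaw}.

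The heart of the computation is a deterministic equivalence: I would show $\frac1n\by^\top(\bKquad+\lambda\bI_n)^{-2}\by$ concentrates around a deterministic quantity expressible as an integral against $\mu_{\alpha,\bSigma^{(2)}}$. Writing $\by = \bv + \bepsilon$ where $\bv = (f_*(\bx_i))_i$, the noise part contributes $\frac{\sigma_\epsilon^2}{n}\Tr(\bKquad+\lambda\bI_n)^{-2} \to \sigma_\epsilon^2\int (\frac{f''(0)}{4\alpha}x + a_* + \lambda)^{-2}\,d\mu_{\alpha,\bSigma^{(2)}}(x)$ by the spectral mapping (note $a_2(\bX\bX^\top)^{\odot 2}$ has eigenvalues $\approx \frac{f''(0)}{4\alpha}x$ under the scaling of Theorem~\ref{thm:globallaw}, since $a_2 \approx \frac{f''(0)}{2d^2}$ and $d^2/(2n)\to\alpha$, while $a\to a_*$). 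The signal part $\frac1n\bv^\top(\bKquad+\lambda\bI_n)^{-2}\bv$ requires computing how the quadratic teacher correlates with the tensor features: the key identity is that $\frac{c_2}{d}\bx_i^\top\bG\bx_i$ is a linear functional of $\bx_i^{(2)}$, so $\bv$ (modulo the low-rank constant/linear pieces) lies in the column span of the tensor design, and $\frac1n\bv^\top(\bKquad+\lambda\bI_n)^{-2}\bv$ reduces to $\frac{c_2^2}{\alpha}\int \frac{x}{(\frac{f''(0)}{4\alpha}x+a_*+\lambda)^2}\,d\mu_{\alpha,\bSigma^{(2)}}(x)$ after tracking the normalization (the factor $1/\alpha$ arising from $d^2/(2n)\to\alpha$ and the $1/d$ scaling in the teacher). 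This step needs a concentration inequality for quadratic forms of the centered tensor vectors $\bx_i^{(2)}-\E\bx_i^{(2)}$ against resolvents — precisely Lemma~\ref{lem:quad_high_power} — together with the general deterministic equivalence for spectral functions of the centered Hadamard square mentioned in the contributions.

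\textbf{Main obstacle.} The hardest part will be establishing the deterministic equivalence for the \emph{signal} term $\frac1n\bv^\top(\bKquad+\lambda\bI_n)^{-2}\bv$ — in particular, handling the interaction between the random teacher matrix $\bG$ (independent of $\bX$), the non-centered tensor vectors $\bx_i^{(2)}$ (which have nonzero mean $\E\bx_i^{(2)}$, a rank-one-type shift along diagonal coordinates), and the low-rank terms $a_0\one\one^\top + a_1\bX\bX^\top$ in $\bKquad$. Disentangling these requires careful repeated application of Sherman–Morrison–Woodbury to peel off each low-rank/mean contribution while controlling cross terms, and then a quadratic-form concentration (Lemma~\ref{lem:quad_high_power}) that is robust enough to apply to the centered tensor vectors with only the finite-moment control of Assumption~\ref{assump:data}. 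Matching the final constants — especially verifying that the low-rank pieces of $\bKquad$ neither help nor hurt the learning of the pure-quadratic component, so that only the clean expression \eqref{eq:train_limit} survives — is the delicate bookkeeping step.
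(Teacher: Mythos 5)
Your outline matches the paper's proof structure in its main steps: replace $\bK$ by $\bKquad$ via Theorem~\ref{thm:concentration} and the $a_*>0$ lower bound, split $\by$ into signal and noise, and compute each contribution using the spectral asymptotics of $\bKquad$. The concrete lemma chain in the paper is Lemma~\ref{lem:asym_sigma_min} (smallest-eigenvalue bound), Lemma~\ref{lemm:approx_train} (replacing $\bK$ by $\bKquad$), Lemma~\ref{lemm:bound_y} ($\|\by\|^2=O(n)$), and then Lemmas~\ref{lem:train1}--\ref{lem:train3} for the signal, noise, and cross terms.

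Where you diverge is in the tools you bring to the signal term. You propose to use Lemma~\ref{lem:quad_high_power} (concentration of $\overline{\bx}^{(2)\top}\bA\overline{\bx}^{(2)}$) and the deterministic equivalence of Lemma~\ref{eq:deterministic_equ} (Stieltjes-transform fixed point). The paper does not need either of these for the training error; they are reserved for the generalization error, where a test point $\bx$ independent of the design appears. For the training error the relevant randomness in the signal term is the teacher matrix $\bG$ (equivalently its vectorization $\bg$), and the paper simply applies the Hanson--Wright inequality to the quadratic form $\bg^\top\big(\frac{2c_2^2}{d^2}\bX^{(2)\top}(\bKquad+\lambda\bI)^{-2}\bX^{(2)}\big)\bg$ conditionally on $\bX$, reducing it to its conditional mean; that mean is then a normalized trace of a bounded continuous function of $\bKquad-a\bI$, so weak convergence of the ESD (Theorem~\ref{thm:globallaw}) already identifies the limit, with no Stieltjes-transform machinery. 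Similarly, the low-rank parts $a_0\one\one^\top+a_1\bX\bX^\top$ are handled not by peeling with Sherman--Morrison--Woodbury but by the dominance bounds of Lemma~\ref{lemm:inverseK_mix_bound} ($a_0\one\one^\top\preccurlyeq\bKquad+\lambda\bI$, etc.), which immediately force their normalized quadratic forms to be $o(1)$. So your plan is sound but heavier than needed: Lemma~\ref{lem:quad_high_power} is aimed at quadratic forms in a \emph{fresh} tensor vector against a resolvent of the training design, not at the form in $\bg$ that arises here, and invoking it would be a detour. You also correctly flag the non-centeredness of $\bx_i^{(2)}$; the paper's Lemma~\ref{lem:train1} handles it via the correction vector $\bv$ in \eqref{eq:defv_i}, which is shown to be a negligible $O(d^{7/8})$ perturbation.
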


Theorem \ref{thm:train_limit} covers the ridge-less case when $\lambda=0$. In the ridge-less case, the training error is $0$, and $\bK$ is invertible since $a_*$ can be seen as an additional ridge regularizer to $\bK^{(2)}$ in \eqref{eq:Kquad}. Note that the limit in \eqref{eq:train_limit} does not depend on the constant and linear terms of $f$ or $f_*$. In the quadratic regime, the kernel $\bK$ can completely fit the linear component of $f_*$ even for $\lambda>0$.

\subsubsection{Generalization errors}

Given a new data point $(\bx,f_*(\bx))$ where $\bx\in\R^d$ is independent with all training data points $\bx_i$,  the \textit{generalization error} of KRR estimator $ \hat f^{(\K)}_\lambda(\bx)$ in \eqref{eq:predictor} can be computed by
\begin{align}
    \cR(\lambda):=\E[(f^{(\K)}_\lambda(\bx)-f_*(\bx))^2|\bX],\label{eq:test_K}
\end{align}
conditioning on the training dataset $\bX$.  We make the following assumption on the distribution of test data $\bx\in\R^d$.  
\begin{assumption}[Test data assumption]\label{assumption: testdata}
    Assume the testing data point satisfies $\bx=\bSigma^{1/2}\bz$,  where $\bz\in\R^d$ is a random vector with independent entries (independent with $\bX$). For  $k\in [d]$, we assume that
$\mathbb E[\bz(k)^{t}]=\mathbb E[g^t], t=1,2,\dots, 18$, where  $g\sim \cN(0,1)$. 
\end{assumption}
Note that $\bx$ does not need to have the same distribution as the training data $\bx_1,\dots,\bx_n$.  
\begin{assumption}\label{assumption:C8}
 Suppose that kernel function $f$ in \eqref{eq:kernel_function} satisfies Assumption~\ref{assump:analytic} and the 9-th derivative satisfies $|f^{(9)}(x)|\leq C$ for all $x\in \mathbb R$. And we further assume that $f'(0)=f^{(3)}(0)=0$ and $f''(0)>0$.
\end{assumption}

Let $\lambda_*>0$ be the unique positive solution to
\begin{align}\label{eq:self_consist}
   \frac{1}{\alpha}- \frac{4 (a_*+\lambda) }{f''(0)\lambda_*} = \int \frac{x}{x+\lambda_*}d\mu_{\bSigma^{(2)}}(x),
\end{align}
where $\alpha, \mu_{\bSigma^{(2)}}$, and $a_*$ are defined in Assumptions~\ref{assump:ratio},~\ref{assump:limitsigma}, and~\ref{assump:analytic}, respectively. 
Then, given $\lambda_*>0$, we can define 
\begin{align}
    \cV(\lambda_*) &:= \frac{\alpha\int_{\R}\frac{x^2}{(x+\lambda_*)^2} d\mu_{\bSigma^{(2)}}( x)}{1-\alpha\int_{\R}\frac{x^2}{(x+\lambda_*)^2}d\mu_{\bSigma^{(2)}}( x)},\label{eq:limit_var}\\
    \mathcal{B}(\lambda_*) &:= \frac{ \lambda_*^2\int_{\R}\frac{x }{(x+\lambda_*)^2}d\mu_{\bSigma^{(2)}}(x)}{1-\alpha\int_{\R}\frac{x^2}{(x+\lambda_*)^2}d\mu_{\bSigma^{(2)}}( x)}.\label{eq:limit_bias}
\end{align} 

\begin{theorem}[Asymptotic generalization error for random $f_*$]\label{thm:test_limit}
Suppose  in \eqref{eq:training_labels}, $f_*$ is a pure quadratic function given by $f_*(\bx)=\bx^\top\bG\bx/d$, where $\bG\in\R^{d\times d} $ is a symmetric random matrix with independent entries satisfying $\E[\bG_{i,j}]=0, \E[\bG_{i,j}^2]=1$
for all $i,j\in[n]$. 
Then, under  the assumptions in Theorem~\ref{thm:globallaw}, Assumptions~\ref{assump:analytic},~\ref{assumption: testdata} and~\ref{assumption:C8}, as $d^2/(2n)\to \alpha\in (0,\infty)$ and $n,d\to \infty$, the generalization error of KRR satisfies 
\begin{equation}
    \cR(\lambda)-\sigma_{\bepsilon}^2\cV(\lambda_*)- \mathcal{B}(\lambda_*)\to 0
\end{equation}
in probability, for any $\lambda\ge 0$, where $\cV(\lambda_*)$ and $\cB(\lambda_*)$ are defined by~\eqref{eq:limit_var} and~\eqref{eq:limit_bias}.
\end{theorem}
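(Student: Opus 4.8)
\textbf{Proof proposal for Theorem~\ref{thm:test_limit}.}

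The plan is to reduce the computation of $\cR(\lambda)$ for the original kernel $\bK$ to the same computation for the quadratic surrogate $\bK^{(2)}$, and then to evaluate the latter via a resolvent/deterministic-equivalence analysis of the centered Hadamard matrix $(\bX\bX\tran)^{\odot 2}$. First I would write out the bias--variance decomposition of $\cR(\lambda)$ conditional on $\bX$: since $\by = \bff_* + \bepsilon$ with $\bff_* = (f_*(\bx_1),\dots,f_*(\bx_n))^\top$ and $\bepsilon$ independent with variance $\sigma_\bepsilon^2$, one gets $\cR(\lambda) = \cR_{\mathrm{bias}} + \sigma_\bepsilon^2 \cR_{\mathrm{var}}$ where the variance term is $\sigma_\bepsilon^2 \cdot \frac{1}{\,\cdot\,}\Exp_\bx[\,\ldots\,]$ involving $K(\bx,\bX)(\bK+\lambda\bI_n)^{-2}K(\bX,\bx)^\top$, and the bias term involves $(\Exp_\bx[f_*(\bx) - K(\bx,\bX)(\bK+\lambda\bI_n)^{-1}\bff_*])^2$-type quantities plus a fluctuation piece. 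The key structural observation, which I would establish first, is that the test-point feature vector $K(\bx,\bX)$ is itself close (entrywise, hence after the relevant bilinear forms) to the quadratic feature map $a_0\one + a_1 \bX\bx + a_2 (\bX\bX\tran)\text{-row-analog}$, using Assumption~\ref{assumption: testdata} (18 matching moments for the test point) together with the Taylor-plus-correction expansion from the proof of Theorem~\ref{thm:concentration}; the extra moments are needed because $\bx$ appears in quadratic forms against $(\bX\bX\tran)^{-1}$-type matrices, so one needs good control of $\|\bx^{\otimes 2}\|$ and of quadratic forms $\langle \bx^{(2)}, \bM \bx^{(2)}\rangle$ for random $\bM$ — this is exactly where Lemma~\ref{lem:quad_high_power} (concentration for quadratic forms of $\bx_i^{\otimes 2}$) enters.

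Second, having replaced $\bK$ by $\bK^{(2)} = a_0\one\one\tran + a_1\bX\bX\tran + a_2(\bX\bX\tran)^{\odot 2} + a\bI_n$, I would peel off the low-rank terms. Since $f'(0) = f^{(3)}(0) = 0$ by Assumption~\ref{assumption:C8}, the coefficient $a_1 = O(d^{-3})$ is negligible and the $a_1 \bX\bX\tran$ term contributes nothing in the limit; the rank-one $a_0\one\one\tran$ term is handled by Sherman--Morrison, and because the teacher $f_*(\bx) = \bx^\top\bG\bx/d$ is a \emph{pure} quadratic with $\Exp[\bff_*]$ concentrating near $(\Tr\bSigma/d)\cdot(\Tr\bG/d)\cdot\one \approx 0$ (as $\bG$ is centered, $\Tr\bG/d = O(d^{-1/2})$), the interaction of $\one\one\tran$ with the signal is lower order. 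This leaves the effective kernel $a_2 (\bX\bX\tran)^{\odot 2} + (a + a_1\text{-ish})\bI_n$, where $a_2 (\bX\bX\tran)^{\odot 2} = a_2 \bX^{(2)}\bX^{(2)\top}$ with $\bX^{(2)}$ the reduced-tensor data matrix, and $a \to a_*$. I would then \emph{center} $\bX^{(2)}$, writing $\bX^{(2)} = \overline{\bX}^{(2)} + \one (\Exp\bx^{(2)})^\top$; the mean part is again rank one and absorbed, so the analysis reduces to the sample covariance matrix $\overline{\bX}^{(2)}\overline{\bX}^{(2)\top}$ whose rows have covariance $\bSigma^{(2)}$, intrinsic dimension $\binom{d+1}{2} \sim d^2/2 \sim \alpha^{-1} n$.

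Third, for this centered sample-covariance model I would invoke the deterministic equivalence machinery (the general equivalent for spectral functions of the centered $(\bX\bX\tran)^{\odot 2}$ promised in Section~\ref{sec:appendix_prelim}, together with the Marchenko--Pastur/free-multiplicative-convolution description from Theorem~\ref{thm:globallaw}). The resolvent $(\overline{\bX}^{(2)}\overline{\bX}^{(2)\top} + \mu\bI_n)^{-1}$ has a deterministic equivalent governed by the companion fixed-point parameter $\lambda_*$ solving~\eqref{eq:self_consist}; the variance term $\sigma_\bepsilon^2\cR_{\mathrm{var}} = \sigma_\bepsilon^2 \cdot \frac{\lambda^2}{n}\Tr[\ldots]$-type expression becomes, after taking $\mu = (a_*+\lambda)\cdot 4\alpha/f''(0)$ and differentiating the Stieltjes transform, exactly $\sigma_\bepsilon^2 \cV(\lambda_*)$ with $\cV$ as in~\eqref{eq:limit_var} — the derivative structure $\frac{x^2}{(x+\lambda_*)^2}$ and the denominator $1 - \alpha\int \frac{x^2}{(x+\lambda_*)^2}d\mu_{\bSigma^{(2)}}$ are the standard ``$\partial_\mu$ of the fixed point'' factor. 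For the bias term, the pure quadratic signal $\bff_* = \bX^{(2)}\bg$ (where $\bg$ is the vectorized $\bG$, suitably normalized, with i.i.d.\ centered unit-variance entries), so $\cR_{\mathrm{bias}}$ reduces to a trace of $\bg^\top \bX^{(2)\top}(\bK^{(2)}+\lambda\bI_n)^{-1}(\cdots)(\bK^{(2)}+\lambda\bI_n)^{-1}\bX^{(2)}\bg$ against $\Exp_\bx[\bx^{(2)}\bx^{(2)\top}] = \bSigma^{(2)} + (\text{mean})(\text{mean})^\top$; averaging over the i.i.d.\ entries of $\bg$ via Hanson--Wright and applying the deterministic equivalent yields the ratio in~\eqref{eq:limit_bias}, with the prefactor $(\lambda_*/(a_*+\lambda))^2$ coming from the conversion between the ridge $\lambda$ on $\bK^{(2)}$ and the effective ridge $\mu$ on the centered tensor matrix. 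The main obstacle I anticipate is the test-point replacement step: controlling $K(\bx,\bX) - (\text{quadratic feature})$ in the operator sense needed inside both $(\bK+\lambda\bI_n)^{-1}$ and $(\bK+\lambda\bI_n)^{-2}$ bilinear forms requires simultaneously (i) the spectral bound of Theorem~\ref{thm:concentration} on the training block, (ii) sharp quadratic-form concentration for $\bx^{(2)}$ against \emph{random} resolvent-type matrices correlated with $\bX$ (hence the 18-moment assumption and Lemma~\ref{lem:quad_high_power}), and (iii) showing the rank-one and $a_1$-order terms genuinely do not interact with the pure-quadratic signal — this last point being delicate precisely because $(\bX\bX\tran)^{\odot 2}$ has a nontrivial mean component along $\one$ that must be disentangled from the signal direction before the clean deterministic equivalent applies.
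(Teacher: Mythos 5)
Your proposal matches the paper's overall architecture — replace $\bK$ by $\bK^{(2)}$ via Theorem~\ref{thm:concentration}, peel off the rank-one and $a_1$ pieces by Sherman--Morrison, center $\bX^{(2)}$ to reduce to a sample covariance with population $\bSigma^{(2)}$, and invoke the deterministic equivalent governed by the fixed point $\lambda_*$, plus Lemma~\ref{lem:quad_high_power} for the needed quadratic-form concentration. You also correctly flag the delicacy of disentangling the mean direction $\one$ from the signal. This is the right skeleton. Two technical points, however, deserve correction because as stated they would break the argument.

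First, the step ``$K(\bx,\bX)$ is entrywise close to the quadratic feature map $a_0\one + a_1\bX\bx + a_2(\bX\bX\tran)$-row-analog, hence close after the relevant bilinear forms'' does not hold. Entrywise closeness does not transfer to the $n$-dimensional bilinear forms in $\cR(\lambda)$ with $n\asymp d^2$. Moreover, if you use the \emph{uniform} coefficients $a_0,a_1,a_2$ from $\bK^{(2)}$, the test-point residual $R_i := K(\bx_i,\bx) - \tilde K_i$ has a nonzero conditional mean $\E_\bx[R_i]$ with $i$-dependent size $O(d^{-1})$ (coming from $t_i=\bx_i^\top\bSigma\bx_i$ varying around $\Tr\bSigma^2$), and the rank-one piece $\E_\bx[R]\E_\bx[R]^\top$ then contributes to the bias at order $O(d)$ or worse, which does not vanish. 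The paper sidesteps this by integrating over $\bx$ \emph{first}: $\bM = \E_\bx[K(\bX,\bx)K(\bx,\bX)]$ is computed exactly (up to a small tail) via the Hermite reorganization $K(\bx_i,\bx)=\sum_{k\le 8} b_{k,i}\T_i^{(k)}+\text{remainder}$, where the coefficients $b_{k,i}$ (Eqs.~\eqref{eq:b0i}--\eqref{eq:b1i}) depend on $t_i$, and the orthogonality relation $\E_\bx[\T_i^{(k)}\T_j^{(\ell)}]=\delta_{k\ell}\,k!\,\langle\bw_i,\bw_j\rangle^k$ (Lemma~\ref{lem:hermite}) kills all cross terms. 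This is what makes $\bM^{(2)}=\bb_0\bb_0^\top+\diag(\bb_1)\bX\bSigma\bX^\top\diag(\bb_1)+2a_2^2\bM_0^{(2)}$ with $\bb_0$ a \emph{non-constant} vector, and why $\|\bM-\bM^{(2)}\|\lesssim d^{-9/4}$ can be established (Lemma~\ref{lemma:diff_M_v}); the interaction of $\bb_0\bb_0^\top$ with the signal must then be suppressed separately by iterated Sherman--Morrison along both $\one$ and $\vmu$ (Lemmas~\ref{lemm:oneK2one}, \ref{lemm:uKu}, \ref{lemm:b0Kb0}).

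Second, the 18-moment test-data assumption does not feed into Lemma~\ref{lem:quad_high_power}: that lemma concerns the \emph{training} vectors $\bx_i^{(2)}$ (and uses the 90th-moment assumption on training data). The test-point moment matching is needed precisely so that the Hermite orthogonality $\E_\bx[\T_i^{(k)}\T_j^{(\ell)}]=0$ for $k\neq\ell$, $k+\ell\le 15$, extends from the Gaussian to the actual test distribution; without it the off-diagonal cross terms in $\bM$ do not vanish and the tail estimate fails. Finally, a small but real bookkeeping point: $\bff_*$ is not exactly $\bX^{(2)}\bg$; by Eq.~\eqref{eq:extra_v} one has $\bx^\top\bG\bx=\sqrt 2\langle\bx^{(2)},\bg\rangle-(\sqrt2-1)\sum_j\bg_{jj}\bx^{(2)}(j,j)$, so there is both a $\sqrt 2$ scaling and a correction vector $\bv$ that must be shown negligible (as in the proof of Lemma~\ref{lem:train1}).
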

Both Theorem \ref{thm:train_limit} and Theorem \ref{thm:test_limit} apply to the case when $f_*(\bx)=\bx^\top\bG\bx/d$ and $\bG$ is a symmetric random matrix with independent sub-Gaussian entries of mean zero, variance 1.

\begin{figure}
    \centering
    \begin{minipage}[t]{0.49\linewidth}
\centering
{\includegraphics[width=1\linewidth]{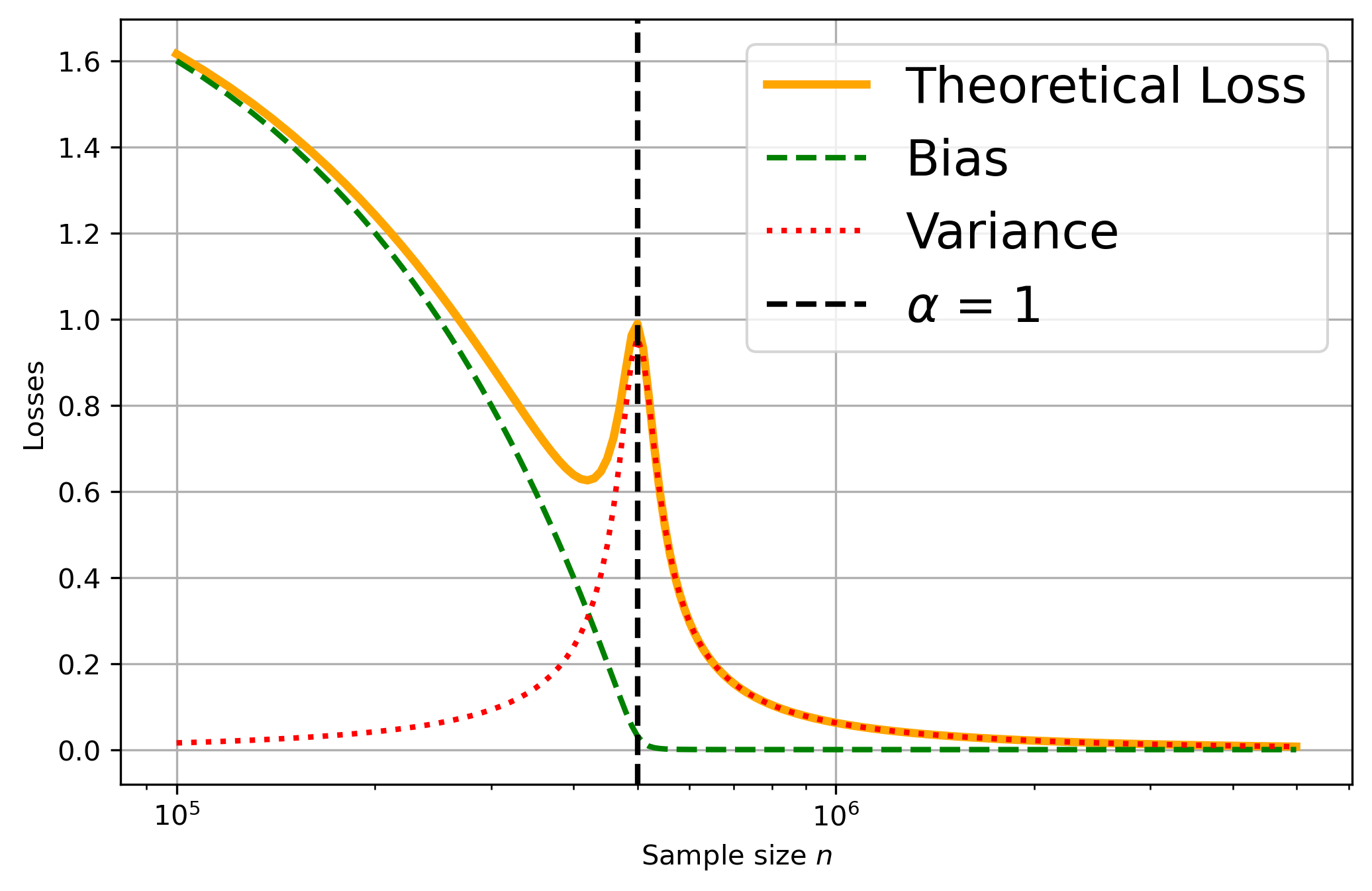}}\\
\small (a) $\lambda=0.001$.
\end{minipage}
\begin{minipage}[t]{0.49\linewidth}
\centering
{\includegraphics[width=1\linewidth]{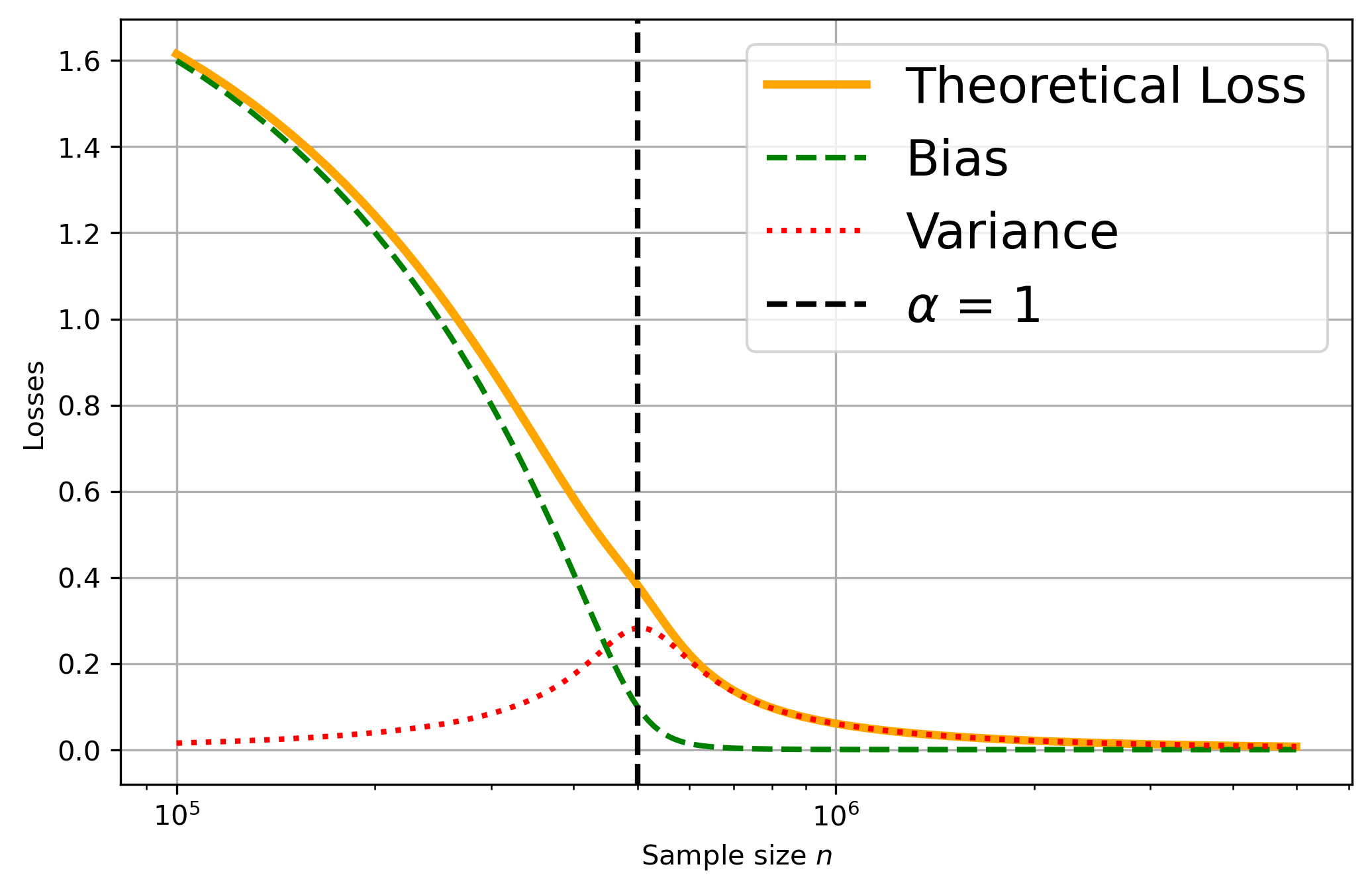}}\\
\small (b) $\lambda=0.01$.
\end{minipage}
    \caption{Theoretical curves of bias term $\cB(\lambda_*)$ (green), variance term $\sigma_{\bepsilon}^2 \cV(\lambda_*)$ (red), and the generalization error (yellow) from Theorem~\ref{thm:test_limit}. We fix $d=1000$ and vary the sample size $n$. The ridge parameter $\lambda = 10^{-3},10^{-2}$ and noise level $\sigma_{\bepsilon} = 0.25$. The plot reveals a double-descent phenomenon in the quadratic regime $n \propto d^2$.}
    \label{fig:theory_curve}
\end{figure}

In Figure~\ref{fig:theory_curve}, we plot the limiting bias, variance, and generalization error curves in Theorem~\ref{thm:test_limit} for different aspect ratios when $f''(0)=2$ and $a_*=0$ with two distinct values of $\lambda$. This figure shows that the bias decreases monotonically, while the variance first increases and then decreases. Their combined effect produces a double-descent curve for the generalization error under the quadratic regime when $\lambda$ is small.

\begin{remark}[Connection to double descent and multiple descent]
 The double descent phenomenon concerns the behavior of generalization error in the proportional regime $n \propto d$ \citep{bartlett2021deep}. More recently, the multiple descent phenomenon has been observed: when $n \propto d^{\ell}$, the generalization error for kernel ridge regression (KRR) decreases as $\ell$ increases \citep{xiao2022precise}. Our work is related to these phenomena in the following way: we show that in the regime $n \propto d^2$, the generalization error is smaller than in the $n \propto d$ case. Moreover, when $\tfrac{2n}{d^2} \to \alpha^{-1}$, the generalization error as a function of $\alpha$ exhibits a double descent curve (see Figure~\ref{fig:theory_curve}).
\end{remark}

In the setting of Theorem~\ref{thm:test_limit}, the limiting bias and variance terms of KRR are \eqref{eq:limit_var} and~\eqref{eq:limit_bias}, respectively. In the regime $n\asymp d$, similar characterizations are also presented by \cite{hastie2022surprises,bartlett2021deep}.  In the quadratic regime $n\asymp d^2$, our asymptotic formula matches the proportional regime by changing $\bSigma$ to $\bSigma^{(2)}$. More intuitively, we showed that KRR in the quadratic regime is asymptotically equivalent to linear ridge regression with reduced tensor product features $x_i^{(2)}\in \mathbb R^{\binom{d+1}{2}}$ defined in \eqref{eq:def_reduced}. 

We expect the same asymptotic generalization error formula to hold also for the general quadratic target in \eqref{eq:teacher} beyond the purely quadratic target case (see Figure~\ref{fig:feature_maps_vs_kernel}). However, it is technically challenging to prove that the effect of the linear component $c_0+c_1\langle \bx,\bbeta\rangle$ is negligible for the generalization error in the quadratic scaling limit. We leave it as an open question for future work.

\begin{remark}
Although \cite{mei2021generalization,misiakiewicz2024non,gavrilopoulos2024geometrical} cover the quadratic regime, our data assumptions are more universal. \cite{misiakiewicz2024non} presented a non-asymptotic deterministic equivalence of general KRR similar to \eqref{eq:limit_var} and~\eqref{eq:limit_bias}, but it requires a certain concentration of eigenfunctions in the kernel's eigendecomposition, which is challenging to verify in our context, especially for anisotropic data. \cite{gavrilopoulos2024geometrical} aligns more closely with our setting but necessitates sub-Gaussian $\bx_i$, but only offers an upper bound for prediction risk.
\end{remark}

When the teacher model $f_*$ is not a random function but a deterministic quadratic function depending on the covariance matrix $\bSigma$ of $\bx$, the bias term in the generalization error vanishes, as stated in the following theorem.  This setting is different from Theorem~\ref{thm:test_limit}, since the generalization error is not taken over the randomness of the teacher model $f_*$.

\begin{theorem}[Asymptotic generalization error for deterministic $f_*$]\label{thm:test_limit_deterministic}
Suppose that teacher function in \eqref{eq:training_labels} is $f_*(\bx)=\bx^\top\bSigma\bx/d$. Then, under  the assumptions in Theorem~\ref{thm:globallaw}, Assumptions~\ref{assump:analytic},~\ref{assumption: testdata} and~\ref{assumption:C8}, as $d^2/(2n)\to \alpha\in (0,\infty)$ and $n,d\to \infty$, the generalization error of KRR satisfies 
 $$\cR(\lambda)-\sigma_{\bepsilon}^2\cV(\lambda_*)\to 0$$
in probability, for any $\lambda\ge 0$, where $\cV(\lambda_*)$ is defined by \eqref{eq:self_consist} and \eqref{eq:limit_var}.
\end{theorem}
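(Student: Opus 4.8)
The plan is to follow the same pipeline used for Theorem~\ref{thm:test_limit}, but to track carefully how the \emph{deterministic} quadratic teacher $f_*(\bx)=\bx^\top\bSigma\bx/d$ interacts with the low-rank part of $\bK^{(2)}$. First I would reduce the analysis from $\bK$ to the quadratic kernel $\bK^{(2)}$ in \eqref{eq:Kquad}: by Theorem~\ref{thm:concentration} the operator-norm error is $O(d^{-1/12})$, and using the resolvent identity together with the a priori bound $\bK^{(2)}+\lambda\bI_n \succeq (a_*+\lambda)\bI_n$ (Assumption~\ref{assump:analytic}), one shows $\cR(\lambda)$ for $\bK$ and $\bK^{(2)}$ differ by $o(1)$ in probability. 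After this reduction one writes the generalization error in the usual bias--variance decomposition: with $\by = \bff_* + \beps$ where $\bff_* = (f_*(\bx_i))_{i\in[n]}$, the variance term produces $\sigma_\beps^2$ times a resolvent trace that converges to $\cV(\lambda_*)$ exactly as in Theorem~\ref{thm:test_limit} (this part does not see the teacher), while the bias term is $\E_\bx[(K(\bx,\bX)(\bK^{(2)}+\lambda\bI_n)^{-1}\bff_* - f_*(\bx))^2\mid\bX]$.

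The key difference from Theorem~\ref{thm:test_limit} is in the bias term. I would expand both $f_*(\bx)$ and the $i$-th entry of $K(\bx,\bX)$ in Hermite polynomials / reduced tensor coordinates. Using $f'(0)=f^{(3)}(0)=0$ from Assumption~\ref{assumption:C8}, the feature vector $K(\bx,\bx_i)$ has (to leading order) a constant component along $\one$ and a quadratic component along $a_2\langle \bx,\bx_i\rangle^2 = a_2\langle \bx^{(2)},\bx_i^{(2)}\rangle$ — there is \emph{no} linear component, which is why only the $\one\one^\top$ and $(\bX\bX^\top)^{\odot 2}$ blocks of $\bK^{(2)}$ matter. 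The crucial observation is that $f_*(\bx)=\bx^\top\bSigma\bx/d$, when written in the reduced-tensor basis, is a \emph{fixed deterministic vector} $\bv\in\R^{\binom{d+1}{2}}$ in the span of the columns of $\bX^{(2)}$ (the reduced tensor data matrix), i.e.\ $\bff_* = \tfrac1d \bX^{(2)}\bv + (\text{centering shift along }\one)$ with $\bv$ essentially $\mathrm{vec}(\bSigma)$ in reduced coordinates. Because $\bSigma$ is diagonal with $\bSigma^{(2)}$ diagonal (Assumption~\ref{assump:limitsigma}), $\bv$ lines up with the eigenbasis of $\bSigma^{(2)}$ and $\|\bv\|^2/d \to$ a finite constant related to $\Tr(\bSigma^2)/d$. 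The point is that, unlike the random-$\bG$ case where $\bG/\sqrt d$ behaves isotropically and contributes a nontrivial bias $\cB(\lambda_*)$, here $\bff_*$ lies (asymptotically) \emph{entirely} in the range of $\bK^{(2)}$ and moreover concentrates onto the top of the spectrum, so that after applying the deterministic equivalence / Sherman--Morrison--Woodbury reductions the bias contribution vanishes: $K(\bx,\bX)(\bK^{(2)}+\lambda\bI_n)^{-1}\bff_* \to f_*(\bx)$ in $L^2$, leaving only $\sigma_\beps^2\cV(\lambda_*)$.

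Concretely, the steps are: (i) reduce $\bK\to\bK^{(2)}$; (ii) peel off the low-rank terms $a_0\one\one^\top + a_1\bX\bX^\top$ via repeated Sherman--Morrison--Woodbury, checking that the $a_1\bX\bX^\top$ block is negligible (order $d^{-1}\cdot\|\bX\bX^\top\|$ against the $\Theta(1)$-scale $(\bX\bX^\top)^{\odot2}$ block) and the $\one\one^\top$ block only absorbs the constant $c_0$-type piece; (iii) for the centered Hadamard block $a_2(\bX\bX^\top)^{\odot2}$ written as $a_2 \bX^{(2)}\bX^{(2)\top}$, invoke the deterministic equivalence for resolvents of $\bX^{(2)}\bX^{(2)\top}$ (built from Lemma~\ref{lem:quad_high_power} and the general deterministic equivalence mentioned in the contributions), which is where $\lambda_*$ and $\mu_{\bSigma^{(2)}}$ enter; (iv) plug in $\bff_* = \tfrac1d\bX^{(2)}\bv$ and compute $\|(\bI - \bX^{(2)}(\bX^{(2)\top}\bX^{(2)}+\text{reg})^{-1}\bX^{(2)\top})\bff_*\|^2/n + (\text{prediction-vs-truth term})$, showing both $\to 0$ because $\bv$ is a bounded-norm deterministic vector aligned with the data covariance; (v) handle the test-point cross term $\E_\bx[K(\bx,\bX)\cdots]$ using Assumption~\ref{assumption: testdata} (18 matching moments) and the concentration of quadratic forms of $\bz^{(2)}$. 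The main obstacle I expect is step (iv): one must show that a deterministic quadratic teacher is learned \emph{exactly} (zero asymptotic bias) for \emph{every} $\lambda\ge0$, which requires a uniform-in-$\lambda$ control of $\bff_*^\top(\bK^{(2)}+\lambda\bI_n)^{-1}\bff_*$ and the analogous cross-quantities; the subtlety is that although $\bff_*$ lies in the column span of $\bX^{(2)}$, the vector $\bv=\mathrm{vec}(\bSigma)$ is a \emph{specific low-dimensional direction} (not a random isotropic one), so one cannot use generic trace-concentration and must instead exploit the explicit diagonal form of $\bSigma^{(2)}$ together with a rank-one perturbation argument to show the relevant quadratic form converges to the "noiseless interpolation" value. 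The secondary technical point is verifying that the $a_1\bX\bX^\top$ and $a_0\one\one^\top$ low-rank terms genuinely do not help or hurt the fit of the pure quadratic $f_*$ — this is the Sherman--Morrison--Woodbury bookkeeping already flagged in the technical-novelties section, and it goes through because the teacher has no constant or linear component to interfere with.
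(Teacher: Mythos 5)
Your proposal follows essentially the same route the paper takes: reduce $\bK$ to $\bK^{(2)}$ via Theorem~\ref{thm:concentration}, split the risk into bias and variance, show the variance converges to $\sigma_\bepsilon^2\cV(\lambda_*)$ exactly as in Theorem~\ref{thm:test_limit}, and show the bias vanishes because $\bff_*=\tfrac1d\bX^{(2)}\bv$ with $\bv=\mathrm{vec}(\bSigma)=\E[\bx^{(2)}]$ concentrates onto the $\ones$-direction after centering, which the paper extracts in Lemmas~\ref{lemm:bias1_deterministic}--\ref{lemm:bias3_deterministic} through a Sherman--Morrison--Woodbury reduction with the rank-\emph{two} perturbation $\bU=[\ones,\vmu]$. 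Two small slips: the perturbation is rank two, not rank one; and under Assumption~\ref{assumption:C8} one has $a_1=0$ identically (since $f'(0)=f^{(3)}(0)=0$), so there is no $a_1\bX\bX^\top$ block to peel off, only the residual $\diag(\bb_1)\bX\bSigma\bX^\top\diag(\bb_1)$ contribution from higher Hermite coefficients, which the paper controls separately.
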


\begin{remark}
Compared to the result in the proposal regime of \citep{bartlett2021deep}, Theorem~\ref{thm:test_limit_deterministic} demonstrates the advantage of KRR in a quadratic regime. When the teacher model $f_*$ is a quadratic function perfectly aligned with the covariance matrix $\mathbf{\Sigma}$ of $\bx$, the bias term in the generalization error vanishes. Our result is consistent with \cite[Theorem 10]{ghorbani2019limitations}, where the authors studied population loss (i.e., first take $n\to\infty$ while keeping the width and $d$ fixed) of random features to learn a deterministic noiseless quadratic function with isotropic Gaussian datasets. 
When the teacher model perfectly aligns with $\bSigma$, our result is applicable for more general data distributions.

\end{remark}

\subsubsection{Generalized cross-validation estimators}
The recent work of \cite{misiakiewicz2024non} established a dimension-free deterministic equivalence of the generalized cross-validation (GCV) estimator and the generalization error, and their approximation is uniform over a range of the ridge parameter $\lambda$. Different from our setting, they assumed abstract conditions on the kernel matrices and feature vectors $\phi(\mathbf x)$, while our assumptions are on the nonlinear function $f$ and data vectors $\mathbf x$.
The GCV estimator \citep{hastie2022surprises,wei2022more} is defined as 
\begin{align}
    \mathrm{GCV_{\lambda}(\bK,\by)}=\frac{n\by^\top (\bK+\lambda \bI)^{-2} \by}{\Tr ((\bK+\lambda \bI)^{-1})^{2}} =\frac{\frac{1}{n}\by^\top (\bK+\lambda \bI)^{-2} \by}{(\frac{1}{n}\Tr (\bK+\lambda \bI)^{-1})^{2}},
\end{align}which does not depend on the test dataset.
With the proof of Theorem~\ref{thm:train_limit} and Theorem~\ref{thm:test_limit}, we are able to establish the following approximation:
\begin{corollary}\label{cor:gcv}
Under the assumptions of Theorems~\ref{thm:train_limit} and~\ref{thm:test_limit}, we can get 
\begin{align}
        \mathrm{GCV_{\lambda}(\bK,\by)}-\cR(\lambda)\to 0,
    \end{align}
in probability, as $n\to\infty,$ where $\cR(\lambda)$ is defined by \eqref{eq:test_K}.
\end{corollary}
Corollary~\ref{cor:gcv} verifies the GCV approximation beyond the linear regime considered by \cite{hastie2022surprises,wei2022more}. To the best of our knowledge, this is the first GCV approximation for KRR with anisotropic data in the quadratic regime.

\section{Numerical simulations}\label{sec:simulation}
In this section, we provide several simulations to illustrate our theoretical results.

\begin{figure}[h!]
\centering
\begin{minipage}[t]{0.48\linewidth}
\centering
{\includegraphics[width=1\textwidth]{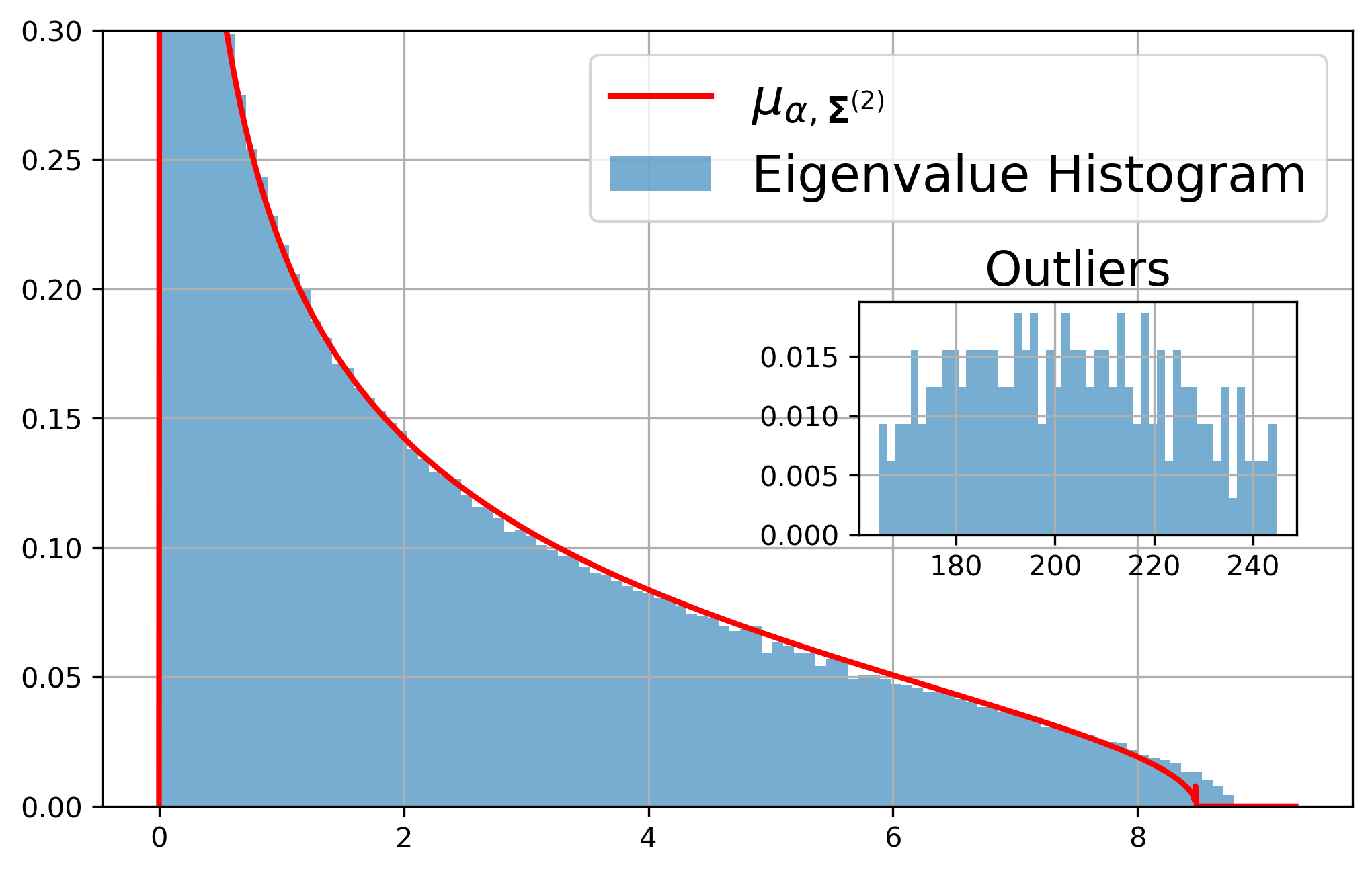}}  \\
\small (a) $\bSigma=\bI_d$, $n=18000$ and $d=200$. 
\end{minipage}
\begin{minipage}[t]{0.48\linewidth}
\centering
{\includegraphics[width=1\textwidth]{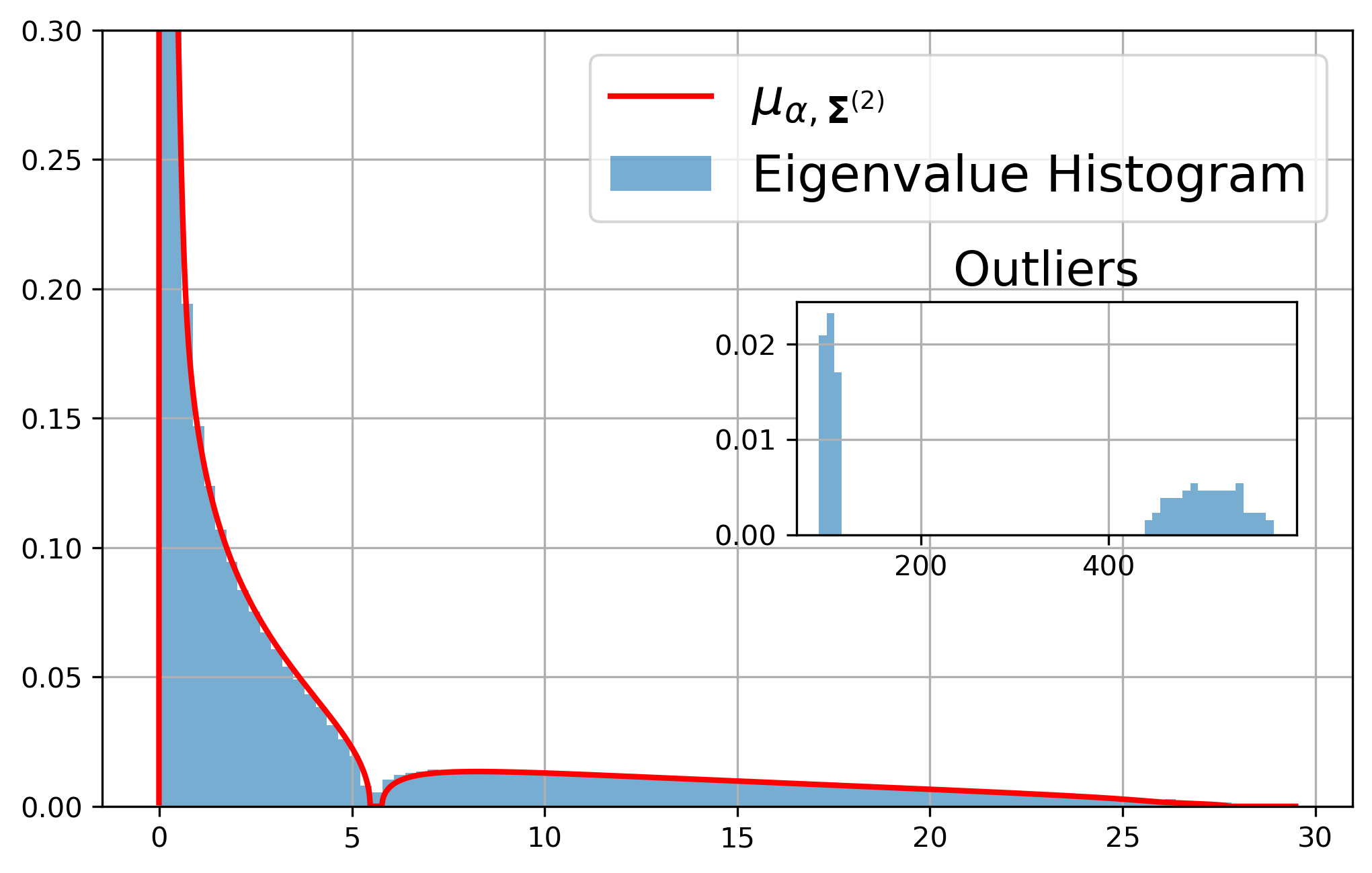}}  \\
\small (b) $\bSigma=\bSigma_0$, $n=18000$ and $d=200$. 
\end{minipage}
\centering
\begin{minipage}[t]{0.48\linewidth}
\centering
{\includegraphics[width=1\textwidth]{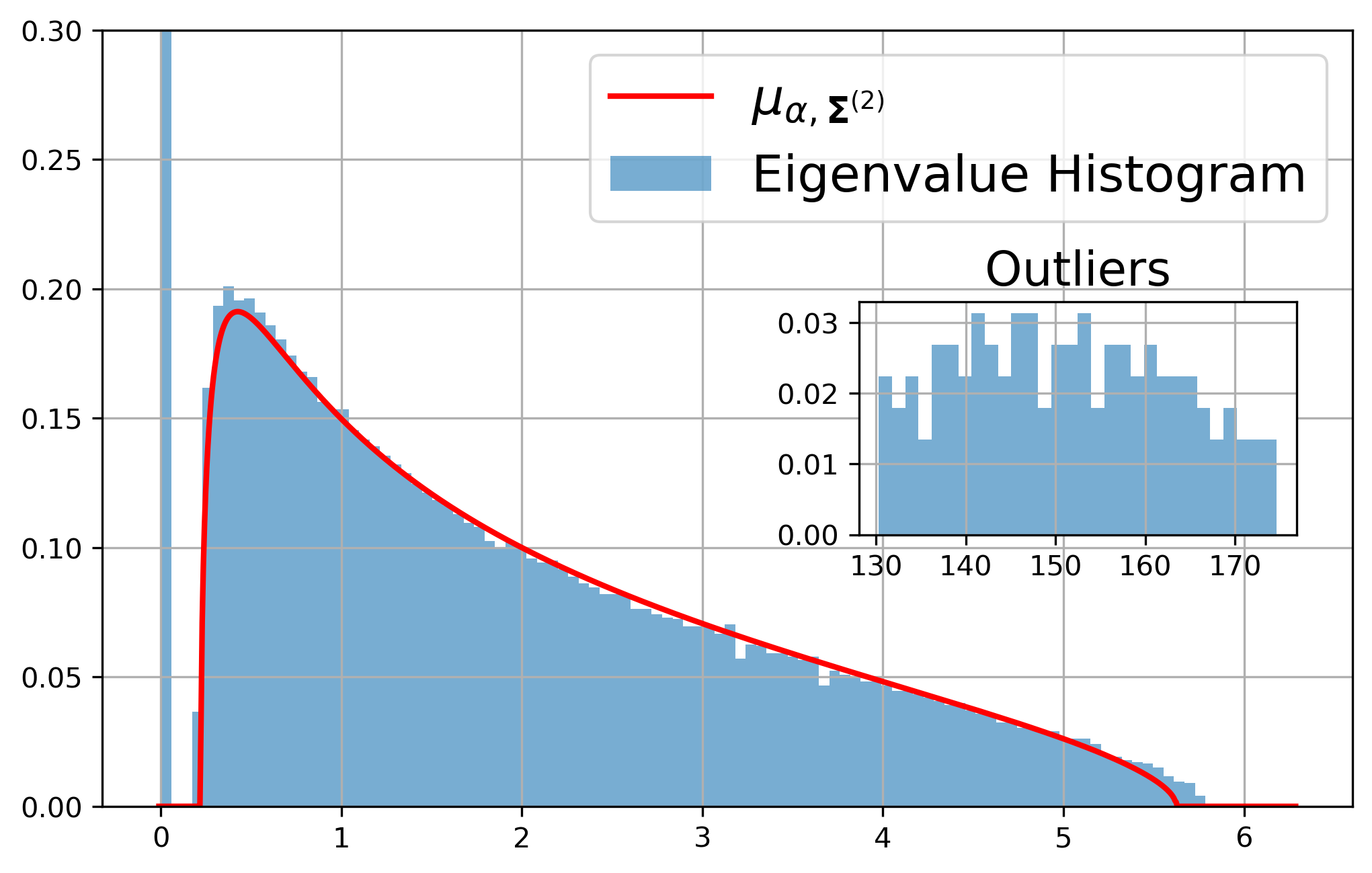}}  \\
\small (c) $\bSigma=\bI_d$, $n=25250$ and $d=150$. 
\end{minipage}
\begin{minipage}[t]{0.48\linewidth}
\centering
{\includegraphics[width=1\textwidth]{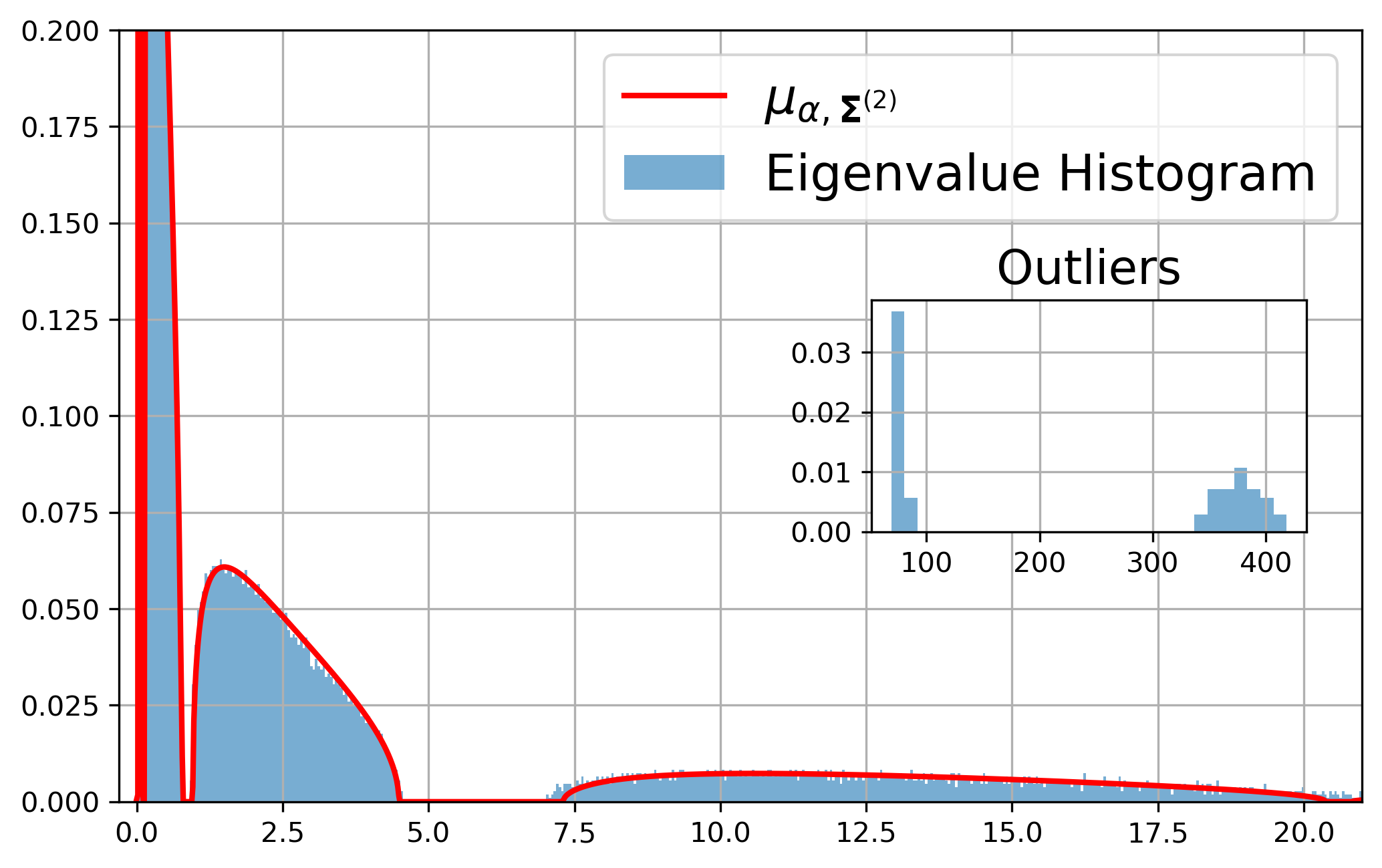}}  \\
\small (d) $\bSigma=\bSigma_0$, $n=25250$ and $d=150$. 
\end{minipage}
    \caption{Spectral distributions for kernel function $f(x)=x^2+x$ with isotropic and anisotropic Gasussian datasets. The red curves are given by the limiting spectral distribution obtained from Theorem~\ref{thm:globallaw}. The number of outliers is $O(d)$ plotted in the subfigures, due to the low-rank terms in $\bK^{(2)}$; see \eqref{eq:K2}.}
    \label{fig:poly}
\end{figure}

\paragraph{Limiting spectral distributions for $\bK$.}
Following Figure~\ref{fig:cos}, we provide additional simulations for the spectral distribution of the kernel matrix in Figure~\ref{fig:poly} for a quadratic kernel function $f(x)=x^2+x$ with isotropic Gaussian dataset and anisotropic Gaussian dataset with population covariance $\bSigma_0$ defined in \eqref{eq:Sigma_0}. For an anisotropic Gaussian, the limiting spectral distribution could have multiple disjoint bulks in Figure~\ref{fig:poly}(d). For these simulations, we also observe $O(d)$ outliers presented in the subfigures. These outliers may come from the terms $\ones\ones^\top$ and $\bX\bX^\top$ in our $\bK^{(2)}$ approximation from Theorem~\ref{thm:concentration}.

\paragraph{Approximation error $\|\bK-\bK^{(2)}\|$.}
In Figure~\ref{fig:approx}, we consider the approximation error under the spectral norm between the kernel random matrix $\bK$ and the quadratic kernel random matrix $\bK^{(2)}$ defined in \eqref{eq:K2} where the kernel function is $f(x)=e^x$. We fix the ratio $\frac{d^2}{2n}=1.2$ and $0.8$, and vary the values of $d$. The simulation suggests the order of the approximation error is between $d^{-1}$ and $d^{-1/2}$.

\begin{figure}[h!]
\centering
\begin{minipage}[t]{0.49\linewidth}
\centering
{\includegraphics[width=1\textwidth]{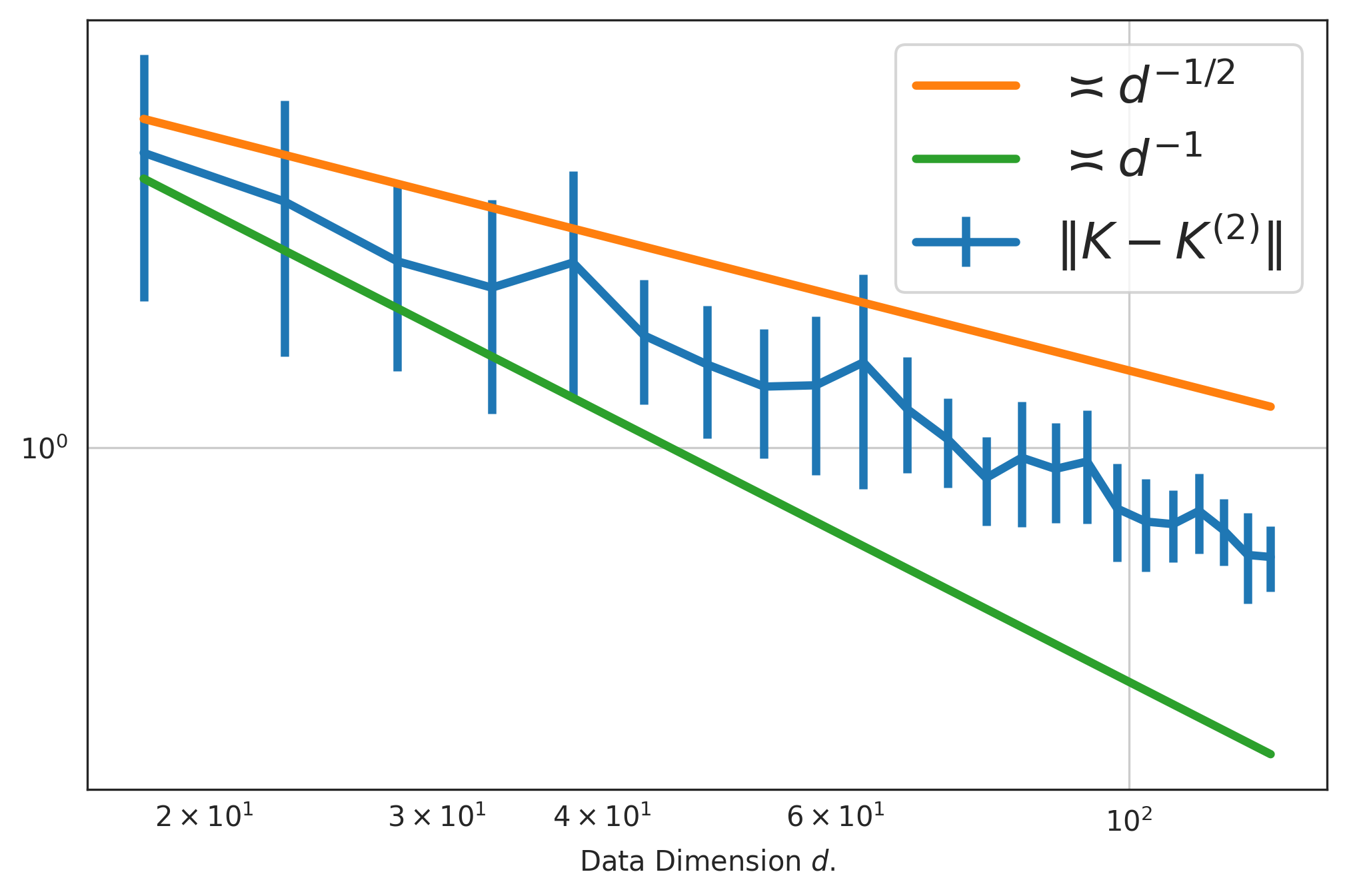}} \\
\small (a) $\alpha = 1.2$.
\end{minipage}
\begin{minipage}[t]{0.49\linewidth}
\centering
{\includegraphics[width=1\textwidth]{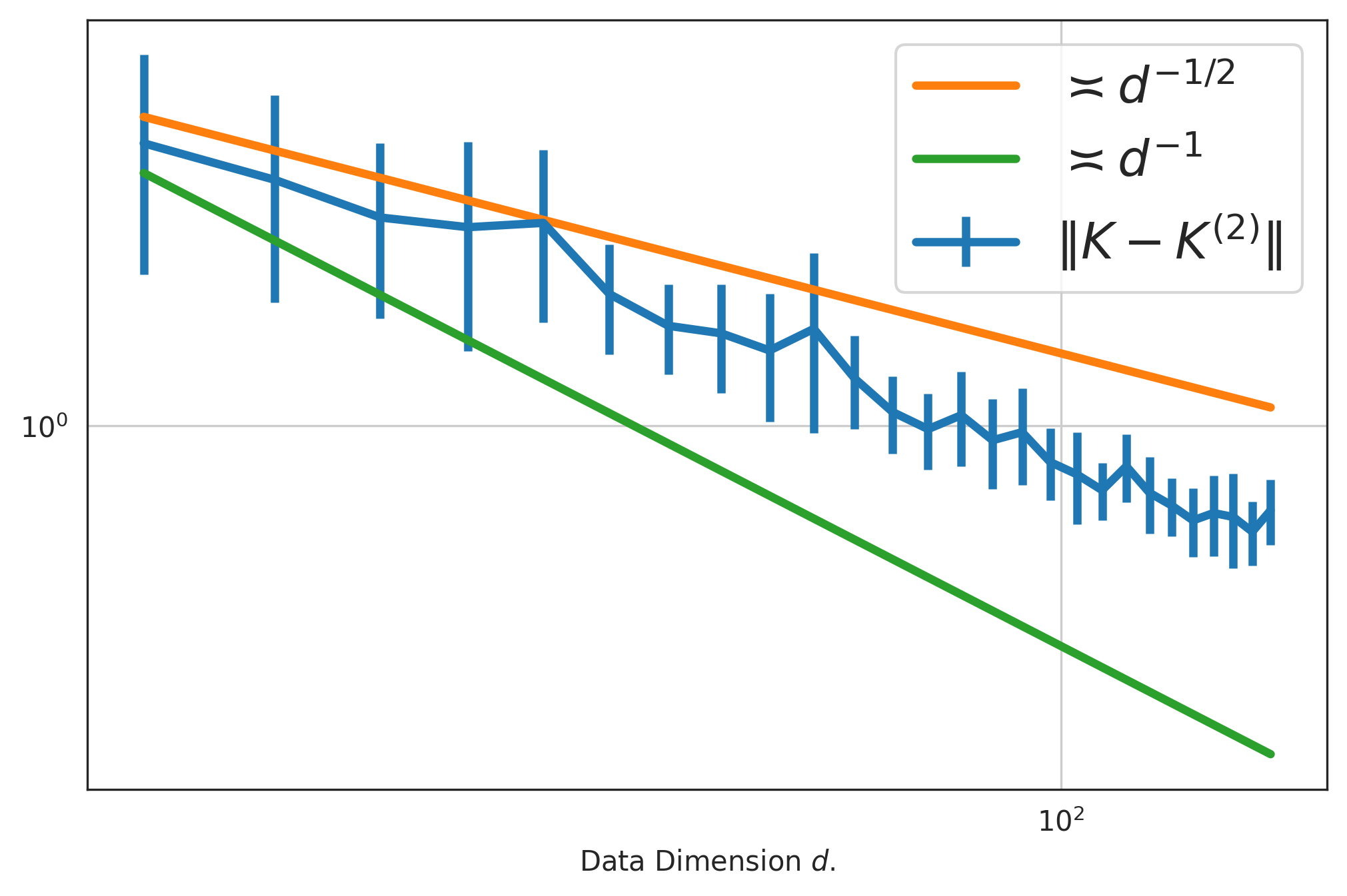}}  
\\
\small (b) $\alpha =0.8$.
\end{minipage}
\caption{Numerical simulations for the operator norm $\|\bK-\bKquad\|$ for exponential kernel $f(x)=\exp(x)$ when varying $d$ and fixing the ratio $\alpha = \frac{d^2}{2n} = 1.2 $ and $0.8$. For each $n$ and $d$, we take 15 trials to average the error.}
    \label{fig:approx}
\end{figure}

\paragraph{Generalization errors for KRR.}
In Figure~\ref{fig:feature_maps_vs_kernel}(a), we present a simulation for the test losses of KRR, as $n$ is increasing, and theoretical prediction from Theorem~\ref{thm:test_limit} when the teacher model $f_*$ is random, defined by \eqref{eq:teacher}. We fix $d = 160$, and use isotropic Gaussian data, polynomial kernel $f(x)=(1+x)^2$, $\lambda = 0.01$, and $\sigma_{\bepsilon} = 0.5$. This simulation also demonstrates the double descent phenomenon.
In Figure~\ref{fig:feature_maps_vs_kernel}(b), we present a simulation to empirically justify Theorem~\ref{thm:test_limit_deterministic} for test losses. The set up is same as Figure~\ref{fig:feature_maps_vs_kernel}(a) but using a deterministic teacher model $f_*(\bx):= 1+2\langle \bx, \bbeta\rangle +\frac{1}{d} \|\bx\|^2$ where $\bbeta$ is a fixed unit norm vector. For both cases, we can observe the peak of the test loss around $\alpha=1.0$.

\begin{figure}[ht!]
\centering
\begin{minipage}[t]{0.49\linewidth}
\centering
{\includegraphics[width=1\textwidth]{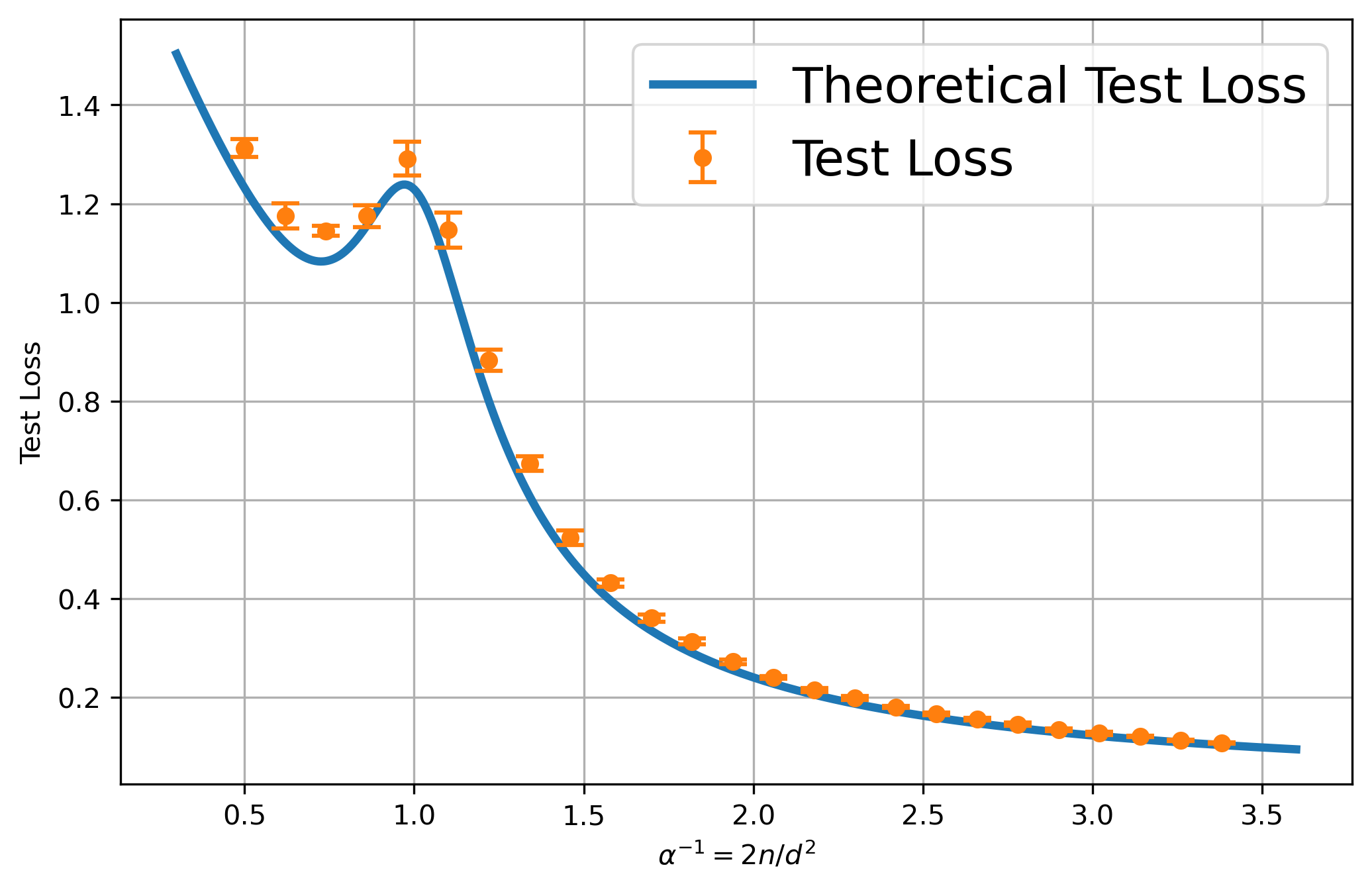}}  \\
\small (a) Random target $f_*$.
\end{minipage}
    \centering
\begin{minipage}[t]{0.49\linewidth}
\centering
{\includegraphics[width=1\textwidth]{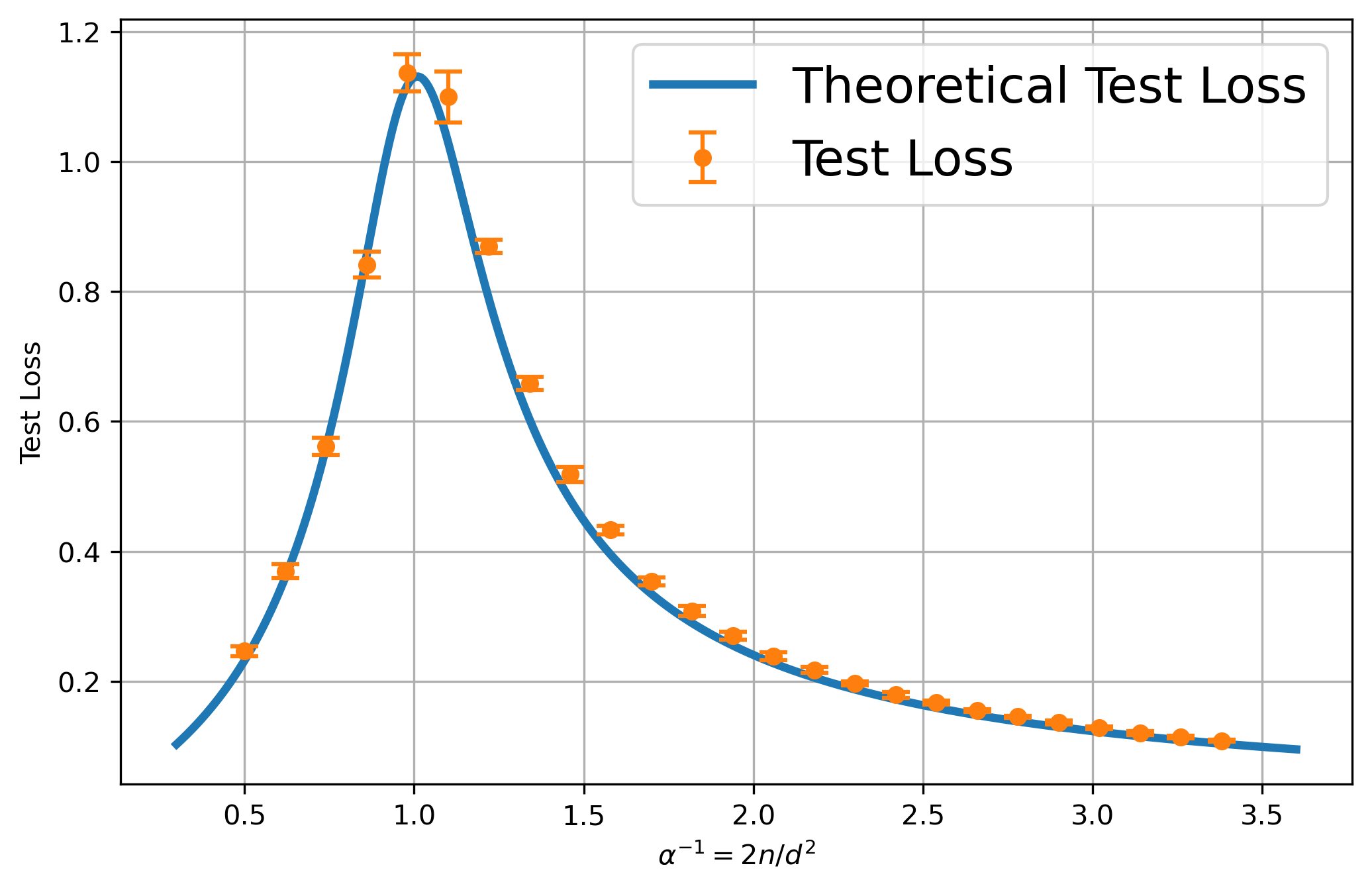}}  \\
\small (b) Deterministic target $f_*$. 
\end{minipage}
    \caption{Test losses (orange points) and theoretical prediction (blue lines) of $\cR(\lambda)$ for different aspect ratios $\alpha$ and teacher models $f_*$. Fix $d = 160$, noise level $\sigma_{\bepsilon} = 0.5$, and ridge parameter $\lambda = 0.01$. We choose the kernel function as $f(x)=(1+x)^2$. For each simulation point, we take 8 averages.  (a) The teacher model $f_*$ is defined by \eqref{eq:teacher} with coefficients $c_0 =1,c_1=2,c_2=1$ and the theoretical curve is given by Theorem~\ref{thm:test_limit}. (b) The teacher model $f_*$ is identical to (a) but replaces $\bG$ in \eqref{eq:teacher} with $\bI_d$ and the theoretical curve is derived from Theorem~\ref{thm:test_limit_deterministic}.}
    \label{fig:feature_maps_vs_kernel}
\end{figure}

\section{Conclusion}
This paper extends the theoretical understanding of kernel methods by analyzing kernel ridge regression in the quadratic regime, where the number of samples scales quadratically with the data dimension. Through a novel quadratic approximation of kernel matrices under general covariance structures, we derive precise asymptotic characterizations for both training and generalization errors. These results highlight that, unlike in the proportional regime, kernel methods in the quadratic regime retain their nonlinear expressive power and can fully capture quadratic target functions. Our analysis relies on new concentration inequalities and moment methods, providing tools that can be extended to more general polynomial regimes. This work bridges a critical gap in understanding the behavior of kernel models in high-dimensional settings beyond linear approximations and isotropic data distribution.

Our method, particularly using Wick’s formula, is not only tailored to our problem but is also broadly generalizable to a wide class of high-dimensional random matrix problems involving non-linear kernels or polynomial functions of Gaussian-like data.
With our proof technique, for general polynomial scaling $n\asymp d^{\ell}$, we expect the $k$-th moments matching condition would grow linearly with $k\asymp l$. This method, along with our trace-based error bounding techniques, can be adapted to analyze other models, including random feature models and learning dynamics of neural networks with polynomial activations. Moreover, our approach sheds light on the structure of random tensor products, which is increasingly relevant in modern high-dimensional learning theory.

Several promising directions remain for future research. One is to extend our results to higher-order polynomial regimes ($n\asymp d^k$ for any $k\in\N$). Another is to relax the Gaussian moment-matching condition to more general sub-Gaussian assumptions. Furthermore, we anticipate applying our theoretical insights to real-world high-dimensional learning tasks and revealing novel practical implications for different scalings of sample size, data dimension, and the size of machine learning models.

\acks{The authors are listed in alphabetical order. P.P. was partially supported by grants from Schmidt Sciences, DST, Amazon, SBI, and, in the initial stage of this work, by the HDSI-Simons postdoctoral fellowship.
Z.W. was partially supported by
NSF DMS-2055340, NSF DMS-2154099, and NSF DMS-1928930, while Z.W. was in residence at the Simons Laufer Mathematical Sciences Institute in Berkeley, California, during the Spring 2025.
Y.Z. was partially supported by NSF-Simons Research
Collaborations on the Mathematical Foundations of Deep Learning, the AMS-Simons Travel Grant, and the Simons Grant MPS-TSM-00013944. Part of the work was done when the three authors visited the Simons Institute for the Theory
of Computing during the Deep Learning Theory program in the Summer of 2022.}

\newpage
\appendix


\section{Additional definitions and lemmas}\label{sec:appendix}

\subsection{Additional definitions}
\begin{definition}[Stieltjes transform]
 Let $\mu$ be a probability measure on $\R$. The Stieltjes transform of $\mu$ is a function $m(z)$ defined on $\mathbb C\setminus \supp{(\mu)}$ by
$
     m(z)=\int_{\R} \frac{1}{x-z} d\mu(x).$
\end{definition}Notice that the Stieltjes transform $m(z)$ uniquely determines this probability measure $\mu$ \citep[Appendix B.2]{bai2010spectral}. For any $n\times n$ Hermitian matrix $\bA_n$, the Stieltjes transform of the empirical spectral distribution of $\bA_n$ can be written as $\tr (\bA_n-z\bI)^{-1}$. We call $(\bA_n-z \bI)^{-1}$ the resolvent of $\bA_n$.

\begin{definition}[Deformed Marchenko-Pastur law]\label{def:free_convolution}
For a probability measure $\nu$, we can define a deformed  Marchenko-Pastur  probability measure  denoted by $\mu_{\alpha}^{\mathrm{MP}}\boxtimes\nu$ via its Stieltjes transform $m(z)$, for any $z\in \mathbb C^+\cup \R_{-}$. Then $m(z)$ is recursively defined by 
\[m(z)=\int \frac{1}{x(1-\alpha-\alpha\cdot zm(z))-z}d\nu(x).\]
This is also called the Marchenko-Pastur equation with aspect ratio $\alpha\in(0,\infty)$, see also results by \cite{marchenko1967distribution,bai2010spectral,yao2015sample}.
Additionally, let us define the companion Stieltjes transform $\widetilde m(z):=\alpha m(z)+(1-\alpha)(-1/z).$ Then, we have a fixed point equation of $\widetilde m(z)$, for any $z\in \mathbb C^+\cup \R_{-}$,
\begin{equation}\label{eq:companion}
    z = -\frac{1}{\widetilde m(z)}+\alpha \int \frac{x}{1+x\widetilde m(z)}d\nu(x).
\end{equation}

\end{definition}
For a full description of free independence and free multiplicative convolution, see \citep[Lecture 18]{nica2006lectures} and \citep[Section 5.3.3]{anderson2010introduction}. The free multiplicative convolution $\boxtimes$ was first introduced by \cite{voiculescu1987multiplication}, which later has many applications for products of asymptotic free random matrices.

An example of this deformed Marchenko-Pastur law can be obtained by the following matrix model \citep{marchenko1967distribution}. Let $\bX\in \mathbb R^{d\times n}$ with aspect ratio $n/d\to \alpha$, where each entry in $\bX$ is i.i.d. $\mathcal N(0,1/d)$. Let $\bSigma\in \R^{n\times n}$ be a deterministic PSD matrix with limiting spectral distribution $\nu$. Then the limiting spectral distribution of $\bSigma^{1/2}\bX^\top \bX \bSigma^{1/2}$ is $\mu_{\alpha}^{\mathrm{MP}}\boxtimes\nu$.

\begin{definition}[Hermite polynomials]\label{def:hermitepolynomial}
The normalized $r$-th normalized Hermite polynomial is given by 
\begin{align} \label{eq:hermitepolynomial}
h_r(x)=\frac{1}{\sqrt {r!}} (-1)^r e^{x^2/2} \frac{d^r}{dx^r} e^{-x^2/2}.
\end{align}
Here $\{h_r\}_{r=0}^{\infty}$ form an orthonormal basis of $L^2(\mathbb R, \Gamma)$, where $\Gamma$ denotes the standard Gaussian distribution. For $\sigma_1,\sigma_2\in L^2(\mathbb R, \Gamma)$,  the inner product is defined by 
$$\langle \sigma_1,\sigma_2\rangle =\int_{-\infty}^{\infty} \sigma_1(x)\sigma_2(x) \frac{e^{-x^2/2}}{\sqrt{2\pi}}dx.$$  Every function $\sigma \in L^2(\mathbb R, \Gamma)$ can be expanded as a Hermite polynomial expansion    $\sigma(x)=\sum_{r=0}^{\infty}\zeta_r(\sigma)h_r(x)$,
where $\zeta_r(\sigma)$ is the $r$-th Hermite coefficient defined by 
$$\zeta_r(\sigma):=\int_{-\infty}^{\infty} \sigma(x)h_r(x)\frac{e^{-x^2/2}}{\sqrt{2\pi}}dx.
$$
\end{definition}

\subsection{Auxiliary lemmas}

\begin{lemma}[Lemma D.2 in \citep{nguyen2020global}] \label{lem:NM20D2} Let $\x, \y\in \R^d$ such that $\|\x\|=\| \y\|=1$ and $\w\sim \mathcal N(0, I_{d})$. Let $h_j$ be the $j$-th normalized Hermite polynomial in \eqref{eq:hermitepolynomial}. Then 
$
    \E_{\w}[h_j(\langle \w,\x\rangle)h_k(\langle \w,\y\rangle)  ]= \delta_{jk} \langle \x, \y \rangle ^k$.
\end{lemma}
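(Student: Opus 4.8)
The plan is to reduce the identity to a two-dimensional Gaussian integral and then evaluate it using the generating function of the Hermite polynomials (equivalently, Mehler's formula).

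First I would observe that, because $\|\x\| = \|\y\| = 1$ and $\w \sim \mathcal N(0, I_d)$, the pair $(U, V) := (\inner{\w, \x}, \inner{\w, \y})$ is a centered jointly Gaussian vector with $\E[U^2] = \E[V^2] = 1$ and $\E[UV] = \inner{\x, \y} =: \rho$, where $\rho \in [-1,1]$ by Cauchy--Schwarz. Hence the claim is equivalent to the statement that for such a correlated standard bivariate normal pair one has $\E[h_j(U) h_k(V)] = \delta_{jk}\rho^k$. It is precisely at this step that the unit-norm hypothesis is used; without it, spurious powers of $\|\x\|$ and $\|\y\|$ would survive.

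Next I would recall that the normalized Hermite polynomials of \eqref{eq:hermitepolynomial} have generating function $\sum_{r \ge 0} h_r(x)\, s^r/\sqrt{r!} = \exp(sx - s^2/2)$, which follows from the classical identity $\sum_{r \ge 0}\He_r(x)\, s^r/r! = \exp(sx - s^2/2)$ together with $h_r = \He_r/\sqrt{r!}$. Forming the double series and exchanging summation and expectation (legitimate for $|s|, |t|$ small by a crude Gaussian bound on $e^{sx - s^2/2}$, which is all that is needed to read off power-series coefficients), I would compute
\begin{align*}
\sum_{j, k \ge 0} \frac{s^j t^k}{\sqrt{j!\, k!}}\, \E\big[h_j(U) h_k(V)\big]
&= \E\big[e^{sU - s^2/2}\, e^{tV - t^2/2}\big] \\
&= e^{-(s^2 + t^2)/2}\, \E\big[e^{sU + tV}\big]
= e^{-(s^2 + t^2)/2}\, e^{(s^2 + t^2 + 2st\rho)/2}
= e^{st\rho},
\end{align*}
using that $sU + tV$ is centered Gaussian with variance $s^2 + t^2 + 2st\rho$. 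Expanding $e^{st\rho} = \sum_{m \ge 0} (st\rho)^m/m!$ and matching the coefficient of $s^j t^k$ on both sides yields $\E[h_j(U) h_k(V)] = 0$ for $j \ne k$ and $\E[h_j(U) h_j(V)] = \rho^j$, which is the asserted formula; the degenerate cases $\rho = \pm 1$ follow by continuity (or by noting both sides are polynomials in $\rho$).

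There is no serious obstacle here: the computation is short and standard, and the only points deserving a word of care are the reduction step, which hinges on the unit-norm normalization, and the interchange of sum and expectation, which holds on a neighborhood of the origin in $(s,t)$ and hence suffices. As an alternative that avoids generating functions, one may instead write $V = \rho U + \sqrt{1 - \rho^2}\, W$ with $W \sim \mathcal N(0, 1)$ independent of $U$, invoke the Ornstein--Uhlenbeck contraction identity $\E_W[h_k(\rho U + \sqrt{1 - \rho^2}\, W)] = \rho^k h_k(U)$, and conclude from the orthonormality of $\{h_r\}_{r \ge 0}$ in $L^2(\R, \Gamma)$ recorded in Definition~\ref{def:hermitepolynomial}.
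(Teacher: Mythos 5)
Your proof is correct. A small caveat on scope: the paper does not supply its own proof of this lemma---it is imported verbatim as Lemma~D.2 of the cited reference \cite{nguyen2020global}---so there is no in-paper argument to compare against. Your generating-function computation is the standard route: the reduction to a bivariate standard normal pair $(U,V)$ with correlation $\rho = \langle \x, \y\rangle$ is exactly where the unit-norm hypothesis enters, the identity $\sum_{r\ge 0} h_r(x)\, s^r/\sqrt{r!} = e^{sx - s^2/2}$ is the right normalization for the $h_r$ of Definition~\ref{def:hermitepolynomial}, and matching coefficients of $s^j t^k$ in $e^{st\rho}$ gives $\E[h_j(U)h_k(V)] = \delta_{jk}\rho^k$ as claimed. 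The alternative you sketch via $V = \rho U + \sqrt{1-\rho^2}\,W$ and the Ornstein--Uhlenbeck contraction $\E_W[h_k(\rho U + \sqrt{1-\rho^2}\,W)] = \rho^k h_k(U)$ is equally valid and is arguably closer in spirit to how the reference presents it, since it leans directly on the $L^2(\R,\Gamma)$ orthonormality already recorded in the paper rather than on an exchange of summation and expectation; but both are short, standard, and fully rigorous, so the choice is a matter of taste.
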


\begin{lemma}[Theorem A.45 in \citep{bai2010spectral}]\label{lem:BS10A45}
Let $\bA, \bB$ be two $n\times n$ Hermitian matrices. If $\|\bA-\bB\|\to 0$ as $n\to\infty$, then $\bA$ and $\bB$ have the same limiting spectral distribution.
\end{lemma}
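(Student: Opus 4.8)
The plan is to control the L\'evy distance between the empirical spectral distributions of $\bA$ and $\bB$ by $\|\bA-\bB\|$, so that a vanishing operator-norm difference forces the two empirical spectral distributions to have the same weak limit. The first step is eigenvalue stability. Order the eigenvalues of the Hermitian matrices decreasingly, $\lambda_1(\bA)\ge\cdots\ge\lambda_n(\bA)$ and $\lambda_1(\bB)\ge\cdots\ge\lambda_n(\bB)$. Writing $\bA=\bB+(\bA-\bB)$ and invoking Weyl's perturbation inequality for Hermitian matrices yields $|\lambda_i(\bA)-\lambda_i(\bB)|\le\|\bA-\bB\|$ for every $i\in[n]$; this is the only spectral input needed.

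Next I would convert this into a bound on distribution functions. Let $F^{\bA},F^{\bB}$ be the cumulative distribution functions of $\mu_{\bA},\mu_{\bB}$ and set $\eps:=\|\bA-\bB\|$. If $\lambda_i(\bA)\le x$ then, by the first step, $\lambda_i(\bB)\le\lambda_i(\bA)+\eps\le x+\eps$; counting indices gives $F^{\bA}(x)\le F^{\bB}(x+\eps)$ for all $x\in\R$, and by the symmetric argument $F^{\bB}(x-\eps)\le F^{\bA}(x)$. Hence the L\'evy distance obeys $L(\mu_{\bA},\mu_{\bB})\le\eps=\|\bA-\bB\|$. Applying this to $\bA_n,\bB_n$: if $\mu_{\bA_n}\to\mu$ weakly, then since the L\'evy metric metrizes weak convergence on $\R$, the triangle inequality $L(\mu_{\bB_n},\mu)\le L(\mu_{\bB_n},\mu_{\bA_n})+L(\mu_{\bA_n},\mu)\le\|\bA_n-\bB_n\|+L(\mu_{\bA_n},\mu)\to0$ shows $\mu_{\bB_n}\to\mu$ as well; the reverse implication is identical by symmetry, so $\bA_n$ and $\bB_n$ share the same limiting spectral distribution.

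An equivalent route, which is the one actually convenient in the companion applications of this lemma, passes through Stieltjes transforms: the resolvent identity $(\bA-z\bI)^{-1}-(\bB-z\bI)^{-1}=(\bA-z\bI)^{-1}(\bB-\bA)(\bB-z\bI)^{-1}$ together with $\|(\bA-z\bI)^{-1}\|\le 1/\Im z$ for $\Im z>0$ gives $|\tr(\bA-z\bI)^{-1}-\tr(\bB-z\bI)^{-1}|\le\|\bA-\bB\|/(\Im z)^2$, so the Stieltjes transforms of $\mu_{\bA_n}$ and $\mu_{\bB_n}$ have the same pointwise limit on $\mathbb{C}^+$, which by the Stieltjes inversion formula identifies the limiting measures. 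There is no genuine obstacle here; the only points needing care are using decreasingly ordered eigenvalues so that Weyl's inequality applies index by index, and getting the directions of the two inequalities in the L\'evy-distance estimate right.
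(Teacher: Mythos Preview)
The paper does not prove this lemma; it is quoted as Theorem~A.45 of Bai--Silverstein and used as a black box. Your argument is correct and is essentially the standard proof found there: Weyl's inequality gives $\max_i|\lambda_i(\bA)-\lambda_i(\bB)|\le\|\bA-\bB\|$, which immediately bounds the L\'evy (indeed Kolmogorov--Smirnov-type) distance between the empirical spectral distributions, and weak convergence follows. The alternative Stieltjes-transform route you sketch is also valid and is in fact the form in which the lemma is implicitly used later in the paper (e.g.\ in passing from $\bK$ to $\bK^{(2)}$ via resolvent differences).
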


\begin{lemma}[Theorem  A.43 in \citep{bai2010spectral}]\label{lem:lowrank}
   Let $\bA, \bB$ be two $n\times n$ Hermitian matrices. If $\frac{1}{n}\rank(\bA-\bB)\to 0$ as $n\to\infty$, then $\bA$ and $\bB$ have the same limiting spectral distribution.
\end{lemma}

\begin{lemma}[Wick's formula for Gaussian vectors]\label{lemm:wick}
Assume that $\bx=\bSigma^{1/2}\bz$, where $\E[\bz]=0$, $\E[\bz\bz^\top]=\bI_{d}$,  and $\bz$ matches the first $(a+b)$-th joint moments with the standard Gaussian vector $\bg\sim \cN(0,\bI_d)$, for some $a,b\in\N$ and $\w=\bSigma^{1/2}\bg$. Then, for any two deterministic vectors $\bu$ and $\bv$, 
\begin{align}
\E_{\bx}[\langle \bx,\bu\rangle^a  \langle \bx,\bv\rangle^b]&=\E_{\bw}[\langle \bw,\bu\rangle^a  \langle \bw,\bv\rangle^b]\\
&=\sum_{\pi\in\cP_2(a+b)}\prod_{\substack{(l,j)\in\pi\\ l,j\in [a]}}\bu^\top\bSigma\bu \prod_{\substack{(l,j)\in\pi\\ a+1\leq l,j\leq a+b }}\bv^\top\bSigma\bv \prod_{\substack{(l,j)\in\pi\\ l\in [a], a+1\leq j\leq a+b}}\bu^\top\bSigma\bv,
\end{align}
where $\cP_2(a+b)$ is collection of all pairwise matchings on  $[a+b]$, and $(\ell,j)\in \pi$ means the index $\ell$ is matched with $j$.
\end{lemma}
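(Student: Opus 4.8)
The plan is to establish the two equalities separately: first the moment-matching reduction $\E_{\bx}[\cdots]=\E_{\bw}[\cdots]$, and then the Wick/Isserlis identity for the jointly Gaussian pair $(\langle\bw,\bx_i\rangle,\langle\bw,\bx_k\rangle)$. For the first equality, I would write $\langle\bx,\bx_i\rangle=\langle\bSigma^{1/2}\bz,\bx_i\rangle=\langle\bz,\bSigma^{1/2}\bx_i\rangle=\sum_{l=1}^d\bz(l)\,(\bSigma^{1/2}\bx_i)(l)$, and likewise for $\bx_k$. Then $\langle\bx,\bx_i\rangle^a\langle\bx,\bx_k\rangle^b$ expands, with deterministic coefficients depending only on $\bSigma,\bx_i,\bx_k$, into a finite linear combination of monomials $\prod_{l=1}^d\bz(l)^{m_l}$ of total degree $\sum_l m_l=a+b$. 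Taking expectations term by term and using that $\bz$ matches the first $(a+b)$ joint moments of $\bg\sim\cN(0,\bI_d)$ — i.e.\ $\E[\prod_l\bz(l)^{m_l}]=\E[\prod_l\bg(l)^{m_l}]$ whenever $\sum_l m_l\le a+b$ — each monomial's expectation is unchanged, so $\E_{\bx}[\langle\bx,\bx_i\rangle^a\langle\bx,\bx_k\rangle^b]=\E_{\bg}[\langle\bg,\bSigma^{1/2}\bx_i\rangle^a\langle\bg,\bSigma^{1/2}\bx_k\rangle^b]=\E_{\bw}[\langle\bw,\bx_i\rangle^a\langle\bw,\bx_k\rangle^b]$, the last step undoing the same substitution with $\bw=\bSigma^{1/2}\bg$.

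For the second equality, note $\bw\sim\cN(0,\bSigma)$, so $(\langle\bw,\bx_i\rangle,\langle\bw,\bx_k\rangle)$ is a centered jointly Gaussian pair with $\E[\langle\bw,\bx_l\rangle\langle\bw,\bx_j\rangle]=\bx_l^\top\bSigma\bx_j$ for $l,j\in\{i,k\}$. I would introduce the (possibly degenerate) centered Gaussian vector $\bxi=(\xi_1,\dots,\xi_{a+b})$ with $\xi_1=\cdots=\xi_a=\langle\bw,\bx_i\rangle$ and $\xi_{a+1}=\cdots=\xi_{a+b}=\langle\bw,\bx_k\rangle$, and assign to each position $l\in[a+b]$ the label $\iota(l)=i$ if $l\le a$ and $\iota(l)=k$ if $l>a$, so that $\E[\xi_l\xi_j]=\bx_{\iota(l)}^\top\bSigma\bx_{\iota(j)}$. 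Since $\langle\bw,\bx_i\rangle^a\langle\bw,\bx_k\rangle^b=\xi_1\cdots\xi_{a+b}$, the classical Wick formula for centered Gaussian vectors gives $\E[\xi_1\cdots\xi_{a+b}]=\sum_{\pi\in\cP_2(a+b)}\prod_{(l,j)\in\pi}\E[\xi_l\xi_j]=\sum_{\pi\in\cP_2(a+b)}\prod_{(l,j)\in\pi}\bx_{\iota(l)}^\top\bSigma\bx_{\iota(j)}$, which is exactly the claimed sum over pairwise matchings with positions labelled from $\{i,k\}$. For completeness I would include the short self-contained proof of this identity: expanding both sides of $\E[e^{\bt^\top\bxi}]=\exp(\tfrac{1}{2}\bt^\top\bC\bt)$, with $\bC=\E[\bxi\bxi^\top]$, in powers of $\bt$ and comparing the coefficient of $t_1t_2\cdots t_{a+b}$ yields $\E[\xi_1\cdots\xi_{a+b}]=\sum_{\pi\in\cP_2(a+b)}\prod_{(l,j)\in\pi}C_{lj}$; this argument makes no rank assumption on $\bC$, so the possible degeneracy of $\bxi$ (when $\bSigma$ is singular or $\bx_i,\bx_k$ are collinear) causes no trouble.

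There is no genuine obstacle here; the only points requiring a little care are (i) stating precisely what ``matches the first $(a+b)$ joint moments'' means, so that the term-by-term comparison in the first step is licensed for every monomial of degree at most $a+b$ appearing in the expansion, and (ii) invoking Wick's formula in a form valid for degenerate Gaussian vectors, which the moment-generating-function derivation supplies. I expect the write-up to be short.
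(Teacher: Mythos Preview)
Your proposal is correct and follows essentially the same approach as the paper: the paper's proof is a two-sentence sketch that says the first equality follows from the moment-matching condition between $\bz$ and $\bg$, and the second from the classical Wick formula together with $\mathrm{Cov}(\langle\bw,\bx_i\rangle,\langle\bw,\bx_k\rangle)=\bx_i^\top\bSigma\bx_k$. Your version simply unpacks both steps in more detail (the polynomial expansion for the moment-matching step, and the MGF derivation of Wick's identity for the degenerate Gaussian), which is fine but not a different argument.
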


\begin{proofoflemma}{\ref{lemm:wick}}
    The first identity comes from the moment matching condition between $\bg$ and $\bz$, and the second one is from Wick's formula \citep{wick1950evaluation} and the fact that
$\text{Cov}(\langle \bw,\bu\rangle, \langle \bw,\bv\rangle)= \bu^\top \bSigma\bv.$ 
\end{proofoflemma}

\begin{lemma}[Whittle's inequality, Theorem 2 in \citep{whittle1960bounds}]\label{lem:quadratic_moments}
 Let $\bx\in \mathbb R^d$ be a random vector with independent entries and zero mean. Let $\gamma_{j}(s)=\mathbb E[|\bx_j|^s]^{1/s}$. Let $\bA=(a_{jk})_{j,k\in [d]}\in \mathbb R^{d\times d}$ be a deterministic matrix. We have for $s\geq 2$  and  a numerical constant $C(s)$ depending on $s$,
 \begin{align}
     \E| \bx^\top \bA \bx -\E [\bx^\top \bA \bx]|^{s}\leq C(s) \left( \sum_{j,k} a_{jk}^2 \gamma_j^2(2s) \gamma_k^2(2s)\right)^{s/2}.
 \end{align}
\end{lemma}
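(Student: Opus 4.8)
The plan is to reduce the quadratic form to sums of independent random variables through a martingale-difference decomposition along the coordinates, and then apply a Rosenthal-type moment inequality. First I would symmetrize: since replacing $\bA$ by $(\bA+\bA^\top)/2$ leaves $\bx^\top\bA\bx$ unchanged and changes the right-hand side only by a universal factor, I may assume $a_{jk}=a_{kj}$. Write $Q=\bx^\top\bA\bx$ and $\mathcal F_i=\sigma(\bx_1,\dots,\bx_i)$. Since the coordinates are independent and centered, $\E[Q\mid\mathcal F_i]=\sum_{j,k\le i}a_{jk}\bx_j\bx_k+\sum_{k>i}a_{kk}\E[\bx_k^2]$, so, with $D_i:=\E[Q\mid\mathcal F_i]-\E[Q\mid\mathcal F_{i-1}]$,
\[
Q-\E Q=\sum_{i=1}^d D_i,\qquad D_i=a_{ii}\bigl(\bx_i^2-\E[\bx_i^2]\bigr)+2\bx_i\sum_{j<i}a_{ij}\bx_j .
\]

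Next I would apply Burkholder's martingale inequality in $L^s$ followed by the triangle inequality in $L^{s/2}$ (legitimate since $s\ge 2$) to obtain $\|Q-\E Q\|_s^2\le C(s)\sum_{i=1}^d\|D_i\|_s^2$, which reduces the problem to bounding each $\|D_i\|_s$. The diagonal contribution is controlled by $\|\bx_i^2-\E\bx_i^2\|_s\le 2\|\bx_i\|_{2s}^2=2\gamma_i(2s)^2$. For the bilinear contribution, independence of $\bx_i$ from $(\bx_j)_{j<i}$ gives $\bigl\|2\bx_i\sum_{j<i}a_{ij}\bx_j\bigr\|_s=2\gamma_i(s)\bigl\|\sum_{j<i}a_{ij}\bx_j\bigr\|_s$; as $\sum_{j<i}a_{ij}\bx_j$ is a sum of independent centered variables, Rosenthal's inequality bounds its $s$-th moment by a constant depending only on $s$ times $\bigl(\sum_{j<i}a_{ij}^2\E\bx_j^2\bigr)^{1/2}+\bigl(\sum_{j<i}|a_{ij}|^s\E|\bx_j|^s\bigr)^{1/s}$, and the bounds $\E|\bx_j|^s\le\gamma_j(2s)^s$, $\gamma_i(s)\le\gamma_i(2s)$, together with the elementary inequality $\sum_j c_j^{s/2}\le\bigl(\sum_j c_j\bigr)^{s/2}$ valid for $s\ge 2$ (which folds the second Rosenthal term into the first), yield $\|D_i\|_s^2\le C(s)\,\gamma_i(2s)^2\sum_{j\le i}a_{ij}^2\gamma_j(2s)^2$. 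Summing over $i$, enlarging $\{j\le i\}$ to $[d]$, and undoing the symmetrization (each costing only another constant) gives $\mathbb E|Q-\E Q|^s\le C(s)\bigl(\sum_{j,k}a_{jk}^2\gamma_j(2s)^2\gamma_k(2s)^2\bigr)^{s/2}$, the claimed bound.

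The main obstacle is the off-diagonal (second-order chaos) term $\sum_{j\ne k}a_{jk}\bx_j\bx_k$: unlike the diagonal part it is not a sum of independent summands, yet the constant must depend only on $s$, not on $d$. The martingale step is exactly what resolves this, because conditioning on the earlier coordinates turns the bilinear form into a conditionally linear one to which Rosenthal applies; the delicate point is arranging that both terms produced by Rosenthal land inside a single $\bigl(\sum a_{jk}^2\gamma_j^2\gamma_k^2\bigr)^{s/2}$, which is where the convexity inequality $\ell^{s/2}\hookrightarrow\ell^1$ enters. An alternative worth noting is to decouple $\sum_{j\ne k}a_{jk}\bx_j\bx_k$ against an independent copy $\bx'$ and apply Rosenthal twice --- first in $\bx$ conditionally on $\bx'$, then in $\bx'$ --- which gives the same bound with a different constant; I would carry out the martingale version since it treats the diagonal and off-diagonal contributions uniformly. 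The optimal constant is the content of Whittle's original argument, which I would cite as \cite{whittle1960bounds}.
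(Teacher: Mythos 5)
The paper states this lemma without proof, simply citing Theorem 2 of Whittle (1960); so there is no ``paper proof'' to compare against. Your proposal, however, is a complete and correct proof. The martingale decomposition $Q-\E Q=\sum_i D_i$ with $D_i=a_{ii}(\bx_i^2-\E\bx_i^2)+2\bx_i\sum_{j<i}a_{ij}\bx_j$ is verified by a direct computation of $\E[Q\mid\mathcal F_i]-\E[Q\mid\mathcal F_{i-1}]$ using independence and symmetry of $\bA$; Burkholder plus the triangle inequality in $L^{s/2}$ legitimately yields $\|Q-\E Q\|_s^2\le C(s)\sum_i\|D_i\|_s^2$; conditional Rosenthal on $\sum_{j<i}a_{ij}\bx_j$ is applicable by independence of $\bx_i$ from the past; and the crucial $\ell^1\hookrightarrow\ell^{s/2}$ embedding $\sum_j c_j^{s/2}\le(\sum_j c_j)^{s/2}$ (valid for $c_j\ge0$, $s\ge 2$) correctly folds the $(\sum|a_{ij}|^s\E|\bx_j|^s)^{1/s}$ term of Rosenthal into the Hilbert--Schmidt expression. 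Your symmetrization step is also sound: $((a_{jk}+a_{kj})/2)^2\le\frac12(a_{jk}^2+a_{kj}^2)$ together with the symmetry of the $\gamma_j\gamma_k$ weights shows the right-hand side can only decrease.

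For context: Whittle's original 1960 argument is a direct combinatorial expansion of $\E|Q-\E Q|^s$ using interpolation between integer moments and H\"older, giving explicit constants; the martingale--Burkholder--Rosenthal route you take is the standard modern approach (and generalizes more readily, e.g.\ to Hanson--Wright-type bounds). Both are correct; yours trades a sharper tracking of $C(s)$ for a much cleaner structure.
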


\begin{lemma}[Theorem 1.1 in \citep{bai2008large}]\label{lem:BaiZhou}
Let $\bx\in \mathbb R^p$ be a random vector and $\bX$ be a $p\times n$ matrix with i.i.d. columns and $\bSigma=\mathbb E [\bx\bx^\top]$ with bounded operator norm, and its limiting ESD is given by $\mu_{\bSigma}$. If $p/n\to\alpha$ and 
 $\mathbb E\left|\bx^\top \bA\bx-\Tr[\bA\bSigma]\right|^2=o(p^2)$
for $\bA\in \mathbb R^{p\times p}$ with $\|\bA\|\leq 1$,
then the empirical spectral distribution of $\frac{1}{n} \bX \bX^\top$ converges in probability to a deformed Marchenko-Pastur law $\mu_{\alpha}^{\mathrm{MP}}\boxtimes \mu_{\bSigma}$, where $\mu_{\alpha}^{\mathrm{MP}}$ is defined in \eqref{eq:def_mualpha}.
\end{lemma}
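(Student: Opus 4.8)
The statement is the deformed Marchenko--Pastur theorem of Bai--Zhou for sample covariance matrices with i.i.d.\ columns but possibly dependent within-column coordinates, and the plan is to run the classical Stieltjes-transform (resolvent) argument, in which the hypothesis $\E|\bx^\top\bA\bx-\Tr(\bA\bSigma)|^2=o(p^2)$ plays exactly the role that independence of the coordinates of $\bx$ plays in the standard proof. I would write $\bS_n=\tfrac1n\bX\bX^\top=\tfrac1n\sum_{k=1}^n\bx_k\bx_k^\top$ and, for $z\in\mathbb C^+$, set $\bR(z)=(\bS_n-z\bI_p)^{-1}$ and $m_n(z)=\tfrac1p\Tr\bR(z)$. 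Since $\mu_\bSigma$ has bounded support and $\tfrac1p\E\Tr\bS_n=\tfrac1p\Tr\bSigma$ is bounded, no mass escapes to infinity, so it suffices to show that for each fixed $z$ one has $m_n(z)\to m(z)$ in probability, where $m(z)$ is the Stieltjes transform of $\mu_\alpha^{\mathrm{MP}}\boxtimes\mu_\bSigma$ from Definition~\ref{def:free_convolution}. First I would dispose of fluctuations: replacing a single column perturbs $\bS_n$ by a rank-one matrix, hence changes $\Tr\bR(z)$ by at most $1/|\Im z|$, so a martingale-difference variance estimate gives $\Var(m_n(z))=O(n/p^2)=O(1/p)\to0$; it then remains only to identify the deterministic limit of $m_n(z)$.

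For the self-consistent equation I would set $\bR_k(z)=(\bS_n-\tfrac1n\bx_k\bx_k^\top-z\bI_p)^{-1}$, which is independent of $\bx_k$. The resolvent identity $\bR\bS_n=\bI_p+z\bR$ gives $1+zm_n(z)=\tfrac1p\Tr(\bR\bS_n)=\tfrac1{np}\sum_k\bx_k^\top\bR\bx_k$, and Sherman--Morrison gives $\bx_k^\top\bR\bx_k=\bx_k^\top\bR_k\bx_k/(1+\tfrac1n\bx_k^\top\bR_k\bx_k)$, where the factors $\beta_k:=(1+\tfrac1n\bx_k^\top\bR_k\bx_k)^{-1}$ obey the standard resolvent bound $|\beta_k|\le|z|/|\Im z|$. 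Hence
\[
1+zm_n(z)=\frac np-\frac1p\sum_{k=1}^n\beta_k .
\]

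The decisive step is then the use of the hypothesis to show $\beta_k\approx(1+e_n(z))^{-1}$ with $e_n(z):=\tfrac1n\Tr(\bR(z)\bSigma)$: conditioning on $\{\bx_j\}_{j\neq k}$ and applying $\E|\bx^\top\bA\bx-\Tr(\bA\bSigma)|^2=o(p^2)$ with $\bA=|\Im z|\bR_k(z)$ (operator norm at most $1$) gives $\E|\tfrac1n\bx_k^\top\bR_k\bx_k-\tfrac1n\Tr(\bR_k\bSigma)|^2=\tfrac1{n^2}o(p^2)=o(1)$ because $p\asymp n$, and $\tfrac1n\Tr(\bR_k\bSigma)$ differs from $e_n(z)$ by $O(1/(n|\Im z|))$. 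This yields, modulo $o_{\mathbb P}(1)$, the relation $1+zm_n(z)=\tfrac np\cdot e_n(z)/(1+e_n(z))$. A parallel computation for $\tfrac1p\Tr(\bR(z)\bSigma)$ --- equivalently the deterministic-equivalent identity $\bR(z)\approx\bigl(-z(\bI_p+\tilde m_n(z)\bSigma)\bigr)^{-1}$ with companion transform $\tilde m_n(z)=\tfrac pn m_n(z)+(1-\tfrac pn)(-\tfrac1z)$ --- closes the system, giving, again modulo $o_{\mathbb P}(1)$,
\[
z=-\frac1{\tilde m_n(z)}+\frac pn\int\frac{x}{1+x\tilde m_n(z)}\,d\mu^{(n)}_\bSigma(x),\qquad m_n(z)=-\frac1z\int\frac1{1+x\tilde m_n(z)}\,d\mu^{(n)}_\bSigma(x),
\]
where $\mu^{(n)}_\bSigma$ denotes the ESD of $\bSigma$. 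This is exactly the companion Marchenko--Pastur equation~\eqref{eq:companion} with $\nu=\mu^{(n)}_\bSigma$ and aspect ratio $p/n$; since this equation has a unique solution in $\mathbb C^+$ which depends Lipschitz-continuously, uniformly on compact subsets of $\mathbb C^+$, on the aspect ratio and on $\nu$ in the weak topology, letting $p/n\to\alpha$ and $\mu^{(n)}_\bSigma\to\mu_\bSigma$ forces $m_n(z)\to m(z)$, the Stieltjes transform of $\mu_\alpha^{\mathrm{MP}}\boxtimes\mu_\bSigma$.

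The part I expect to be the main obstacle is making the quadratic-form step quantitative and uniform in $k$ and in $z$ on compacts of $\mathbb C^+$: because the coordinates of $\bx_k$ need not be independent, the only handle on the fluctuation of $\bx_k^\top\bR_k\bx_k$ around $\Tr(\bR_k\bSigma)$ is the hypothesized second-moment bound, so all errors must be carried through the nonlinear fixed-point equation with only $L^2$ control, and before the resolvent perturbation estimates apply one must first truncate $\bx_k\mapsto\bx_k\mathbf{1}\{\|\bx_k\|\le M\sqrt{p}\}$ --- which perturbs $\bSigma$ and the ESD of $\bS_n$ negligibly, using $\E\|\bx_k\|^2=\Tr\bSigma=O(p)$ --- so that the quadratic forms are guaranteed to be at the correct scale. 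The remaining bookkeeping (combining the fluctuation bound, the resolvent estimates, and the stability of the fixed point) is routine.
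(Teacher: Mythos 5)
The paper does not prove this lemma: it is quoted verbatim as Theorem~1.1 of Bai--Zhou (2008), and no argument is given beyond the citation. Your outline is, as far as I can tell, a faithful and correct reconstruction of the Bai--Zhou Stieltjes-transform/resolvent proof, and you have identified the steps in the right order: (a) the rank-one resolvent perturbation plus a martingale-difference bound to kill fluctuations of $m_n(z)=\tfrac1p\Tr\bR(z)$; (b) the identity $1+zm_n(z)=\tfrac np-\tfrac1p\sum_k\beta_k$ via Sherman--Morrison; (c) the replacement $\beta_k\approx(1+e_n(z))^{-1}$ obtained by conditioning on $\{\bx_j\}_{j\ne k}$, applying the hypothesis to $\bA=|\Im z|\bR_k$, and dividing by $n^2\asymp p^2$; (d) closing the system via the deterministic equivalent to reach the companion Marchenko--Pastur equation~\eqref{eq:companion}; (e) stability of the fixed point in $(\alpha,\nu)$. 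Two places where care is needed in a full write-up, both of which you flag in spirit: the quadratic-form hypothesis must be read as uniform over all deterministic $\bA$ with $\|\bA\|\le1$ (otherwise conditioning on $\{\bx_j\}_{j\ne k}$ does not license its use with the random $\bR_k$, which is independent of $\bx_k$ but $p$-dependent), and after summing over $k$ you are only asserting an $L^1$ bound of the form $\E\bigl|\tfrac1p\sum_k\beta_k-\tfrac np(1+e_n)^{-1}\bigr|\le\tfrac np\cdot o(1)$, which is enough but should be stated as such rather than pointwise. The truncation step you mention is indeed needed so that the $o(p^2)$ condition can be combined with the deterministic $O(1/|\Im z|)$ resolvent bounds without uncontrolled tails, and bounded $\|\bSigma\|$ keeps $\tfrac1p\Tr\bS_n$ tight so that Stieltjes-transform convergence upgrades to weak convergence.
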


\begin{lemma}[Lemma 2.2 in  {\citep{magnus1978moments}}]\label{lem:quadratic_moments_eq}
    Let $\bA$ be a $d\times d$ real symmetric matrix, $\bg\sim\mc N(0,\bI)$ be a $d$-dimensional Gaussian vector,  and  $\alpha_s=\mathbb E[(\bg^\top \bA \bg)^s]$. We have 
    \begin{align*}
        \alpha_2 &= (\Tr\bA)^2 +2\Tr(\bA^2),\quad 
        \alpha_3 =(\Tr\bA)^3+6\Tr\bA (\Tr\bA^2)^2+8\Tr\bA^3,\\
        \alpha_4&=(\Tr\bA)^4+32\Tr\bA\Tr\bA^3+12(\Tr\bA^2)^2+12(\Tr\bA)^2(\Tr\bA^2)+48\Tr\bA^4.
    \end{align*}

\end{lemma}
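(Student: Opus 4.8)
\medskip
\noindent\textbf{Proof plan.} The plan is to reduce the statement to a one-dimensional computation for a weighted sum of independent $\chi^2_1$ random variables and then extract the moments from the cumulants.

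First I would diagonalize: since the law of $\bg\sim\mc N(0,\bI)$ is orthogonally invariant, writing $\bA=\bO^\top\mathrm{diag}(\lambda_1,\dots,\lambda_d)\bO$ with $\bO$ orthogonal and $\lambda_1,\dots,\lambda_d$ the (real) eigenvalues of $\bA$ gives $\bg^\top\bA\bg\overset{d}{=}\sum_{i=1}^d\lambda_iW_i$ with $W_i:=g_i^2$ i.i.d.\ $\chi^2_1$, so $\alpha_s=\E\big[(\sum_i\lambda_iW_i)^s\big]$.

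Next I would compute cumulants rather than raw moments, because cumulants add over independent summands and satisfy $\kappa_r(cW)=c^r\kappa_r(W)$. Since the cumulant generating function of $\chi^2_1$ is $-\tfrac12\log(1-2t)=\sum_{r\ge1}\tfrac{2^{r-1}}{r}t^r$, its $r$-th cumulant is $2^{r-1}(r-1)!$, hence the $r$-th cumulant of $\sum_i\lambda_iW_i$ is $\kappa_r=2^{r-1}(r-1)!\,\Tr(\bA^r)$; in particular $\kappa_1=\Tr\bA$, $\kappa_2=2\Tr(\bA^2)$, $\kappa_3=8\Tr(\bA^3)$, $\kappa_4=48\Tr(\bA^4)$.

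Finally I would convert back to raw moments through the complete Bell polynomials, $\alpha_2=\kappa_2+\kappa_1^2$, $\alpha_3=\kappa_3+3\kappa_1\kappa_2+\kappa_1^3$, $\alpha_4=\kappa_4+4\kappa_1\kappa_3+3\kappa_2^2+6\kappa_1^2\kappa_2+\kappa_1^4$; inserting the $\kappa_r$ above gives $\alpha_2=(\Tr\bA)^2+2\Tr(\bA^2)$, $\alpha_3=(\Tr\bA)^3+6\,\Tr\bA\,\Tr(\bA^2)+8\Tr(\bA^3)$, and the stated degree-four identity for $\alpha_4$. A purely combinatorial alternative is the Wick/Isserlis expansion $\E[(\bg^\top\bA\bg)^s]=\sum a_{i_1j_1}\cdots a_{i_sj_s}\,\E[g_{i_1}g_{j_1}\cdots g_{i_s}g_{j_s}]$, in which each perfect matching of the $2s$ Gaussian factors contracts to a product of traces $\Tr(\bA^\ell)$ indexed by the cycles of the matching, and the integer coefficients are just the number of matchings of each cycle type. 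I do not anticipate a real difficulty here; the computation is mechanical and the only thing requiring care is getting the combinatorial constants right, either in the degree-four moment--cumulant formula or, equivalently, in the cycle-type count of the Wick expansion.
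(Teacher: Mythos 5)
Your argument is correct and complete: diagonalizing $\bA$, reducing to $\sum_i\lambda_i W_i$ with $W_i\sim\chi^2_1$ i.i.d., computing the cumulants $\kappa_r=2^{r-1}(r-1)!\,\Tr(\bA^r)$ from the $\chi^2_1$ cumulant generating function, and then converting via the moment--cumulant relations is a clean and standard route. The paper itself does not prove this lemma; it only cites Magnus (1978), so there is no internal proof to compare against, and your derivation stands on its own.

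One useful observation: your computation actually uncovers a typo in the statement as printed in the paper. The stated $\alpha_3$ reads $(\Tr\bA)^3+6\Tr\bA\,(\Tr\bA^2)^2+8\Tr\bA^3$, whose middle term is quintic in the eigenvalues and therefore cannot belong to a third moment; the correct expression, which you obtain from $\alpha_3=\kappa_3+3\kappa_1\kappa_2+\kappa_1^3$, is $(\Tr\bA)^3+6\,\Tr\bA\,\Tr(\bA^2)+8\Tr(\bA^3)$. A quick sanity check with $d=1$, $\bA=\lambda$, gives $\E[(\lambda g^2)^3]=15\lambda^3$, which your formula reproduces ($1+6+8=15$) and the printed one does not. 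Your $\alpha_2$ and $\alpha_4$ agree with the paper and likewise pass the scalar check ($\E[g^4]=3$, $\E[g^8]=105$).
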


\begin{lemma}\label{lem:quadratic_moments_eq2}
     Let $\bA,\bB$ be two real symmetric $d\times d$ matrices, and $\bg\sim\mc N(0,I_d)$ be a $d$-dimensional Gaussian vector. Then, we have   
$        \mathbb E[(\bg^\top \bA \bg)(\bg^\top \bB \bg)] =\Tr\bA\cdot\Tr\bB+2\Tr(\bA\bB).$
\end{lemma}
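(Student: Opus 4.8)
The plan is to expand both quadratic forms in coordinates and apply Isserlis' (Wick's) theorem for the fourth moments of a standard Gaussian vector. Writing $\bg^\top\bA\bg=\sum_{i,j}\bA_{ij}g_ig_j$ and $\bg^\top\bB\bg=\sum_{k,\ell}\bB_{k\ell}g_kg_\ell$, linearity of expectation gives
\begin{align*}
\E[(\bg^\top\bA\bg)(\bg^\top\bB\bg)]=\sum_{i,j,k,\ell}\bA_{ij}\bB_{k\ell}\,\E[g_ig_jg_kg_\ell].
\end{align*}
Since $\bg\sim\mc N(0,\bI_d)$ has independent standard Gaussian coordinates, Wick's formula yields $\E[g_ig_jg_kg_\ell]=\delta_{ij}\delta_{k\ell}+\delta_{ik}\delta_{j\ell}+\delta_{i\ell}\delta_{jk}$.

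Next I would split the sum into the three resulting contributions. The $\delta_{ij}\delta_{k\ell}$ term gives $\big(\sum_i\bA_{ii}\big)\big(\sum_k\bB_{kk}\big)=\Tr\bA\cdot\Tr\bB$. The $\delta_{ik}\delta_{j\ell}$ term gives $\sum_{i,j}\bA_{ij}\bB_{ij}=\Tr(\bA\bB^\top)=\Tr(\bA\bB)$, using the symmetry of $\bB$. The $\delta_{i\ell}\delta_{jk}$ term gives $\sum_{i,j}\bA_{ij}\bB_{ji}=\Tr(\bA\bB)$ directly. Adding the three pieces produces $\Tr\bA\cdot\Tr\bB+2\Tr(\bA\bB)$, which is the claim.

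An alternative I would mention is to avoid re-deriving the Wick pairings by invoking the already-stated Lemma~\ref{lem:quadratic_moments_eq} via the polarization identity
\begin{align*}
\E[(\bg^\top\bA\bg)(\bg^\top\bB\bg)]=\tfrac14\Big(\E\big[(\bg^\top(\bA+\bB)\bg)^2\big]-\E\big[(\bg^\top(\bA-\bB)\bg)^2\big]\Big);
\end{align*}
applying the formula for $\alpha_2$ to the symmetric matrices $\bA\pm\bB$ and simplifying $(\Tr(\bA+\bB))^2-(\Tr(\bA-\bB))^2=4\Tr\bA\cdot\Tr\bB$ together with $\Tr((\bA+\bB)^2)-\Tr((\bA-\bB)^2)=4\Tr(\bA\bB)$ again gives the identity. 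There is no genuine obstacle in this lemma; the only points needing a word of care are the use of the symmetry of $\bB$ so that $\Tr(\bA\bB^\top)=\Tr(\bA\bB)$, and keeping track of the three Wick pairings (or, in the alternative route, the cross terms in $(\bA\pm\bB)^2$).
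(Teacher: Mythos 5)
Your primary argument — expanding in coordinates, applying Wick's formula $\E[g_ig_jg_kg_\ell]=\delta_{ij}\delta_{k\ell}+\delta_{ik}\delta_{j\ell}+\delta_{i\ell}\delta_{jk}$, and summing the three pairings — is precisely the paper's proof. The alternative polarization route is a nice sanity check but is not needed.
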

\begin{proof}
    \begin{align*}
        \mathbb E[(\bg^\top \bA \bg)(\bg^\top \bB \bg)]&=\sum_{i,j,k,l} \bA_{ij}  \bB_{kl}\E[\bg_i\bg_j\bg_k\bg_l]=\sum_{i,j,k,l} \bA_{ij}  \bB_{kl} \left( \delta_{ij}\delta_{kl}+\delta_{ik}\delta_{jl}+\delta_{il}\delta_{jk} \right),\\
        &=\Tr(\bA)\Tr(\bB)+\Tr(\bA\bB^\top)+\Tr (\bA\bB)=\Tr\bA\cdot\Tr\bB+2\Tr(\bA\bB),
    \end{align*}
    where the second identity is due to Wick's formula \citep{wick1950evaluation}.
\end{proof}

\section{Proof of Theorem \ref{thm:concentration}}\label{sec:concentration}
To track the dependence on model parameters, in this section, we use $a_n\lesssim b_n$ to indicate $a_n\leq Cb_n$ for some numerical constant $C$ independent of any other model parameters including $n,d$, $f$ in \eqref{eq:kernel_function}, and we assume $C_1, C_2, C_3>1 $ in Assumptions \ref{assump:ratio}-\ref{assump:sigma} for convenience.

 We first apply the Taylor expansion of  $f$ in Section \ref{sec:taylor}. Since the off-diagonal entries of $\bK$ are concentrated around $0$ and the diagonal entries are concentrated around $\frac{\Tr \bSigma}{d}$, we expand $f$ at $0$ and $\frac{\Tr \bSigma}{d}$ respectively. In Section \ref{sec:off_diagonal_term}, we divide the off-diagonal part of $\bK$ into three matrices and control their spectral norms by the moment method. This is the most technical part of the proof. Section \ref{sec:diag_term} deals with the diagonal terms in $\bK$. Combining the three parts, we finish the proof of Theorem \ref{thm:concentration} in Section \ref{sec:together}.

\subsection{Taylor expansion of the kernel matrix}\label{sec:taylor}

We begin with a Taylor expansion of $\bK$. Since $f$ is $C^5$ around $0$,   through Taylor expansion at $0$, we have for $i\not=j$,
\begin{align}\label{eq:taylor_off}
\bK_{ij}=&f(0)+\frac{f'(0)}{d}\langle \bx_i,\bx_j\rangle+ \frac{f''(0)}{2d^2}\langle \bx_i,\bx_j\rangle^2 +\frac{f^{(3)}(0)}{6d^3}\langle \bx_i,\bx_j\rangle^3 \notag \\
&+\frac{f^{(4)}(0)}{24d^4}\langle \bx_i,\bx_j\rangle^4+\frac{f^{(5)}(\zeta_{ij})}{120d^5}\langle \bx_i,\bx_j\rangle^5,
\end{align}
where $\zeta_{ij}$ is between $0$ and $\frac{1}{d}\langle \bx_i, \bx_j\rangle$.  Similarly, since $f$ is $C^2$ around $\tau$, for sufficiently large $d$, $\frac{\Tr\bSigma}{d}$ is close to $\tau$ by Assumption~\ref{assump:sigma}, and  we can expand $f$ at $\frac{\Tr\bSigma}{d}$ to obtain that
\begin{align}\label{eq:taylor_diag}
    \bK_{ii}=&f\left( \frac{\norm{\bx_i}^2}{d}\right)=f\left(\frac{\Tr\bSigma}{d}\right)+f'\left( \frac{\Tr\bSigma}{d}\right)\left( \frac{\norm{\bx_i}^2}{d}-\frac{\Tr\bSigma}{d}\right)\\
    &+\frac{f''(\xi_{ii})}{2}\left( \frac{\norm{\bx_i}^2}{d}-\frac{\Tr\bSigma}{d}\right)^2.
\end{align}
where $\xi_{ii}$ is between $0$ and $\frac{\norm{\x_i}^2}{d}$. Next, we control the error of this approximation from diagonal and off-diagonal terms in Sections \ref{sec:off_diagonal_term} and \ref{sec:diag_term}, respectively.

\subsection{Controlling the error in the off-diagonal terms}\label{sec:off_diagonal_term}

For $i\not=j\in[n]$, we have from \eqref{eq:taylor_off} and \eqref{eq:K2},
\begin{align}
\bK_{ij}-\bKquad_{ij}=~&\frac{f^{(3)}(0)}{6d^3}\left(\langle \bx_i,\bx_j\rangle^3-3\Tr\bSigma^2\cdot\langle \bx_i,\bx_j\rangle \right)\nonumber\\
~&+\frac{f^{(4)}(0)}{24d^4} \left(\langle \bx_i,\bx_j\rangle^4-6\Tr\bSigma^2\cdot \langle \bx_i,\bx_j\rangle ^2+3(\Tr\bSigma^2)^2\right)\\
~&+\frac{f^{(5)}(\zeta_{ij})}{120d^5}\langle \bx_i,\bx_j\rangle^5
:=~\widetilde\T(i,j)+\widetilde\F(i,j)+\widetilde\V(i,j),\label{eq:K-K_2}
\end{align}
where $\widetilde \bT, \widetilde \bF$, and $\widetilde \V$ are three matrices with $(i,j)$-entry \ignore{consider renaming $\T$ to $\bT$ (third), $\F$ to $\bF$ (fourth), and $\V$ to $\bV$ (fifth)}
\begin{subequations}
\begin{align}
\widetilde \T(i,j)&=\mathbf{1} \{i\not=j\}\frac{f^{(3)}(0)}{6d^3}\left(\langle \bx_i,\bx_j\rangle^3-3\Tr\bSigma^2\langle \bx_i,\bx_j\rangle \right),\label{def:tildeT}\\
\widetilde \F(i,j)&=\mathbf{1} \{i\not=j\}\frac{f^{(4)}(0)}{24d^4}\left(\langle \bx_i,\bx_j\rangle^4-6\Tr\bSigma^2\cdot \langle \bx_i,\bx_j\rangle ^2+3(\Tr\bSigma^2)^2\right), \label{def:tildeF}\\
\widetilde \V(i,j)&=\mathbf{1} \{i\not=j\}\frac{f^{(5)}(\zeta_{ij})}{120d^5}\langle \bx_i,\bx_j\rangle^5,\label{def:tildeV}
\end{align}
\end{subequations}
which correspond to the third, fourth, and higher-order terms in the approximation error.
Here $\widetilde \T$ and $\widetilde\F$ correspond to the third and fourth normalized Hermite polynomial $h_3(x)=x^3-3x$ and $h_4(x)=x^4-6x^2+3$, respectively. See Definition~\ref{def:hermitepolynomial} for more details.

\subsubsection{Third-order approximation}
We bound the spectral norm of $\widetilde\bT$ by applying the trace method. For $i\neq j$, define
\begin{align}
    \T_{ij}&:=\langle \bx_i,\bx_j\rangle^3 -3\Tr\bSigma^2\cdot\langle \bx_i,\bx_j\rangle\label{eq:F_ij}.
\end{align}
We have 
\begin{align}
      \E\|\widetilde\T\|^6&\leq \E\Tr({\widetilde\T}^6)\lesssim \frac{|f^{(3)}(0)|^6}{d^{18}} \sum_{i_1,i_2,i_3,i_4,i_5,i_6\in [n]} \E[\T_{i_1i_2}\T_{i_2i_3}\T_{i_3i_4}\T_{i_4i_5}\T_{i_5i_6}\T_{i_6i_1}].\label{eq:F_3_four}
\end{align}
There are five different cases in terms of the number of distinct indices among $i_1,i_2,i_3,i_4,$ $i_5,i_6\in [n]$ in the summation. In the following, we control each case separately.
\begin{enumerate}[label=\textbf{Case (\roman*).},leftmargin=0ex, itemindent=12ex]
\item \textbf{$i_1,i_2,i_3,i_4,i_5,i_6\in [n]$ are distinct}. Conditioned on $\bx_{i_1},\bx_{i_3}$ and $\bx_{i_5}$, we have 
\begin{align} &\E[\T_{i_1i_2}\T_{i_2i_3}\T_{i_3i_4}\T_{i_4i_5}\T_{i_5i_6}\T_{i_6i_1}| \bx_{i_1},\bx_{i_3},\bx_{i_5}] \notag \\
=~&\E[\T_{i_1i_2}\T_{i_2i_3}|\bx_{i_1},\bx_{i_3}]\E[\T_{i_3i_4}\T_{i_4i_5}|\bx_{i_3},\bx_{i_5}]\E[\T_{i_5i_6}\T_{i_6i_1}|\bx_{i_1},\bx_{i_5}].\label{eq:third_moment}
\end{align}
We calculate the two conditional expectations separately.

To evaluate \eqref{eq:third_moment}, we notice that each conditional expectation is a degree-3 polynomial of random vector inner products. By our moment matching Assumption \ref{assump:data}, we can easily calculate them due to Wick's formula in Lemma \ref{lemm:wick}. Denote by $\bw_i:=\bSigma^{1/2}\bx_i=\bSigma \bz_i, i\in [n]$. 
With Lemma~\ref{lemm:wick}, since $\bz_i$ has the first 8 moments  matched with the Gaussian distribution, we can compute the following expectations explicitly, where $\bx$ is an i.i.d. sample independent of $\bx_i,\bx_k$ for any $i,k\in[n]$: 
\begin{align}
    \E_{\bx}[\langle \bx,\bx_i\rangle \langle \bx,\bx_k\rangle]=&~\bx_k^\top\bSigma\bx_i= \langle \bw_i,\bw_k\rangle \label{eq:moment11}\\
      \E_{\bx}[\langle \bx,\bx_i\rangle^2 \langle \bx,\bx_k\rangle^2]=&~2\bx_k^\top\bSigma\bx_i\cdot \bx_k^\top\bSigma\bx_i+ \bx_k^\top\bSigma\bx_k\cdot \bx_i^\top\bSigma\bx_i \notag \\
      =&~ 2\langle \bw_i,\bw_k\rangle^2+\norm{\bw_i}^2\norm{\bw_k}^2 \label{eq:moment22}\\
    \E_{\bx}[\langle \bx,\bx_i\rangle^3  \langle \bx,\bx_k\rangle]=&~3\bx_k^\top\bSigma\bx_i\cdot\bx_i^\top\bSigma\bx_i=3\langle \bw_i,\bw_k\rangle \norm{\bw_i}^2 \label{eq:moment31}\\
    \E_{\bx}[\langle \bx,\bx_i\rangle^3  \langle \bx,\bx_k\rangle^3]=& ~9\bx_k^\top\bSigma\bx_i\cdot\bx_i^\top\bSigma\bx_i\cdot\bx_k^\top\bSigma\bx_k+6\left(\bx_k^\top\bSigma\bx_i\right)^3 \notag \\
    =& ~9\langle \bw_i,\bw_k\rangle\norm{\bw_i}^2\norm{\bw_k}^2+6\langle \bw_i,\bw_k\rangle^3.  \label{eq:moment33}\\
      \E_{\bx}[\langle \bx,\bx_i\rangle^4  \langle \bx,\bx_k\rangle^4]= & ~72\langle \bw_i,\bw_k\rangle^2\norm{\bw_i}^2\norm{\bw_k}^2+24\langle \bw_i,\bw_k\rangle^4+9\norm{\bw_i}^4\norm{\bw_k}^4 \label{eq:moment44}\\
       \E_{\bx}[\langle \bx,\bx_i\rangle^4  \langle \bx,\bx_k\rangle^2]=&~12\langle \bw_i,\bw_k\rangle^2\norm{\bw_i}^2+3\norm{\bw_i}^4\norm{\bw_k}^2. \label{eq:moment42}
\end{align}

With Assumptions~\ref{assump:data} and \ref{assump:sigma},  we can also obtain for $i\not=k$, any integer $1\leq s\leq 45$, 
\begin{align}
    \E\left[\langle \bw_i,\bw_k\rangle^{2s}\right]=\E[(\bz_i\bSigma^2 \bz_k)^{2s}] \lesssim &  C_2^{2s}C_3^{4s}d^s. \label{eq:wiwk}
\end{align}
Similarly, we have for $1\leq s\leq 45$,
\begin{align}\label{eq:wi}
  \E\left[\norm{\bw_i}^{2s}\right] =\E [\| \bSigma \bz_i\|^{2s}]\lesssim  C_2^{4s} C_3^{2s}d^{s}.
\end{align}
From Whittle's inequality \citep{whittle1960bounds} in  Lemma \ref{lem:quadratic_moments}, with Assumptions \ref{assump:data} and  \ref{assump:sigma}, we have for any integer $1\leq s\leq  45$,
\begin{align}
     \E\left[\left(\norm{\bw_i}^2-\Tr\bSigma^2\right)^{2s} \right]=\E (\bz_i^\top\bSigma^2 \bz_i-\Tr\bSigma^2)^{2s}\lesssim  \|\bSigma^2\|_{\msf F}^{2s} C_2^{4s}  \lesssim C_3^{2s} C_2^{4s}d^s,\label{eq:centered6}
\end{align}
where we use the inequality $\|\bSigma^2\|_{\msf F}\leq \sqrt{d} \|\bSigma^2\|\leq C_3^2 \sqrt{d}$. For convenience, we denote $t:=\Tr\bSigma^2=\E[\norm{\bw_i}^2]$, and from Assumption \ref{assump:sigma},
\begin{align}\label{eq:bound_t}
  t\leq C_3^2d.  
\end{align}
To bound \eqref{eq:third_moment}, it suffices to consider $\E [\T_{ij}\T_{jk}| \bx_i,\bx_k]$ for $j\not=i,k$. We have
\begin{align}
    \E [\T_{ij}\T_{jk}| \bx_i,\bx_k]=~& \E[\langle \bx_i,\bx_j\rangle ^3\langle \bx_k,\bx_j\rangle ^3 \mid \bx_i,\bx_k] -3t \E [\langle \bx_i,\bx_j\rangle ^3\langle \bx_k,\bx_j\rangle  \mid \bx_i,\bx_k] \notag \\
    &-3t \E [\langle \bx_i,\bx_j\rangle \langle \bx_k,\bx_j\rangle^3  \mid \bx_i,\bx_k]+9t^2  \E [\langle \bx_i,\bx_j\rangle \langle \bx_k,\bx_j\rangle  \mid \bx_i,\bx_k] \notag\\ 
    =~ &9\langle \bw_i,\bw_k\rangle \norm{\bw_i}^2\norm{\bw_k}^2-9t\cdot\langle \bw_k,\bw_i\rangle\left(\norm{\bw_i}^2+\norm{\bw_k}^2\right) \notag\\
    &+9t^2\langle \bw_k,\bw_i\rangle+6\langle \bw_k,\bw_i\rangle^3\nonumber\\
    =~& 9\langle \bw_i,\bw_k\rangle\left(\norm{\bw_i}^2-t\right)\left(\norm{\bw_k}^2-t\right)+6\langle \bw_k,\bw_i\rangle^3,\label{eq:FijFjk}
\end{align}
where in the second equation, we use the explicit moment calculations from \eqref{eq:moment33}, \eqref{eq:moment31}, and \eqref{eq:moment11}.  We now denote $W_{i,k}: = \E [\T_{ij}\T_{jk}| \bx_i,\bx_k]$ for any $j\not=i, j\not=k$.  Thus, for distinct indices $i_1,\dots, i_6$, we have 
\begin{align}
&\E[\E[\T_{i_1i_2}\T_{i_2i_3}\T_{i_3i_4}\T_{i_4i_5}\T_{i_5i_6}\T_{i_6i_1}| \bx_{i_1},\bx_{i_3},\bx_{i_5}]]=\E\left[W_{i_1,i_3}W_{i_5,i_3}W_{i_1,i_5}\right]  \notag \\
\le~ &\frac{1}{3} \left( \E|W_{i_1,i_3}|^{3}+ \E|W_{i_5,i_3}|^{3}+\E|W_{i_1,i_5}|^{3}\right)=\E|W_{i,k}|^{3} \notag \\
\lesssim ~ &\E\left[\left|\langle \bw_i,\bw_k\rangle^3\left(\norm{\bw_i}^2-t\right)^3\left(\norm{\bw_k}^2-t\right)^3\right|\right]+ \E[\left|\langle \bw_k,\bw_i\rangle\right|^9]  \notag  \\
 \lesssim ~&\E\left[\langle \bw_i,\bw_k\rangle^6\right]^{1/2}\E\left[\left(\norm{\bw_i}^2-t\right)^6\right]+ \E[|\langle \bw_k,\bw_i\rangle|^9]
   \lesssim ~ C_3^{18}C_2^{18}d^{4.5}, \label{eq:general_trace}
\end{align}
 In the second inequality, we use \eqref{eq:FijFjk}, and the third inequality is due to Hölder's inequality. In the last inequality,  we apply the estimates in \eqref{eq:wiwk} and \eqref{eq:centered6}.  This concludes that
 \begin{align}\label{eq:distinct_T}
 \frac{1}{d^{18}} \sum_{i_1,\dots,  i_6 \text{ distinct }} \E[\T_{i_1i_2}\T_{i_2i_3}\T_{i_3i_4}\T_{i_4i_5}\T_{i_5i_6}\T_{i_6i_1}]\lesssim \frac{n^6}{d^{18}} C_3^{18} C_2^{18}d^{4.5}\lesssim C_1^{6}C_2^{18} C_{3}^{18}d^{-1.5},
 \end{align}
where we use the assumption that $n\leq C_1d^2$ in Assumption~\ref{assump:ratio}.
 
\item  \textbf{Terms involving five different indices.}  By symmetry of the indices in sum, it suffices to consider the case where $i_1=i_3$ and $(i_1,i_2,i_4,i_5,i_6)$ are all distinct.  Then analogous to \eqref{eq:general_trace}, we have
\begin{align} \E[\T_{i_1i_2}\T_{i_2i_3}\T_{i_3i_4}\T_{i_4i_5}\T_{i_5i_6}\T_{i_6i_1}]
    &= \E[\E[\T_{i_1i_2}^2 \T_{i_1i_4}\T_{i_4i_5}\T_{i_5i_6}\T_{i_6i_1}| \bx_{i_1},\bx_{i_5}]] \notag \\
    &= \E[W_{i_1,i_1}W_{i_1,i_5}^2]\le \E[W_{i_1,i_1}^2]^{1/2}\E[W_{i_1,i_5}^4]^{1/2}. \label{eq:case2}
\end{align}
where the second line is due to Hölder's inequality.  With \eqref{eq:wi}, \eqref{eq:wiwk}, and \eqref{eq:centered6}, we find
\begin{align}
    \E[W_{i_1,i_1}^2]& =\E\left(  9 \norm{\bw_i}^2 (\norm{\bw_i}^2-t )^2 +6 \norm{\bw_i}^6\right)^2  \notag \\
     &\lesssim (\E\|\bw_i\|^8)^{1/2} (\E[\|\bw_i\|^2-t)^8])^{1/2} +\E \|\bw_i\|^{12} \lesssim C_4d^{6}, \label{eq:wiwi2}
\end{align}
where $C_4$ is a constant depends polynomially on $C_2, C_3$. Throughout the entire proof of Theorem \ref{thm:concentration}, we can take $C_4=(C_2C_3)^{90}$. With \eqref{eq:FijFjk}, we have 
\begin{align}
    \E[W_{i_1,i_5}^4]&=\E\left[ 9\langle \bw_i,\bw_k\rangle\left(\norm{\bw_i}^2-t\right)\left(\norm{\bw_k}^2-t\right)+6\langle \bw_k,\bw_i\rangle^3\right]^4 \notag  \\
    &\lesssim \E \left[\langle \bw_i,\bw_k\rangle^4\left(\norm{\bw_i}^2-t\right)^4\left(\norm{\bw_k}^2-t\right)^4 \right]+ \E\langle \bw_k,\bw_i\rangle ^{12} \notag \\
    &\lesssim \E \left[\langle \bw_i,\bw_k\rangle^8\right]^{1/2} \E \left(\norm{\bw_i}^2-t\right)^4+ C_4d^6\lesssim  C_4 d^6. \label{eq:wi4}
\end{align}
Therefore, \eqref{eq:case2} satisfies $\E[\T_{i_1i_2}\T_{i_2i_3}\T_{i_3i_4}\T_{i_4i_5}\T_{i_5i_6}\T_{i_6i_1}]\lesssim C_4 d^6$. 
We can conclude that 
\begin{align}
 \frac{1}{d^{18}} \sum_{ i_1,\dots, i_6 \text{ 5 distinct indices}} \E[\T_{i_1i_2}\T_{i_2i_3}\T_{i_3i_4}\T_{i_4i_5}\T_{i_5i_6}\T_{i_6i_1}] \leq \frac{n^5}{d^{18}}  C_4d^6 \lesssim C_1^5 C_4 d^{-2}.\label{eq:case22}
\end{align}

\item \textbf{Terms involving four different indices.} By symmetry, there are only three cases we need to consider here: 
\begin{enumerate}[label=(\alph*)]
    \item $i_1=i_3=i_5$ and $(i_1,i_2,i_4,i_6)$ are all distinct.
    \item $i_1=i_3,i_2=i_4$ and $(i_1,i_2,i_5,i_6)$ are all distinct.
    \item $i_1=i_3,i_4=i_6$ and $(i_1,i_2,i_4,i_5)$ are all distinct.
\end{enumerate} 
For (a), we have 
\begin{align}
&\E[\T_{i_1i_2}\T_{i_2i_3}\T_{i_3i_4}\T_{i_4i_5}\T_{i_5i_6}\T_{i_6i_1}]=\E[\T_{i_1i_2}^2\T_{i_1i_4}^2\T_{i_1i_6}^2] \notag \\
    =~ & \E[\E[\T_{i_1i_2}^2 \T_{i_1i_4}^2\ \T_{i_6i_1}^2| \bx_{i_1} ]] 
    =  \E[W_{i_1,i_1}^3]
    \lesssim  C_4 d^9, \label{eq:four_distinct_1}
\end{align}
where the last inequality follows the same way as in \eqref{eq:wiwi2}. Now, we consider Case (b).  We first give an upper bound for the fourth moment of $\T_{ij}$ for $i\not=j$ defined in \eqref{eq:F_ij}: 
\begin{align}
    \E[\T_{ij}^4]&\lesssim \E[\langle \bx_i,\bx_j\rangle^{12}]+t^4\E [\langle \bx_i,\bx_j\rangle^4]
    \lesssim  C_4d^{6},\label{eq:F_ij_4}
\end{align}
where we use the estimate  \begin{align}\label{eq:y_moment}
    \E[\langle \bx_i,\bx_j\rangle^{2s}]\lesssim C_2^{2s} C_3^{2s} d^s.
\end{align}
Based on \eqref{eq:F_ij_4}, we know in Case (b),
\begin{align}
&\E[\T_{i_1i_2}\T_{i_2i_3}\T_{i_3i_4}\T_{i_4i_5}\T_{i_5i_6}\T_{i_6i_1}]=\E[\T_{i_1i_2}^2\T_{i_2i_5}\T_{i_5i_6}\T_{i_1i_6}] \notag \\
    =~ & \E[\T_{i_1i_2}^2\T_{i_2i_5}\E[\T_{i_5i_6}\T_{i_1i_6}| \bx_{i_1},\bx_{i_2}, \bx_{i_5} ]] \notag \\
    =~& \E[\T_{i_1i_2}^2\T_{i_2i_5}W_{i_1,i_5}]\le \E[\T_{i_1i_2}^4]^{1/2}\E[\T_{i_2i_5}^4]^{1/4}\E[W_{i_1,i_5}^4]^{1/4}\lesssim C_4 d^6,\label{eq:four_distinct_2}
\end{align}
where in the last inequality we use the estimate from \eqref{eq:F_ij_4} and \eqref{eq:wi4}. Similarly, with \eqref{eq:F_ij_4}, we can also get a bound for Case (c) by  
\begin{align}
&\E[\T_{i_1i_2}\T_{i_2i_3}\T_{i_3i_4}\T_{i_4i_5}\T_{i_5i_6}\T_{i_6i_1}]=\E[\T_{i_1i_2}^2\T_{i_1i_4}^2\T_{i_4i_5}^2] = \E[\T_{i_1i_4}^2\E[\T_{i_1i_2}^2 \ \T_{i_4i_5}^2| \bx_{i_1},\bx_{i_4} ]] \notag \\
    =~& \E[\T_{i_1i_4}^2W_{i_1,i_1}W_{i_4,i_4}] \le \E[\T_{i_1i_4}^4]^{1/2}\E[W_{i_1,i_1}^2] \lesssim C_4 d^9,\label{eq:four_distinct_3}
\end{align}
where in the last inequality, we use \eqref{eq:wiwi2}. Combining \eqref{eq:four_distinct_1}, \eqref{eq:four_distinct_2} and \eqref{eq:four_distinct_3}, we can conclude that for Case (iii),
\begin{align}
     \frac{1}{d^{18}} \sum_{i_1,\dots, i_6 \text{ have 4 distinct indices}} \E[\T_{i_1i_2}\T_{i_2i_3}\T_{i_3i_4}\T_{i_4i_5}\T_{i_5i_6}\T_{i_6i_1}] \leq \frac{n^4}{d^{18}}  C_4d^9 \lesssim C_1^4 C_4 d^{-1}. \label{eq:case3}
\end{align}

\item \textbf{Terms involving three different indices}.  By symmetry, we only need to consider the case where $i_1=i_3=i_5,i_2=i_4$ and $(i_1,i_2,i_6)$ are distinct. In this case,  
\begin{align}
&\E[\bT_{i_1i_2}\bT_{i_2i_3}\bT_{i_3i_4}\bT_{i_4i_5}\bT_{i_5i_6}\bT_{i_6i_1}]=\E[\bT_{i_1i_2}^4\bT_{i_1i_6}^2] 
    = \E[\bT_{i_1i_2}^4\E[\bT_{i_1i_6}^2 | \bx_{i_1}  ]] \notag \\
     =~& \E[\bT_{i_1i_2}^4W_{i_1,i_1} ]   \leq (\E\bT_{i_1i_2}^8)^{1/2} (\E W_{i_1,i_1}^2)^{1/2}  
     \lesssim~ C_4 d^{9},
\end{align}
where in the last inequality, we use  \eqref{eq:wiwi2} and the following estimate similar to \eqref{eq:F_ij_4}
\begin{align}
    \E[\T_{ij}^8]&\lesssim \E[\langle \bx_i,\bx_j\rangle^{24}]+t^8\E [\langle \bx_i,\bx_j\rangle^8]\lesssim C_4 d^{12}. \notag 
\end{align}

Thus, we can conclude that for Case (iv), we have 
\begin{align}
    \frac{1}{d^{18}} \sum_{i_1\neq i_2  \neq i_6\in [n]} \E[\bT_{i_1i_2}^4\bT_{i_1i_6}^2]\lesssim C_1^3 C_4 d^{-3}. \label{eq:case4}
\end{align}

\item  \textbf{Terms involving two different indices.} We only need to consider the case where $i_1=i_3=i_5,i_2=i_4=i_6$ and $(i_1,i_2)$ are distinct. In this case, $$\E[\bT_{i_1i_2}\bT_{i_2i_3}\bT_{i_3i_4}\bT_{i_4i_5}\bT_{i_5i_6}\bT_{i_6i_1}]=\E[\bT_{i_1i_2}^6].$$ Similar to \eqref{eq:F_ij_4}, we  have
$\E[\bT_{ij}^6]\lesssim  C_4 d^{9}$,
then all terms involving two different indices satisfy 
\begin{align}
    \frac{1}{d^{18}}\sum_{i_1\not=i_2} \E[\bT_{i_1i_2}^6] \leq C_1^2 C_4d^{-5}. \label{eq:case5}
\end{align}

In summary, based on \eqref{eq:F_3_four}, \eqref{eq:distinct_T}, \eqref{eq:case22}, \eqref{eq:case3}, \eqref{eq:case4}, and \eqref{eq:case5}, Cases $\mathrm{(i-v)}$ verify that 
$\E\|\widetilde \T\|^6\lesssim |f^{(3)}(0)|^6C_1^6 C_4 d^{-1}$. 
By Markov's inequality, with probability at least $1-d^{-\frac{1}{2}}$, 
\begin{align}\label{eq:tildeTbound}
    \| \widetilde{\T}\|\lesssim |f^{(3)}(0)| C_1 C_4^{1/6} d^{-\frac{1}{12}}.
\end{align}
\end{enumerate}

\subsubsection{Fourth-order approximation}
Now we analyze the spectral norm of $\widetilde \F$ defined in \eqref{def:tildeF}. Recall $t:=\Tr\bSigma^2=\E[\norm{\bw_i}^2]$.
We define 
$ \bF=\langle \bx_i,\bx_j\rangle^4-6t \langle \bx_i,\bx_j\rangle ^2+3t^2$.
We have
\begin{align}
      \E\|\widetilde \F\|^4&\leq \E\Tr(\widetilde \F^4)\lesssim \frac{|f^{(4)}(0)|^4}{d^{16}} \sum_{i_1,i_2,i_3,i_4\in [n]} \E[\F_{i_1i_2}\F_{i_2i_3}\F_{i_3i_4}\F_{i_4i_1} ].\label{eq:F_4_four}
\end{align}
With the explicit calculations in \eqref{eq:moment44}, \eqref{eq:moment22}, and \eqref{eq:moment42}, we  obtain that when $j\neq i$ and $j\neq k$,
\begin{align}
    &\E [\F_{ij}\F_{jk}| \bx_i,\bx_k] \notag\\
    =~& \E\left[ \left( \langle \bx_i,\bx_j\rangle^4-6t \langle \bx_i,\bx_j\rangle ^2+3t^2\right)\left( \langle \bx_k,\bx_j\rangle^4-6t \langle \bx_k,\bx_j\rangle ^2+3t^2\right) \mid \bx_i,\bx_k\right]  \notag \\
    =~& 24 \langle \bw_i,\bw_j\rangle^4+72(\norm{\bw_i}^2-t)(\norm{\bw_k}^2-t)\langle \bw_i,\bw_j\rangle^2+9(\norm{\bw_i}^2-t)^2(\norm{\bw_k}^2-t)^2. \label{eq:condition_F}
\end{align}
For simplicity, for any $j\neq i,k$, we denote $U_{i,k}: = \E [\F_{ij}\F_{jk}| \bx_i,\bx_k]$. 
When $i\neq k$, using the estimates in \eqref{eq:wiwk}, \eqref{eq:centered6}, and the explicit calculation in \eqref{eq:condition_F}, we have
$ \E[U_{i,k}^2]\lesssim C_4 d^4$, and when $i=k$,
 $\E[U_{i,i}^2]\lesssim C_4d^8$. Then, we consider the following 3 cases for the number of distinct indices involved in the summation of \eqref{eq:F_4_four}.
\begin{enumerate}[label=\textbf{Case (\roman*)},leftmargin=0ex, itemindent=12ex]
    \item 
 We first assume $i_1,i_2,i_3,i_4\in [n]$ are distinct. Conditioned on $\bx_{i_1}$ and $\bx_{i_3}$, we know that
$\E[\F_{i_1i_2}\F_{i_2i_3}\F_{i_3i_4}\F_{i_4i_1} | \bx_{i_1},\bx_{i_3}]=U_{i_1,i_3}^2$.
Thus, in this case,
\begin{equation}
    \frac{1}{d^{16}} \sum_{i_1\neq i_2\neq i_3\neq i_4\in [n]} \E[\F_{i_1i_2}\F_{i_2i_3}\F_{i_3i_4}\F_{i_4i_1} ]\lesssim C_1^4 C_4 d^{-4}.\label{eq:F_4_four_4distinct}
\end{equation}

\item Terms involving three different indices.  Without loss of generality, it suffices to consider $i_1=i_3$ and $(i_1,i_2,i_4)$ are all distinct. Similarly, in this case,
\begin{equation}
    \frac{1}{d^{16}} \sum_{i_1\neq i_2\neq i_4\in [n]} \E[\F_{i_1i_2}\F_{i_2i_1}\F_{i_1i_4}\F_{i_4i_1} ]=\frac{1}{d^{16}} \sum_{i\neq i_2\neq i_4\in [n]} \E[U_{i,i}^2] \lesssim C_1^3 C_4  d^{-2}. \label{eq:F_4_four_3distinct}
\end{equation}

\item Terms involving two different indices. By symmetry, we only need to consider the case when $i_1=i_3$, $i_2=i_4$ and $(i_1,i_2)$ are distinct. Notice that for $i\neq j$, 
 \begin{align*}
      \E[\F_{ij}^4]\lesssim \E[\langle \bx_i,\bx_j\rangle^{16}]+t^4\E [\langle \bx_i,\bx_j\rangle^8]+t^8\lesssim C_4d^{8},
 \end{align*}
 where the last inequality is due to \eqref{eq:y_moment} and \eqref{eq:bound_t}.
Hence, in this case,
\begin{equation}
    \frac{1}{d^{16}} \sum_{i_1\neq i_2\in [n]} \E[\F_{i_1i_2}^4] \lesssim C_1^2 C_4 d^{-4}. \label{eq:F_4_four_2distinct}
\end{equation}
\end{enumerate}
Combining equations \eqref{eq:F_4_four_4distinct}, \eqref{eq:F_4_four_3distinct} and \eqref{eq:F_4_four_2distinct}, we can  conclude that 
  $\E\|\widetilde \F\|^4\lesssim |f^{(4)}(0)|^4 C_1^4 C_4 d^{-2}$.
 Hence, by Markov's inequality, with probability at least $1-d^{-1/2}$, 
\begin{align}\label{eq:tildeFbound}
    \|\widetilde \bF\| \lesssim |f^{(4)}(0)| C_1 C_4^{1/4} d^{-3/8}.
\end{align}

\subsubsection{Higher-order terms}\label{sec:high_order}
In this section, we bound the spectral norm of $\widetilde \bV$ defined in \eqref{def:tildeV}.
   For any $i\not=j$, we have from \eqref{eq:y_moment},
$\mathbb E [\langle \bx_i,\bx_j\rangle ^{90}]\lesssim C_4 d^{45}$.
    By Markov's inequality, with probability at least $1-n^{-2} d^{-\frac{1}{2}}$, $|\langle \bx_i,\bx_j\rangle|\lesssim  C_1^{\frac{2}{90}}C_4^{\frac{1}{90}}d^{\frac{11}{20}}$. Then taking a union bound over all pairs of $i,j\in [n], i\not=j$, we find with probability $1-d^{-1/2}$, 
    \begin{align}\label{eq:maxxixj}
    \frac{1}{d} \max_{i\not=j} |\langle \bx_i,\bx_j\rangle|\lesssim C_1^{\frac{2}{90}}C_4^{\frac{1}{90}}d^{-\frac{9}{20}}.    
    \end{align}
Recall the definition of $\zeta_{ij}$ in \eqref{eq:taylor_off}.
From \eqref{eq:maxxixj},  we have with probability at least $1-d^{-1/2}$, 
    $\sup_{i\not=j}|\zeta_{ij}|\lesssim C_1^{\frac{2}{90}}C_4^{\frac{1}{90}}d^{-\frac{9}{20}}$.
Since $f^{(5)}$ is continuous at $0$, there exist  constants $C_5, C_6\geq 1$ depending only on $f$ such that for $d\geq C_5C_1^{\frac{1}{100}} C_4^{\frac{1}{200}}$, with probability at least $1-d^{-1/2}$, $\sup_{i\not=j}|f^{(5)}(\zeta_{ij})|\leq C_6$. 
Therefore, with probability at least $1-d^{-1/2}$, for $d\geq C_5C_1^{\frac{1}{100}} C_4^{\frac{1}{200}}$, \[\|\widetilde \V\|^2\leq \|\widetilde\V\|_{\msf F}^2\lesssim C_6^2 n^2  d^{-10} \max_{i\not=j} |\langle \bx_i,\bx_j\rangle|^{10}\lesssim C_6^2 C_1^{\frac{20}{9}}C_4^{\frac{1}{9}} d^{-1/2}.\]
Hence with probability at least  $1-d^{-1/2}$, for $d\geq C_5C_1^{\frac{1}{100}} C_4^{\frac{1}{200}}$,
\begin{align}\label{eq:tildeVbound}
     \|\widetilde \V\| \lesssim C_6 C_1^{\frac{10}{9}}C_4^{\frac{1}{18}} d^{-\frac{1}{4}}.
\end{align}

\subsection{Controlling the error in the diagonal terms}\label{sec:diag_term}

Recall from \eqref{eq:K2}, the diagonal elements of $\bKquad$ can be written as 
\begin{align}
\bKquad_{ii}&=\left(f(0) -\frac{f^{(4)}(0)(\Tr(\bSigma^2))^2}{8d^4}\right) + \left(\frac{f'(0)}{d}+\frac{\Tr(\bSigma^2)}{2d^3} \right)\|\bx_i\|^2  \notag\\
&+\left(\frac{f''(0)}{2d^2}+\frac{f^{(4)}(0)\Tr(\bSigma^2)}{4d^4}\right) \|\bx_i\|^4+ a , \notag 
\end{align}
where  $a$ is defined in \eqref{def:a}. We can reorder the terms and write
\begin{align}
 \bKquad_{ii}-f\left(\frac{\Tr\bSigma}{d}\right)=&\frac{f'(0)}{d} \left( \|\bx_i\|^2-\Tr\bSigma \right)+\frac{f''(0)}{2d^2}\left(  \|\bx_i\|^4-(\Tr\bSigma)^2\right) \\
 & +\frac{f^{(4)}(0) \Tr(\bSigma^2)}{4d^4}-\frac{f^{(4)}(0) (\Tr(\bSigma^2))^2}{8d^4}.\label{eq:Kii2}
\end{align}
And
 $\bK_{ii}-f\left(\frac{\Tr\bSigma}{d}\right)=f'\left( \frac{\Tr\bSigma}{d}\right)\left( \frac{\norm{\bx_i}^2}{d}-\frac{\Tr\bSigma}{d}\right)+\frac{f''(\xi_{ii})}{2}\left( \frac{\norm{\bx_i}^2}{d}-\frac{\Tr\bSigma}{d}\right)^2$.

Let $\widetilde \bD$ be a diagonal matrix such that 
$
    \widetilde \bD_{ii}= \bK_{ii}- \bKquad_{ii}$.
We first simplify $\bK_{ii}$ and $\bK_{ii}^{(2)}$.
Recall $\bx_i=\bSigma^{1/2} \bz_i$ from Assumption~\ref{assump:data}. With Whittle’s inequality in Lemma \ref{lem:quadratic_moments}, for any integer $s\geq 1$, 
$ \mathbb E\left(\|\bx_i\|^2-\Tr \bSigma\right)^{12}= \E \left( \bz_i^\top \bSigma \bz_i -\Tr\bSigma \right)^{12} \lesssim C_2^{12} \|\bSigma\|_{\msf F}^{12} \lesssim  C_2^{12} C_3^{12} d^6$,
where we use the inequality $\|\bSigma\|_{\msf F} \leq \sqrt{d} \|\bSigma\|\leq C_3\sqrt{d}$. By Markov's inequality and a union bound over $i\in [n]$, we have with probability at least $1-d^{-1}$,  
\begin{align} \label{eq:normxi}
\frac{1}{d} \sup_{i\in [n]}\left|  \|\bx_i\|^2-\Tr\bSigma\right|\lesssim C_1^{\frac{1}{12}}C_2C_3  d^{-\frac{1}{4}}.
\end{align}
Recall $\xi_{ii}$  in \eqref{eq:taylor_diag} is  between $0$ and $\frac{1}{d} \|\bx_i\|^2$.   From \eqref{eq:normxi}, there exist constant $C_5, C_6$ depending only on $f$ such that with probability $1-d^{-1}$, for $d\geq C_5C_1^{1/4} (C_2C_3)^4$, $\max_{i\in [n]}|f''(\xi_{ii})|\leq C_6$.
This implies  with probability $1-d^{-1}$, 
\begin{align}\label{eq:Kii_approx}
\left|\bK_{ii}-f\left(\frac{\Tr\bSigma}{d}\right)\right|\lesssim C_1^{\frac{1}{12}}C_6 C_4d^{-1/4}.
\end{align}
On the other hand, from \eqref{eq:normxi}, with probability at least $1-d^{-1}$,
  $$\max_{i\in [n]}\left|\|\bx_i\|^4-(\Tr(\bSigma))^2\right|\lesssim C_1^{\frac{1}{12}}C_4 d^{\frac{7}{4}}.$$
From \eqref{eq:Kii2}, this implies
\begin{align}\label{eq:Kii2_approx}
    \left|\bK_{ii}^{(2)}-f\left(\frac{\Tr\bSigma}{d}\right)\right|&\lesssim C_1C_4C_6 d^{-\frac{1}{4}}.
\end{align}
Therefore, from \eqref{eq:Kii_approx} and \eqref{eq:Kii2_approx}, with  probability at least $1-d^{-1}$, for $d\geq C_1C_4C_5$,
\begin{align} \label{eq:tildeDbound}
\norm{\widetilde \bD} =\max_{i\in [n]}|\bK_{ii}^{(2)}-\bK_{ii}|\lesssim C_1C_4C_6d^{-\frac{1}{4}}.
\end{align}

\subsection{Putting all bounds together}\label{sec:together}
Finally, we combine the error bounds in Sections \ref{sec:off_diagonal_term} and \ref{sec:diag_term}  to finish the proof. 
From the estimates of the spectral norm for $\widetilde \bT, \widetilde \bF, \widetilde \bV$, and $\widetilde \bD$ in \eqref{eq:tildeTbound}, \eqref{eq:tildeFbound}, \eqref{eq:tildeVbound}, \eqref{eq:tildeDbound}, respectively,  we have with probability at least $1-4d^{-1/2}$, for $d\geq C1C_4C_5$,
$
    \big\|\bK-\bKquad \big\| \leq \big\|\widetilde \bT\big\| +\big\|\widetilde \bF\big\|+\big\|\widetilde \bV\big\|+\big\|\widetilde \bD\big\| \lesssim C_1^2 C_4 C_6 d^{-\frac{1}{12}}.
$
This completes the proof of Theorem~\ref{thm:concentration}.


\section{Proof of Theorem \ref{thm:globallaw}}\label{sec:limiting_law}

Recall the reduced tensor product $\bx^{(2)}$ defined in \eqref{eq:def_reduced}.  Let $\bX^{(2)}=[ \bx_1^{(2)},\dots, \bx_n^{(2)}]^\top \in \mathbb R^{n\times \binom{d+1}{2}}$. Then from \eqref{eq:tensor_product_identity}, we have  \begin{align}\label{eq:sample_covariance_equivalence}
(\bX\bX^\top)^{\odot 2}=\bX^{(2)}{\bX^{(2)}}^\top.
\end{align}
Here, $\bX^{(2)}{\bX^{(2)}}^\top$ is a sample covariance matrix, where $\bX^{(2)}$ has independent rows. We will use Lemma \ref{lem:BaiZhou} from \cite{bai2008large} in our setting.

\subsection{Variance of random quadratic forms}

\begin{lemma} Let $\bx\in \mathbb R^d$ be a random vector with independent entries and a diagonal covariance matrix $\bSigma$, where $\norm{\bSigma}\leq C$ for constant $C>0$. Assume each entry of $\bx$ has a zero mean and bounded 8th moments.   Let $\bx^{(2)}\in \mathbb R^{\binom{d+1}{2}}$ be a corresponding reduced tensor  vector  defined in \eqref{eq:def_reduced} and we define
      \begin{align}\label{eq:centered_x2}
      \overline\bx^{(2)}:=\bx^{(2)}-\E \bx^{(2)}.
      \end{align}
Then for any deterministic matrix $\bA$ with $\|\bA\|\leq 1$,
\begin{align}\label{eq:BZ_tensor}
    \E\left| {\overline\bx^{(2)}}^\top \bA\overline\bx^{(2)} -\Tr[\bA \bSigma^{(2)}]\right|^2=O(d^3).
     \end{align}
\end{lemma}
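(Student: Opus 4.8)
The plan is to reduce the quadratic form in the reduced tensor vector to one in the full tensor $\bx^{\otimes2}$, and then bound the resulting variance by a moment expansion that exploits the independence and mean-zero property of the coordinates of $\bx$ together with the operator norm bound $\|\bA\|\le1$. Note first that since $\bSigma^{(2)}=\E[\overline\bx^{(2)}\overline\bx^{(2)\top}]$ one has $\Tr[\bA\bSigma^{(2)}]=\E[\overline\bx^{(2)\top}\bA\,\overline\bx^{(2)}]$, so \eqref{eq:BZ_tensor} is just the variance bound $\Var(\overline\bx^{(2)\top}\bA\,\overline\bx^{(2)})=O(d^3)$.

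For the reduction, I would introduce the partial isometry $\bU\in\R^{d^2\times\binom{d+1}{2}}$ sending the basis vector indexed by $(k,\ell)$ with $k<\ell$ to $\tfrac1{\sqrt2}(\be_{k\ell}+\be_{\ell k})$ and the one indexed by $(k,k)$ to $\be_{kk}$. Then $\bU^\top\bU=\bI$ and, because $\bSigma$ is diagonal, $\bU\,\overline\bx^{(2)}=\bx^{\otimes2}-\E[\bx^{\otimes2}]=:\bY$, with coordinates $\bY_{ab}=\bx(a)\bx(b)-\bSigma_{aa}\ind{a=b}$, each of mean zero. Setting $\bB:=\bU\bA\bU^\top$ gives $\overline\bx^{(2)\top}\bA\,\overline\bx^{(2)}=\bY^\top\bB\bY$ with $\|\bB\|\le\|\bA\|\le1$, hence $\normf{\bB}^2\le\binom{d+1}{2}=O(d^2)$; so it suffices to show $\Var(\bY^\top\bB\bY)=O(d^3)$. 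I would then expand this variance as a sum over eight indices $a,\dots,h\in[d]$ of $B_{(a,b),(c,d)}B_{(e,f),(g,h)}\,\mathrm{Cov}(\bY_{ab}\bY_{cd},\bY_{ef}\bY_{gh})$ and use three elementary observations: by independence the covariance vanishes unless $\{a,b,c,d\}$ and $\{e,f,g,h\}$ share a value; by the mean-zero property $\E[\bY_{ab}\bY_{cd}\bY_{ef}\bY_{gh}]=0$ unless every value in the multiset $\{a,b,c,d,e,f,g,h\}$ occurs at least twice (so at most four distinct values appear among the eight); and by H\"older's inequality with the bounded eighth moments and $\|\bSigma\|\le C$, each covariance is $O(1)$.

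Next I would split the sum by the number $s$ of distinct values among the eight indices. If $s\le3$, only $O(d^3)$ tuples survive and bounding each entry of $\bB$ by $\|\bB\|\le1$ closes the case. If $s=4$, each value occurs exactly twice; observing that a lone centered diagonal factor $\bY_{vv}$ forces the expectation to vanish, the surviving connected configurations have all four factors off-diagonal, forming a $2$-regular multigraph on the four vertices --- either a $4$-cycle or a pair of double edges. The double-edge patterns and the ``crossing'' $4$-cycle reduce, after bounding moments by constants and one Cauchy--Schwarz step, to $O(\normf{\bB}^2)=O(d^2)$; the remaining ``adjacent'' $4$-cycle reduces to bounding $\sum_{v_1,v_2,v_3,v_4}|B_{(v_1,v_2),(v_2,v_3)}|\,|B_{(v_3,v_4),(v_4,v_1)}|$, which I would organize as $\sum_{v_1,v_3}P_{v_1,v_3}Q_{v_1,v_3}$ with $P_{v_1,v_3}=\sum_{v_2}|B_{(v_1,v_2),(v_2,v_3)}|$ and $Q_{v_1,v_3}=\sum_{v_4}|B_{(v_3,v_4),(v_4,v_1)}|$, then control by two Cauchy--Schwarz steps via $\sum_{v_1,v_3}P_{v_1,v_3}^2\le d\sum_{v_1,v_2,v_3}B_{(v_1,v_2),(v_2,v_3)}^2\le d\,\normf{\bB}^2=O(d^3)$ (and similarly for $Q$). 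Summing the finitely many contributions yields $\Var(\bY^\top\bB\bY)=O(d^3)$.

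The main obstacle is exactly this last ($s=4$, ``adjacent'' $4$-cycle) term: the four free indices give a naive count $O(d^4)$, and one must repackage the sum as products of Hilbert--Schmidt inner products of slices of $\bB$ in order to exploit $\normf{\bB}^2=O(d^2)$ and gain the extra factor $d^{-1}$. The remaining bookkeeping --- diagonal/self-loop factors, the cases $s\le3$, and checking that the disconnected terms are exactly cancelled by $(\E\bY^\top\bB\bY)^2$ --- is routine. An alternative route I would keep in reserve is to skip the isometry and instead expand $\bY^\top\bB\bY$ directly into a quartic form in $\bx$ plus quadratic forms, handling the quadratic pieces via Whittle's inequality (Lemma~\ref{lem:quadratic_moments}) and the quartic piece by the same combinatorics; both routes require essentially the same work.
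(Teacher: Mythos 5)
Your proof is correct and, at heart, uses the same combinatorial argument as the paper: expand the variance into an $8$-index sum, observe that independence and mean-zero centering kill all tuples with fewer than two occurrences of some value, stratify by the number $s$ of distinct values, and close the $s=4$ ``adjacent $4$-cycle'' bottleneck by a Frobenius-norm/Cauchy--Schwarz step that trades the naive $O(d^4)$ count for $d\,\normf{\bB}^2 = O(d^3)$. The one organizational difference is that you first lift $\overline{\bx}^{(2)}$ to the full tensor $\bY=\bx^{\otimes 2}-\E\bx^{\otimes 2}$ via the partial isometry $\bU$ (using $\bU^\top\bU=\bI$ and that diagonality of $\bSigma$ makes $\bU\,\E\bx^{(2)}=\E\bx^{\otimes 2}$), which lets you treat all entries of $\bB=\bU\bA\bU^\top$ uniformly; the paper instead works directly with the reduced tensor and splits $\bA=\bD+\bB$ into diagonal and off-diagonal parts, handling the two separately. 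Both routes rest on the same inputs ($\|\bA\|\le 1\Rightarrow\normf{\bA}^2=O(d^2)$, bounded $8$th moments, independence, and diagonality of $\bSigma$), give the same case list (crossing $4$-cycle and double-edge patterns are $O(d^2)$; adjacent $4$-cycle and $s\le 3$ are $O(d^3)$), and yield the identical bound; your packaging is arguably slightly cleaner because the isometry absorbs the diagonal/off-diagonal bookkeeping and the $\sqrt2$ normalization into $\bB$ once and for all, and your explicit double Cauchy--Schwarz on the slice sums $P,Q$ is a tidy replacement for the paper's AM--GM step in Case~(iii)(b.1).
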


\begin{proof}
     We let $\bA=\bD+\bB \in \mathbb R^{\binom{d+1}{2}\times \binom{d+1}{2}}$, where $\bD$ is the diagonal part of $\bA$, and $\bB$ is the off-diagonal component of $\bA$.  Here the matrix $\bA$ is index by $\{(i,j): i\leq j, \quad i,j\in [d] \}$.
      To show \eqref{eq:BZ_tensor}, it suffices to bound the contribution from $\bD$ and $\bB$.

\textbf{ (i) Diagonal part.} Recall the definition of $\bx^{(2)}$ from \eqref{eq:def_reduced}. We have
\begin{align}
    &\E\left| {\overline\bx^{(2)}}^\top \bD\overline\bx^{(2)}-\Tr[\bD\bSigma^{(2)}]\right|^2\\
    =~&\E \left( \sum_{i<j}2(\bx_i^2 \bx_j^2-\bSigma_{ij,ij}^{(2)})\bA_{ij,ij} +\sum_{i} ((\bx_i^2-\bSigma_{ii})^2-\bSigma_{ii,ii}^{(2)}) \bA_{ii,ii}\right)^2 \notag \\
     \leq~& 4\sum_{i<j,k< l}|\bA_{ij,ij}\bA_{kl,kl}|\left|\E[ (\bx_{i}
    ^2\bx_j^2-\bSigma_{ij,ij}^{(2)})(\bx_{k}^2\bx_{l}^2-\bSigma_{kl,kl}^{(2)})]\right|\label{eq:D_off_term}\\
     &~~+\sum_{i,j} |\bA_{ii,ii}\bA_{jj,jj}| \left|\E[((\bx_i^2-\bSigma_{ii})^2-\bSigma_{ii,ii}^{(2)})((\bx_j^2-\bSigma_{jj})^2-\bSigma_{jj,jj}^{(2)})]\right|. \label{eq:D_diag_term}
\end{align}
    Since the $8$-th moments of $\bx_i$ are bounded for all $i\in [d]$, the contribution from \eqref{eq:D_diag_term} is at most $O(d)$. For \eqref{eq:D_off_term}, when $i,j,k,l$ are all distinct, by the diagonal assumption on $\bSigma$, $\bx_i,\bx_j,\bx_k,\bx_l$ are independent. We have
    $\E[ (\bx_{i}
    ^2\bx_j^2-\bSigma_{ij,ij}^{(2)})(\bx_{k}^2\bx_{l}^2-\bSigma_{kl,kl}^{(2)})]=0$.
Therefore, the nonzero contribution of \eqref{eq:D_off_term} only comes from indices $(i,j,k,l)$ that are not distinct. Since $\|\bD\|\leq \|\bA\|\leq 1$, we know the contribution with repeated indices  $(i,j,k,l)$ in \eqref{eq:D_off_term} is $O(d^3)$. Therefore, the total contribution from the diagonal part is $O(d^3)$.

\textbf{ (ii) Off-diagonal part.} We have the following expansion:
\begin{small}
\begin{align}
    &\E\left| {\overline\bx^{(2)}}^\top \bB\overline\bx^{(2)}-\Tr[\bB\bSigma^{(2)}]\right|^2 
    =\sum_{(i_1,i_2)\not=(i_3,i_4), (i_5,i_6)\not=(i_7,i_8)}\bA_{i_1i_2,i_3i_4}\bA_{i_5i_6,i_7i_8}\E[\overline\bx^{(2)}_{i_1i_2}\overline\bx^{(2)}_{i_3i_4}\overline\bx^{(2)}_{i_5i_6}\overline\bx^{(2)}_{i_7i_8}] \notag\\
    \lesssim & \sum_{(i_1,i_2)\not=(i_3,i_4), (i_5,i_6)\not=(i_7,i_8)} |\bA_{i_1i_2,i_3i_4}\bA_{i_5i_6,i_7i_8}|\label{eq:off_diag_8}\\
    &\cdot| \E[(\bx_{i_1}\bx_{i_2}-\bSigma_{i_1,i_2}\delta_{i_1,i_2})(\bx_{i_3}\bx_{i_4}-\bSigma_{i_3,i_4}\delta_{i_3,i_4})(\bx_{i_5}\bx_{i_6}-\bSigma_{i_5,i_6}\delta_{i_5,i_6})(\bx_{i_7}\bx_{i_8}-\bSigma_{i_7,i_8}\delta_{i_7,i_8})]|. 
\end{align}
\end{small}
For each index sequence $i_1,\dots, i_8$, to have a nonzero contribution in 
\begin{small}
\begin{align}\label{eq:i8}
\E[(\bx_{i_1}\bx_{i_2}-\bSigma_{i_1,i_2}\delta_{i_1,i_2})(\bx_{i_3}\bx_{i_4}-\bSigma_{i_3,i_4}\delta_{i_3,i_4})(\bx_{i_5}\bx_{i_6}-\bSigma_{i_5,i_6}\delta_{i_5,i_6})(\bx_{i_7}\bx_{i_8}-\bSigma_{i_7,i_8}\delta_{i_7,i_8})]
\end{align}
\end{small}
by the independence of the entries in $\bx$, there are at most 4 distinct values among $i_1,\dots, i_8$.  For sequences with at most 3 distinct indices, their total contribution in \eqref{eq:off_diag_8} is $O(d^3)$. Therefore, it suffices to estimate \eqref{eq:off_diag_8} when the contribution of index sequences with exactly 4 distinct indices satisfies $i_1\leq i_2,   i_3\leq i_4,  i_5\leq i_6,  i_7\leq i_8$. We have only the following cases depending on the number of distinct indices in $i_1,i_2,i_3,i_4$:
\begin{enumerate}
    \item Assume there are exactly 4 distinct indices in $i_1,\dots,i_4$. Then, to have a nonzero contribution, there is a perfect matching between $\{i_1,\dots, i_4\}$ and $\{i_5,\dots, i_8\}$. Using the inequality $2|\bA_{i_1i_2,i_3i_4}\bA_{i_5i_6,i_7i_8}|\leq |\bA_{i_1i_2,i_3i_4}|^2+|\bA_{i_5i_6,i_7i_8}|^2$, for an absolute constant $C$, the contribution is bounded by 
    \begin{align}
        C \left(\sum_{i_1< i_2,i_3< i_4} |\bA_{i_1i_2,i_3i_4}|^2\right)=C\|\bA\|_{\msf F}^2\leq C d^2 \|\bA\|^2 =O(d^2).
    \end{align}
    \item Assume there are exactly three distinct indices among $i_1,\dots, i_4$.  By symmetry, we only need to consider four subcases 
    \begin{itemize}
        \item (a) $i_1=i_2$, and $i_1,i_3,i_4$ are distinct. We can rewrite \eqref{eq:i8} as 
        \begin{align}\label{eq:i8a}
            \E[(\bx_{i_1}^2-\bSigma_{ii})\bx_{i_3}\bx_{i_4}(\bx_{i_5}\bx_{i_6}-\bSigma_{i_5,i_6}\delta_{i_5,i_6})(\bx_{i_7}\bx_{i_8}-\bSigma_{i_7,i_8}\delta_{i_7,i_8})].
        \end{align}
      Since there are exactly 4 distinct indices among $i_1,\dots,i_8$, and $i_1$ appears exactly twice,  $i_3,i_4,i_5,i_6,i_7,i_8$ must be distinct from $i_1$, which implies \eqref{eq:i8a} is equal to zero by independence.

        \item (b) $i_1=i_3$, and $i_1,i_2,i_4$ are distinct. We can rewrite \eqref{eq:i8} as 
        \begin{align}\label{eq:i8b}
            \E[\bx_{i_1}^2\bx_{i_2}\bx_{i_4}(\bx_{i_5}\bx_{i_6}-\bSigma_{i_5,i_6}\delta_{i_5,i_6})(\bx_{i_7}\bx_{i_8}-\bSigma_{i_7,i_8}\delta_{i_7,i_8})].
        \end{align}
   Note that if $i_5=i_6$ and $i_1,i_2,i_4,i_5$ are distinct, the expectation in \eqref{eq:i8b} is zero. By symmetry, we only need to consider $i_5=i_7, i_5=i_8$, or $i_5=i_2$.
     \begin{itemize}
         \item  (b.1) If $i_5=i_7$ and $i_1,i_2,i_4,i_5$ are distinct, we must have (i) $i_6=i_2$, $i_8=i_4$ or (ii) $i_6=i_4, i_8=i_2$. In case (i), we can bound \eqref{eq:off_diag_8} by 
        \begin{align}
            &\sum_{i_1\leq i_2, i_4, i_5} |\bA_{i_1i_2,i_1i_4}\bA_{i_5i_2, i_5,i_4}|\cdot \E[\bx_{i_1}^2\bx_{i_2}^2\bx_{i_4}^2\bx_{i_5}^2]\\
            \lesssim  &\sum_{i_1,i_2,i_4,i_5}\bA_{i_1i_2,i_1i_4}^2 +\sum_{i_1,i_2,i_4,i_5}\bA_{i_1i_2,i_1i_4}^2\lesssim d\|\bA\|_{\msf F}^2=O(d^3).
        \end{align}
        In case (ii), similarly, we can bound \eqref{eq:off_diag_8} by 
        \begin{align}
            &\sum_{i_1\leq i_2, i_4, i_5} |\bA_{i_1i_2,i_1i_4}\bA_{i_5i_4, i_5,i_2}|\cdot \E[\bx_{i_1}^2\bx_{i_2}^2\bx_{i_4}^2\bx_{i_5}^2]=O(d^3).
        \end{align}
        \item (b.2) If $i_5=i_8$, we must have (i) $i_6=i_2, i_7=i_4$ or (ii) $i_7=i_2, i_6=i_4$. In both cases, similar to case (b.1), the contribution is $O(d^3)$. 
        \item (b.3) If $i_5=i_2$, we must have (i) $i_7=i_4,i_8=i_6$ or (ii) $i_7=i_6, i_8=i_4$, and their contribution is $O(d^3)$.
     \end{itemize}

        \item (c) $i_2=i_4$, and $i_1,i_2,i_3$ are distinct. Like Case (b), its contribution is $O(d^3)$.
    \item (d) $i_1=i_4$ and $i_1,i_2,i_3$ are distinct. The same bound $O(d^3)$ holds.
    \end{itemize}

    \item  Assume there are exactly two distinct indices among $i_1,\dots, i_4$. We must have $i_1=i_2,i_3=i_4$, $i_1\not=i_3$ due to the constraint $(i_1,i_2)\not=(i_3,i_4)$.  In the same way, we must have $i_5=i_6, i_7=i_8, i_5\not=i_7$. 
Since there are 4 distinct indices among  $i_1,\dots,i_8$,   \eqref{eq:i8} becomes
  $ 
\E[(\bx_{i_1}^2-\bSigma_{i_1,i_1})(\bx_{i_3}^2-\bSigma_{i_3,i_3})(\bx_{i_5}^2-\bSigma_{i_5,i_5})(\bx_{i_7}^2-\bSigma_{i_7,i_7})]=0$.
    Therefore, the total contribution in this case is $0$.
\end{enumerate}
By the constraint $(i_1,i_2)\not=(i_3,i_4)$, there are at least 2 distinct indices among $i_1,\dots,i_4$. Therefore, we have discussed all three cases, and the total contribution for part (ii) is $O(d^3)$.
From the estimates in parts (i) and (ii) above, \eqref{eq:BZ_tensor} holds. 
\end{proof}

\subsection{Limiting spectral distributions}
We first obtain the  limiting spectral distribution of $\frac{1}{n}(\bX  \bX^\top)^{\odot 2}$ as follows.
\begin{lemma}\label{lem:MP_tensor}
Under Assumptions \ref{assump:data}-\ref{assump:nonlinear_f} and Assumptions \ref{assump:limitratio}-\ref{assump:limitsigma}, the limiting spectral distribution of $\frac{1}{n}(\bX \bX^\top )^{\odot 2}$ is a deformed Marchenko-Pastur law $\mu_{\alpha,\bSigma^{(2)}}$ given  in \eqref{eq:defmu}.      In particular, when $\bSigma=\bI_d$, the limiting spectral distribution of $\frac{1}{2n}(\bX  \bX^\top)^{\odot 2}$ is given by 
  \begin{align} \label{eq:law_MP}
  \begin{cases}
             (1-\alpha)\delta_0+\alpha \nu_{\alpha} & 0<\alpha <1\\
             \alpha\nu_{\alpha}& \alpha\geq 1.
         \end{cases}
    \end{align}
\end{lemma}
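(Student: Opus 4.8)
The plan is to recognize $(\bX\bX^\top)^{\odot2}$ as a genuine sample covariance matrix with independent columns and then invoke the Bai--Zhou theorem (Lemma~\ref{lem:BaiZhou}). Start from the tensor identity \eqref{eq:sample_covariance_equivalence}, $(\bX\bX^\top)^{\odot2}=\bX^{(2)}{\bX^{(2)}}^\top$, whose rows $\bx_i^{(2)}\in\R^{\binom{d+1}{2}}$ are independent but not centered. Write $\mbi{m}:=\E\bx_1^{(2)}=\cdots=\E\bx_n^{(2)}$ — the same for every $i$ since, by Gaussian moment matching and diagonality of $\bSigma$, it is determined by second moments, cf.\ \eqref{eq:Ex2} — and put $\lbX:=\bX^{(2)}-\one\mbi{m}^\top$. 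Then
\begin{align*}
\bX^{(2)}{\bX^{(2)}}^\top=\lbX\lbXt+\lbX\mbi{m}\one^\top+\one\mbi{m}^\top\lbXt+\|\mbi{m}\|^2\,\one\one^\top,
\end{align*}
and the last three terms have column space contained in $\mathrm{span}\{\one,\lbX\mbi{m}\}$, hence rank at most $2=o(n)$. By the rank-perturbation lemma (Lemma~\ref{lem:lowrank}), $\tfrac1n(\bX\bX^\top)^{\odot2}$ and $\tfrac1n\lbX\lbXt$ share the same limiting spectral distribution, so it suffices to treat the centered matrix.

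Next I would apply Lemma~\ref{lem:BaiZhou} to $\lbXt\in\R^{\binom{d+1}{2}\times n}$, a $p\times n$ matrix with $p=\binom{d+1}{2}$ and independent columns $\overline{\bx}_i^{(2)}$ of common second-moment matrix $\bSigma^{(2)}$. The three hypotheses are in place: $p/n\to\alpha$ by Assumption~\ref{assump:limitratio} (since $\binom{d+1}{2}/n=d^2/(2n)+d/(2n)\to\alpha$); $\bSigma^{(2)}$ is diagonal with bounded operator norm and limiting spectral distribution $\mu_{\bSigma^{(2)}}$ by Assumption~\ref{assump:limitsigma} together with \eqref{eq:defSigma2}; and the quadratic-form variance estimate $\E|\overline{\bx}^{(2)\top}\bA\overline{\bx}^{(2)}-\Tr[\bA\bSigma^{(2)}]|^2=O(d^3)=o(p^2)$ is exactly \eqref{eq:BZ_tensor}, proved in the preceding lemma. (The columns are independent but need not be identically distributed; since the Bai--Zhou conclusion rests on a Stieltjes-transform fixed-point argument using only the common covariance and the variance bound, it goes through unchanged.) This gives that $\tfrac1n\lbXt\lbX$, the $p\times p$ companion matrix, has limiting spectral distribution $\mu_\alpha^{\mathrm{MP}}\boxtimes\mu_{\bSigma^{(2)}}$.

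It then remains to pass from the $p\times p$ companion matrix back to the $n\times n$ matrix $\tfrac1n\lbX\lbXt$. Because $MM^\top$ and $M^\top M$ have identical nonzero eigenvalues, the empirical measures satisfy $p\,\mu_{p\times p}=n\,\mu_{n\times n}+(p-n)\delta_0$ as signed measures, so in the limit the $n\times n$ spectrum is $\alpha\,(\mu_\alpha^{\mathrm{MP}}\boxtimes\mu_{\bSigma^{(2)}})+(1-\alpha)\delta_0$; equivalently, the companion Stieltjes transform $\tilde m(z)=\alpha m(z)+(1-\alpha)(-1/z)$ of Definition~\ref{def:free_convolution} is precisely the Stieltjes transform of this limit. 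Matching the atom at the origin against the total-mass conventions for $\nu_\alpha$ (mass $1$ when $\alpha\le1$, mass $\alpha^{-1}$ when $\alpha>1$) identifies this limit with $\mu_{\alpha,\bSigma^{(2)}}$ of \eqref{eq:defmu} in both regimes ($\alpha<1$ and $\alpha\ge1$, the $\delta_0$ atom cancelling in the latter). Finally, for $\bSigma=\bI_d$ the diagonal matrix $\bSigma^{(2)}$ carries $\binom{d}{2}$ entries equal to $2$ and $d$ entries equal to $3$, so $\mu_{\bSigma^{(2)}}=\delta_2$ in the limit, $\mu_\alpha^{\mathrm{MP}}\boxtimes\mu_{\bSigma^{(2)}}$ is the dilation of $\mu_\alpha^{\mathrm{MP}}$ by $2$, and the extra factor $\tfrac12$ in $\tfrac1{2n}(\bX\bX^\top)^{\odot2}$ cancels it to yield \eqref{eq:law_MP}. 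I expect the only real subtlety to be this last piece of bookkeeping — lining up the transpose/companion relation and the mass at $0$ with the precise normalization in \eqref{eq:defmu} — since the analytic core, the $O(d^3)$ variance bound, is already in hand.
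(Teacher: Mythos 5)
Your proof is correct and takes essentially the same route as the paper: expand $(\bX\bX^\top)^{\odot 2}=\bX^{(2)}\bX^{(2)\top}$, reduce to the centered sample-covariance matrix by Lemma~\ref{lem:lowrank}, invoke Lemma~\ref{lem:BaiZhou} with the variance bound \eqref{eq:BZ_tensor}, and translate from the $p\times p$ companion spectrum back to the $n\times n$ one. The only cosmetic difference is the centering step — you work directly with the $n\times n$ matrix and observe the perturbation has rank at most $2$, whereas the paper centers $\bX^{(2)\top}\bX^{(2)}$ and bounds the rank of the difference by $d$; both are $o(n)$. You also flag (correctly) that Assumption~\ref{assump:data} permits non-identically distributed rows while Lemma~\ref{lem:BaiZhou} is stated for i.i.d. columns; the paper passes over this silently, so if anything your version is slightly more careful.
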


\begin{proofoflemma}{\ref{lem:MP_tensor}}
From \eqref{eq:sample_covariance_equivalence}, the eigenvalues of $\frac{1}{n}(\bX \bX^\top )^{\odot 2}$ and $\frac{1}{n} {\bX^{(2)}}^\top \bX^{(2)}$ is the same, up to $\left| n-\binom{d+1}{2}\right|$ many zero eigenvalues.
Now, we apply  Lemma~\ref{lem:BaiZhou} to show the convergence of ESD for $\frac{1}{n} {\bX^{(2)}}^\top \bX^{(2)}$. 
Notice that
\begin{align}
 &\frac{1}{n} {\bX^{(2)}}^\top \bX^{(2)} =\frac{1}{n}   \overline{\bX}^{(2)\top}\overline{\bX}^{(2)}-\frac{1}{n} {\bX^{(2)}}^\top\E\bX^{(2)}-\frac{1}{n} {\E\bX^{(2)}}^\top\bX^{(2)}+\frac{1}{n} {\E\bX^{(2)}}^\top\E\bX^{(2)} \label{eq:X2decomposition}
\end{align}
where we define
$
    \overline{\bX}^{(2)}:={\bX^{(2)}-\E\bX^{(2)}}$,
and $\E\bX^{(2)}$ has rank at most $d=o(n)$ due to \eqref{eq:Ex2}. From Lemma~\ref{lem:lowrank}, $\frac{1}{n} {\bX^{(2)}}^\top \bX^{(2)}$ and $\frac{1}{n}   \overline{\bX}^{(2)\top}\overline{\bX}^{(2)}$ have the same limiting spectral distribution.  Since $[{\bX^{(2)}-\E\bX^{(2)}}]^\top$ has independent columns and $\binom{d+1}{2}/n \to \alpha$, by \eqref{eq:BZ_tensor}, Lemma \ref{lem:BaiZhou}, and \eqref{eq:X2decomposition}, the empirical spectral distribution  of $\frac{1}{n} {\bX^{(2)}}^\top \bX^{(2)}$ converges weakly in probability to  $\mu_{\alpha}^{\mathrm{MP}}\boxtimes \mu_{\bSigma^{(2)}}$ where $\mu_{\alpha}^{\mathrm{MP}}$ is defined by \eqref{eq:def_mualpha}.  Next, we translate the result to $\frac{1}{n}(\bX \bX^\top )^{\odot 2}$. There are two cases:
\begin{enumerate}
    \item Suppose $\alpha< 1$, then the limiting spectral distribution of  $\frac{1}{n}(\bX \bX^\top )^{\odot 2}$ has a $(1-\alpha)\delta_{0}$ singular part at zero. The remaining part with $\alpha$ probability mass is  $\alpha \left(\nu_{\alpha}\boxtimes \mu_{\bSigma^{(2)}}\right)$. So the limiting spectral distribution for  $\frac{1}{n}(\bX \bX^\top )^{\odot 2}$  is 
    $(1-\alpha)\delta_0+\alpha \left(\nu_{\alpha}\boxtimes \mu_{\bSigma^{(2)}}\right)$.
    \item Suppose $\alpha\geq 1$. Then the limiting spectral distribution of $\frac{1}{n} {\bX^{(2)}}^\top \bX^{(2)}$ is $(1-\frac{1}{\alpha})\delta_{0}+\nu_{\alpha}\boxtimes \mu_{\bSigma^{(2)}}$, and the limiting spectral distribution of $\frac{1}{n}(\bX \bX^\top )^{\odot 2}$ is given by $\alpha\left(\nu_{\alpha}\boxtimes \mu_{\bSigma^{(2)}}\right).$
\end{enumerate}
In particular, when $\bSigma=\bI$, from \eqref{eq:defSigma2}, the limiting spectral distribution of $\bSigma^{(2)}$ is $\delta_{2}$. Therefore $\frac{1}{2n}(\bX \bX^\top )^{\odot 2}$ has a limiting spectral distribution given by \eqref{eq:law_MP}.
\end{proofoflemma}

\begin{proofoftheorem}{\ref{thm:globallaw}}
Due to Theorem \ref{thm:concentration} and Lemma~\ref{lem:BS10A45}, $\bK^{(2)}-a \bI$ and $\bK-a\bI$ have the same limiting spectral distribution, where 
\begin{align} 
\bK^{(2)}=   &\left(f(0)-\frac{f^{(4)}(0)(\Tr(\bSigma^2))^2}{8d^4}\right)\mathbf{1}\mathbf{1}^\top   +\left(\frac{f'(0)}{d}+\frac{f^{(3)}(0)\Tr(\bSigma^2)}{2d^3} \right)\bX\bX ^\top\label{eq:rank-1}\\
& +\left(\frac{f''(0)}{2d^2}+ \frac{f^{(4)}(0)\Tr(\bSigma^2)}{4d^4}\right)   \left(\bX\bX^\top \right)^{\odot 2} + a \bI\label{eq:rank-d},
\end{align}
and $a$ is defined in \eqref{def:a}. The first term and the second term in \eqref{eq:rank-1} have rank 1 and rank $d$, respectively, which both are $o(n)$ in the quadratic regime $n\asymp d^2$. Therefore, by Lemma \ref{lem:lowrank}, $\frac{4\alpha}{f''(0)} \left(\bK^{(2)}-a \bI\right)$ has the same limiting spectral distribution as $\frac{1}{n}\left(\bX \bX^\top\right)^{\odot 2}$. Finally, from Lemma~\ref{lem:MP_tensor}, the limiting law for $\frac{4\alpha}{f''(0)}(\bK-a \bI)$ is $\mu_{\alpha,\bSigma^{(2)}}$ defined in \eqref{eq:defmu}.
\end{proofoftheorem}

\section{Proof of Theorem \ref{thm:train_limit}}\label{sec:krr}

\subsection{Smallest eigenvalue bounds}

\begin{lemma}\label{lem:asym_sigma_min}
Under the same assumptions as Theorem~\ref{thm:train_limit} and the additional Assumption~\ref{assump:analytic}, we have 
$
    \lambda_{\min}(\bK^{(2)})\geq a_{*}-o(1)$,
where $a_*$ is defined in \eqref{def:a_star}.
And with probability $1-O(d^{-1/2})$,
$\lambda_{\min}(\bK)\geq a_{*}-o(1)$.
In particular, for sufficiently large $n$,
$\lambda_{\min}(\bK^{(2)})\geq \frac{a_*}{2},$ and $\lambda_{\min}(\bK)\geq \frac{a_*}{2}$.
\end{lemma}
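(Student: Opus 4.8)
The plan is to reduce everything to two facts that are already available: the spectral approximation of Theorem~\ref{thm:concentration} and the sign conditions in Assumption~\ref{assump:analytic}. First I would handle $\bK^{(2)}$, which is the deterministic-error part. Write, as in \eqref{eq:Kquad},
\begin{align}
\bK^{(2)} = a_0\,\one\one^\top + a_1\,\bX\bX^\top + a_2\,(\bX\bX^\top)^{\odot 2} + a\,\bI_n .
\end{align}
Under Assumption~\ref{assump:analytic}, for all sufficiently large $d$ we have $a_0,a_1,a_2\ge 0$. Each of the three non-scalar terms is positive semidefinite: $\one\one^\top\succeq 0$ trivially, $\bX\bX^\top\succeq 0$ trivially, and $(\bX\bX^\top)^{\odot 2}\succeq 0$ either by the Schur product theorem or, more directly, from the identity $(\bX\bX^\top)^{\odot 2}=\bX^{(2)}\bX^{(2)\top}$ in \eqref{eq:sample_covariance_equivalence}. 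Hence $a_0\one\one^\top + a_1\bX\bX^\top + a_2(\bX\bX^\top)^{\odot 2}\succeq 0$, and by Weyl's inequality
\begin{align}
\lambda_{\min}(\bK^{(2)}) \ \ge\ \lambda_{\min}\big(a_0\one\one^\top + a_1\bX\bX^\top + a_2(\bX\bX^\top)^{\odot 2}\big) + a \ \ge\ a .
\end{align}
By Assumption~\ref{assump:sigma}, $\Tr\bSigma/d\to\tau$, and since $f$ is $C^2$ near $\tau$ (Assumption~\ref{assump:nonlinear_f}), the deterministic quantity $a$ defined in \eqref{def:a} converges to $a_*=f(\tau)-f(0)-f'(0)\tau-\tfrac12 f''(0)\tau^2$; that is, $a = a_* - o(1)$. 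This gives the first claim $\lambda_{\min}(\bK^{(2)})\ge a_*-o(1)$, deterministically.

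Next I would transfer the bound to $\bK$ using Theorem~\ref{thm:concentration}: with probability at least $1-cd^{-1/2}$ we have $\|\bK-\bK^{(2)}\|\le Cd^{-1/12}=o(1)$. Applying Weyl's inequality once more,
\begin{align}
\lambda_{\min}(\bK) \ \ge\ \lambda_{\min}(\bK^{(2)}) - \|\bK-\bK^{(2)}\| \ \ge\ a_* - o(1)
\end{align}
on this event, which proves the second claim with probability $1-O(d^{-1/2})$. Finally, since $a_*>0$ is a fixed constant by Assumption~\ref{assump:analytic} while the error terms above are $o(1)$, for all sufficiently large $n$ (equivalently $d$, as $n\asymp d^2$) we have $a_*-o(1)\ge a_*/2$, yielding $\lambda_{\min}(\bK^{(2)})\ge a_*/2$ deterministically and $\lambda_{\min}(\bK)\ge a_*/2$ with probability $1-O(d^{-1/2})$.

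There is no serious obstacle here: the genuine work has already been done in Theorem~\ref{thm:concentration} (the operator-norm approximation) and is encoded in Assumption~\ref{assump:analytic} (the nonnegativity of $a_0,a_1,a_2$ and positivity of $a_*$), which is precisely what makes the leftover term $a_*$ act as an effective ridge regularizer. The only points requiring a little care are (i) recording that $a$ is deterministic and converges to $a_*$ by continuity of $f$ at $\tau$, so no probabilistic statement is needed for the $\bK^{(2)}$ bound, and (ii) invoking the correct high-probability event and rate from Theorem~\ref{thm:concentration} so that the final failure probability is $O(d^{-1/2})$.
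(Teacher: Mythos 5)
Your proof is correct and follows essentially the same route as the paper's: decompose $\bK^{(2)}$ per \eqref{eq:Kquad}, use nonnegativity of $a_0,a_1,a_2$ together with the positive semidefiniteness of $\one\one^\top$, $\bX\bX^\top$, $(\bX\bX^\top)^{\odot 2}$ to lower bound $\lambda_{\min}(\bK^{(2)})$ by $a$, note $a\to a_*$, and transfer to $\bK$ via the operator-norm bound of Theorem~\ref{thm:concentration}. Your write-up is slightly more explicit about invoking Weyl's inequality and the convergence $a=a_*-o(1)$, but these are implicit in the paper's argument and there is no substantive difference.
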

\begin{proof}
   Recall  $ \bKquad$ from \eqref{eq:Kquad}. 
Since $\one\one\tran, \bX\bX\tran$, and $(\bX\bX\tran)^{\odot 2}$ are all positive semidefinite, from Assumption~\ref{assump:analytic}, we obtain $
    \lambda_{\min}(\bK^{(2)})\geq a_*-o(1)$.
From Theorem \ref{thm:concentration},  with probability $1-O(d^{-1/2})$,
$\lambda_{\min}(\bK)\geq a_{*}-O(d^{-\frac{1}{12}})-o(1)$.
This finishes the proof.
\end{proof}

\subsection{Quadratic approximation of training errors}
We define an approximate training error by replacing the original kernel $\bK$ by $\bK^{(2)}$ in \eqref{eq:K2}:
\begin{align}\label{eq:Etrain2}
   \Etrain^{(2)}:=~&    \frac{\lambda^2}{n}\by^\top(\bK^{(2)} +\lambda\bI_n)^{-2} \by.
\end{align}
Then we show the following approximation bound of training error $\Etrain$ in \eqref{eq:Etrain} via \eqref{eq:Etrain2}.
\begin{lemma}\label{lemm:approx_train}
For any $\lambda\geq 0$, under the same assumptions as Theorem~\ref{thm:train_limit}, there exists some constant $C>0$ such that with probability at least $1-O(d^{-1/2})$ for sufficiently large $d$,
    \[
        |\Etrain -\Etrain^{(2)}|\le\frac{C\lambda^2\norm{\by}^2}{a_*^3 n} \cdot d^{-\frac{1}{12}}.
   \]
\end{lemma}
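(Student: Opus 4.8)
The plan is to express the difference of squared resolvents as a telescoping product and to bound its operator norm using Theorem~\ref{thm:concentration} together with the smallest-eigenvalue lower bounds from Lemma~\ref{lem:asym_sigma_min}. Write $\bA := \bK + \lambda\bI_n$ and $\bB := \bK^{(2)} + \lambda\bI_n$, so that
\[
  \Etrain - \Etrain^{(2)} = \frac{\lambda^2}{n}\,\by^\top\bigl(\bA^{-2} - \bB^{-2}\bigr)\by,
\]
and hence $|\Etrain - \Etrain^{(2)}| \le \frac{\lambda^2}{n}\norm{\by}^2\,\|\bA^{-2} - \bB^{-2}\|$. It therefore suffices to control $\|\bA^{-2} - \bB^{-2}\|$.

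First I would use the resolvent identity $\bA^{-1} - \bB^{-1} = \bA^{-1}(\bB - \bA)\bB^{-1}$, applied twice, to write
\[
  \bA^{-2} - \bB^{-2} = \bA^{-2}(\bB - \bA)\bB^{-1} + \bA^{-1}(\bB - \bA)\bB^{-2},
\]
so that $\|\bA^{-2} - \bB^{-2}\| \le \bigl(\|\bA^{-1}\|^2\|\bB^{-1}\| + \|\bA^{-1}\|\,\|\bB^{-1}\|^2\bigr)\|\bB - \bA\|$. Since $\bB - \bA = \bK^{(2)} - \bK$, Theorem~\ref{thm:concentration} gives $\|\bB - \bA\| \le C d^{-1/12}$ on an event of probability at least $1 - c d^{-1/2}$.

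Next I would bound the two resolvent norms. Because $\bK$ and $\bK^{(2)}$ are positive semidefinite and $\lambda \ge 0$, we have $\|\bA^{-1}\| = (\lambda_{\min}(\bK) + \lambda)^{-1} \le \lambda_{\min}(\bK)^{-1}$ and likewise $\|\bB^{-1}\| \le \lambda_{\min}(\bK^{(2)})^{-1}$. By Lemma~\ref{lem:asym_sigma_min}, for $d$ large enough $\lambda_{\min}(\bK^{(2)}) \ge a_*/2$ deterministically, while $\lambda_{\min}(\bK) \ge a_*/2$ on an event of probability at least $1 - O(d^{-1/2})$. Intersecting this with the event from Theorem~\ref{thm:concentration}, on an event of probability at least $1 - O(d^{-1/2})$ we get $\|\bA^{-1}\|, \|\bB^{-1}\| \le 2/a_*$, whence
\[
  \|\bA^{-2} - \bB^{-2}\| \le \frac{16 C}{a_*^3}\, d^{-1/12}.
\]
Substituting this into the bound for $|\Etrain - \Etrain^{(2)}|$ gives the claim with constant $16C$.

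There is no real obstacle here; the only points requiring care are (i) intersecting the high-probability event from Theorem~\ref{thm:concentration} with that from Lemma~\ref{lem:asym_sigma_min} at a cost of only an $O(d^{-1/2})$ loss in probability, and (ii) using positive semidefiniteness of $\bK$ and $\bK^{(2)}$ so that the $\lambda\bI_n$ shift only improves the resolvent norm bounds --- which is exactly what makes the estimate uniform in $\lambda \ge 0$, covering the ridgeless case $\lambda = 0$ where both sides vanish.
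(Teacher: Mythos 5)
Your proposal is correct and follows essentially the same route as the paper: both express $\bA^{-2}-\bB^{-2}$ via the resolvent identity, invoke Theorem~\ref{thm:concentration} for $\|\bK-\bK^{(2)}\|\lesssim d^{-1/12}$, and use Lemma~\ref{lem:asym_sigma_min} to get $\|\bA^{-1}\|,\|\bB^{-1}\|\le 2/a_*$ on an event of probability $1-O(d^{-1/2})$. The only cosmetic difference is that the paper bounds $\|\bA^{-2}-\bB^{-2}\|$ by first controlling $\|\bA^{-1}-\bB^{-1}\|$ and then multiplying by $\|\bA^{-1}\|+\|\bB^{-1}\|$, whereas you substitute the resolvent identity directly into the two-term decomposition; the resulting algebra and constants are the same.
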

\begin{proof}
    Following the proof of \cite[Theorem 2.7]{wang2023overparameterized}, we have 
\begin{align}
     &\left| \Etrain -\Etrain^{(2)}\right|= \frac{\lambda^2}{n} \left|\Tr[(\bK+\lambda\bI_n)^{-2} \v y\v y^\top]-\Tr[(\bK^{(2)} +\lambda\bI_n)^{-2} \v y\v y^\top] \right| \notag \\
     =~& \frac{\lambda^2}{n} \left|\v y^\top\left[(\bK +\lambda\bI_n)^{-2}-(\bK^{(2)} +\lambda\bI_n)^{-2}\right] \v y\right| \notag \\
     \leq ~&\frac{\lambda^2}{n} \|(\bK +\lambda\bI_n)^{-2}-(\bK^{(2)} +\lambda\bI_n)^{-2}\|\cdot  \|\v y\|^2 \notag\\
     \leq ~& \frac{\lambda^2\|\v y\|^2}{n} \|(\bK +\lambda\bI_n)^{-1}-(\bK^{(2)}+\lambda\bI_n)^{-1}\|\cdot(\|(\bK +\lambda\bI_n)^{-1}\|+\|(\bK^{(2)} +\lambda\bI_n)^{-1}\|) \notag\\
     \leq ~&\frac{4\lambda^2\|\v y\|^2}{ a_* n} \| (\bK +\lambda\bI_n)^{-1}-(\bK^{(2)} +\lambda\bI_n)^{-1} \| 
 \notag\\
     \leq ~& \frac{4\lambda^2\|\v y\|^2}{ a_* n} \| (\bK +\lambda\bI_n)^{-1}\|\cdot\|(\bK^{(2)} +\lambda\bI_n)^{-1} \|\cdot\norm{\bK-\bK^{(2)}}
     \leq \frac{C\lambda^2\norm{\by}^2}{a_*^3 n} \cdot d^{-1/12}, \notag
\end{align}
with probability at least $1-O(d^{-1/2})$. In the fourth and the last lines, we use Theorem \ref{thm:concentration} and  the fact that for sufficiently large $d$, from Lemma~\ref{lem:asym_sigma_min} and the assumption that $a_*>0$,
\begin{equation}\label{eq:K_lambda_-1}
    \norm{(\bK^{(2)} +\lambda\bI_n)^{-1}}\leq  \frac{2}{a_*}, \quad  \norm{(\bK +\lambda\bI_n)^{-1}} \leq \frac{2}{a_*},
\end{equation}with probability at least $1-O(d^{-1/2})$.
This finishes the proof.
\end{proof}

\begin{lemma}\label{lemm:bound_y} 
   Under the same assumptions as Theorem~\ref{thm:train_limit}, $\frac{1}{n}\norm{\by}^2 d^{-\frac{1}{24}}=o(1)$ with high probability.
\end{lemma}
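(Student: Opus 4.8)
The plan is to bound the expectation $\E\big[\tfrac1n\norm{\by}^2\big]$ by an absolute constant and then finish by Markov's inequality, since the target estimate $\tfrac1n\norm{\by}^2 d^{-1/24}=o(1)$ is very weak. Recalling \eqref{eq:training_labels} and \eqref{eq:teacher}, $y_i=f_*(\bx_i)+\bepsilon_i$ with $f_*(\bx)=c_0+c_1\langle\bx,\bbeta\rangle+\tfrac{c_2}{d}\bx^\top\bG\bx$, and since $\bepsilon_i$ is independent of $(\bx_i,\bG)$ with mean zero we get $\E[y_i^2]=\E[f_*(\bx_i)^2]+\sigma_\bepsilon^2$. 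Using $(a+b+c)^2\le 3(a^2+b^2+c^2)$, the constant part of $\E[f_*(\bx_i)^2]$ contributes $3c_0^2=O(1)$, and the linear part satisfies $c_1^2\,\E[\langle\bx_i,\bbeta\rangle^2]=c_1^2\,\bbeta^\top\bSigma\bbeta\le c_1^2\norm{\bSigma}\le c_1^2 C_3$ since $\norm{\bbeta}=1$. By linearity of expectation (no independence among the $y_i$ is needed here), $\E\big[\tfrac1n\norm{\by}^2\big]$ is then controlled once the quadratic term is handled.

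The only delicate term is $\tfrac{c_2^2}{d^2}\,\E[(\bx_i^\top\bG\bx_i)^2]$, and the key point is to take the expectation over $\bG$ \emph{first}. Conditioning on $\bx_i$ and using that $\bG$ is symmetric with independent, mean-zero, unit-variance entries (and independent of $\bX$), one computes $\E_{\bG}[(\bx_i^\top\bG\bx_i)^2]=\sum_k \bx_i(k)^4+4\sum_{k<l}\bx_i(k)^2\bx_i(l)^2\le 2\norm{\bx_i}^4$. Next, writing $\bx_i=\bSigma^{1/2}\bz_i$, one has $\E[\norm{\bx_i}^4]=\E[(\bz_i^\top\bSigma\bz_i)^2]\le 2(\Tr\bSigma)^2+2\,\E[(\bz_i^\top\bSigma\bz_i-\Tr\bSigma)^2]$; the first term is $O(d^2)$ by Assumption~\ref{assump:sigma}, and the second is $O(d)$ by Whittle's inequality (Lemma~\ref{lem:quadratic_moments} with $s=2$), using $\norm{\bSigma}_{\msf F}^2\le d\norm{\bSigma}^2\le C_3^2 d$ and the bounded moments of $\bz_i(k)$ from Assumption~\ref{assump:data}. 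Hence $\tfrac{c_2^2}{d^2}\E[(\bx_i^\top\bG\bx_i)^2]=O(1)$ uniformly in $i$, and combining all pieces gives $\E\big[\tfrac1n\norm{\by}^2\big]\le C_0$ for some absolute constant $C_0$ depending only on $c_0,c_1,c_2,\sigma_\bepsilon,C_2,C_3$.

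Finally, Markov's inequality gives $\Prob{\tfrac1n\norm{\by}^2>d^{1/48}}\le C_0\, d^{-1/48}$, so with probability at least $1-C_0 d^{-1/48}$ we have $\tfrac1n\norm{\by}^2 d^{-1/24}\le d^{-1/48}=o(1)$, which is the claim. I expect the only genuine subtlety to be the quadratic teacher term: the naive bound $|\bx_i^\top\bG\bx_i|\le\norm{\bG}\norm{\bx_i}^2$ is lossy by a factor of order $d^2$ (it would yield $\E[\tfrac1n\norm{\by}^2]=O(d^2)$, useless here), so one must exploit the sign cancellations in $\bG$ by integrating out $\bG$ before $\bX$; everything else is a routine second-moment computation, and the diagonality of $\bSigma$ from Assumption~\ref{assump:limitsigma} is not even required for this crude bound.
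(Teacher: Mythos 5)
Your proof is correct and is essentially the same argument as the paper's: both reduce to bounding $\E\big[\tfrac1n\norm{\by}^2\big]=O(1)$ by exploiting the zero mean of $\bG$ to kill the $d^2$ that a naive operator-norm bound on $\bx_i^\top\bG\bx_i$ would cost, and then finish with Markov. The only cosmetic differences are that you fold the noise $\bepsilon$ into the same second-moment/Markov computation (the paper instead handles $\norm{\bepsilon}=O(\sqrt n)$ separately by sub-Gaussian concentration), and you integrate out $\bG$ first conditionally on $\bx_i$ rather than invoking the Gaussian quadratic-form identity of Lemma~\ref{lem:quadratic_moments_eq2}; both routes yield the same $O(d^2)$ estimate for $\E_{\bG,\bx_i}[(\bx_i^\top\bG\bx_i)^2]$.
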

\begin{proof}
Denote $\vf_*=[f_*(\bx_1),\ldots,f_*(\bx_n)]^\top$. Then
    $\by= \vf_* +\bepsilon$,
and $\bepsilon$ is a sub-Gaussian vector with mean zero and variance $\sigma_{\bepsilon}^2$.
By concentration of sub-Gaussian random vectors \citep{vershynin2018high}, $\|\bepsilon\|=O(\sqrt n)$ with high probability.  Recall $f_*(\bx_i)=c_0+c_1\langle \bbeta, \bx_i\rangle+\frac{c_2}{d}\bx_i^\top \bG \bx_i$.
And from Lemma~\ref{lem:quadratic_moments_eq2}, we know
\begin{align}
   \E_{\bx,\bG} \|\vf_*\|^2&\lesssim n(c_0^2  +c_1^2  \bbeta^\top\bSigma\bbeta +\frac{c_2^2}{d^2} (2\E_{\bG}\Tr[(\bG\bSigma)^2] +\E_{\bG}[(\Tr(\bG\bSigma))^2])) \\
   &\lesssim n(c_0^2+c_1^2 +\frac{c_2^2 }{d^2} \cdot d^2)=O(n).
\end{align}
Then, by Markov's inequality, with high probability, 
$\norm{\vf_* }^2 = O(n\cdot d^{\frac{1}{24}})$.
Therefore, with high probability,
$\frac{1}{n}\norm{\by}^2 d^{-\frac{1}{24}}=o(1)$. 
\end{proof}
With Lemma~\ref{lemm:approx_train} and Lemma~\ref{lemm:bound_y}, we obtain  with high probability,
\begin{align}\label{eq:Etrain_equivalence}
       |\Etrain -\Etrain^{(2)}|=O(d^{-\frac{1}{24}}).
\end{align}
Let $\bg\in \mathbb R^{\binom{d+1}{2}}$ such that for $i\leq j$, $\bg_{ii}=\bG_{ii}, \bg_{ij}=\bG_{ij}$.
 With our definition of $\bx^{(2)}$ in \eqref{eq:def_reduced},  
 
 \begin{align}
 \bx^\top\bG\bx=2\sum_{i<j} \bG_{ij}\bx_i\bx_j +\sum_{i}\bG_{ii}\bx_i^2 &=\sqrt{2}\sum_{i<j}\bg_{ij} \bx^{(2)}(i,j) + \sum_{i} \bg_{ii} \bx^{(2)}(i,i)\\
 &=\sqrt{2}\langle\bx^{(2)},\bg\rangle-(\sqrt{2}-1)\sum_{i=1}^d \bg_{ii} \bx^{(2)}(i,i).\label{eq:extra_v}
 \end{align}
 
From the teacher model defined in \eqref{eq:teacher}, the training labels can be represented by $\by = \bu+\bepsilon\in\R^n$,
where, within the proof, we temporarily denote
    \begin{align}\label{eq:defbv}
    \bu:&=c_0\1_n+c_1 \bX\bbeta+\frac{\sqrt{2} c_2}{d}\bX^{(2)}\bg-\bv,
    \end{align}
    where from \eqref{eq:extra_v}, we have
    \begin{align}
        \bv_i=\frac{(\sqrt 2-1)c_2}{d}\sum_{j}\bg_{jj} \bx_i^{(2)}(j,j).\label{eq:defv_i}
    \end{align}
Then \eqref{eq:Etrain2} can be written as
\begin{align}\label{eq:Etrain_2_decompose}
    \Etrain^{(2)}=&\frac{\lambda^2}{n}\big[\bu^\top (\bK^{(2)} +\lambda\bI_n)^{-2}\bu+\bepsilon^\top(\bK^{(2)} +\lambda\bI_n)^{-2}\bepsilon+2\bepsilon^\top(\bK^{(2)} +\lambda\bI_n)^{-2}\bu \big]. 
\end{align}

\begin{lemma}\label{lemm:inverseK_mix_bound}
 We have deterministically,
     \begin{align}
      \left\|(\bK^{(2)} +\lambda\bI_n)^{-1/2}\1_n\1_n^\top (\bK^{(2)} +\lambda\bI_n)^{-1/2}\right\|&\leq \frac{1}{a_0}=O(1), \label{eq:inverseKX1}\\
        \left\|(\bK^{(2)} +\lambda\bI_n)^{-1/2}\bX\bX^\top(\bK^{(2)} +\lambda\bI_n)^{-1/2}\right\|&\leq \frac{1}{a_1} =O(d), \label{eq:inverseKX}\\
        \left\|(\bK^{(2)} +\lambda\bI_n)^{-1/2}\bX^{(2)}{\bX^{(2)}}^\top (\bK^{(2)} +\lambda\bI_n)^{-1/2}\right\|&\leq \frac{1}{a_2}=O(d^2) \label{eq:inverseKX2}.
     \end{align}
     Similarly, with probability $1-O(d^{-1/2})$,
          \begin{align}
      \left\|(\bK +\lambda\bI_n)^{-1/2}\1_n\1_n^\top (\bK +\lambda\bI_n)^{-1/2}\right\|&\leq \frac{1}{a_0}=O(1),  \\
        \left\|(\bK +\lambda\bI_n)^{-1/2}\bX\bX^\top(\bK  +\lambda\bI_n)^{-1/2}\right\|&\leq \frac{1}{a_1} =O(d),  \\
        \left\|(\bK +\lambda\bI_n)^{-1/2}\bX^{(2)}{\bX^{(2)}}^\top (\bK +\lambda\bI_n)^{-1/2}\right\|&\leq \frac{1}{a_2}=O(d^2)  .
     \end{align}
\end{lemma}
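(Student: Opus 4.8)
The plan is to derive all six inequalities from a single elementary operator inequality, using the explicit positive semidefinite decomposition of $\bK^{(2)}$ in \eqref{eq:Kquad} and the identity $(\bX\bX^\top)^{\odot 2}=\bX^{(2)}{\bX^{(2)}}^\top$ from \eqref{eq:sample_covariance_equivalence}. The elementary fact I will invoke is: if $\bA,\bB$ are symmetric with $0\preceq\bA\preceq\bB$ and $\bB$ invertible, then $\|\bB^{-1/2}\bA\bB^{-1/2}\|\le 1$, since $\bB^{-1/2}\bA\bB^{-1/2}$ is positive semidefinite and $\bB^{-1/2}\bA\bB^{-1/2}\preceq\bB^{-1/2}\bB\bB^{-1/2}=\bI_n$.

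For the three bounds involving $\bK^{(2)}$, write $\bM:=\bK^{(2)}+\lambda\bI_n=a_0\1_n\1_n^\top+a_1\bX\bX^\top+a_2\bX^{(2)}{\bX^{(2)}}^\top+(a+\lambda)\bI_n$. Assumption~\ref{assump:analytic} gives $a_0,a_1,a_2\ge0$ for large $d$, and since $a\to a_*>0$ and $\lambda\ge0$ we also have $a+\lambda>0$ for large $d$; hence each of $a_0\1_n\1_n^\top$, $a_1\bX\bX^\top$, $a_2\bX^{(2)}{\bX^{(2)}}^\top$ is positive semidefinite and, the complementary sum of terms in $\bM$ being positive semidefinite, each is $\preceq\bM$. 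Moreover $\bM\succ0$ by Lemma~\ref{lem:asym_sigma_min}. Applying the elementary fact with $\bB=\bM$ and $\bA$ equal in turn to $a_0\1_n\1_n^\top$, $a_1\bX\bX^\top$, $a_2\bX^{(2)}{\bX^{(2)}}^\top$, and then pulling the scalar out of the norm, yields \eqref{eq:inverseKX1}--\eqref{eq:inverseKX2}; the accompanying orders $O(1)$, $O(d)$, $O(d^2)$ are read off from the definitions \eqref{def:a0}--\eqref{def:a2} together with Assumption~\ref{assump:sigma}.

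For the statements with $\bK$ in place of $\bK^{(2)}$, I will work on the event $\{\|\bK-\bK^{(2)}\|\le Cd^{-1/12}\}$, which by Theorem~\ref{thm:concentration} has probability at least $1-cd^{-1/2}$. On this event $\bK+\lambda\bI_n\succeq\bK^{(2)}+\lambda\bI_n-Cd^{-1/12}\bI_n=a_0\1_n\1_n^\top+a_1\bX\bX^\top+a_2\bX^{(2)}{\bX^{(2)}}^\top+(a+\lambda-Cd^{-1/12})\bI_n$, and once $d$ is large enough that $a+\lambda-Cd^{-1/12}\ge0$ — which holds since $a\to a_*>0$ — the same three positive semidefinite matrices are each $\preceq\bK+\lambda\bI_n$; positivity of $\bK+\lambda\bI_n$ is again Lemma~\ref{lem:asym_sigma_min}. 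Reapplying the elementary fact then gives the remaining three bounds.

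This argument is essentially routine bookkeeping; the only point requiring care is the observation that after peeling off one of the three positive semidefinite blocks from $\bM$ (or from its $\bK$-perturbation) the residual multiple of $\bI_n$ is still positive semidefinite, which is precisely where the conclusion $a_*>0$ of Assumption~\ref{assump:analytic} and the smallest-eigenvalue lower bound of Lemma~\ref{lem:asym_sigma_min} enter. I do not anticipate a genuine obstacle here.
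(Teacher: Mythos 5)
Your proposal is correct and takes essentially the same route as the paper: for the $\bK^{(2)}$ bounds you peel off each positive semidefinite block $a_0\1\1^\top$, $a_1\bX\bX^\top$, $a_2\bX^{(2)}\bX^{(2)\top}$ and use $0\preceq\bA\preceq\bB\Rightarrow\|\bB^{-1/2}\bA\bB^{-1/2}\|\le1$, exactly as in the paper's proof. For the $\bK$ bounds you invoke Theorem~\ref{thm:concentration} together with the smallest-eigenvalue bound, which is the same pair of ingredients the paper cites tersely; your version merely spells out the operator inequality $\bK+\lambda\bI_n\succeq a_0\1\1^\top+a_1\bX\bX^\top+a_2\bX^{(2)}\bX^{(2)\top}+(a+\lambda-Cd^{-1/12})\bI_n$ that makes the argument go through cleanly.
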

\begin{proof}
 Since for sufficiently large $d$, $a_0, a_1, a_2, a>0$, we have 
     \begin{align}
     a_0 \1_n\1_n^\top \preccurlyeq \bK^{(2)}+\lambda \bI_n, \quad 
          a_1 \bX\bX^\top \preccurlyeq \bK^{(2)}+\lambda \bI_n, \quad 
           a_2 \bX^{(2)}{\bX^{(2)}}^\top &\preccurlyeq \bK^{(2)}+\lambda \bI_n.
     \end{align}
          Hence,
     \begin{align}
      \left\|(\bK^{(2)} +\lambda\bI_n)^{-1/2}\1_n\1_n^\top (\bK^{(2)} +\lambda\bI_n)^{-1/2}\right\|&\leq \frac{1}{a_0}=O(1),  \\
        \left\|(\bK^{(2)} +\lambda\bI_n)^{-1/2}\bX\bX^\top(\bK^{(2)} +\lambda\bI_n)^{-1/2}\right\|&\leq \frac{1}{a_1} =O(d),  \\
        \left\|(\bK^{(2)} +\lambda\bI_n)^{-1/2}\bX^{(2)}{\bX^{(2)}}^\top (\bK^{(2)} +\lambda\bI_n)^{-1/2}\right\|&\leq \frac{1}{a_2}=O(d^2)  .
     \end{align}
     For the results of $\bK$, we can directly apply Theorem~\ref{thm:concentration} and \eqref{eq:K_lambda_-1}.
\end{proof}

\subsection{Precise asymptotics of training error}

We calculate the asymptotic value of $\Etrain^{(2)}$ by proving the following three lemmas.
\begin{lemma}\label{lem:train1}
Under the same assumptions as Theorem~\ref{thm:train_limit},  we have as $n,d\to\infty$ and $d^2/(2n)\to \alpha$,    in probability,
    $ \frac{1}{n}\bu^\top (\bK^{(2)} +\lambda\bI_n)^{-2}\bu \to \int \frac{\frac{c_2^2}{\alpha} x}{\left(\frac{f''(0)}{4\alpha}x+a_*+\lambda\right)^2}~d\mu_{\alpha,\bSigma^{(2)}}(x)$.
\end{lemma}
\begin{proof}
 Recall the definition of $\bv$ from \eqref{eq:defbv}.
 Let $\bu=\bu_1+\bu_2$ where 
    \[ \bu_1=c_0\1_n+c_1 \bX\bbeta, \quad \bu_2=\frac{\sqrt{2}c_2}{d}\bX^{(2)}\bg-\bv.\]

    Denote $\bK_{\lambda}^{(2)}=\bK^{(2)} +\lambda\bI_n$. We have the following decomposition:
       \begin{align}
          \bu^\top(\bK^{(2)} +\lambda\bI_n)^{-2}\bu&=\bu_2^\top \left( \bK_{\lambda}^{(2)}\right)^{-2}\bu_2+ \bu_1^\top \left( \bK_{\lambda}^{(2)}\right)^{-2}\bu_1+ 2\bu_1^\top \left( \bK_{\lambda}^{(2)}\right)^{-2}\bu_2 \\
           &=:S_2+S_1+ S_3, \label{eq:S1S2S3}
       \end{align}
       where, by Cauchy's inequality, we have
       \begin{align} \label{eq:S3cauchy}
       S_3:=2\bu_1^\top \left( \bK_{\lambda}^{(2)}\right)^{-2}\bu_2 \leq 2\sqrt{S_1S_2}.
       \end{align}

          \paragraph{Step 1: Computing $S_2$.} 
We first estimate $\|\bv\|$. From \eqref{eq:defv_i},
\begin{align}
    \E_{\bx_i}\E_{\bG}[\bv_i^{8}]&\lesssim \frac{1}{d^4}\E_{\bx_i}\left(d^{-1} \sum_{j\in [d]}\bx_i(j)^4\right)^4 \lesssim d^{-4} \E_{\bx_i}\left( d^{-1} \sum_{j} \bx_i(j)^{16}\right)\lesssim d^{-4},
\end{align}
where the last line is due to Jensen's inequality. Therefore with probability at least $1-d^{-3}$, $|\bv_i|\leq d^{-1/8}$. Taking a union bound over $i\in [n]$, we have with probability at least $1-d^{-1}$, 
\begin{align}\label{eq:norm_bv}
    \norm{\bv}=O(d^{7/8}).
\end{align}
We can decompose $S_2$ as 
\begin{align}
    S_2&=S_2'+\bv^\top\left( \bK_{\lambda}^{(2)}\right)^{-2}\bv -2\bv^\top\left( \bK_{\lambda}^{(2)}\right)^{-2}\frac{\sqrt 2 c_2}{d}\bX^{(2)}\bg, \label{eq:S2_3term}
\end{align}
where
    $S_2'=\bg^\top \left(\frac{2c_2^2}{d^2}  {\bX^{(2)}}^\top (\bK^{(2)} +\lambda\bI_n)^{-2} \bX^{(2)}\right)\bg$,
and
     \begin{align}
       \E_{\bg}[S_2']=\frac{2c_2^2}{d^2}\Tr  \left[ (\bK^{(2)} +\lambda\bI_n)^{-2}\bX^{(2)}{\bX^{(2)}}^\top\right].
     \end{align}
With \eqref{eq:inverseKX2}, we can apply Hanson-Wright inequality \citep{vershynin2018high} to obtain  
  \[  \frac{1}{n} S_2'-\frac{1}{n}\cdot \frac{2c_2^2}{d^2}\Tr  \left[ (\bK^{(2)} +\lambda\bI_n)^{-2}\bX^{(2)}{\bX^{(2)}}^\top\right]\to 0\] with high probability. From the  limiting spectral distribution of $\frac{4\alpha}{f''(0)}(\bK^{(2)}-a \bI)$ shown in Theorem \ref{thm:globallaw}, we have the following convergence in probability holds: 
\begin{align}
    \frac{1}{n}\cdot \frac{2c_2^2}{d^2a_2}\Tr  \left[ (\bK^{(2)} +\lambda\bI_n)^{-2}(\bK^{(2)}-a\bI_n)\right] \to \int \frac{\frac{c_2^2}{\alpha} x}{\left( \frac{f''(0)x}{4\alpha}+a_*+\lambda \right)^2}~d\mu_{\alpha,\bSigma^{(2)}}(x).
\end{align}
Moreover, due to \eqref{eq:inverseKX} and \eqref{eq:inverseKX1},
\begin{align}
&\frac{1}{n}\cdot \frac{2c_2^2}{d^2 a_2}\left[ (\bK^{(2)} +\lambda\bI_n)^{-2}(\bK^{(2)}-a\bI_n)\right]-\frac{2c_2^2}{d^2}\Tr  \left[ (\bK^{(2)} +\lambda\bI_n)^{-2}\bX^{(2)}{\bX^{(2)}}^\top\right]\\
=&\frac{1}{n}\cdot\frac{2c_2^2}{d^2}\Tr\left[(\bK^{(2)}_{\lambda})^{-2}\left(\frac{a_0}{a_2} \1\1^\top +\frac{a_1}{a_2}\bX\bX^\top \right)\right]=o(1).
\end{align}
Therefore, 
\begin{align}\label{eq:vKv}
     \frac{1}{n} S_2'\to \int \frac{\frac{c_2^2}{\alpha} x}{\left( \frac{f''(0)x}{4\alpha}+a_*+\lambda \right)^2}~d\mu_{\alpha,\bSigma^{(2)}}(x)
\end{align}
 in probability. With \eqref{eq:norm_bv}, we have with high probability,
\begin{align}
   \frac{1}{n} \bv^\top\left( \bK_{\lambda}^{(2)}\right)^{-2}\bv=O(d^{-1/4}), \quad 
2\bv^\top\left( \bK_{\lambda}^{(2)}\right)^{-2}\frac{\sqrt 2 c_2}{d}\bX^{(2)}\bg=O(d^{-1/8}),   
\end{align}
where we use Cauchy's inequality and \eqref{eq:vKv}. Then from \eqref{eq:S2_3term}, we have in probability,
\begin{align}\label{eq:vKv2}
     \frac{1}{n} S_2\to \int \frac{\frac{c_2^2}{\alpha} x}{\left( \frac{f''(0)x}{4\alpha}+a_*+\lambda \right)^2}~d\mu_{\alpha,\bSigma^{(2)}}(x).
\end{align}

\paragraph{Step 2: Controlling $S_1$.} By Cauchy's inequality, we have
\begin{align}
    \frac{1}{n} S_1\leq \frac{2 c_0^2}{n} \1_n^\top (\bK^{(2)} +\lambda\bI_n)^{-2}\1_n + \frac{2c_1^2}{n}\bbeta^\top \bX^\top (\bK^{(2)} +\lambda\bI_n)^{-2} \bX \bbeta.
\end{align}
For the first term on the right-hand side, we have
\begin{align}
   &\frac{c_0^2}{n}\1_n^\top (\bK^{(2)} +\lambda\bI_n)^{-2}\1_n=\frac{c_0^2}{n} \Tr[(\bK^{(2)} +\lambda\bI_n)^{-2}\1_n\1_n^\top]\\
   =~&\frac{c_0^2}{n} \| (\bK^{(2)} +\lambda\bI_n)^{-1}(\bK^{(2)} +\lambda\bI_n)^{-1/2}\1_n\1_n^\top(\bK^{(2)} +\lambda\bI_n)^{-1/2}\|\\
    \leq~ &\frac{2c_0^2}{a_*n}\| (\bK^{(2)} +\lambda\bI_n)^{-1/2}\1_n\1_n^\top(\bK^{(2)} +\lambda\bI_n)^{-1/2}\|\leq \frac{2c_0^2}{a_*a_0n}=O(n^{-1}),
\end{align}
where in the first identity, we use the fact $\1_n\1_n^\top$ is rank-1, and the last inequality is due to \eqref{eq:inverseKX1}. For the second term, we have 
\begin{align}
    \frac{2c_1^2}{n}\bbeta^\top \bX^\top (\bK^{(2)} +\lambda\bI_n)^{-2} \bX \bbeta &\lesssim \frac{1}{n} \|  (\bK^{(2)} +\lambda\bI_n)^{-1} \bX \|^2\\
    &\leq \frac{1}{na_*} \left\| (\bK^{(2)} +\lambda\bI_n)^{-1/2}\bX\right\|^2=O(d/n),
\end{align}
where the last inequality is due to \eqref{eq:inverseKX}. Therefore $\frac{1}{n}{S_1}=o(1)$ with high probability. Combining the estimates of $S_1, S_2$, Lemma~\ref{lem:train1} holds due to \eqref{eq:vKv2}, \eqref{eq:S1S2S3}, and \eqref{eq:S3cauchy}.
\end{proof}

\begin{lemma}\label{lem:train2}
Under the same assumptions as Theorem~\ref{thm:train_limit},  the following  holds with high probability:
 $ \left| \frac{1}{n}\bepsilon^\top(\bK^{(2)} +\lambda\bI_n)^{-2}\bepsilon- \frac{\sigma_{\bepsilon}^2}{n}\Tr(\bK^{(2)} +\lambda\bI_n)^{-2}\right|=o(1)$.
And   in probability,
  \begin{align}\label{eq:variance_limit}
      \frac{\sigma_{\bepsilon}^2}{n}\Tr(\bK^{(2)} +\lambda\bI_n)^{-2}\to  \int \frac{\sigma_{\bepsilon}^2}{\left(\frac{f''(0)}{4\alpha}x+a_*+\lambda\right)^2}~d\mu_{\alpha,\bSigma^{(2)}}(x).
  \end{align}

\end{lemma}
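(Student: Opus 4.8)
The plan is to establish the two displayed statements separately. The first is a conditional concentration bound for a quadratic form of the noise vector $\bepsilon$ (which is independent of $\bX$, hence of $\bK^{(2)}$), and will follow from the Hanson--Wright inequality together with the deterministic spectral lower bound of Lemma~\ref{lem:asym_sigma_min}. The second is a continuous functional-calculus consequence of the global law in Theorem~\ref{thm:globallaw}.

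For the first statement I would condition on $\bX$, so that $\bM:=(\bK^{(2)}+\lambda\bI_n)^{-2}$ is a fixed matrix and $\E[\bepsilon^\top\bM\bepsilon\mid\bX]=\sigma_{\bepsilon}^2\Tr\bM$ (using $\E[\bepsilon\bepsilon^\top]=\sigma_{\bepsilon}^2\bI_n$). Since $\bepsilon$ has i.i.d.\ mean-zero sub-Gaussian entries, the Hanson--Wright inequality \cite{vershynin2018high} gives, for an absolute $c>0$ absorbing the sub-Gaussian norm and $\sigma_{\bepsilon}^2$,
\[
\Prob{\,\bigl|\bepsilon^\top\bM\bepsilon-\sigma_{\bepsilon}^2\Tr\bM\bigr|>t \ \big|\ \bX\,}\ \le\ 2\exp\!\Bigl(-c\,\min\bigl(t^2/\|\bM\|_{\mathrm F}^2,\ t/\|\bM\|\bigr)\Bigr).
\]
By Lemma~\ref{lem:asym_sigma_min}, $\lambda_{\min}(\bK^{(2)})\ge a_*/2$ for all large $n$ (indeed $\bK^{(2)}\succeq a\bI_n$ with $a\to a_*>0$), so $\|\bM\|\le 4/a_*^2$ and $\|\bM\|_{\mathrm F}\le\sqrt n\,\|\bM\|=O(\sqrt n)$, \emph{uniformly in }$\bX$. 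Taking $t=n^{2/3}$ makes the right-hand side $2e^{-cn^{1/3}}$ while $t/n=o(1)$; integrating out $\bX$, this yields $\tfrac1n\bigl|\bepsilon^\top\bM\bepsilon-\sigma_{\bepsilon}^2\Tr\bM\bigr|=o(1)$ with high probability, which is the first claim.

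For the second statement, let $\hat\mu_n$ be the empirical spectral distribution of $\tfrac{4\alpha}{f''(0)}(\bK^{(2)}-a\bI_n)$, so that by spectral calculus $\tfrac{\sigma_{\bepsilon}^2}{n}\Tr(\bK^{(2)}+\lambda\bI_n)^{-2}=\sigma_{\bepsilon}^2\int(\tfrac{f''(0)}{4\alpha}x+a+\lambda)^{-2}\,d\hat\mu_n(x)$, and by Theorem~\ref{thm:globallaw} $\hat\mu_n\to\mu_{\alpha,\bSigma^{(2)}}$ weakly in probability. Since every eigenvalue of $\bK^{(2)}$ exceeds $a_*/2$, the integrand is bounded by $4/a_*^2$ and bounded away from its pole on $\supp\hat\mu_n$; I would first replace $a$ by its limit $a_*$, at cost $O(|a-a_*|)=o(1)$ (e.g.\ via $\tfrac1n|\Tr\bA^2-\Tr\bB^2|\le\|\bA-\bB\|(\|\bA\|+\|\bB\|)$ applied to the resolvents $(\bK^{(2)}+\lambda\bI_n)^{-1}$ and $(\bK^{(2)}+(\lambda+a_*-a)\bI_n)^{-1}$), and then modify $\phi(x):=(\tfrac{f''(0)}{4\alpha}x+a_*+\lambda)^{-2}$ to the left of $\supp\mu_{\alpha,\bSigma^{(2)}}\subseteq[0,\infty)$ to a globally bounded continuous $\tilde\phi$ agreeing with $\phi$ on $\supp\hat\mu_n$ for all large $n$. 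Then $\int\phi\,d\hat\mu_n=\int\tilde\phi\,d\hat\mu_n\to\int\tilde\phi\,d\mu_{\alpha,\bSigma^{(2)}}=\int\phi\,d\mu_{\alpha,\bSigma^{(2)}}$ in probability, which is exactly \eqref{eq:variance_limit}.

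The heavy lifting is already done by Theorem~\ref{thm:globallaw} and Lemma~\ref{lem:asym_sigma_min}, so there is no genuinely new probabilistic obstacle here; the only point requiring care is the soft-analysis bookkeeping that transfers weak convergence of $\hat\mu_n$ to convergence of the \emph{a priori} unbounded resolvent functional --- i.e.\ the uniform spectral lower bound keeping the integrand away from its pole, plus the harmless replacement of $a$ by $a_*$.
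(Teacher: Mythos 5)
Your proposal is correct and follows essentially the same route as the paper: conditional Hanson--Wright (the paper cites Rudelson--Vershynin and invokes the uniform resolvent bound from Lemma~\ref{lem:asym_sigma_min}) for the concentration step, and weak convergence from Theorem~\ref{thm:globallaw} applied to the resolvent test function for the limit step. You are slightly more explicit than the paper about two bookkeeping points that the paper takes for granted --- namely replacing the $n$-dependent $a$ by its limit $a_*$, and truncating $\phi$ outside $[-a_*/2,\infty)$ so that weak convergence applies to a genuinely bounded continuous function --- but these are the same soft-analysis steps the paper invokes implicitly when it says the test function is ``bounded continuous on $[-a_*/2,\infty)$'' and $\lambda_{\min}(\bK^{(2)}-a\bI_n)\ge -a_*/2$ eventually.
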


\begin{proof}
    The first claim follows from Hanson-Wright inequality for sub-Gaussian random vectors in \citep{rudelson2013hanson} since $\bepsilon$ is sub-Gaussian and \eqref{eq:K_lambda_-1} holds with high probability. From Theorem~\ref{thm:globallaw},
    the empirical spectral distribution of $\frac{4\alpha}{f''(0)}(\bK^{(2)}-a\bI_n)$ converges to $\mu_{\alpha,\bSigma^{(2)}}$.  Take a test function $\frac{1}{(x+a_*+\lambda)^2}$ which is bounded continuous on interval $[-a_*/2,\infty)$. From Lemma~\ref{lem:asym_sigma_min}, for sufficiently large $n$, $\lambda_{\min} (\bK^{(2)}-a\bI_n)\geq -\frac{a_*}{2}$. Therefore, \eqref{eq:variance_limit} holds from weak convergence.
\end{proof}

\begin{lemma}\label{lem:train3}
Under the same assumptions as Theorem~\ref{thm:train_limit}, with high probability, 
    $$\frac{1}{n}\bepsilon^\top(\bK^{(2)} +\lambda\bI_n)^{-2}\bu =o(1).$$
\end{lemma}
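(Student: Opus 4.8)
The plan is to use the fact that the noise vector $\bepsilon$ is independent of the design $\bX$ and of the teacher matrix $\bG$, so that conditionally on $(\bX,\bG)$ the vector $\bu$ and the matrix $(\bK^{(2)}+\lambda\bI_n)^{-2}$ are deterministic and $\bepsilon^\top(\bK^{(2)}+\lambda\bI_n)^{-2}\bu$ is a linear form in the independent, mean-zero, sub-Gaussian coordinates of $\bepsilon$. First I would apply a Hoeffding-type tail bound to this linear form: writing $\bw := (\bK^{(2)}+\lambda\bI_n)^{-2}\bu$, conditionally on $(\bX,\bG)$ one has $\Prob{|\bepsilon^\top\bw| \ge t \mid \bX,\bG} \le 2\exp(-ct^2/\|\bw\|^2)$ for an absolute constant $c>0$, and choosing $t = \|\bw\|\log n$ makes the right side $o(1)$. (Equivalently one can bound the conditional second moment $\sigma_{\bepsilon}^2 n^{-2}\bu^\top(\bK^{(2)}+\lambda\bI_n)^{-4}\bu$ and invoke Chebyshev; either route suffices for the ``in probability'' conclusion.)

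Next I would control $\|\bw\|$. By Lemma~\ref{lem:asym_sigma_min}, for $n$ large we have $\lambda_{\min}(\bK^{(2)}) \ge a_*/2 > 0$, hence $\|(\bK^{(2)}+\lambda\bI_n)^{-2}\| \le 4/a_*^2$ and $\|\bw\| \le (4/a_*^2)\|\bu\|$. Since $\bu = \bf_* = [f_*(\bx_1),\ldots,f_*(\bx_n)]^\top$, the moment computation already performed in the proof of Lemma~\ref{lemm:bound_y} gives $\E_{\bX,\bG}\|\bu\|^2 = O(n)$, so by Markov's inequality $\|\bu\|^2 = O(n\,d^{1/24})$ with probability $1-O(d^{-1/24})$, and therefore $\|\bw\| = O(\sqrt n\,d^{1/48})$ on that event.

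Combining the two bounds on the intersection of these high-probability events yields
\[
\frac{1}{n}\bigl|\bepsilon^\top(\bK^{(2)}+\lambda\bI_n)^{-2}\bu\bigr| \le \frac{\log n}{n}\,\|\bw\| = O\!\Bigl(\frac{d^{1/48}\log n}{\sqrt n}\Bigr) = o(1),
\]
where the last step uses $n \asymp d^2$, i.e. $\sqrt n \asymp d$; a union bound over the conditioning events makes this hold with high probability, which is the assertion. The one point to be careful about is that a naive Cauchy--Schwarz estimate $|\bepsilon^\top\bw| \le \|\bepsilon\|\,\|\bw\|$ is too lossy: it would give a bound of order $\|\bepsilon\|\,\|\bu\|/n = O(d^{1/24})$, which does not vanish. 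The extra factor $\sqrt n$ that is needed must come from the mean-zero, independent structure of $\bepsilon$ relative to $(\bX,\bG)$ --- this cross-term cancellation is the only genuine ingredient, and it is the same mechanism that powers the Hanson--Wright steps used in Lemmas~\ref{lem:train1} and~\ref{lem:train2}.
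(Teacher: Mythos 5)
Your proof is correct and follows the same essential mechanism as the paper: condition on $(\bX,\bG)$, exploit the independence and mean-zero structure of $\bepsilon$ to gain a $\sqrt n$ factor over the naive Cauchy--Schwarz estimate, and conclude via a concentration bound (you offer Hoeffding or Chebyshev, the paper uses the second-moment/Markov version). The one place you diverge is in controlling the conditional variance $\sigma_{\bepsilon}^2\|\bw\|^2 = \sigma_{\bepsilon}^2\,\bu^\top(\bK^{(2)}+\lambda\bI_n)^{-4}\bu$: the paper re-runs the full analysis of Lemma~\ref{lem:train1} with $(\bK^{(2)}+\lambda\bI_n)^{-4}$ in place of $(\bK^{(2)}+\lambda\bI_n)^{-2}$ to show this quantity is $O(n)$, whereas you simply bound the resolvent by its operator norm (via Lemma~\ref{lem:asym_sigma_min}) and invoke $\|\bu\|^2 = O(n\,d^{1/24})$ from Lemma~\ref{lemm:bound_y}. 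Your route is shorter and entirely self-contained --- it avoids re-deriving the Hanson--Wright and deterministic-equivalent steps of Lemma~\ref{lem:train1} --- at the cost of a $d^{1/24}$ slack, which is harmless because the $\sqrt n \asymp d$ gain from the noise cancellation leaves ample room. Your closing remark correctly identifies that the mean-zero cross-term structure, not Cauchy--Schwarz, is the load-bearing ingredient.
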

\begin{proof}
    We do a second-moment estimate. Note that
  $$
        \E_{\bepsilon} \left(\bepsilon^\top(\bK^{(2)} +\lambda\bI_n)^{-2}\bu\right)^2=\sigma_{\bepsilon}^2\bu^\top (\bK^{(2)} +\lambda\bI_n)^{-4}\bu.$$
Applying the same proof as in Lemma~\ref{lem:train1}, one can show that 
$\frac{\sigma_{\bepsilon}^2}{n}\bu^\top (\bK^{(2)} +\lambda\bI_n)^{-4}\bu$  
converges in probability to a deterministic limit. Therefore, with high probability, we have 
$\E_{\bepsilon} \left(\bepsilon^\top(\bK^{(2)} +\lambda\bI_n)^{-2}\bu\right)^2=O(n)$.
Hence, Lemma~\ref{lem:train3} holds by Markov's inequality.
\end{proof}

\begin{proofoftheorem}{\ref{thm:train_limit}}
 From \eqref{eq:Etrain_equivalence}, it suffices to analyze the asymptotic behavior of $\Etrain^{(2)}$. Therefore, from the decomposition of $\Etrain^{(2)}$ in \eqref{eq:Etrain_2_decompose}, with Lemmas \ref{lem:train1}, \ref{lem:train2}, and \ref{lem:train3}, we have 
 $   \Etrain\to  \lambda^2\int \frac{\frac{c_2^2}{\alpha} x+\sigma_{\bepsilon}^2}{\left(\frac{f''(0)}{4\alpha}x+a_*+\lambda\right)^2}~d\mu_{\alpha,\bSigma^{(2)}}(x) $
in probability. This finishes the proof.
\end{proofoftheorem}


\section{The analysis of generalization errors} \label{sec:krr_test}

\subsection{Preliminary calculations}\label{sec:appendix_prelim}

\subsubsection{Concentration of random quadratic forms}
The following lemma improves the second moment estimate in \eqref{eq:BZ_tensor}.

\begin{lemma}\label{lem:quad_high_power} 
Assume $\bx=\bSigma^{1/2} \bz\in\R^d$, and $\bSigma$ is diagonal and bounded in operator norm. $\bz$ has independent entries with 1st, 3rd, and 5th moments zero, and each entry has finite first 56-th moments.   We have for any deterministic matrix $\bA\in \mathbb R^{\binom{d+1}{2}\times \binom{d+1}{2}}$ with $\|\bA\|\leq 1$,
            \begin{align} \label{eq:BZ_tensor6}
    \E\left| {\overline\bx^{(2)}}^\top \bA\overline\bx^{(2)} -\Tr[\bA \bSigma^{(2)}]\right|^{14}=O(d^{25.5}).
     \end{align}
And under the Assumption~\ref{assump:data} for $\bX$, for all $i\in [n]$, with probability at least $1-O(d^{-\frac{1}{5}})$,
     \begin{align} \label{eq:unionBZ}
     \frac{1}{n}   \left| {\overline\bx_i^{(2)}}^\top \bA\overline\bx_i^{(2)} -\Tr[\bA \bSigma^{(2)}]\right|=O(n^{-\frac{1}{60}}).
     \end{align}
\end{lemma}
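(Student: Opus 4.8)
The plan is to establish the $14$-th moment bound \eqref{eq:BZ_tensor6} first; once this is in hand, \eqref{eq:unionBZ} is immediate from Markov's inequality applied to \eqref{eq:BZ_tensor6} at threshold of order $n^{59/60}$, together with a union bound over $i\in[n]$, the moment order $14$ having been chosen large enough to absorb the $O(d^2)$ many events. So all the work lies in \eqref{eq:BZ_tensor6}, which upgrades the second-moment estimate \eqref{eq:BZ_tensor} (which was $O(d^3)$) to a polynomial control of a high even moment.

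Following the structure of the proof of \eqref{eq:BZ_tensor}, I would split $\bA=\bD+\bB$ into its diagonal and off-diagonal parts (both of operator norm $\le1$), so that $\Tr[\bA\bSigma^{(2)}]=\Tr[\bD\bSigma^{(2)}]$ because $\bSigma^{(2)}$ is diagonal by \eqref{eq:defSigma2}, and hence $Q:={\overline\bx^{(2)}}^\top\bA\overline\bx^{(2)}-\Tr[\bA\bSigma^{(2)}]=Q_\bD+Q_\bB$ with both $Q_\bD$ and $Q_\bB$ mean zero; then $\E|Q|^{14}\lesssim\E|Q_\bD|^{14}+\E|Q_\bB|^{14}$. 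The diagonal part is the easy one: using $\overline\bx^{(2)}(k,\ell)=\sqrt2\,\bx(k)\bx(\ell)$ for $k<\ell$ and $\overline\bx^{(2)}(k,k)=\bx(k)^2-\bSigma_{kk}$ (here $\bSigma$ diagonal is used), one expands $Q_\bD$ into a centered quadratic form, a linear form, and a diagonal quadratic form in the \emph{independent, mean-zero} variables $\bx(k)^2-\bSigma_{kk}$, with coefficient matrices/vectors of Euclidean norm $\lesssim d^{3/2}$; Whittle's inequality (Lemma~\ref{lem:quadratic_moments}) with $s=14$ for the quadratic pieces and a Rosenthal-type bound for the linear piece then give $\E|Q_\bD|^{14}=O(d^{21})$, which is dominated by $d^{25.5}$.

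The main term is $Q_\bB=\sum_{\alpha\neq\beta}\bB_{\alpha\beta}\,\overline\bx^{(2)}(\alpha)\overline\bx^{(2)}(\beta)$, indexed by pairs $\alpha=(k,\ell)$ with $k\le\ell$, and I would attack it by the moment method. Expanding $\E[Q_\bB^{14}]$ gives a sum over $14$-tuples of index pairs, i.e. over $28$ pairs $\alpha_t,\beta_t$, hence at most $56$ individual $[d]$-index slots (this is exactly why $56$ finite moments are required). By the independence of the coordinates of $\bx$, together with $\E[\bz(k)]=\E[\bz(k)^3]=\E[\bz(k)^5]=0$ and the centering of the diagonal factors, a term is nonzero only if every value in $[d]$ that occurs among the slots does so with multiplicity $\ge2$ and $\notin\{3,5\}$; in particular at most $28$ distinct values occur. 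I would then group the surviving tuples by the ``shape'' of the resulting partition of the slots (equivalently, the isomorphism type of the labelled multigraph on these $\le28$ vertices whose edges carry the $\bB$-entries), and for each shape bound its contribution by the number of free vertices (a power of $d$), times a product of coordinate moments (all $O(1)$ since every multiplicity is $\le56$), times a product of $\bB$-entries estimated by repeated Cauchy–Schwarz into copies of $\|\bB\|_{\msf F}\le\sqrt{\binom{d+1}{2}}\,\|\bB\|\lesssim d$ and $\|\bB\|\le1$, exactly as the three-case split in the proof of \eqref{eq:BZ_tensor} handled the $p=1$ version. Tallying exponents, the dominant shape is $O(d^{25.5})$ and all others are of lower order, which yields \eqref{eq:BZ_tensor6}.

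The hard part is precisely this last enumeration: for the $14$-th moment the number of shapes is vastly larger than the three cases in the $p=1$ proof, and bringing the exponent down to $25.5$ requires, for each shape, choosing the optimal way to pair the $\bB$-entries before applying Cauchy–Schwarz and keeping track of which vertex multiplicities are forced to be even. The vanishing of the third and fifth moments of $\bz$ (not needed for \eqref{eq:BZ_tensor}) plays the crucial role of annihilating the configurations that would otherwise be too large — whenever a vertex is forced to odd multiplicity $3$ or $5$ the whole term dies — and this is what caps the exponent; the finite $56$-th moment hypothesis is what makes every surviving product of coordinate moments a bounded constant.
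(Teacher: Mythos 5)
Your plan coincides with the paper's proof at every structural level: split $\bA=\bD+\bB$; bound the diagonal part by a counting/concentration argument (the paper gets $O(d^{21})$ by grouping the $14$ pairs into $7$ blocks of four slots and noting a term dies unless each block shares an index, while your Whittle-plus-Rosenthal route for the centered quadratic form in the independent variables $u_k=\bx(k)^2-\bSigma_{kk}$ is equally valid and in fact gives $O(d^{14})$); then attack the off-diagonal part by expanding $\E[Q_{\bB}^{14}]$ over $56$ index slots, using independence and the vanishing $1$st/$3$rd/$5$th moments to restrict to sequences with at most $28$ distinct indices and no index of multiplicity $3$ or $5$, and then bounding each surviving configuration by a product of Cauchy--Schwarz estimates on $\bB$-entries against trivial $d$-per-free-index bounds. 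This is exactly how the paper proceeds, and you correctly identify that the finite $56$-th moment keeps all coordinate moment factors $O(1)$.

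Where the proposal stops short is precisely where you flag it: ``choosing the optimal way to pair the $\bB$-entries before applying Cauchy--Schwarz'' is the entire content of the bound, and the statement ``tallying exponents, the dominant shape is $O(d^{25.5})$'' is asserted rather than proved. The paper closes this gap with a concrete combinatorial device you would need to supply. Group the $56$ slots into the $14$ natural $4$-tuples $(i_{4k-3},\dots,i_{4k})$ and greedily select, in lexicographic order, \emph{good tuples}: tuples sharing no index with any previously selected good tuple; a good $s$-tuple ($s\in\{2,3,4\}$ distinct indices) contributes at most $d^{s-0.5}$ via the Frobenius-norm estimates \eqref{eq:4distinct}--\eqref{eq:2distinct}, i.e.\ a saving of $d^{0.5}$ over the trivial $d^s$ per good tuple. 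Because a good $3$-tuple has one internal index and hence blocks at most $2$ other tuples, and a good $4$-tuple blocks at most $4$, the greedy selection guarantees a minimum number of good tuples (five good $3$-tuples if all are $3$-tuples, three good $4$-tuples if all are $4$-tuples, etc.). One then carries this out separately for $28$, $27$, and $26$ distinct indices (configurations with $\le 25$ already contribute $O(d^{25})$), checking in the $26$/$27$ cases that the repeated index forces further cancellations. The extremal case is $28$ distinct indices with five good $3$-tuples, giving $d^{12.5}\cdot d^{28-15}=d^{25.5}$, and every other branch lands at $O(d^{25.5})$ or lower. Without this accounting — and without verifying that good $2$-tuples are ruled out when there are $28$ distinct indices, which is where the vanishing odd moments again enter — the exponent $25.5$ is not established.

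A minor further remark on \eqref{eq:unionBZ}: your one-line derivation via Markov at threshold $n^{59/60}$ and a union bound over $[n]$ gives a failure probability of order $n\cdot d^{25.5}/n^{14\cdot 59/60}\asymp d^{-1/30}$ when $n\asymp d^2$, not $d^{-1/5}$; this is the same arithmetic the paper implicitly performs, so the discrepancy is in the stated exponent rather than the argument, and is harmless downstream since any negative power of $d$ suffices in Lemma~\ref{eq:deterministic_equ}.
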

\begin{proof}
We first focus on proving \eqref{eq:BZ_tensor6}. For ease of notation, in this proof, we denote $\bx_i$ as the $i$-th entry of $\bx\in\R^d$ for $i\in [d]$.  
We decompose $\bA=\bD+\bB$, where $\bD$ is the diagonal part of $\bA$ and $\bB$ is the off-diagonal part of $\bA$, and compute their contribution below.

    \textbf{ (i) Diagonal part.} Following the same argument as in the proof of Lemma~\ref{lem:MP_tensor}, recall the definition of $\overline{\bx}^{(2)}$ from \eqref{eq:centered_x2}, we have 
    \begin{align}
    &\E\left| {\overline\bx^{(2)}}^\top \bD\overline\bx^{(2)}-\Tr[\bD\bSigma^{(2)}]\right|^{14}\\
    =&\E \left( \sum_{i<j}2(\bx_i^2 \bx_j^2-\bSigma_{ij,ij}^{(2)})\bA_{ij,ij} +\sum_{i} ((\bx_i^2-\bSigma_{ii})^2-\bSigma_{ii,ii}^{(2)}) \bA_{ii,ii}\right)^{14}\\
    \lesssim &\E \left( \sum_{i<j}(\bx_i^2 \bx_j^2-\bSigma_{ij,ij}^{(2)})\bA_{ij,ij}\right)^{14}+\E \left( \sum_{i} ((\bx_i^2-\bSigma_{ii})^2-\bSigma_{ii,ii}^{(2)}) \bA_{ii,ii}\right)^{14}. \label{eq:diagonal_6}
    \end{align}
    For the second term in \eqref{eq:diagonal_6}, by independence of entries in $\bx$, its contribution is $O(d^{14})$. 
        We now expand the first term in \eqref{eq:diagonal_6}, which gives  
        \begin{small}
    \begin{align}
        \sum_{i_1<j_1,\cdots, i_{14}<j_{14}}\bA_{i_1j_1,i_1j_1}\cdots \bA_{i_{14}j_{14},i_{14},j_{14}}\E\left[(\x_{i_1}^2\x_{j_1}^2-\bSigma_{i_1j_1,i_1j_1}^{(2)})\cdots (\x_{i_{14}}^2\x_{j_{14}}^2-\bSigma_{i_{14}j_{14},i_{14}j_{14}}^{(2)})\right].~~~~ \label{eq:14terms}
    \end{align}
    \end{small}
Since each product in the expectation is centered, to have a nonzero expectation in \eqref{eq:14terms}, each pair in $\{i_1,j_1\},\cdots \{i_{14},j_{14}\}$ must have at least one index with multiplicity at least 2. 
We now divide 14 pairs $\{i_1,j_1\},\cdots \{i_{14},j_{14}\}$ into 7  groups of 4 indices given by  \[\{i_1,j_1,i_2,j_2\},\dots, \{i_{13},j_{13},i_{14},j_{14}\}.\] 
To have zero expectation in \eqref{eq:14terms}, we claim there are at most 21 distinct indices in $i_1,j_1\dots,i_{14},j_{14}$. Otherwise, at least one group of indices only appears once. This gives zero expectation in \eqref{eq:14terms}, a contradiction. Hence, in \eqref{eq:14terms}, the total contribution is $O(d^{21})$.
Combining the two terms in \eqref{eq:diagonal_6}, the total contribution is $O(d^{21})$.

    \textbf{ (ii) Off-diagonal part.} 
Now we do the following expansion:
    \begin{align}\label{eq:quadratic14}
      & \E\left| {\overline\bx^{(2)}}^\top \bB\overline\bx^{(2)}-\Tr[\bB\bSigma^{(2)}]\right|^{14} 
    =~\E \left( \sum_{(i_1,i_2)\not= (i_3,i_4)}\bA_{i_1i_2,i_3i_4} \overline\bx^{(2)}_{i_1i_2}\overline\bx^{(2)}_{i_3i_4}\right)^{14}\\
    =& \sum_{(i_1,i_2)\not=(i_3,i_4),\cdots, (i_{53},i_{54})\not= (i_{55},i_{56})}\bA_{i_1i_2,i_3i_4}\cdots \bA_{i_{53}i_{54},i_{55}i_{56}} \E\left[ \overline\bx^{(2)}_{i_1i_2}\overline\bx^{(2)}_{i_3i_4}\cdots \overline\bx^{(2)}_{i_{53}i_{54}}\overline\bx^{(2)}_{i_{55}i_{56}} \right]\\
    \leq &\sum_{(i_1,i_2)\not=(i_3,i_4),\cdots, (i_{53},i_{54})\not= (i_{55},i_{56})} |\bA_{i_1i_2,i_3i_4}\cdots \bA_{i_{53}i_{54},i_{55}i_{56}}| \left| \E\left[ \overline\bx^{(2)}_{i_1i_2}\overline\bx^{(2)}_{i_3i_4}\cdots \overline\bx^{(2)}_{i_{53}i_{54}}\overline\bx^{(2)}_{i_{55}i_{56}} \right]\right| . 
    \end{align}
    And 
    \begin{align}\label{eq:product24}
 & \E\left[ \overline\bx^{(2)}_{i_1i_2}\overline\bx^{(2)}_{i_3i_4}\cdots \overline\bx^{(2)}_{i_{53}i_{54}}\overline\bx^{(2)}_{i_{55}i_{56}} \right]\\
  =&~\E\left[(\bx_{i_1}\bx_{i_2}-\bSigma_{i_1,i_2}\delta_{i_1,i_2})(\bx_{i_3}\bx_{i_4}-\bSigma_{i_3,i_4}\delta_{i_3,i_4})\cdots  (\bx_{i_{55}}\bx_{i_{56}}-\bSigma_{i_{55},i_{56}}\delta_{i_{55},i_{56}})\right], 
    \end{align}
    with the restriction that  
    \begin{align}\label{eq:restriction}
    i_1\leq i_2,\dots, i_{55}\leq i_{56}, \quad (i_1,i_2)\not=(i_3,i_4),\cdots, (i_{53},i_{54})\not= (i_{55},i_{56}).
\end{align}    
We estimate \eqref{eq:quadratic14} with the following three steps.

\textbf{Step 1: Preliminary estimates.} 
Suppose $i_1,i_2,i_3,i_4$ are 4 distinct indices, then by Cauchy's inequality and the fact that $\|\bA\|_{\msf F}\leq \sqrt{\binom{d+1}{2}}\|\bA\|_{\msf F}\leq d$,
\begin{align}\label{eq:4distinct}
    \sum_{i_1,i_2,i_3,i_4\in [d],  \text{  4 distinct indices}}|\bA_{i_1i_2,i_3i_4}|\leq \sqrt{d^4 \|\bA\|_{\msf F}^2}\leq d^3.
\end{align}
Similarly, if there are at most 3 distinct indices among $i_1,i_2,i_3,i_4\in [d]$, we have 
\begin{align}\label{eq:3distinct}
  \sum_{i_1,i_2,i_3,i_4\in [d], \text{ 3 distinct indices}}|\bA_{i_1i_2,i_3i_4}|\leq \sqrt{d^3 \|\bA\|_{\msf F}^2}\leq d^{2.5}. 
\end{align}
If there are two distinct indices, due to the restriction \eqref{eq:restriction}, the entries must be $A_{i_1i_1,i_2i_2}$ with $i_1\not=i_2$, and  we have from Cauchy's inequality,
\begin{align}\label{eq:2distinct}
\sum_{i_1,i_2}|\bA_{i_1i_1,i_2i_2}|\leq \sqrt{d^2\|\bA_S\|_{\msf F}^2}\leq d^{1.5},
\end{align}
where $\bA_S$ is a $d\times d$ submatrix of $\bA$ given by $\bA_{i_1i_1,i_2i_2}$ and we use the fact that  $\|\bA_S\|_{\msf F}\leq \sqrt{d}\|\bA\|\leq \sqrt{d}$. We also have the following trivial bound for all $i_1,i_2,i_3,i_4\in [d]$:
\begin{align}\label{eq:trivial}
    |\bA_{i_1i_2,i_3i_4}|\leq \|\bA\|\leq 1.
\end{align}
By the independence of entries in $\bx$, to have a nonzero expectation in \eqref{eq:product24}, there are at most 28 distinct indices in $i_1,\dots, i_{56}$. On the other hand, if there are at most $25$ distinct indices, the total contribution for those terms is at most $O(d^{25})$. Therefore, to show \eqref{eq:BZ_tensor6}, we only need to consider  $(i_1,\dots,i_{56})$ where there are $26,27$ or $28$ many distinct indices.

We group the 56 indices into 14 tuples: $(i_{4k-3},i_{4k-2},i_{4k-1},i_{4k})$ for $1\leq k\leq 14$.  To have a nonzero zero expectation in \eqref{eq:product24}, with the restriction  from \eqref{eq:restriction}, there are at least 2 distinct indices in each tuple $(i_{4k-3},i_{4k-2},i_{4k-1},i_{4k})$ for $1\leq k\leq 14$. 
Among the 14 tuples, we define a subset called \textit{good tuples} recursively. The first good tuple is $(i_1,i_2,i_3,i_4)$. If there are $s$ many distinct indices in $(i_1,i_2,i_3,i_4)$ for $s=2,3,4$, we call $(i_1,i_2,i_3,i_4)$ a \textit{good $s$-tuple}.  According to the lexicographic order, the next tuple that does not share any common indices with previous good tuples is also a good $s$-tuple if it has $s$ distinct indices.

\textbf{Step 2: An algorithm to bound \eqref{eq:quadratic14}.} We now describe an algorithm to provide a bound on \eqref{eq:quadratic14} with the following steps to bound the contribution from each tuple. The strategy is to use the better bounds \eqref{eq:4distinct}, \eqref{eq:3distinct}, and \eqref{eq:2distinct} as many times as possible. 
\begin{itemize}
    \item Start with the first good tuple $(i_1,i_2,i_3,i_4)$. Track all the tuples which coincide with at least one index in $(i_1,i_2,i_3,i_4)$. 
 Bound the contribution from all tuples which shared at least one indices with $(i_1,i_2,i_3,i_4)$ in \eqref{eq:quadratic14} using \eqref{eq:trivial} and bound the contribution of $(i_1,i_2,i_3,i_4)$ using \eqref{eq:4distinct}, \eqref{eq:3distinct}, or \eqref{eq:2distinct} depending on the number of distinct indices $s$. Without loss of generality, we may assume the second to the $(s+1)$-th tuples in lexicographical order share indices with the first tuple.
 See Figure~\ref{fig:good_3} for an example when $(i_1,i_2,i_3,i_4)$ is a good 3-tuple. In the case of Figure~\ref{fig:good_3}, We can bound \begin{align}
\sum_{i_1,i_2,\dots,i_{10}}|\bA_{i_1i_2,i_3i_4}\bA_{i_5i_6,i_7i_8}\bA_{i_9i_{10},i_{11}i_{12}}|
\leq & d^{2.5} \left(\sum_{i_6,i_7,i_8,i_9,i_{11},i_{12}} 1\right).
\end{align}
by using \eqref{eq:3distinct}, which reduces the sum of 10 indices to a sum of 6 indices.

\begin{figure}
\centering
    \begin{tikzpicture}[scale=0.7, transform shape]
  \node (1) at (0,0) {1};
  \node (2) at (1,0) {2};
  \node (3) at (2,0) {3};
  \node (4) at (3,0) {4};
  \node (5) at (5,0) {5};
  \node (6) at (6,0) {6};
  \node (7) at (7,0) {7};
  \node (8) at (8,0) {8};
  \node (9) at (10,0) {9};
  \node (10) at (11,0) {10};
  \node (11) at (12,0) {11};
  \node (12) at (13,0) {12};
  
  \draw  (-0.5,0.5) rectangle (3.5,-0.5);
  \draw  (4.5,0.5) rectangle (8.5,-0.5);
  \draw  (9.5,0.5) rectangle (13.5,-0.5);
  

 \draw (1) to[bend left] (2);
  \draw (3) to[bend left] (5);
  \draw (4) to[bend left] (10);
\end{tikzpicture}
\caption{In this example, the tuple $(i_1,i_2,i_3,i_4)$ share common indices with two tuples $(i_5,i_6,i_7,i_8)$ and $(i_9,i_{10},i_{11},i_{12})$ by identifying $i_1=i_2, i_3=i_{5},i_4=i_{10}$. The relations among $i_6,i_7,i_8,i_9,i_{11},i_{12}$ are not specified.}  
\label{fig:good_3}
\end{figure}

    \item Find the next good tuple in the lexicographical order denoted by  $$(i_{4k-3},i_{4k-2},i_{4k-1},i_{4k}),$$ bound its contribution depending on the number of distinct indices $s$ in the tuple.  Repeat this process until no more good tuples can be found. 
\item For all the remaining indices that have not been summed using \eqref{eq:4distinct}, \eqref{eq:3distinct}, or \eqref{eq:2distinct}, let $k$ be 
the number of distinct indices in the remaining indices and bound their contribution by $d^{k}$.
\end{itemize}

\textbf{Step 3: Applying the algorithm in 3 cases.}
(a) \textbf{Case 1}: For the contribution in \eqref{eq:quadratic14} with exactly 28 distinct indices in the sum, each is repeated exactly twice.
In this case, there are no good 2-tuples. To see that, suppose there exists one good 2-tuple $(i_{4k-3},i_{4k-2},i_{4k-1},i_{4k})$ with $i_{4k-3}=i_{4k-1}, i_{4k-2}=i_{4k}$ and $i_{4k-3}\not=i_{4k-2}$. Then no other tuples will share the same index with $(i_{4k-3},i_{4k-2},i_{4k-1},i_{4k})$. By independence of entries in $\bx$, this implies the contribution in \eqref{eq:product24} is zero. So below, we only need to consider sequences with good 3-tuples and 4-tuples.  By applying the algorithm we described above, there are several cases:
\begin{itemize}
    \item Suppose all the good tuples are 3-tuples. We explain this case in more detail, and other cases below follow similarly. 
    
    Since each good 3-tuple has shared indices with at most 2 tuples, among 14 tuples, there are at least 5 good 3-tuples. We may assume the 5 good 3-tuples are 
    \begin{align}\label{eq:5good}
        (i_1,i_2,i_3,i_4), (i_{13},i_{14},i_{15},i_{16}), (i_{25},i_{26},i_{27}, i_{28}), (i_{37}, i_{38}, i_{39},i_{40}), (i_{49},i_{50}, i_{51}, i_{52}).
    \end{align}

\begin{figure}
\centering
 \begin{tikzpicture}[scale=0.7, transform shape]
        \node (1) at (0,0) {1};
        \node (2) at (1,0) {2};
        \node (3) at (2,0) {3};
        \node (4) at (3,0) {4};
        \node (5) at (5,0) {5};
        \node (6) at (6,0) {6};
        \node (7) at (7,0) {7};
        \node (8) at (8,0) {8};
        \node (9) at (10,0) {9};
        \node (10) at (11,0) {10};
        \node (11) at (12,0) {11};
        \node (12) at (13,0) {12};
        
        \draw (-0.5,0.5) rectangle (3.5,-0.5);
        \draw (4.5,0.5) rectangle (8.5,-0.5);
        \draw (9.5,0.5) rectangle (13.5,-0.5);
        
        \draw (1) to[bend left] (2);
        \draw (3) to[bend left] (5);
        \draw (4) to[bend left] (10);

        \node (13) at (0,-2) {13};
        \node (14) at (1,-2) {14};
        \node (15) at (2,-2) {15};
        \node (16) at (3,-2) {16};
        \node (17) at (5,-2) {17};
        \node (18) at (6,-2) {18};
        \node (19) at (7,-2) {19};
        \node (20) at (8,-2) {20};
        \node (21) at (10,-2) {21};
        \node (22) at (11,-2) {22};
        \node (23) at (12,-2) {23};
        \node (24) at (13,-2) {24};
        
        \draw (-0.5,-1.5) rectangle (3.5,-2.5);
        \draw(4.5,-1.5) rectangle (8.5,-2.5);
        \draw  (9.5,-1.5) rectangle (13.5,-2.5);
        
        \draw (13) to[bend left] (14);
        \draw (15) to[bend left] (17);
        \draw (16) to[bend left] (22);
        
        \node (25) at (0,-4) {25};
        \node (26) at (1,-4) {26};
        \node (27) at (2,-4) {27};
        \node (28) at (3,-4) {28};
        \node (29) at (5,-4) {29};
        \node (30) at (6,-4) {30};
        \node (31) at (7,-4) {31};
        \node (32) at (8,-4) {32};
        \node (33) at (10,-4) {33};
        \node (34) at (11,-4) {34};
        \node (35) at (12,-4) {35};
        \node (36) at (13,-4) {36};
        
        \draw  (-0.5,-3.5) rectangle (3.5,-4.5);
        \draw  (4.5,-3.5) rectangle (8.5,-4.5);
        \draw  (9.5,-3.5) rectangle (13.5,-4.5);
        
        \draw (25) to[bend left] (26);
        \draw (27) to[bend left] (29);
        \draw (28) to[bend left] (34);

        \node (37) at (0,-6) {37};
        \node (38) at (1,-6) {38};
        \node (39) at (2,-6) {39};
        \node (40) at (3,-6) {40};
        \node (41) at (5,-6) {41};
        \node (42) at (6,-6) {42};
        \node (43) at (7,-6) {43};
        \node (44) at (8,-6) {44};
        \node (45) at (10,-6) {45};
        \node (46) at (11,-6) {46};
        \node (47) at (12,-6) {47};
        \node (48) at (13,-6) {48};
        
        \draw  (-0.5,-5.5) rectangle (3.5,-6.5);
        \draw  (4.5,-5.5) rectangle (8.5,-6.5);
        \draw  (9.5,-5.5) rectangle (13.5,-6.5);
        
        \draw (37) to[bend left] (38);
        \draw (39) to[bend left] (41);
        \draw (40) to[bend left] (46);

        \node (49) at (0,-8) {49};
        \node (50) at (1,-8) {50};
        \node (51) at (2,-8) {51};
        \node (52) at (3,-8) {52};
        \node (53) at (5,-8) {53};
        \node (54) at (6,-8) {54};
        \node (55) at (7,-8) {55};
        \node (56) at (8,-8) {56};
        
        \draw (-0.5,-7.5) rectangle (3.5,-8.5);
        \draw(4.5,-7.5) rectangle (8.5,-8.5);
        
        \draw (49) to[bend left] (50);
        \draw (51) to[bend left] (53);
        \draw (52) to[bend left] (54);
    \end{tikzpicture}
    \caption{An example for the index sequences $(i_1,\dots,i_{56})$ with 5 good 3-tuples.  An edge between an index from a good tuple and another index outside good tuples is drawn if the two indices are identical.}
    \label{fig:good_3_tuples}
\end{figure}

There are 15 distinct indices in \eqref{eq:5good} by definition. See Figure~\ref{fig:good_3_tuples} for an example. Applying \eqref{eq:3distinct} to the 5 good 3-tuples, and \eqref{eq:trivial} for the rest of the tuples, we can bound the contribution of this case to \eqref{eq:quadratic14} by
\begin{align}
  &d^{12.5}  \sum_{i_6,i_7,i_8,i_9,i_{11},i_{12}} ~\sum_{i_{18},i_{19},i_{20},i_{21},i_{23},i_{24}}~\sum_{i_{30},i_{31},i_{32},i_{33},i_{35},i_{36}} ~\sum_{i_{42},i_{43},i_{44},i_{45},i_{47},i_{48}}~\left(\sum_{i_{55},i_{56}}  1\right)\\
 \leq & d^{12.5} \cdot d^{28-15}=d^{25.5},
\end{align}
where in the last inequality, we use the fact that there are at most $13$ distinct indices that do not share any indices in \eqref{eq:5good}, which gives the total contribution $O(d^{25.5})$.

    \item Among 14 tuples, there are at least 3 good 4-tuples, which gives a contribution of $d^{9}$ using \eqref{eq:4distinct}. And there are $28-12=16$ distinct indices remaining, which gives a contribution of $d^{16}$. In total, in this case, the contribution is $O(d^{25})$.
    \item There are at least  2 good 4-tuples which give a contribution of $d^6$, and 1 good 3-tuples, which give a contribution of $d^{2.5}$. So the total contribution is $O(d^{25.5})$.
    \item There are at least 1 good 4-tuples and 3 good 3-tuples. Similarly, the total contribution is $O(d^{3+7.5+(28-13)})=O(d^{25.5})$.
\end{itemize}
Therefore, from all the cases discussed above, the contribution for case (a) is bounded by $O(d^{25.5})$.

   (b) \textbf{Case 2}: For the contribution of \eqref{eq:quadratic14} with exactly 27 distinct indices in the sum. By counting the multiplicity,  we must have one index appearing 4 times (since the third moment of $\bx_i$ is zero), and the rest of the 26 indices appear twice. In this case, to have a non-zero expectation, there are no good 2-tuples in \eqref{eq:quadratic14}. Otherwise, there will be at least two indices appearing 4 times.

   Without loss of generality, we may assume the first tuple $(i_1,i_2,i_3,i_4)$ contains an index with multiplicity 4. There are at most 4 tuples containing this index, and we bound their contribution with \eqref{eq:trivial}. For the remaining 10 tuples, we apply the same argument as in Case (a). We have the following cases:
   \begin{itemize}
       \item 2 good 4-tuples. The total contribution is $O(d^{6+(27-8)})=O(d^{25})$.
       \item 1 good 4-tuple and 2 good 3-tuples, the total contribution is $O(d^{3+5+(27-10)})=O(d^{25})$.
       \item 4 good 3-tuples. The total contribution is $O(d^{10+(27-12)})=O(d^{25})$.
   \end{itemize}
   Therefore, all contribution for case (b) is $O(d^{25})$.

      (c) \textbf{Case 3}: For the contribution of \eqref{eq:quadratic14} with exactly 26 distinct indices in the sum. By counting the multiplicity, under the assumption that the 3rd and 5th moments of $\bx_i$ is zero, there are two cases: 
       \begin{itemize}
       \item Case (c.1): one index appears 6 times, and the rest of the indices appear twice. To have a nonzero expectation, there are no good 2-tuples.
       By a similar argument,  assuming the index with multiplicity 6 is among the first tuple $(i_1,i_2,i_3,i_4)$ and is repeated in the first 6 tuples, we can bound their contribution using \eqref{eq:trivial} and consider the remaining 8 tuples. For the remaining 8 tuples, we apply the same argument as in Case (a) in the following cases:
       \begin{itemize}
           \item 2 good 4 tuples: the contribution is $O(d^{6+26-8})=O(d^{24})$.
           \item 1 good 4-tuple and 1 good 3-tuple: the contribution is $O(d^{5.5+26-7})=O(d^{24.5})$.
           \item 3 good 3 tuples: the contribution is $O(d^{7.5+26-9})=O(d^{24.5})$.
       \end{itemize}
       
       The total contribution in this case is $O(d^{24.5})$.

       \item Case (c.2): 2 indices appear 4 times. And the other 24 indices appear twice. In this case, we have at most one good 2-tuple. 
       
       Case (c.2.1): If there exists one good 2-tuple, then the 2 indices appearing 4 times must be in the same tuple to make a nonzero expectation. Without loss of generality, we assume $(i_1,i_2,i_3,i_4)$ is a good 2-tuple, and it shares common indices with the next 4 tuples. We may bound the contribution from the first 5 tuples using \eqref{eq:2distinct} and \eqref{eq:trivial}, which gives a contribution of $O(d^{1.5})$. There are 9 tuples left, and we have the following cases:
       \begin{itemize}
           \item 2 good 4-tuples, the total contribution is $O(d^{1.5+6+24-10})=O(d^{21.5})$.
           \item 1 good 4-tuples and 2 good 3-tuples, the total contribution is $O(d^{21.5})$
           \item 3 good 3-tuples, the total contribution is $O(d^{1.5+7.5+(24-11)})=O(d^{22})$.
       \end{itemize}
        Case (c.2.2): Suppose there is no good 2-tuple.
       Without loss of generality, we can assume $(i_1,i_2,i_3,i_4)$ contains one index with multiplicity 4, with shared indices in the first 4 tuples. We can bound the contribution with \eqref{eq:trivial}. We can repeat this argument with the next 4 tuples: assume $(i_{17},i_{18},i_{19},i_{20})$ contains one index with multiplicity 4 with shared indices in the next 3 tuples. Now we consider the remaining 6 tuples. There are several cases:
      We  could have
       \begin{itemize}
           \item 2 good 4-tuples, the total contribution is $O(d^{6+24-8})=O(d^{22})$.
           \item 1 good 4-tuple and 1 good 3-tuple, the total contribution is $O(d^{22.5})$.
           \item 2 good 3-tuples with a total contribution $O(d^{5+24-6})=O(d^{23})$.
       \end{itemize}
      
   \end{itemize} 
   Combining cases (a), (b), and (c), \eqref{eq:BZ_tensor6} holds. By Markov's inequality and a union bound over $[n]$,  \eqref{eq:unionBZ} follows.
\end{proof}

\subsubsection{Deterministic equivalence of functions of the kernel}

Next,  we prove the following limits for the sample covariance matrix $\overline{\bX}^{(2)\top}\overline{\bX}^{(2)}$, which will be utilized in the analysis of generalization error in Section~\ref{sec:proof_test_random}.

\begin{lemma}\label{eq:deterministic_equ}
    Under the assumptions of Theorem \ref{thm:globallaw}, as $n\to\infty$, we have in probability, 
    \begin{small}
    \begin{align}
a_2\Tr\big((a_2\overline{\bX}^{(2)\top}\overline{\bX}^{(2)}+(a+\lambda)\bI)^{-1}\bSigma^{(2)}\big)\to ~& \frac{f''(0)\lambda_*}{4\alpha(a_*+\lambda)}-1,\\
a_2(a+\lambda)\Tr\big((a_2\overline{\bX}^{(2)\top}\overline{\bX}^{(2)}+(a+\lambda)\bI)^{-2}\bSigma^{(2)}\big)\to ~& \frac{f''(0)\lambda_*}{4\alpha(a_*+\lambda)}-\frac{1}{1-\alpha\int_{\R}\frac{x^2}{(x+\lambda_*)^2}d\mu_{\bSigma^{(2)}}(x)},\\
        \frac{2}{d^2}\Tr\big((a +\lambda)\bI+a_2\overline\bX^{(2)\top}\overline\bX^{(2)}\big)^{-2}\bSigma^{(2)}\big)  \to ~& \frac{\mathcal{B}(\lambda_*)}{(a_*+\lambda)^2},
    \end{align}
        \end{small}
where $\lambda_*>0$ is defined by  equation~\eqref{eq:self_consist} and $\mathcal{B}(\lambda_*)$ is defined by \eqref{eq:limit_bias}.
\end{lemma}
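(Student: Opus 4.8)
The plan is to convert all three limits into statements about the Stieltjes transform of the limiting spectral distribution $\mu_{\bSigma^{(2)}}$ and then use the fixed-point equation \eqref{eq:self_consist} to rewrite the resulting expressions. First I would recall that by Theorem~\ref{thm:globallaw} (combined with Lemma~\ref{lem:lowrank} to discard the low-rank terms $\1\1^\top$ and $\bX\bX^\top$, and the identity \eqref{eq:sample_covariance_equivalence}), the empirical spectral distribution of $\frac{1}{n}\overline{\bX}^{(2)\top}\overline{\bX}^{(2)}$ converges in probability to $\nu_\alpha\boxtimes\mu_{\bSigma^{(2)}}$ when $\alpha\geq 1$ (and with an atom at $0$ when $\alpha<1$). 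However, the three quantities in the lemma are not spectral functionals of $\overline{\bX}^{(2)\top}\overline{\bX}^{(2)}$ alone — they involve the weighting matrix $\bSigma^{(2)}$. So the real tool is the \emph{anisotropic/companion} form of the Marchenko–Pastur deterministic equivalence: under the variance bound \eqref{eq:BZ_tensor} (promoted by Lemma~\ref{lem:quad_high_power} to enough moments for almost-sure type control), one has the deterministic equivalent
\begin{equation}
    \big(a_2\overline{\bX}^{(2)\top}\overline{\bX}^{(2)}+(a+\lambda)\bI\big)^{-1} \;\asymp\; \big(a_2\,c(\lambda)\,\bSigma^{(2)}+(a+\lambda)\bI\big)^{-1},
\end{equation}
where the scalar $c(\lambda)$ is the companion/self-consistent parameter solving a fixed-point equation tied to \eqref{eq:companion}. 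Concretely, I would identify $\lambda_*$ of \eqref{eq:self_consist} as $(a+\lambda)/(a_2 c(\lambda))$ up to the normalization by $\frac{f''(0)}{4\alpha}$ coming from the scaling in Theorem~\ref{thm:globallaw}, so that $a_2\overline{\bX}^{(2)\top}\overline{\bX}^{(2)}$ has limiting spectrum $\frac{f''(0)}{4\alpha}(\nu_\alpha\boxtimes\mu_{\bSigma^{(2)}})$ and the resolvent at $-(a+\lambda)$ behaves like the resolvent of $\frac{f''(0)}{4\alpha}\cdot(\text{const})\cdot\bSigma^{(2)}$ shifted by $\lambda_*$.

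Given that, the three limits follow by computing traces against $\bSigma^{(2)}$. For the first, $a_2\Tr\big((a_2\overline{\bX}^{(2)\top}\overline{\bX}^{(2)}+(a+\lambda)\bI)^{-1}\bSigma^{(2)}\big) \to \int \frac{x}{x+\lambda_*}\,d\mu_{\bSigma^{(2)}}(x)$ after the change of variables absorbing the $\frac{f''(0)}{4\alpha}$ factor; then I substitute the right-hand side of the self-consistency equation \eqref{eq:self_consist} to replace $\int\frac{x}{x+\lambda_*}d\mu_{\bSigma^{(2)}}(x)$ by $\frac{1}{\alpha}-\frac{4(a_*+\lambda)}{f''(0)\lambda_*}$, and noting $a+\lambda\to a_*+\lambda$ this rearranges to $\frac{f''(0)\lambda_*}{4\alpha(a_*+\lambda)}-1$. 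For the second limit, the extra factor $(a+\lambda)$ together with the squared resolvent produces $\int\frac{\lambda_* x}{(x+\lambda_*)^2}d\mu_{\bSigma^{(2)}}(x)$, but because the deterministic equivalent $\lambda_*$ itself depends on $\lambda$ one must differentiate the fixed-point relation (or equivalently use the standard identity relating $\Tr R^2 \bSigma$ to the derivative of $\Tr R\bSigma$); this is where the denominator $1-\alpha\int\frac{x^2}{(x+\lambda_*)^2}d\mu_{\bSigma^{(2)}}(x)$ enters, and after algebra one gets $\frac{f''(0)\lambda_*}{4\alpha(a_*+\lambda)}-\frac{1}{1-\alpha\int x^2/(x+\lambda_*)^2 d\mu_{\bSigma^{(2)}}}$. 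For the third, $\frac{2}{d^2}\Tr\big((a+\lambda)\bI+a_2\overline{\bX}^{(2)\top}\overline{\bX}^{(2)}\big)^{-2}\bSigma^{(2)}\big)$ — note $\frac{2}{d^2}\sim\frac{1}{\alpha n}$ so this is a normalized trace — converts to $\frac{1}{\alpha}\cdot\frac{1}{a_2(a_*+\lambda)^2}\cdot(\text{const})\int\frac{x}{(x+\lambda_*)^2}d\mu_{\bSigma^{(2)}}(x)$ divided by the same stability denominator, which after tracking constants is exactly $\mathcal{B}(\lambda_*)$ as defined in \eqref{eq:limit_bias}.

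The main obstacle is establishing the anisotropic deterministic equivalence with the weight $\bSigma^{(2)}$ rigorously in our setting: Lemma~\ref{lem:BaiZhou} only gives the \emph{scalar} limiting spectral distribution, not control of bilinear/weighted traces like $\Tr(R\bSigma^{(2)})$. I would handle this by a resolvent/leave-one-out argument: write $R_\lambda = (a_2\overline{\bX}^{(2)\top}\overline{\bX}^{(2)}+(a+\lambda)\bI)^{-1}$, apply the Sherman–Morrison formula to remove one row $\overline{\bx}_i^{(2)}$ at a time, use the higher-moment concentration of $\overline{\bx}_i^{(2)\top}\bA\overline{\bx}_i^{(2)}$ around $\Tr(\bA\bSigma^{(2)})$ from Lemma~\ref{lem:quad_high_power} (this is precisely why the $14$-th moment bound $O(d^{25.5})$ was proved — it gives an error $o(1)$ after dividing by $n\asymp d^2$ and a union bound with probability $1-O(d^{-1/5})$), and thereby derive the self-consistent equation for $\frac{1}{n}\Tr(R_\lambda\bSigma^{(2)})$ and more generally a matrix deterministic equivalent $R_\lambda \asymp (a_2 c(\lambda)\bSigma^{(2)}+(a+\lambda)\bI)^{-1}$. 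The boundedness $\|\bSigma^{(2)}\|\leq C$ from Assumption~\ref{assump:limitsigma} and the smallest-eigenvalue bound $a_*>0$ (Lemma~\ref{lem:asym_sigma_min}) keep all resolvents bounded, so the fluctuation terms genuinely vanish. Once the matrix deterministic equivalent and the identification of $\lambda_*$ via \eqref{eq:self_consist}/\eqref{eq:companion} are in place, the three displayed limits are a matter of bookkeeping with the constant $\frac{f''(0)}{4\alpha}$ and the derivative identity for the squared resolvent.
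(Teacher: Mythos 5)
Your plan is essentially the paper's proof: a leave-one-out Sherman--Morrison argument on the resolvent of $\tfrac{1}{n}\overline\bX^{(2)\top}\overline\bX^{(2)}$, concentration of $\overline\bx_i^{(2)\top}\bA\overline\bx_i^{(2)}$ around $\Tr(\bA\bSigma^{(2)})$ via Lemma~\ref{lem:quad_high_power} with a union bound, the resulting self-consistent equation for $\tfrac{1}{n}\Tr(\bR(z)\bSigma^{(2)})$, and the Stieltjes-transform derivative identity for the squared-resolvent term. That is exactly what the paper does (following Ledoit--P\'ech\'e and Dobriban--Wager), and you have correctly identified why Lemma~\ref{lem:BaiZhou} alone does not suffice and where the $14$th-moment bound in Lemma~\ref{lem:quad_high_power} is used.

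However, your bookkeeping for the first limit contains an error that would not wash out. You claim the weighted trace converges to $\int\frac{x}{x+\lambda_*}\,d\mu_{\bSigma^{(2)}}(x)$, and then ``substitute \eqref{eq:self_consist} and rearrange'' to reach $\frac{f''(0)\lambda_*}{4\alpha(a_*+\lambda)}-1$. These two numbers are not equal. The Ledoit--P\'ech\'e-type deterministic equivalent actually gives
\[
\frac{1}{n}\Tr\bigl(\bR(z)\bSigma^{(2)}\bigr)\to \frac{1}{z\,\tilde m(-z)}-1=\frac{\lambda_*}{z}-1,
\]
with $z=\tfrac{4\alpha(a_*+\lambda)}{f''(0)}$ and $\lambda_*=1/\tilde m(-z)$. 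Using the companion fixed-point equation \eqref{eq:companion}, one can check
\[
\frac{\lambda_*}{z}-1=\frac{\alpha\int\frac{x}{x+\lambda_*}\,d\mu_{\bSigma^{(2)}}(x)}{1-\alpha\int\frac{x}{x+\lambda_*}\,d\mu_{\bSigma^{(2)}}(x)},
\]
which is \emph{not} $\int\frac{x}{x+\lambda_*}\,d\mu_{\bSigma^{(2)}}(x)$; the Marchenko--Pastur stability factor already appears here. Meanwhile \eqref{eq:self_consist} says $\int\frac{x}{x+\lambda_*}\,d\mu_{\bSigma^{(2)}}(x)=\frac{1}{\alpha}-\frac{4(a_*+\lambda)}{f''(0)\lambda_*}$, so your proposed substitution gives a quantity that cannot be rearranged into the stated target. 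The fix is simply to take the deterministic equivalent of the weighted trace directly from the Sherman--Morrison identity (the paper's \eqref{eq:identity}$\to$\eqref{eq:TrR(z)Sigma}) rather than first rewriting it as a Stieltjes transform of $\mu_{\bSigma^{(2)}}$ alone; once that is done, the second and third limits follow by the derivative identity you outlined.
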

\begin{proof} 
Let us define $z_n:=\frac{2d^2(a+\lambda)}{n f''(0)}>0$ for all $n\in\N$. Notice that 
\begin{align}
    a_2\Tr\big((a_2\overline{\bX}^{(2)\top}\overline{\bX}^{(2)}+(a+\lambda)\bI)^{-1}\bSigma^{(2)}\big) = ~&\frac{1}{n}\Tr\big((\frac{1}{n}\overline{\bX}^{(2)\top}\overline{\bX}^{(2)}+z_n\bI)^{-1}\bSigma^{(2)}\big)\\
    = ~&\frac{1}{n}\Tr\big((\frac{1}{n}\sum_{i=1}^n\overline\bx_i^{(2)}\overline\bx_i^{(2)\top}+z_n\bI)^{-1}\bSigma^{(2)}\big)
\end{align}
where $\overline\bx_i^{(2)}$ is defined by \eqref{eq:centered_x2} for $i\in[n]$. Next, we follow the proof of Lemma 2.2 in \citep{ledoit2011eigenvectors} to complete the proof (see also \citep[Theorem 10]{wang2024nonlinear}).
For any fixed $z>0$, we define $\bR(z):=(\frac{1}{n}\sum_{i=1}^n\overline\bx_i^{(2)}\overline\bx_i^{(2)\top}+z\bI)^{-1}$ and $\bR^{(k)}(z):=(\frac{1}{n}\sum_{i\in[n\setminus \{k\}]}\overline\bx_i^{(2)}\overline\bx_i^{(2)\top}+z\bI)^{-1}$ for any $k\in [n]$. Then, by the Sherman-Morrison-Woodbury formula, we have
\begin{equation}\label{eq:leave-one-resolvent}
    \frac{1}{n}\overline\bx_i^{(2)\top}\bR(z)\overline\bx_i^{(2)}=1-
    \frac{1}{1+\frac{1}{n}\overline\bx_i^{(2)\top}\bR^{(i)}(z)\overline\bx_i^{(2)}}.
\end{equation}
Notice that
$\bR(z)\Big(\frac{1}{n}\sum_{i=1}^n\overline\bx_i^{(2)}\overline\bx_i^{(2)\top}+z\bI\Big)=\bI$.
Taking trace and applying \eqref{eq:leave-one-resolvent}, we  obtain
\begin{align}\label{eq:identity}
    1+\frac{z}{n}\Tr \bR(z) = \frac{\binom{d+1}{2}}{n}+\frac{1}{n}\sum_{i=1}^n \frac{1}{1+\frac{1}{n}\overline\bx_i^{(2)\top}\bR^{(i)}(z)\overline\bx_i^{(2)}}.
\end{align}
Notice that $\norm{\bR^{(i)}(z)}\le 1/z$ for all $i\in [n]$. Then, applying \eqref{eq:BZ_tensor6} in  Lemma~\ref{lem:quad_high_power} with matrix $\bA=\bR^{(i)}(z)$ for $i\in [n]$ we have, by a union bound over $i\in [n]$,
\begin{equation}\label{eq:quad_uniform}
    \max_{i\in[n]}\left|\frac{1}{n}\overline\bx_i^{(2)\top}\bR^{(i)}(z)\overline{\bx}_i^{(2)}-\frac{1}{n}\Tr(\bR^{(i)}(z)\bSigma^{(2)})\right|=O(n^{-\frac{1}{60}})
\end{equation}
with probability at least $1-O(d^{-1/5})$, for any fixed $z>0$. Additionally, by the Sherman-Morrison-Woodbury formula, we also have
\begin{align}\label{eq:R_i-R}
    \frac{1}{n}\Big|\Tr((\bR^{(i)}(z)-\bR(z))\bSigma^{(2)})\Big|\le \frac{1}{n}\left|\frac{ \frac{1}{n}\overline\bx_i^{(2)\top}\bR^{(i)}(z)\bSigma^{(2)}\bR^{(i)}(z)\overline\bx_i^{(2)}}{1+\frac{1}{n}\overline\bx_i^{(2)\top}\bR^{(i)}(z)\overline\bx_i^{(2)}}\right|\lesssim \frac{1}{n},
\end{align}where we applied the assumption of $\bSigma^{(2)}$, $\|\bR^{(i)}(z)\|\le 1/z$ and positive definiteness of $\bR^{(i)}(z)$. Then, from \eqref{eq:identity}, \eqref{eq:quad_uniform}, and \eqref{eq:R_i-R}, we have with probability at least $1-O(d^{-1/5})$,
\begin{align} 
    1+\frac{z}{n}\Tr \bR(z) = \frac{\binom{d+1}{2}}{n}+ \frac{1}{1+\frac{1}{n}\Tr\bR(z)\bSigma^{(2)}}+o(1),
\end{align}
where we used the fact that $1+\frac{1}{n}\Tr(\bR^{(i)}(z)\bSigma^{(2)})>1$, 
for any $z>0$. Thus, applying Theorem~\ref{thm:globallaw}, we can claim that for any $z>0$, 
\begin{equation}\label{eq:TrR(z)Sigma}
    \frac{1}{n}\Tr\bR(z)\bSigma^{(2)}\to \frac{1}{z\alpha m(-z)+1-\alpha}-1 = \frac{1}{z\widetilde m(-z)}-1,
\end{equation}
in probability as $n\to \infty$, where $m(-z)$ and $\widetilde m(-z)$ are defined in Definition~\ref{def:free_convolution} with $\nu=\mu_{\bSigma^{(2)}}$ in Assumption~\ref{assump:sigma}. Consider $z:=\frac{4\alpha(a_*+\lambda)}{f''(0)}>0$. Then, the fixed point equation~\eqref{eq:self_consist} defines $\lambda_*=\frac{1}{\widetilde m(-z)}>0$. Furthermore, notice that $z_n\to z=\frac{4\alpha(a_*+\lambda)}{f''(0)}$ as $n\to \infty$. Thus, 
    $\frac{1}{n}\big|\Tr\bR(z)\bSigma^{(2)}-\Tr\bR(z_n)\bSigma^{(2)}\big|\lesssim |z-z_n|\to 0$.
This completes the proof of the first part of this lemma. 

For the second part of this lemma, we follow the proof in Lemma 7.4 of
\citep{dobriban2018high}. Notice that \eqref{eq:TrR(z)Sigma} holds for any $z\in \mathbb{C}$ with $\Re(z)>0$ and $\frac{1}{n}|\Tr\bR(z)\bSigma^{(2)}|\lesssim 1$. Based on Lemma 2.14 in \citep{bai2010spectral}, we can obtain that \begin{equation}\label{eq:TrR2(z)Sigma}
    \frac{1}{n}\Tr\bR(z)^2\bSigma^{(2)}\to \frac{\widetilde m(-z)-z\widetilde m'(-z)}{z^2\widetilde m^2(-z)},
\end{equation}
in probability, for any $z\in \mathbb{C}$ with $\Re(z)>0$. From \eqref{eq:companion}, we know that 
\begin{align}\label{eq:tilde_m'}
    \frac{\widetilde m'(-z)}{\widetilde m^2(-z)}= \frac{1}{1-\alpha \int_{\R}\frac{x^2}{(x+\lambda_*)^2}d\mu_{\bSigma^{(2)}}(x)}.
\end{align}
Then, because of 
\[a_2(a+\lambda)\Tr\big((a_2\overline{\bX}^{(2)\top}\overline{\bX}^{(2)}+(a+\lambda)\bI)^{-2}\bSigma^{(2)}\big)= z_n\cdot \frac{1}{n}\Tr\bR(z_n)^2\bSigma^{(2)}, \] 
we can similarly derive the second part of the results. Lastly, since
\begin{align}
     &\frac{2}{d^2}\Tr\big(\big((a +\lambda)\bI+a_2\overline\bX^{(2)\top}\overline\bX^{(2)}\big)^{-2}\bSigma^{(2)}\big)\\ = ~&
       \frac{4 }{f''(0)(a_*+\lambda)} \left(\lambda_*/z-\frac{1}{1-\alpha \int_{\R}\frac{x^2}{(x+\lambda_*)^2}d\mu_{\bSigma^{(2)}}(x)}\right)\\
       =~&\frac{4 }{f''(0)(a_*+\lambda)} \frac{ \alpha\lambda_*^2\int \frac{  x}{(x+\lambda_*)^2}d\mu_{\bSigma^{(2)}}(x) }{ z(1-  \alpha\int \frac{x^2 }{(x+\lambda_*)^2}d\mu_{\bSigma^{(2)}}(x))}\\  = ~&\frac{\lambda_*^2 }{ (a_*+\lambda)^2} \frac{  \int \frac{  x}{(x+\lambda_*)^2}d\mu_{\bSigma^{(2)}}(x) }{  (1-  \alpha\int \frac{x^2 }{(x+\lambda_*)^2}d\mu_{\bSigma^{(2)}}(x))},
\end{align}we can apply \eqref{eq:TrR2(z)Sigma} and \eqref{eq:tilde_m'}  to conclude the final result of this lemma.
Here we also use the fixed point equation \eqref{eq:self_consist} of $\lambda_*$:
\begin{equation} 
   1- \frac{z }{\lambda_*} = \alpha\int \frac{x}{x+\lambda_*}d\mu_{\bSigma^{(2)}}(x) =\alpha\int \frac{x^2+\lambda_* x}{(x+\lambda_*)^2}d\mu_{\bSigma^{(2)}}(x).
\end{equation}
\end{proof}


\subsubsection{Spectral norm concentrations}
Next, we provide spectral norm bounds on $\bX\bX^\top$ and $(\bX\bX\tran)^{\odot 2}$ below.  
\begin{lemma}\label{lem:bound_XX_2}
Under Assumptions~\ref{assump:ratio}, \ref{assump:data}, and \ref{assump:sigma}, with a probability of at least $1-O(d^{-\frac{1}{48}})$,  we have 
 \begin{align}
     \|\bX\bSigma\bX\tran \| \lesssim  \| \bX\bX^\top\|&\lesssim d^{2+\frac{1}{24}}, \label{eq:XXconcentration}\\
  \|(\bX\bSigma\bX\tran)^{\odot 2}\| \lesssim \|(\bX\bX\tran)^{\odot 2}\| &\lesssim  d^{3},\label{eq:XX2concentration}\\
  \|\bX^{(2)}-\E\bX^{(2)}\|&\lesssim  d^{1+\frac{1}{12}}.\label{eq:X2centered_concentration}
   \end{align}
\end{lemma}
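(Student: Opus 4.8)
\textbf{Proof proposal for Lemma~\ref{lem:bound_XX_2}.}

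The plan is to bound each of the three spectral norms by reducing to sample covariance matrices with independent columns (or rows) and applying standard concentration for the extreme eigenvalues, with the bounded-moment assumptions from Assumption~\ref{assump:data} supplying the needed tail control. The chain of inequalities $\|\bX\bSigma\bX^\top\|\lesssim\|\bX\bX^\top\|$ and $\|(\bX\bSigma\bX^\top)^{\odot 2}\|\lesssim\|(\bX\bX^\top)^{\odot 2}\|$ follows immediately from $\bSigma\preccurlyeq C_3\bI_d$ together with the fact that the Hadamard product of PSD matrices is PSD (Schur product theorem), so that $(\bX\bSigma\bX^\top)^{\odot 2}\preccurlyeq C_3^2(\bX\bX^\top)^{\odot 2}$; hence it suffices to bound the three "unweighted'' quantities $\|\bX\bX^\top\|$, $\|(\bX\bX^\top)^{\odot 2}\|$, and $\|\bX^{(2)}-\E\bX^{(2)}\|$.

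For \eqref{eq:XXconcentration}: write $\bX\bX^\top$ and note its nonzero spectrum agrees with that of $\bX^\top\bX=\sum_{i=1}^n\bx_i\bx_i^\top$, a sum of independent rank-one terms with $\E[\bx_i\bx_i^\top]=\bSigma$, $\|\bSigma\|\le C_3$. I would use $\|\bX^\top\bX\|\le \|\sum_i(\bx_i\bx_i^\top-\bSigma)\|+n\|\bSigma\|$, bound the centered part by a matrix-moment / trace argument — i.e. control $\E\operatorname{Tr}\big((\sum_i(\bx_i\bx_i^\top-\bSigma))^{2m}\big)$ for $m$ a fixed integer (say $m$ large enough that the Markov bound gives failure probability $O(d^{-1/48})$), using the finite $90$-th moment bound on the $\bz_i(k)$ and $\|\bSigma\|\le C_3$ — and conclude $\|\sum_i(\bx_i\bx_i^\top-\bSigma)\|\lesssim \sqrt{nd}+$ (lower order) $\lesssim d^{3/2}$, which is dominated by $n\|\bSigma\|\lesssim d^2$. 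The slack $d^{2+1/24}$ comfortably absorbs all lower-order fluctuations. A cleaner alternative is to invoke a Bai--Yin / Rudelson--Vershynin style bound for matrices with independent heavy-tailed-but-high-moment rows; either route works since we only need a polynomial-in-$d$ bound, not the sharp constant.

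For \eqref{eq:XX2concentration} and \eqref{eq:X2centered_concentration}: use the tensor representation \eqref{eq:sample_covariance_equivalence}, $(\bX\bX^\top)^{\odot 2}=\bX^{(2)}{\bX^{(2)}}^\top$, so $\|(\bX\bX^\top)^{\odot 2}\|=\|\bX^{(2)}{\bX^{(2)}}^\top\|\le 2\|\overline\bX^{(2)}\,\overline\bX^{(2)\top}\|+2\|\E\bX^{(2)}\|^2$ by the decomposition \eqref{eq:X2decomposition}, where $\overline\bX^{(2)}=\bX^{(2)}-\E\bX^{(2)}$ has independent centered rows $\overline\bx_i^{(2)}$ with covariance $\bSigma^{(2)}$, $\|\bSigma^{(2)}\|\lesssim 1$. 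Since $\E\bX^{(2)}$ has rank $\le d$ and each entry is $O(1)$, $\|\E\bX^{(2)}\|^2\lesssim d\cdot\|\E\bx_1^{(2)}\|^2\lesssim d\cdot d=d^2\lesssim d^3$; it remains to bound $\|\overline\bX^{(2)}\,\overline\bX^{(2)\top}\|=\|\sum_{i=1}^n\overline\bx_i^{(2)}\overline\bx_i^{(2)\top}\|$. Here I would again split off the mean $n\bSigma^{(2)}$ (contributing $\lesssim n\lesssim d^2$) and control the centered sum $\sum_i(\overline\bx_i^{(2)}\overline\bx_i^{(2)\top}-\bSigma^{(2)})$ by a trace-moment bound, now crucially invoking the variance/higher-moment estimates for quadratic forms of $\overline\bx^{(2)}$ — \eqref{eq:BZ_tensor} and especially \eqref{eq:BZ_tensor6} of Lemma~\ref{lem:quad_high_power} — to bound $\E\operatorname{Tr}\big((\sum_i(\overline\bx_i^{(2)}\overline\bx_i^{(2)\top}-\bSigma^{(2)}))^{2m}\big)$; this yields $\|\sum_i(\cdots)\|\lesssim \sqrt{n\cdot \binom{d+1}{2}}\cdot\text{polylog}\lesssim d^{2}$ up to the slack, so $\|(\bX\bX^\top)^{\odot 2}\|\lesssim d^2\ll d^3$, and the stated $d^3$ bound is generous. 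Then \eqref{eq:X2centered_concentration} follows by taking square roots: $\|\bX^{(2)}-\E\bX^{(2)}\|=\|\overline\bX^{(2)}\|=\|\overline\bX^{(2)}\,\overline\bX^{(2)\top}\|^{1/2}\lesssim d$, again within the slack $d^{1+1/12}$.

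The main obstacle is obtaining the concentration of the centered tensor sample-covariance matrix $\sum_i(\overline\bx_i^{(2)}\overline\bx_i^{(2)\top}-\bSigma^{(2)})$ with the right polynomial rate and the stated failure probability $O(d^{-1/48})$: the rows $\overline\bx_i^{(2)}$ live in dimension $\binom{d+1}{2}\asymp d^2$, are not sub-Gaussian, and have dependent coordinates, so off-the-shelf matrix concentration does not directly apply. The resolution is precisely the high-moment quadratic-form estimate \eqref{eq:BZ_tensor6} already established in Lemma~\ref{lem:quad_high_power}, fed into a moment/Markov argument (or an $\varepsilon$-net argument over the unit sphere in $\R^n$, bounding $\sup_{\|\bu\|=1}\bu^\top\overline\bX^{(2)}\,\overline\bX^{(2)\top}\bu-n\,\mathrm{tr}\,\bSigma^{(2)}$ via a union bound over the net and the quadratic-form concentration for each fixed direction). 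Everything else is routine given Assumptions~\ref{assump:ratio}, \ref{assump:data}, \ref{assump:sigma} and the $\|\bSigma\|\le C_3$ bound; the generous exponents $2+\tfrac1{24}$, $3$, $1+\tfrac1{12}$ in the statement mean no sharpness is required anywhere.
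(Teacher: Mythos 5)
Your reduction $\|\bX\bSigma\bX^\top\|\lesssim\|\bX\bX^\top\|$ and $\|(\bX\bSigma\bX^\top)^{\odot 2}\|\lesssim\|(\bX\bX^\top)^{\odot 2}\|$ via $\bSigma\preccurlyeq C_3\bI$ and the Schur product theorem is correct and cleanly isolates the unweighted quantities. For \eqref{eq:XXconcentration} your trace-moment or Bai--Yin-type route would give a polynomial bound with a polynomial failure probability, which is all that is needed; the paper takes the shorter path of Lata{\l}a's theorem applied to $\bZ$ (with $\bX=\bZ\bSigma^{1/2}$), yielding $\E\|\bZ\|\lesssim d$ and then Markov. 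Either is fine for the generous exponent.

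The genuine gap is in how you propose to control the tensor sample covariance in \eqref{eq:XX2concentration}--\eqref{eq:X2centered_concentration}. You suggest bounding $\bigl\|\sum_{i=1}^n\bigl(\overline\bx_i^{(2)}\overline\bx_i^{(2)\top}-\bSigma^{(2)}\bigr)\bigr\|$ either by a trace-moment argument ``crucially invoking'' \eqref{eq:BZ_tensor6} of Lemma~\ref{lem:quad_high_power}, or by an $\varepsilon$-net. Neither executes. The $\varepsilon$-net route requires a union bound over $\exp(\Theta(d^2))$ directions in $\R^{\binom{d+1}{2}}$, and the only per-direction tail control that \eqref{eq:BZ_tensor6} gives (a 14th-moment bound, hence a polynomial-in-$d$ tail) cannot beat that union bound. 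The trace-moment route is also not a direct application of \eqref{eq:BZ_tensor6}: expanding $\E\Tr\bigl(\bigl(\sum_i(\overline\bx_i^{(2)}\overline\bx_i^{(2)\top}-\bSigma^{(2)})\bigr)^{2m}\bigr)$ produces mixed moments of the $\overline\bx_i^{(2)}$ over closed index walks, not single-vector quadratic forms against a fixed deterministic $\bA$ with $\|\bA\|\le1$; conditioning on the other vectors does not produce a bounded-operator-norm matrix to feed into the lemma. The paper avoids both issues by truncating $\bx_i^{(2)}$ to $\bz_i^{(2)}=\bx_i^{(2)}\1\{\|\bx_i^{(2)}\|\le Bd\}$ with $B=n^{1/44}$, showing the truncation event has probability $O(n^{-1/45})$ by a moment bound, and then applying matrix Bernstein to the truncated (hence almost-surely bounded) rank-one summands; this is the right tool for a sum of independent rank-one matrices with only polynomial moments and gives both \eqref{eq:XX2concentration} and \eqref{eq:X2centered_concentration} directly. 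You should replace your moment/net step with this truncation-plus-Bernstein argument.

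One minor slip worth fixing: $\E\bX^{(2)}=\1_n(\E\bx_1^{(2)})^\top$ is rank one (not rank $\le d$), so $\|\E\bX^{(2)}\|^2=n\|\E\bx_1^{(2)}\|^2\asymp nd\asymp d^3$, not $d^2$ as you wrote; your final $\lesssim d^3$ still holds, but the intermediate estimate is off.
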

\begin{proof}
We first show \eqref{eq:XXconcentration} with Latala's Theorem \citep{latala2005some}. We can write $\bX^\top=\bSigma^{1/2} \bZ^\top$, where $\bZ^\top=[\bz_1,\dots,\bz_n]$ is a $d\times n$ random matrix with independent entries and each entry of $\bZ$ has zero mean and finite fourth moments. By \cite[Theorem 2]{latala2005some}, we have 
 $\E\|\bZ\|\lesssim \sqrt{n}+\sqrt{d}+(nd)^{1/4} \lesssim d$.
Then by Markov's inequality, with probability at least $1-O(d^{-\frac{1}{48}})$, 
$ \| \bX\bX^\top\| \lesssim \|\bZ\|^2\lesssim d^{2+\frac{1}{24}}$. 

Next, we show \eqref{eq:XX2concentration}. Since $(\bX\bX\tran)^{\odot 2}={\bX^{(2)}} {\bX^{(2)}}^\top$, it suffices to consider ${\bX^{(2)}}^\top \bX^{(2)}=\sum_{i=1}^n \bx_i^{(2)}{\bx_i^{(2)}}^\top$, which is a sum of $n$ i.i.d. rank-1 matrices. We will use matrix Bernstein's inequality \citep[Theorem 5.4.1]{vershynin2018high} to prove \eqref{eq:XX2concentration}. Consider truncated vectors ${\bz_i^{(2)}}:= \bx_i^{(2)}\1\{ \|\bx_i^{(2)}\|\leq B d\}$ for a parameter $B=n^{\frac{1}{44}}$. Let $\bZ^{(2)}$ be the truncated version of $\bX^{(2)}$. We have that
    \begin{align}
        \P \left( \bZ^{(2)}\not=\bX^{(2)}\right)&\leq \P \left(  \max_{i\in [n]}   \|\bx_i^{(2)}\|> B d \right)\leq \frac{n\E\|\bx^{(2)}\|^{45}}{(Bd)^{45}}\lesssim \frac{n}{B^{45}}\lesssim n^{-\frac{1}{45}}. \label{eq:truncate_prob}
    \end{align}
On the other hand, almost surely,
$\left\|{\bz_i^{(2)}}{\bz_i^{(2)}}^\top-\E{\bz_i^{(2)}}{\bz_i^{(2)}}^\top\right\|\lesssim (Bd)^2$,
and 
\begin{align}
    \E \left({\bz_i^{(2)}}{\bz_i^{(2)}}^\top-\E{\bz_i^{(2)}}{\bz_i^{(2)}}^\top \right)^2 \preccurlyeq \E\left[\|\bz_i^{(2)}\|^2 {\bz_i^{(2)}}{\bz_i^{(2)}}^\top\right]\preccurlyeq(Bd)^2\bSigma^{(2)}\leq C(Bd)^2\bI
\end{align}
for some  constant $C>0$ due to Assumption~\ref{assump:sigma}. By matrix Bernstein's inequality \citep[Theorem 5.4.1]{vershynin2018high}, we have with probability at least $1-d^2 \exp(-\frac{5}{66}d)$, 
    \begin{align}
        \left\| {\bZ^{(2)}}^\top \bZ^{(2)}-\E{\bZ^{(2)}}^\top \bZ^{(2)}\right\| \lesssim d^{2+\frac{1}{6}}.
    \end{align}
We also have
    $\E{\bZ^{(2)}}^\top \bZ^{(2)}\lesssim n \E \bx^{(2)}{\bx^{(2)}}^\top\leq Cd^3\bI$,
    where we use the definition of $\bx^{(2)}$ from \eqref{eq:def_reduced}.
    Together with \eqref{eq:truncate_prob}, we have with probability at least $1-O(d^{-\frac{2}{45}})$,
     $\|(\bX\bX\tran)^{\odot 2}\| \lesssim  d^{3}$.

     For \eqref{eq:X2centered_concentration}, we have 
\begin{align}\label{eq:X2EX2}
    \|\bX^{(2)}-\E \bX^{(2)}\|\leq \|\bX^{(2)}-\bZ^{(2)}\|+\|\bZ^{(2)}-\E\bZ^{(2)}\|+\|\E\bX^{(2)}-\E\bZ^{(2)}\|.
\end{align}
From \eqref{eq:truncate_prob}, with probability $1-O(n^{-1/45})$, the first term in \eqref{eq:X2EX2} is zero. For the second term in \eqref{eq:X2EX2}, we consider $\|\bZ^{(2)}-\E\bZ^{(2)}\|^2=\|(\bZ^{(2)}-\E\bZ^{(2)})(\bZ^{(2)}-\E\bZ^{(2)})^\top \|$,   
where 
\begin{align}
  (\bZ^{(2)}-\E\bZ^{(2)})(\bZ^{(2)}-\E\bZ^{(2)})^\top= \sum_{i=1}^n (\bz_i^{(2)}-\E  \bz_i^{(2)})(\bz_i^{(2)}-\E  \bz_i^{(2)})^\top,
\end{align}
and apply the matrix Bernstein's inequality.
We have almost surely, $\| (\bz_i^{(2)}-\E  \bz_i^{(2)})(\bz_i^{(2)}-\E  \bz_i^{(2)})^\top\| \leq 4(Bd)^2$.
And for some constant $C>0$,
\begin{align}
    \E \left((\bz_i^{(2)}-\E  \bz_i^{(2)})(\bz_i^{(2)}-\E  \bz_i^{(2)})^\top\right)^2&=\E \norm{\bz_i^{(2)}-\E  \bz_i^{(2)}}^2(\bz_i^{(2)}-\E  \bz_i^{(2)})(\bz_i^{(2)}-\E  \bz_i^{(2)})^\top \\
    &\leq 4(Bd)^2 \E (\bz_i^{(2)}-\E  \bz_i^{(2)})(\bz_i^{(2)}-\E  \bz_i^{(2)})^\top\\
    &\leq 4(Bd)^2 \bSigma^{(2)}\lesssim C(Bd)^2\bI.
\end{align}
With matrix Bernstein's inequality \citep[Theorem 5.4.1]{vershynin2018high}, we have with probability at least $1-d^2 \exp(-\frac{5}{66}d)$,
$\|(\bZ^{(2)}-\E\bZ^{(2)})(\bZ^{(2)}-\E\bZ^{(2)})^\top\|\lesssim d^{2+\frac{1}{6}}$.
Hence with probability $1-O(d^{-\frac{2}{45}})$, from \eqref{eq:X2EX2},
$
 \|\bX^{(2)}-\E \bX^{(2)}\| \lesssim d^{1+\frac{1}{12}} + \|\E\bX^{(2)}-\E\bZ^{(2)}\|$.  Since each column of $\bX^{(2)}$ has the same distribution,    $\E\bX^{(2)}-\E\bZ^{(2)}$ is of rank 1. We obtain 
  \begin{align}
   \|\E\bX^{(2)}-\E\bZ^{(2)}\|&=\|\E\bX^{(2)}-\E\bZ^{(2)}\|_{\msf F}= \sqrt{n} \mathbb E[\|\bx^{(2)}\|\mathbf{1}\{\|\bx^{(2)}\|\geq Bd \}] \\
   &\leq \sqrt{n} \sqrt{\E[\|\bx^{(2)}\|^2]}\sqrt{\mathbb P(\|\bx^{(2)}\|\geq Bd)}\lesssim \sqrt{nd^2B^{-45}}\lesssim \sqrt{d^2n^{-\frac{1}{44}}}=d^{1-\frac{1}{44}},
  \end{align}
  where in the second inequality we use \eqref{eq:truncate_prob}. Therefore we obtain with probability $1-O(d^{-\frac{2}{45}})$,
$\|\bX^{(2)}-\E \bX^{(2)}\| \lesssim d^{1+\frac{1}{12}}$ 
as desired. This finishes the proof.
\end{proof}

\subsubsection{Kernel function expansion}
Recall $\bx=\bSigma^{1/2}\bz$ and $\w_i=\bSigma^{1/2} \bx_i$ for $i\in[n]$ and $\bz\sim\cN(0,\bI)$. Let $t_{i}=\bx_i^\top\bSigma\bx_i=\|\w_i\|^2$ and $\bu_i=\frac{\w_i}{\|\w_i\|}$. Then 
\begin{align}\label{eq:xx_i}
  \langle \bx_i,\bx\rangle=\sqrt{t_i} \langle \bu_i, \bz\rangle,
\end{align}
and for $j=0,\dots, 8$ and $i\in[n]$, define
\begin{align}\label{eq:Tij}
    \T_i^{(j)}:=t_i^{j/2}\sqrt{j!}\cdot h_j\left( \langle \bu_i, \bz\rangle \right),
\end{align}
where $h_j$ is the $j$-th normalized Hermite polynomial defined in Definition~\ref{def:hermitepolynomial}. 
    
\begin{lemma}\label{lem:hermite}
Under Assumption~\ref{assumption: testdata}, we have for any $i,j\in [n]$,
    $\E_{\bx}[\T_i^{(k)}\T_j^{(\ell)}]=0$ if $k\neq\ell$ and $k+\ell\leq 15$, and  for all $k=0,1,\ldots,8$,
    $\E_{\bx}[\T_i^{(k)}\T_j^{(k)}]=k!\langle \bw_i, \bw_j\rangle^{k}$, where $\bw_i:=\bSigma^{1/2}\bx_i$. 
\end{lemma}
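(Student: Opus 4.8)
The plan is to reduce both identities to the single Gaussian identity in Lemma~\ref{lem:NM20D2} using the moment-matching hypothesis of Assumption~\ref{assumption: testdata}. First I would unwind the definitions. Since $\E_\bx$ is taken with the training vectors $\bx_i,\bx_j$ held fixed --- so that $t_i=\|\bw_i\|^2$, $t_j=\|\bw_j\|^2$, and the unit vectors $\bu_i=\bw_i/\|\bw_i\|$, $\bu_j=\bw_j/\|\bw_j\|$ are deterministic --- combining \eqref{eq:Tij} with \eqref{eq:xx_i} gives
\begin{align*}
\E_\bx\big[\T_i^{(k)}\T_j^{(\ell)}\big]=t_i^{k/2}t_j^{\ell/2}\sqrt{k!\,\ell!}\;\E_\bz\big[h_k(\langle\bu_i,\bz\rangle)\,h_\ell(\langle\bu_j,\bz\rangle)\big].
\end{align*}
The degenerate case $t_i=0$ or $t_j=0$ forces $\T_i^{(m)}=0$ for $m\ge1$, making both claims trivial, so I would assume $t_i,t_j>0$.

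The second step is to replace $\bz$ by a genuine standard Gaussian vector $\bg\sim\cN(0,\bI_d)$. The integrand $h_k(\langle\bu_i,\bz\rangle)\,h_\ell(\langle\bu_j,\bz\rangle)$ is a polynomial in the coordinates of $\bz$ of total degree $k+\ell$, which is at most $15$ in the first assertion and equals $2k\le16$ in the second, hence at most $16$, which lies within the first $18$ moments matched in Assumption~\ref{assumption: testdata}. Since the coordinates of $\bz$ are independent and match those moments of $\cN(0,1)$, every monomial occurring in this polynomial has the same expectation under $\bz$ as under $\bg$, and Lemma~\ref{lem:NM20D2} applied to the unit vectors $\bu_i,\bu_j$ then gives
\begin{align*}
\E_\bz\big[h_k(\langle\bu_i,\bz\rangle)\,h_\ell(\langle\bu_j,\bz\rangle)\big]=\E_\bg\big[h_k(\langle\bu_i,\bg\rangle)\,h_\ell(\langle\bu_j,\bg\rangle)\big]=\delta_{k\ell}\,\langle\bu_i,\bu_j\rangle^k.
\end{align*}

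Finally I would collect constants. For $k\ne\ell$ with $k+\ell\le15$ the factor $\delta_{k\ell}$ annihilates the expectation, which is the first claim. For $k=\ell\le8$,
\begin{align*}
\E_\bx\big[\T_i^{(k)}\T_j^{(k)}\big]=k!\,\big(t_i^{1/2}t_j^{1/2}\langle\bu_i,\bu_j\rangle\big)^k=k!\,\langle\sqrt{t_i}\,\bu_i,\;\sqrt{t_j}\,\bu_j\rangle^k=k!\,\langle\bw_i,\bw_j\rangle^k,
\end{align*}
using $\bw_i=\sqrt{t_i}\,\bu_i$ and $\bw_j=\sqrt{t_j}\,\bu_j$, which is the second claim. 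I do not expect a genuine obstacle here: the argument is a direct computation, and the only points demanding care are (i) verifying that every polynomial degree that appears stays within the window covered by Assumption~\ref{assumption: testdata}, which is precisely why the hypotheses $k+\ell\le15$ and $k\le8$ appear, and (ii) tracking the $t^{k/2}\sqrt{k!}$ normalization so that the constant $k!$ emerges correctly in front of $\langle\bw_i,\bw_j\rangle^k$.
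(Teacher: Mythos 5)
Your proof is correct and follows essentially the same route as the paper: unwind the $\T_i^{(j)}$ normalization, invoke moment matching from Assumption~\ref{assumption: testdata} to pass from $\bz$ to a genuine Gaussian, and then apply the orthogonality identity of Lemma~\ref{lem:NM20D2}. The only additions you make beyond the paper's argument are explicitly handling the degenerate case $t_i = 0$ and spelling out the degree count that justifies the moment-matching step, both of which are sound.
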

\begin{proof}
Since the calculation of $\E_{\bx}[\T_i^{(k)}\T_j^{(\ell)}]$ involves only  the first 16th moments of $\bz$ for $k+\ell\leq 15$,  by the orthogonality property of $h_j$ in Lemma~\ref{lem:NM20D2} and  assumption~\ref{assumption: testdata},
    \begin{align}
        \E_{\bx}[\T_i^{(k)}\T_j^{(\ell)}]=~&t_i^{k/2}t_j^{\ell/2}\sqrt{k!\ell !}\cdot\E_{\bz}[h_k\left( \langle \bu_i, \bz\rangle \right)h_{\ell}(\langle \bu_j, \bz\rangle )]\\
        =~&\delta_{k,\ell}\cdot k! t_i^{k/2}t_j^{k/2}\langle \bu_i, \bu_j\rangle^{k}=\delta_{k,\ell} \cdot k!\langle \bw_i, \bw_j\rangle^{k}.
    \end{align}
   Hence, $\E_{\bx}[\T_i^{(k)}\T_j^{(\ell)}]=0$ if $k\not=\ell$. This finishes the proof.
\end{proof}

For any $i\in[n]$, let us apply the Taylor expansion  of $f$ as in \eqref{eq:taylor_off}  to get
\begin{align} 
K(\bx_i,\bx)=\sum_{k=0}^{8}\frac{f^{(k)}(0)}{k!d^k} \langle \bx_i,\bx \rangle^k+\frac{f^{(9)}(\zeta_{i})}{9!d^9}\langle \bx_i,\bx \rangle^9,
\end{align}
where $\zeta_{i}$ is between $0$ and $\frac{1}{d}\langle \bx_i, \bx \rangle$. 

Recall \eqref{eq:xx_i}, we have 
   $$ \sum_{k=0}^8 \frac{f^{(k)}(0)}{k!d^k} \langle \bx_i,\bx \rangle^k=\sum_{k=0}^8 \frac{f^{(k)}(0)}{k!d^k} t_i^{k/2}\langle \bu_i,\bz \rangle^k,$$
where  $t_{i}:=\bx_i^\top\bSigma\bx_i$ for $i\in[n]$.
With Lemma \ref{lem:hermite} and \eqref{eq:Tij}, we can rewrite $K(\bx_i,\bx)$ as
\begin{align}\label{eq:taylor_expansion}
    K(\bx_i,\bx)=~&\sum_{k=0}^8 b_{k,i}\bT_{i}^{(k)}+\frac{f^{(9)}(\zeta_{i})}{9!d^9}\langle \bx_i,\bx \rangle^9 .
\end{align} 
By orthogonality of the normalized Hermite polynomials, we have  
\begin{align}
    b_{0,i}=~&f(0)+t_{i}\cdot \frac{f^{(2)}(0)}{2!d^2}+3t_{i}^2\cdot \frac{f^{(4)}(0)}{4!d^4}+15t_{i}^3\cdot \frac{f^{(6)}(0)}{6!d^6},\label{eq:b0i}\\
    b_{1,i}=~&\frac{f^{(1)}(0)}{ d }+3t_{i}\cdot \frac{f^{(3)}(0)}{3!d^3}+15t_{i}^2\cdot \frac{f^{(5)}(0)}{5!d^5}+105t_{i}^3\cdot \frac{f^{(7)}(0)}{7!d^7},\label{eq:b1i}\\
    b_{2,i}=~&\frac{f^{(2)}(0)}{2!d^2}+6t_{i}\cdot \frac{f^{(4)}(0)}{4!d^4}+45t_{i}^2\frac{f^{(6)}(0)}{6!d^6}, \\
    b_{3,i}=~&\frac{f^{(3)}(0)}{3!d^3}+10t_{i}\cdot \frac{f^{(5)}(0)}{5!d^5}+105t_{i}^2\cdot \frac{f^{(7)}(0)}{k!d^7}.
\end{align}
In general, 
for $0\leq k\leq 8$, $
    b_{k,i}t_i^{k/2}\sqrt{k!}
    =\sum_{s=k}^8 t_i^{s/2}\frac{f^{(s)}(0)}{s!d^s}\E_{g\sim \mathcal N(0,1)}[g^s h_k(g)].$
Therefore,  
\begin{align}\label{eq:bki}
|b_{k,i}| \lesssim \sum_{s=k}^8 d^{-s} t_i^{(s-k)/2}.
\end{align}
Utilizing \eqref{eq:centered6}, we can easily check that 
\begin{equation}\label{eq:t_i_concen}
    |t_i-\Tr\bSigma^2|\lesssim d^{\frac{1}{2}+\frac{1}{30}},
\end{equation}uniformly for all $i\in [n]$ with probability at least $1-d^{-1}$.  Thus, $0\le t_i\lesssim d$. Therefore, from \eqref{eq:bki}, for $k=0,1,\ldots,8$ and all $i\in [n]$, with probability  at least $1-d^{-1}$,
\begin{equation}\label{eq:b_ki_bound}
    |b_{k,i}| \lesssim d^{-k}.
\end{equation}

\begin{lemma}\label{lemm:b_k,i}
Let us denote that 
\begin{align}
    \widetilde b_{0,i}:=~f(0)+t_{i}\cdot \frac{f^{(2)}(0)}{2!d^2}, \quad 
    \widetilde b_{1,i}:=~\frac{f^{(1)}(0)}{ d }+3t_{i}\cdot \frac{f^{(3)}(0)}{3!d^3}\label{eq:tildeb1i}
\end{align}
for any $i\in [n]$. Then, under Assumption~\ref{assump:nonlinear_f}, we have
  \begin{align}
    \max_{i\in [n]}|\widetilde b_{0,i}-b_{0,i}|\lesssim~ d^{-2} ,\quad 
    \max_{i\in [n]}|\widetilde b_{1,i}-b_{1,i}|\lesssim~ d^{-3}, \quad 
     \max_{i\in [n]}| a_{2}-b_{2,i}|\lesssim~ d^{-3.4}
\end{align}
with probability at least $1-d^{-1}$, where $ a_2$ is defined in \eqref{def:a2}.
\end{lemma}
\begin{proof}
The first two bounds are directly from \eqref{eq:t_i_concen}. Recall the definition of $a_2$ in \eqref{def:a2}. Then for the last bound, we have
\begin{align}
b_{2,i} -a_{2} & = \frac{f^{(4)}(0)}{4d^4}(t_i-\Tr(\bSigma^2))+45t_{i}^2\frac{f^{(6)}(0)}{6!d^6}.  
\end{align}
Applying \eqref{eq:t_i_concen}, we can derive that 
 $|b_{2,i} -a_{2}| \lesssim \frac{1}{ d^4}|t_i-\Tr(\bSigma^2)|+\frac{1}{ d^6}|t_{i}^2| \lesssim d^{-3.4}$
uniformly for all $i\in [n]$ with probability at least $1-d^{-1}$. 
\end{proof}

\subsubsection{Approximation of product of kernel functions}\label{sec:quadratic_appro}
Denote  $\bM :=~ \E[K(\bX,\bx) K(\bx,\bX)|\bX], 
    \bv := ~ \E_\bx[f_*(\bx)K(\bX,\bx)]$,
where \[K(\bX,\bx)=[K(\bx_1,\bx),\dots, K(\bx_n,\bx)]^\top\in \R^{n}\] and $\E_\bx[\cdot]$ denotes the expectation only with respect to $\bx$.  Notice that for any $i,j\in[n]$,
    \begin{align*}
        \bM_{ij}=~(\E[K(\bX,\bx) K(\bx,\bX)])_{ij}=\E_\bx[K(\bx_i,\bx)K(\bx,\bx_j)],\quad 
        \bv_{i}=~\E_\bx[K(\bx,\bx_i)f_*(\bx)].
    \end{align*}
We define
\begin{align}
    \bb_{0}=~&(b_{0,1},\dots,b_{0,n})^\top \in \R^n,\quad
    \bb_{1}=( b_{1,1},\dots, b_{1,n})^\top \in \R^n, \label{eq:b_0}\\
    \widetilde\bb_{0}=~&(\widetilde b_{0,1},\dots,\widetilde b_{0,n})^\top \in \R^n,\quad
    \widetilde\bb_{1}=( \widetilde b_{1,1},\dots, \widetilde b_{1,n})^\top \in \R^n,\label{eq:tilde_b_0}
\end{align}
where $b_{0,i},b_{1,i}, \widetilde{b}_{0,i}$, and $\widetilde{b}_{1,i}$ are defined in \eqref{eq:b0i},~\eqref{eq:b1i},~\eqref{eq:tilde_b_0}, and~\eqref{eq:tildeb1i}, respectively. Denote
\begin{align}
    \bM^{(2)}:= \bb_0 \bb_0^\top+ \diag(\bb_1)\bX\bSigma\bX^\top\diag(\bb_1) +2a^2_2\bM^{(2)}_0,\quad 
    \bM^{(2)}_0:= (\bX\bSigma\bX^\top)^{\odot 2}.\label{eq:M(2)}
\end{align}
In the following, we first provide an approximation of $\bM$ in terms of $\bM^{(2)}$.

\begin{lemma}\label{lemma:diff_M_v}
Under the same assumptions as Theorem~\ref{thm:concentration}, we have that
    $
        \|{\bM-\bM^{(2)}}\| \lesssim ~ \frac{1}{d^{9/4}}$,

with probability $1-O(d^{-1/48})$.
\end{lemma}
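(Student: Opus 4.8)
The plan is to expand both factors of $\bM_{ij}=\E_{\bx}[K(\bx_i,\bx)K(\bx,\bx_j)]$ through the Hermite representation \eqref{eq:taylor_expansion}, integrate out $\bx$ using the orthogonality relations of Lemma~\ref{lem:hermite}, and then compare the resulting matrix with $\bM^{(2)}$ block by block. Write $P_i:=\sum_{k=0}^{8}b_{k,i}\T_i^{(k)}$ and $R_i:=\frac{f^{(9)}(\zeta_i)}{9!\,d^9}\langle\bx_i,\bx\rangle^9$, so that $K(\bx_i,\bx)=P_i+R_i$, and let $\bP,\bR$ be the $\R^n$-valued random vectors with $i$-th entries $P_i,R_i$. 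Since the test point $\bx$ matches $18$ Gaussian moments (Assumption~\ref{assumption: testdata}), Lemma~\ref{lem:hermite} gives $\E_\bx[\T_i^{(k)}\T_j^{(\ell)}]=0$ whenever $k\neq\ell$ (all such pairs satisfy $k+\ell\le15$) and $\E_\bx[\T_i^{(k)}\T_j^{(k)}]=k!\,(\bx_i^\top\bSigma\bx_j)^k$ for $0\le k\le8$, so that, with $\bb_k=(b_{k,1},\dots,b_{k,n})^\top$,
\begin{align}
\bM=\sum_{k=0}^{8}k!\,\diag(\bb_k)(\bX\bSigma\bX^\top)^{\odot k}\diag(\bb_k)+\E_\bx[\bP\bR^\top]+\E_\bx[\bR\bP^\top]+\E_\bx[\bR\bR^\top].
\end{align}
The $k=0$ and $k=1$ summands are exactly $\bb_0\bb_0^\top$ and $\diag(\bb_1)\bX\bSigma\bX^\top\diag(\bb_1)$, the first two pieces of $\bM^{(2)}$, so they cancel; three error terms remain to be bounded: the $k=2$ coefficient mismatch, the tail $k\ge3$, and the remainder contributions.

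For the $k=2$ mismatch I would write $\diag(\bb_2)=a_2\bI_n+\diag(\bb_2-a_2\one)$ and expand, so that $2\diag(\bb_2)(\bX\bSigma\bX^\top)^{\odot2}\diag(\bb_2)-2a_2^2(\bX\bSigma\bX^\top)^{\odot2}$ is a sum of terms each carrying a factor $\diag(\bb_2-a_2\one)$; combining $\max_i|b_{2,i}-a_2|\lesssim d^{-3.4}$ (Lemma~\ref{lemm:b_k,i}), $|a_2|\lesssim d^{-2}$, and $\|(\bX\bSigma\bX^\top)^{\odot2}\|\lesssim d^{3}$ (Lemma~\ref{lem:bound_XX_2}) bounds this by $\lesssim d^{-2}\cdot d^{-3.4}\cdot d^{3}=d^{-2.4}$. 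For the tail, the $k$-th summand has norm $\le k!\,\|\bb_k\|_\infty^2\,\|(\bX\bSigma\bX^\top)^{\odot k}\|$; from \eqref{eq:bki} together with $f^{(3)}(0)=0$ (Assumption~\ref{assumption:C8}) one gets $\|\bb_3\|_\infty\lesssim d^{-4}$ and $\|\bb_k\|_\infty\lesssim d^{-k}$ for $4\le k\le8$, while the Schur product inequality $\|A\odot B\|\le\|A\|\max_iB_{ii}$ (valid for positive semidefinite $B$), applied with $A=(\bX\bSigma\bX^\top)^{\odot2}$ and $B=(\bX\bSigma\bX^\top)^{\odot(k-2)}$, gives $\|(\bX\bSigma\bX^\top)^{\odot k}\|\lesssim d^{3}\cdot\max_i t_i^{\,k-2}\lesssim d^{k+1}$ since $t_i:=\bx_i^\top\bSigma\bx_i\lesssim d$ by \eqref{eq:t_i_concen}. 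Hence the $k=3$ term is $\lesssim d^{-4}$ and each $k\ge4$ term is $\lesssim d^{1-k}\le d^{-3}$, so the whole tail is $\lesssim d^{-3}$.

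For the remainder I would use the bilinear Cauchy--Schwarz estimate $\|\E_\bx[\bP\bR^\top]\|\le\|\E_\bx[\bP\bP^\top]\|^{1/2}\|\E_\bx[\bR\bR^\top]\|^{1/2}$ (valid because $u^\top\E_\bx[\bP\bR^\top]v=\E_\bx[(u^\top\bP)(v^\top\bR)]$ and the two Gram matrices $\E_\bx[\bP\bP^\top],\E_\bx[\bR\bR^\top]$ are positive semidefinite), and likewise for $\E_\bx[\bR\bP^\top]$. Here $\|\E_\bx[\bP\bP^\top]\|=\|\sum_{k}k!\diag(\bb_k)(\bX\bSigma\bX^\top)^{\odot k}\diag(\bb_k)\|\lesssim\|\bb_0\|^2\lesssim n\lesssim d^2$, the $k=0$ block being dominant, while $\|\E_\bx[\bR\bR^\top]\|\le\Tr\E_\bx[\bR\bR^\top]=\sum_i\E_\bx[R_i^2]\lesssim d^{-18}\sum_i\E_\bx[\langle\bx_i,\bx\rangle^{18}]$; since $|f^{(9)}|\le C$ and, by Assumption~\ref{assumption: testdata} and Wick's formula, $\E_\bx[\langle\bx_i,\bx\rangle^{18}]=\E_{\bg}[\langle\bw_i,\bg\rangle^{18}]=17!!\,t_i^{9}\lesssim d^{9}$ for a standard Gaussian $\bg$, this is $\lesssim n\,d^{-9}\lesssim d^{-7}$, so $\|\E_\bx[\bP\bR^\top]\|\lesssim(d^2\cdot d^{-7})^{1/2}=d^{-5/2}$. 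Altogether $\|\bM-\bM^{(2)}\|\lesssim d^{-2.4}+d^{-3}+d^{-5/2}\lesssim d^{-9/4}$, and since randomness enters only through \eqref{eq:t_i_concen}, Lemma~\ref{lemm:b_k,i}, and Lemma~\ref{lem:bound_XX_2}, this holds with probability $1-O(d^{-1/48})$. The substance of the argument is the bookkeeping verifying that each of these pieces is genuinely $O(d^{-9/4})$; the tightest is the $k=2$ coefficient mismatch, which is exactly what forces the use of the sharp estimate $\max_i|b_{2,i}-a_2|\lesssim d^{-3.4}$ from Lemma~\ref{lemm:b_k,i} in place of the crude $O(d^{-2})$ bound on $|b_{2,i}|$.
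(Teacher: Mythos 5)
Your proof follows the same high-level strategy as the paper --- expand both kernel factors into Hermite components via \eqref{eq:taylor_expansion}, integrate out $\bx$ using Lemma~\ref{lem:hermite}, and compare the resulting Gram blocks with $\bM^{(2)}$ --- but the detailed estimates differ in two places, and one of them costs you a bit of generality.

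For the tail blocks $\L^{(k)}:=k!\,\diag(\bb_k)(\bX\bSigma\bX^\top)^{\odot k}\diag(\bb_k)$ with $k\ge 3$, you bound the operator norm crudely as $k!\,\|\bb_k\|_\infty^2\,\|(\bX\bSigma\bX^\top)^{\odot k}\|$ and control the Hadamard power with the Schur product inequality, getting $\|(\bX\bSigma\bX^\top)^{\odot k}\|\lesssim d^{k+1}$. For $k=3$ this yields only $d^{-6}\cdot d^{4}=d^{-2}$, which misses the target $d^{-9/4}$; you repair this by invoking $f^{(3)}(0)=0$ to sharpen $\|\bb_3\|_\infty$ from $d^{-3}$ to $d^{-4}$. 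But $f^{(3)}(0)=0$ is Assumption~\ref{assumption:C8}, not part of the assumptions of Theorem~\ref{thm:concentration} that the lemma states. The paper's proof does not need this: it splits $\L^{(k)}$ into diagonal and off-diagonal parts and exploits the concentration $\max_{i\neq j}|\langle\bw_i,\bw_j\rangle|\lesssim d^{11/20}$ in a Frobenius-norm estimate of the off-diagonal piece, which already gives $\|\L^{(3)}_{\mathrm{off}}\|\lesssim d^{-2.35}$ without any assumption on $f^{(3)}(0)$. The Schur product bound cannot see this cancellation because it is dominated by the large diagonal entries $t_i^k$ of the Hadamard power. To be fair, both proofs secretly require more than the lemma's stated hypotheses (a bounded $f^{(9)}$, from Assumption~\ref{assumption:C8}, is needed even to write the remainder term), and the lemma is only invoked in Theorems~\ref{thm:test_limit} and~\ref{thm:test_limit_deterministic} where Assumption~\ref{assumption:C8} holds; so your proof is adequate for the paper's purposes, but it establishes a genuinely weaker statement than the paper's argument does. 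It would be worth flagging this explicitly.

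Your treatment of the remainder is a clean alternative to the paper's: rather than bounding each cross entry $\E_\bx[b_{k,i}\T_i^{(k)}R_j]$ by Cauchy--Schwarz, Markov, and a Frobenius-norm estimate, you use the Gram-matrix Cauchy--Schwarz inequality $\|\E_\bx[\bP\bR^\top]\|\le\|\E_\bx[\bP\bP^\top]\|^{1/2}\|\E_\bx[\bR\bR^\top]\|^{1/2}$ together with the trace bound on $\E_\bx[\bR\bR^\top]$. This is correct (the bilinear Cauchy--Schwarz you state holds because $\E_\bx[\bP\bP^\top]$, $\E_\bx[\bR\bR^\top]$ are PSD), and it is slightly sharper than the paper's $d^{-9/4}$, giving $d^{-5/2}$. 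The $k=2$ mismatch calculation matches the paper's and is the binding rate.
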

\begin{proof}
For $i,j\in[n]$, we can apply the orthogonality property in Lemma~\ref{lem:hermite} to get
\begin{align*}
\bM_{ij}=~&\sum_{k=0}^{8}b_{k,i }b_{k,j}\cdot \E_{\bx}[\T_i^{(k)}\T_j^{(k)}] 
 +\sum_{k=0}^{8}\E_{\bx}\Big[b_{k,i} \T_i^{(k)}\frac{f^{(9)}(\zeta_{j})}{9!d^9}\langle \bx_j,\bx \rangle^9\Big]  \\
 + &\sum_{k=0}^8 \E_{\bx} \big[ b_{k,j}\T_j^{(k)}\frac{f^{(9)}(\zeta_{i})}{9!d^9}\langle \bx_i,\bx \rangle^9 \Big] 
+\E_{\bx}\Big[ \frac{f^{(9)}(\zeta_{i})f^{(9)}(\zeta_{j})}{(9!)^2d^{18}} \langle \bx_i,\bx \rangle^9  \langle \bx_j,\bx \rangle^9  \Big]\\
=:~&  \bL_{i,j}+\V_{i,j}^{(1)}+\V_{i,j}^{(2)}+\V_{i,j}^{(3)}.
\end{align*}
Recall that $\bw_i=\bSigma^{1/2}\bx_i$ for all $i\in[n]$. By the assumption that $f^{(9)}(x)$ is uniformly bounded in Assumption~\ref{assumption:C8}, we have from \eqref{eq:b_ki_bound}, with probability  $1-O(d^{-1})$,
\begin{align}
    |\V_{i,j}^{(1)}|\lesssim~& \sum_{k=0}^8 \frac{1}{d^{9+k}}\E_{\bx}[|\bT_{i}^{(k)}
    \langle \bx_j,\bx\rangle ^9|]\lesssim \sum_{k=0}^8 \frac{1}{d^{9+k}}\sqrt{\E_{\bx}|\bT_{i}^{(k)}|^2}
    \sqrt{\E_{\bx}\langle \bx_j,\bx\rangle ^{18}}\\
    &\lesssim\sum_{k=0}^8\frac{1}{d^{k+9}}\|\bw_i\|^k\|\bw_j\|^9,
\end{align}
where in the last inequality, we use Lemma~\ref{lem:hermite} and Lemma~\ref{lemm:wick} under the Gaussian moment matching condition in Assumption~\ref{assumption: testdata}. 
Similarly, 
\begin{align}
  |\V_{i,j}^{(2)}|\lesssim~ \sum_{k=0}^8\frac{1}{d^{k+9}}\|\bw_j\|^k\|\bw_i\|^9, \quad 
    |\V_{i,j}^{(3)}|\lesssim~  \frac{1}{d^{18}}\|\bw_i\|^9\|\bw_j\|^9.
\end{align}
Notice that  the leading order $|\V_{i,j}^{(\ell)}|\lesssim \frac{1}{d^8}\|\bw_i\|^8$ for $\ell=1,2$. Recall \eqref{eq:wi}, i.e., $\E\left[\norm{\bw_i}^{2s}\right] =\E [\| \bSigma \bz_i\|^{2s}]\lesssim  d^{s}$ for any $1\le s\le 45$. Thus,  Markov's inequality implies that
$\P(|\V_{i,j}^{(\ell)}|>t)\le \frac{1}{(d^{4.5}t)^s}$
for all $i,j\in [n]$ and $\ell=1,2$. Then taking $t=d^{-17/4}$ and $s=18$, then taking union bounds for all $i,j\in [n]$, we can derive that $\norm{\V^{(\ell)}}\le \norm{\V^{(\ell)}}_{\mathrm{F}}\lesssim d^{-9/4}$ with probability at least $1-cd^{-1/2}$ for some constant $c>0$ and $\ell =1,2$. Similarly, we can verify the same bound holds for $\ell=3$.

 Let us further define matrices $\L^{(k)}$ whose $(i,j)$ entry is given  by
\[\L^{(k)}_{i,j}:=b_{k,i}b_{k,j}\cdot \E_{\bx}[\T_i^{(k)}\T_j^{(k)}]=k!b_{k,i}b_{k,j}\langle \bw_j,\bw_i\rangle^k\] for $i,j\in[n]$ and $0\le k\le 8$, where we applied Lemma~\ref{lem:hermite}. 
We next employ \eqref{eq:wi} and \eqref{eq:centered6} to deduce that
$\norm{\L^{(k)}} \lesssim  \frac{1 }{d^{9/4}}$,
for $3\le k\le 8$, with probability at least $1-O(d^{-1/2})$. 
Let us extract the diagonal matrix of  $\L^{(k)}$ by denoting $\L^{(k)}_{\text{diag}}$. Set $\L^{(k)}_{\text{off}}:=\L^{(k)}-\L^{(k)}_{\text{diag}}$. Then, we bound the operator norms of $\L^{(k)}_{\text{off}}$ and $\L^{(k)}_{\text{diag}}$ separately. First, 
\begin{align}
    \norm{\L^{(k)}_{\text{off}}}\le \norm{\L^{(k)}_{\text{off}}}_{\mathrm{F}}\lesssim \frac{n}{d^{2k}}\max_{i\neq j}\langle \bw_j,\bw_i\rangle^k\lesssim \frac{1}{d^{2.5}},
\end{align}
with probability at least $1-O(d^{-1/2})$, for $3\le k\le 8$. Next, for the diagonal part, we have
$ \norm{\L^{(k)}_{\text{diag}}}\lesssim \frac{1}{d^{2k}}\max_{i\in [n]}\norm{\bw_i}^{2k}\lesssim \frac{1}{d^{3}}$,
with probability at least $1-O(d^{-1/2})$, for $3\le k\le 8$. 

Lastly, let us denote that $\bb_2=[b_{2,1},\ldots,b_{2,n}]^\top$. Hence, $$\L^{(2)}=2\diag(\bb_2)(\bX\bSigma\bX^\top)^{\odot 2}\diag(\bb_2).$$ Lemma~\ref{lemm:b_k,i} proves that $|b_{2,i}-a_2|\lesssim 1/d^{3.4}$ and $|b_{2,i}|\lesssim 1/d^2$ with probability  $1-d^{-1}$ for all $i\in[n]$. Moreover, $|a_{2}|\lesssim 1/d^2$. Then, by Lemma~\ref{lem:bound_XX_2},   with probability at least $1-O(d^{-\frac{1}{48}})$,
\begin{align}
    \norm{\L^{(2)}-2a^2_2\bM_0^{(2)}}&\lesssim \left(\norm{\diag(\bb_2)(\bX\bSigma\bX^\top)^{\odot 2}}+a_2\norm{ (\bX\bSigma\bX^\top)^{\odot 2}}\right)\max_{i\in [n]}|b_{2,i}-a_2|\lesssim d^{-{2.4}}.
\end{align}
Then, we complete the proof of the approximation on $\bM$ by $\bM^{(2)}$. 
\end{proof}

\begin{lemma}\label{lemm:M_02_decomp}
With Assumption~\ref{assump:limitsigma}, we have
\begin{equation}\label{eq:M_02_decomp}
    \bM_0^{(2)}=\frac{1}{2}\bX^{(2)}\bSigma^{(2)}\bX^{(2)\top}-\frac{1}{2}\sum_{k=1}^d\bSigma_{kk}^2\vnu_k\vnu_k^\top,
\end{equation}
where $\vnu_k:=[\bx_1(k)^2,\ldots,\bx_n(k)^2]^\top$ for $k\in [d]$ and $\bSigma^{(2)}$ is defined by \eqref{eq:defSigma2}. Moreover, under the Assumption~\ref{assumption: testdata}, we have
$\norm{\vnu_k}\lesssim d^{1+\frac{1}{22}}$ for all $k\in[d]$, with probability at least $1-d^{-1}$. 
\end{lemma}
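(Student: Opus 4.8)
The plan is to prove the decomposition \eqref{eq:M_02_decomp} as a purely algebraic (deterministic) identity obtained by comparing both sides entry by entry, and then to establish the norm bound on $\vnu_k$ by a routine moment argument.

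For the identity, I would fix $i,j\in[n]$ and use that $\bSigma$ is diagonal (Assumption~\ref{assump:limitsigma}) to write, splitting the square into its diagonal ($k=\ell$) and off-diagonal ($k\ne\ell$) parts,
\[
[\bM_0^{(2)}]_{ij}=(\bx_i^\top\bSigma\bx_j)^2=\Big(\sum_{k=1}^d\bSigma_{kk}\bx_i(k)\bx_j(k)\Big)^2=\sum_{k=1}^d\bSigma_{kk}^2\bx_i(k)^2\bx_j(k)^2+2\sum_{1\le k<\ell\le d}\bSigma_{kk}\bSigma_{\ell\ell}\,\bx_i(k)\bx_i(\ell)\bx_j(k)\bx_j(\ell).
\]
On the other hand, since $\bSigma^{(2)}$ is diagonal with the entries listed in \eqref{eq:defSigma2}, and using the reduced tensor from \eqref{eq:def_reduced},
\[
[\bX^{(2)}\bSigma^{(2)}\bX^{(2)\top}]_{ij}=\sum_{k\le\ell}\bSigma^{(2)}_{k\ell,k\ell}\,\bx_i^{(2)}(k,\ell)\,\bx_j^{(2)}(k,\ell)=3\sum_{k}\bSigma_{kk}^2\bx_i(k)^2\bx_j(k)^2+4\sum_{k<\ell}\bSigma_{kk}\bSigma_{\ell\ell}\,\bx_i(k)\bx_i(\ell)\bx_j(k)\bx_j(\ell),
\]
where the coefficient $4$ on the $k<\ell$ terms is $\bSigma^{(2)}_{k\ell,k\ell}=2\bSigma_{kk}\bSigma_{\ell\ell}$ times the factor $2$ coming from the two $\sqrt2$'s in $\bx_i^{(2)}(k,\ell)\bx_j^{(2)}(k,\ell)$, and the coefficient $3$ on the $k=\ell$ terms is $\bSigma^{(2)}_{kk,kk}=3\bSigma_{kk}^2$. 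Subtracting, $\tfrac12[\bX^{(2)}\bSigma^{(2)}\bX^{(2)\top}]_{ij}-[\bM_0^{(2)}]_{ij}=\tfrac12\sum_k\bSigma_{kk}^2\bx_i(k)^2\bx_j(k)^2=\tfrac12\sum_k\bSigma_{kk}^2[\vnu_k\vnu_k^\top]_{ij}$, which is exactly \eqref{eq:M_02_decomp}; this holds for every realization of $\bX$.

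For the norm bound, since $\bSigma$ is diagonal we have $\bx_i(k)=\sqrt{\bSigma_{kk}}\,\bz_i(k)$, so $\|\vnu_k\|^2=\sum_{i=1}^n\bx_i(k)^4=\bSigma_{kk}^2\sum_{i=1}^n\bz_i(k)^4$ is a sum of $n$ independent nonnegative terms. Using $\|\bSigma\|\le C_3$ and the bounded moments of $\bz_i(k)$ one gets $\mathbb E\|\vnu_k\|^2\lesssim n\lesssim d^2$; moreover, the finite $90$-th moment of $\bz_i(k)$ from Assumption~\ref{assump:data} makes $\bz_i(k)^4-\mathbb E[\bz_i(k)^4]$ have a finite moment of order roughly $22$. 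A Rosenthal-type (or Marcinkiewicz–Zygmund) inequality for the centered sum then gives $\mathbb E\big|\|\vnu_k\|^2-\mathbb E\|\vnu_k\|^2\big|^{22}\lesssim n^{11}\lesssim d^{22}$, and Markov's inequality together with a union bound over $k\in[d]$ yields $\|\vnu_k\|^2\lesssim d^{2}+d^{2+1/11}\lesssim d^{2+1/11}$, i.e.\ $\|\vnu_k\|\lesssim d^{1+1/22}$, uniformly in $k$, with probability at least $1-O(d^{-1})$.

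I expect no genuine conceptual obstacle here. The only points needing care are the combinatorial bookkeeping (the $\sqrt2$ normalizations in the reduced tensor and the $2$ versus $3$ appearing in $\bSigma^{(2)}$) so that the correction term comes out with exactly the coefficient $\tfrac12\bSigma_{kk}^2$, and the choice of moment order in the concentration step so that the Markov estimate survives the union bound over the $d$ coordinates and delivers precisely the exponent $1+\tfrac1{22}$.
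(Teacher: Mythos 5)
Your proof is correct, and the algebraic part is exactly what the paper means when it says ``we can easily check \eqref{eq:M_02_decomp}'': with $\bSigma$ diagonal, compare $(\bx_i^\top\bSigma\bx_j)^2$ entrywise against $[\bX^{(2)}\bSigma^{(2)}\bX^{(2)\top}]_{ij}$, keeping track of the $\sqrt2$ normalizations in \eqref{eq:def_reduced} and the $2$ vs.\ $3$ in \eqref{eq:defSigma2}; the $k=\ell$ coefficients ($3/2$ vs.\ $1$) leave exactly the $\tfrac12\bSigma_{kk}^2\vnu_k\vnu_k^\top$ correction. For the norm bound you take a slightly different route: you center $\|\vnu_k\|^2=\sum_i\bx_i(k)^4$ and invoke a Rosenthal/Marcinkiewicz--Zygmund inequality at order $22$, whereas the paper simply bounds $\E\bigl[(\sum_i\bx_i(k)^4)^{s}\bigr]\le n^{s-1}\sum_i\E[\bx_i(k)^{4s}]\lesssim d^{2s}$ via power-mean convexity without any centering and then applies Markov with $s=22$; both need the $88$-th moment and both yield a union bound over $k$ at the $d^{1+1/22}$ scale. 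One small arithmetic slip: your Rosenthal bound $\E|\sum_i\xi_i|^{22}\lesssim n^{11}\lesssim d^{22}$ together with Markov and a union bound over $d$ coordinates gives a fluctuation threshold $t$ with $t^{22}\gtrsim d^{24}$, i.e.\ $t\lesssim d^{12/11}=d^{1+1/11}$, not $d^{2+1/11}$; consequently $\|\vnu_k\|^2\lesssim d^2+d^{1+1/11}\lesssim d^2$, so your centered argument actually delivers the stronger conclusion $\|\vnu_k\|\lesssim d$, which of course still implies the stated bound. Finally, you are right to cite Assumption~\ref{assump:data} (the $90$-th moment bound on the training data) rather than Assumption~\ref{assumption: testdata} (which only provides $18$ moments and concerns the test point); the lemma statement and the paper's proof both reference the wrong assumption here, since $\vnu_k$ is built from the training data $\bx_1,\dots,\bx_n$ and the argument needs $4s=88$ bounded moments.
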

\begin{proof}
  By the definition of $\bSigma^{(2)}$ in \eqref{eq:defSigma2}, we can easily check \eqref{eq:M_02_decomp}. Notice that $\E[\vnu_k]=\bSigma_{kk}\1$ and $\norm{\E[\vnu_k]}\lesssim \sqrt{n}$. By the Assumptions~\ref{assumption: testdata} and \ref{assump:limitsigma}, we know that 
  $
      \E[\norm{\vnu_k}^{2s}]= \E[(\sum_{i=1}^n{\bx_{i}(k)^4})^s]\lesssim d^{2s}$, for $0\le 4s\le 90$. Then, we can conclude the final bound of this lemma by taking $s=22$ and applying Markov inequality for $\norm{\vnu_k}$.
\end{proof}

\subsubsection{Resolvent calculations}\label{sec:resolvent_calc}

\begin{lemma}\label{lemm:oneK2one}
Under the assumptions of Theorem~\ref{thm:concentration}, we have
\begin{align}
    \ones^\top(\bK+\lambda\bI)^{-2}\ones\lesssim  d^{-\frac{23}{24}},  \quad 
     \1^\top(\bK+\lambda\bI)^{-1}\ones  \lesssim  1 , \quad 
          \big|1-  b_0\1^\top\bK_{\lambda}^{-1} \1\big|\lesssim ~ d^{-\frac{23}{24}}
\end{align}
 with probability at least $1-O(d^{-1/48})$, where $b_0:=f(0)$.
\end{lemma}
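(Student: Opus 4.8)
The plan is to isolate the rank-one part of $\bK$ and apply the Sherman--Morrison formula. Write $\bK_\lambda:=\bK+\lambda\bI_n=a_0\one\one^\top+\bH$ with $\bH:=\bK_\lambda-a_0\one\one^\top$. First I would establish that, with probability at least $1-O(d^{-1/48})$,
\[\tfrac{a_*}{2}\bI_n\preccurlyeq\bH\qquad\text{and}\qquad\|\bH\|\lesssim d^{1+\frac1{24}}.\]
For the lower bound, decompose $\bH=(\bK-\bK^{(2)})+a_1\bX\bX^\top+a_2(\bX\bX^\top)^{\odot2}+(a+\lambda)\bI_n$: the two middle terms are positive semidefinite by Assumption~\ref{assump:analytic} ($a_1,a_2\ge0$, in force throughout this section), $a+\lambda\ge a_*-o(1)\ge a_*/2$ by the definition of $a_*$ in \eqref{def:a_star}, and $\|\bK-\bK^{(2)}\|\lesssim d^{-1/12}$ by Theorem~\ref{thm:concentration}. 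For the upper bound, combine $|a_1|\lesssim d^{-1}$ and $|a_2|\lesssim d^{-2}$ with the estimates $\|\bX\bX^\top\|\lesssim d^{2+1/24}$, $\|(\bX\bX^\top)^{\odot2}\|\lesssim d^3$ of Lemma~\ref{lem:bound_XX_2}, and $a+\lambda=O(1)$. These two facts make $\bH$ invertible, and since $n\asymp d^2$ (Assumption~\ref{assump:limitratio}),
\[\one^\top\bH^{-1}\one\ \ge\ \frac{\|\one\|^2}{\|\bH\|}\ \gtrsim\ \frac{d^2}{d^{1+1/24}}\ =\ d^{23/24},\qquad \one^\top\bH^{-2}\one\ \le\ \|\bH^{-1}\|\,\one^\top\bH^{-1}\one\ \lesssim\ \one^\top\bH^{-1}\one.\]

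Next I would apply Sherman--Morrison. Since $a_0\ge0$ and $\one^\top\bH^{-1}\one>0$ we have $1+a_0\one^\top\bH^{-1}\one\ge1$, and
\[(\bK_\lambda)^{-1}=\bH^{-1}-\frac{a_0\,\bH^{-1}\one\one^\top\bH^{-1}}{1+a_0\one^\top\bH^{-1}\one},\qquad (\bK_\lambda)^{-1}\one=\frac{\bH^{-1}\one}{1+a_0\one^\top\bH^{-1}\one}.\]
Recalling $a_0=f(0)-f^{(4)}(0)(\Tr\bSigma^2)^2/(8d^4)=b_0+O(d^{-2})$, with $a_0\asymp1$ since $f(0)\neq0$, the three claims follow by direct computation. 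First,
\[\one^\top(\bK_\lambda)^{-1}\one=\frac{\one^\top\bH^{-1}\one}{1+a_0\one^\top\bH^{-1}\one}\le\frac1{a_0}=O(1).\]
Second, using the displayed identity for $(\bK_\lambda)^{-1}\one$ and $\one^\top\bH^{-2}\one\lesssim\one^\top\bH^{-1}\one$,
\[\one^\top(\bK_\lambda)^{-2}\one=\big\|(\bK_\lambda)^{-1}\one\big\|^2=\frac{\one^\top\bH^{-2}\one}{(1+a_0\one^\top\bH^{-1}\one)^2}\le\frac{\one^\top\bH^{-2}\one}{a_0^2(\one^\top\bH^{-1}\one)^2}\lesssim\frac1{\one^\top\bH^{-1}\one}\lesssim d^{-23/24}.\]
Third, since $b_0=a_0+O(d^{-2})$ and $\one^\top(\bK_\lambda)^{-1}\one=O(1)$,
\[b_0\,\one^\top(\bK_\lambda)^{-1}\one=\frac{a_0\one^\top\bH^{-1}\one}{1+a_0\one^\top\bH^{-1}\one}+O(d^{-2})=1-\frac1{1+a_0\one^\top\bH^{-1}\one}+O(d^{-2}),\]
so that $\big|1-b_0\one^\top(\bK_\lambda)^{-1}\one\big|\le(1+a_0\one^\top\bH^{-1}\one)^{-1}+O(d^{-2})\lesssim d^{-23/24}$. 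All estimates hold on the intersection of the events of Theorem~\ref{thm:concentration} and Lemma~\ref{lem:bound_XX_2}, which has probability $1-O(d^{-1/48})$.

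The main obstacle is the one-sided conditioning $\bH\succcurlyeq(a_*/2)\bI_n$: it relies on the sign constraints of Assumption~\ref{assump:analytic}, which turn $a_1\bX\bX^\top+a_2(\bX\bX^\top)^{\odot2}$ into a harmless PSD perturbation rather than an obstruction, and on the spectral-norm bound of Theorem~\ref{thm:concentration} for $\bK-\bK^{(2)}$. Once that is secured, the exponent $23/24$ is produced entirely by the crude deterministic inequality $\one^\top\bH^{-1}\one\gtrsim n/\|\bH\|\gtrsim d^{23/24}$ — which discards the true order $\asymp d^2$ of $\one^\top\bH^{-1}\one$ but is all that is needed — with the $1/24$ loss inherited from $\|\bX\bX^\top\|\lesssim d^{2+1/24}$ in Lemma~\ref{lem:bound_XX_2}; no sharper control of $\one^\top\bH^{-1}\one$ is required.
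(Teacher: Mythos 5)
Your proof is correct and follows essentially the same route as the paper's: both isolate the rank-one part $a_0\one\one^\top$, use Assumption~\ref{assump:analytic}, Theorem~\ref{thm:concentration}, and Lemma~\ref{lem:bound_XX_2} to sandwich $\bK_*$ (your $\bH$) between $c\bI$ and $Cd^{1+1/24}\bI$, and then apply Sherman--Morrison together with the deterministic inequality $\one^\top\bH^{-1}\one\gtrsim n/\|\bH\|\gtrsim d^{23/24}$. The only difference is cosmetic: you make explicit the reliance on $a_0\asymp 1$ (hence $f(0)\neq 0$) and on $b_0=a_0+O(d^{-2})$, whereas the paper uses these facts implicitly in the step involving $\frac{1}{ca_0^2}$ and the triangle inequality $|a_0-b_0|\cdot\1^\top\bK_\lambda^{-1}\1$.
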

\begin{proof}
Denote $\bK_{\lambda}^{-1}:=(\bK+\lambda\bI)^{-1}$. From Theorem~\ref{thm:concentration}, there exists a matrix $\bK_*\in\R^{n\times n}$ such that with probability at least $1-O(d^{-1/2})$,
\[\bK_\lambda=\bK_*+a_0\1\1^\top, \quad \norm{\bK_*-a_1 \bX\bX\tran + a_2 (\bX\bX\tran)^{\odot 2} + (a+\lambda)\bI_n}\lesssim d^{-\frac{1}{12}}.\]
Thus, by Assumption~\ref{assump:analytic} and Lemma~\ref{lem:bound_XX_2},  $c\bI \preccurlyeq \bK_*\preccurlyeq Cd^{1+\frac{1}{24}}\bI$, for some constants $c,C>0$ with probability  $1-O(d^{-1/48})$. By the Sherman-Morrison-Woodbury formula, we have 
\begin{align}\label{eq:sm_formula}
    \bK_{\lambda}^{-1} = \bK_*^{-1}-a_0\frac{\bK_*^{-1}\1\1^\top\bK_*^{-1}}{1+a_0\1^\top\bK_*^{-1}\1}.
\end{align}
Therefore, we can obtain that
\begin{align}
    &\1^\top\bK_{\lambda}^{-2}  \\   =&\1^\top\bK_*^{-2}+\frac{(a_0\1^\top\bK_*^{-1}\1)(a_0\1^\top\bK_*^{-2}\1)}{(1+a_0\1^\top\bK_*^{-1}\1)^2}\1^\top\bK_*^{-1}-\frac{a_0\1^\top\bK_*^{-2}\1\1^\top\bK_*^{-1}}{1+a_0\1^\top\bK_*^{-1}\1}-\frac{a_0\1^\top\bK_*^{-1}\1\1^\top\bK_*^{-2}}{1+a_0\1^\top\bK_*^{-1}\1}\\
    =&-\frac{a_0\1^\top\bK_*^{-2}\1\1^\top\bK_*^{-1}}{(1+a_0\1^\top\bK_*^{-1}\1)^2}+\frac{\1^\top\bK_*^{-2}}{1+a_0\1^\top\bK_*^{-1}\1}.
\end{align}

Thus, we have
\begin{align}
    \mathbf{1}_n^\top\bK_{\lambda}^{-2}\mathbf{1}_n = \frac{ \1^\top\bK_*^{-2}\1 }{(1+a_0\1^\top\bK_*^{-1}\1)^2}\le \frac{1}{ca_0^2}\frac{\mathbf{1}_n^\top\bK_{*}^{-1}\mathbf{1}_n}{(\mathbf{1}_n^\top\bK_{*}^{-1}\mathbf{1}_n)^2}\lesssim \frac{ d^{1+1/24}}{  \|\mathbf{1}_n\|^2}\lesssim \frac{1}{d^{23/24}}.\label{eq:term1_trace}
\end{align}with probability at least $1-O(d^{-1/48})$. The second bound in this lemma comes directly from \eqref{eq:sm_formula} since 
$a_0\1^\top(\bK+\lambda\bI)^{-1}\ones=\frac{a_0\1^\top\bK_*^{-1}\1}{1+a_0\1^\top\bK_*^{-1}\1}\le 1$.
Lastly, \eqref{eq:sm_formula} implies that 
$1-a_0\1^\top\bK_{\lambda}^{- 1}\1 = \frac{1}{1+a_0\1^\top\bK_*^{-1}\1}$.
The same bound as \eqref{eq:term1_trace} can be employed here to get
$|1-a_0\1^\top\bK_{\lambda}^{- 1}\1|\lesssim d^{-\frac{23}{24}}$,
with probability at least $1-O(d^{-1/48})$. Hence,
\[|1-b_0\1^\top\bK_{\lambda}^{- 1}\1|\le |1-a_0\1^\top\bK_{\lambda}^{- 1}\1|+|a_0-b_0|\cdot\1^\top\bK_{\lambda}^{-1}\1\lesssim d^{-\frac{23}{24}},\]with probability at least $1-O(d^{-1/48})$.
\end{proof}

Let us denote  
\begin{equation}\label{eq:vmu}
    \vmu^\top := [t_1,t_2,\ldots,t_n],
\end{equation}
where $t_i=\bx_i^\top\bSigma\bx_i$, for $i\in [n]$. Recall $\overline{\bX}^{(2)}=\bX^{(2)}-\E[\bX^{(2)}]$ and notice that
\begin{align}
       (\bX\bX^\top)^{\odot 2}
       =~&  \lbX  \lbXt  + \left(\bX^{(2)}\E[\bX^{(2)}]^\top-\E[\bX^{(2)}]\E[\bX^{(2)}]^\top+\E[\bX^{(2)}]\bX^{(2)\top}\right),
\end{align} 
where 
\begin{align}
     \bX^{(2)}\E[\bX^{(2)}]^\top=~\vmu\1^\top, \quad 
     \E[\bX^{(2)}]^\top\bX^{(2)}=~\1\vmu^\top, \quad 
\E[\bX^{(2)}]\E[\bX^{(2)}]^\top=~\Tr(\bSigma^2)\cdot\1\1^\top.
\end{align}
Thus, we define $\bU:=[\1,\vmu]\in\R^{n\times 2}$. Then, 
\begin{equation}\label{eq:decomp:X^2}
    a_2(\bX\bX^\top)^{\odot 2} = \bK_*^{(2)}+a_2\bU  \begin{pmatrix}
        -\Tr(\bSigma^2) &1\\
        1& 0
    \end{pmatrix}\bU^\top
\end{equation}
where
\begin{equation}\label{eq:K_*(2)}
    \bK_*^{(2)}:=a_2(\bX^{(2)}-\E[\bX^{(2)}])(\bX^{(2)}-\E[\bX^{(2)}])^\top.
\end{equation}

\begin{lemma}\label{lemm:uKu}
Under the assumptions of Theorem~\ref{thm:concentration} and Assumption~\ref{assump:limitsigma},  with probability at least $1-O(d^{-1/2})$,
$
    \frac{1}{d^4}  \vmu^\top\bK_{\lambda}^{-1}\vmu\lesssim d^{-0.8},
$
 where $\vmu$ is defined by \eqref{eq:vmu}. 
As a corollary, we also have
$
    \frac{1}{d^2}  \1^\top\bK_{\lambda}^{-1}\vmu\lesssim d^{-0.4}$.
\end{lemma}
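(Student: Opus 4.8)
The plan is to exploit that $\vmu$ is, up to a small coordinatewise fluctuation, a scalar multiple of $\1_n$, so that its ``large'' part can be handled by the sharp estimate on $\1_n^\top\bK_\lambda^{-1}\1_n$ already proved in Lemma~\ref{lemm:oneK2one}. Write $\bK_\lambda:=\bK+\lambda\bI_n$, set $t:=\Tr(\bSigma^2)$, and split $\vmu = t\,\1_n+\bdelta$ with $\bdelta_i:=t_i-\Tr(\bSigma^2)$. Since $\bK_\lambda$ is positive definite, the elementary bound $\vmu^\top\bK_\lambda^{-1}\vmu \le 2t^2\,\1_n^\top\bK_\lambda^{-1}\1_n + 2\,\bdelta^\top\bK_\lambda^{-1}\bdelta$ reduces the estimate to two pieces.

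For the first piece, Assumption~\ref{assump:sigma} gives $t=\Tr(\bSigma^2)\le C_3^2 d$, hence $t^2=O(d^2)$, and combining with $\1_n^\top\bK_\lambda^{-1}\1_n\lesssim 1$ from Lemma~\ref{lemm:oneK2one} yields $t^2\,\1_n^\top\bK_\lambda^{-1}\1_n=O(d^2)$. For the second piece, Lemma~\ref{lem:asym_sigma_min} gives $\lambda_{\min}(\bK)\ge a_*/2$, hence $\|\bK_\lambda^{-1}\|\le 2/a_*$, with probability $1-O(d^{-1/2})$; and the uniform concentration \eqref{eq:t_i_concen} gives $\max_{i\in[n]}\bdelta_i^2\lesssim d^{1+1/15}$, so that $\|\bdelta\|^2\le n\max_i\bdelta_i^2\lesssim d^2\cdot d^{1+1/15}=d^{3+1/15}$ by Assumption~\ref{assump:ratio}. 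Thus $\bdelta^\top\bK_\lambda^{-1}\bdelta\le\|\bK_\lambda^{-1}\|\,\|\bdelta\|^2\lesssim d^{3+1/15}$, which dominates the first piece. Dividing by $d^4$ gives $\frac{1}{d^4}\vmu^\top\bK_\lambda^{-1}\vmu\lesssim d^{-1+1/15}=d^{-14/15}\le d^{-0.8}$ on the intersection of the relevant high-probability events (from Lemma~\ref{lem:asym_sigma_min}, Lemma~\ref{lemm:oneK2one}, and \eqref{eq:t_i_concen}), which has probability at least $1-O(d^{-1/2})$. The corollary then follows from Cauchy--Schwarz: $|\1_n^\top\bK_\lambda^{-1}\vmu|\le\sqrt{\1_n^\top\bK_\lambda^{-1}\1_n}\,\sqrt{\vmu^\top\bK_\lambda^{-1}\vmu}\lesssim\sqrt{1}\cdot\sqrt{d^4\cdot d^{-0.8}}=d^{8/5}$, so $\frac{1}{d^2}\1_n^\top\bK_\lambda^{-1}\vmu\lesssim d^{-2/5}=d^{-0.4}$.

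The main obstacle is that the crude bound $\vmu^\top\bK_\lambda^{-1}\vmu\le\|\bK_\lambda^{-1}\|\,\|\vmu\|^2$ only produces $O(d^4)$ — since $\|\vmu\|^2=\sum_i t_i^2\asymp nd^2\asymp d^4$ — which after the $d^{-4}$ normalization is merely $O(1)$. All of the required gain comes from the observation that the dominant component of $\vmu$ lies in the one-dimensional span of $\1_n$, where the low-rank spike $a_0\1_n\1_n^\top$ present in $\bK^{(2)}$ (and, via Theorem~\ref{thm:concentration}, in $\bK$) forces $\1_n^\top\bK_\lambda^{-1}\1_n=O(1)$ instead of $O(n)$; this is precisely the content of Lemma~\ref{lemm:oneK2one}, obtained through a Sherman--Morrison reduction. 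The remaining work is the exponent bookkeeping needed to confirm that the leftover exponent $14/15$ comfortably beats the stated target $0.8$, and that the auxiliary events can be intersected without spoiling the $1-O(d^{-1/2})$ probability.
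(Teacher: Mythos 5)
Your proof is correct and follows essentially the same route as the paper's: split $\vmu$ into its mean $\Tr(\bSigma^2)\1_n$ and the fluctuation $\bdelta$, control the $\1_n$-direction with Lemma~\ref{lemm:oneK2one}, and control the fluctuation with $\norm{\bK_\lambda^{-1}}\lesssim 1$ (Lemma~\ref{lem:asym_sigma_min}) combined with the entrywise concentration \eqref{eq:t_i_concen}. The only cosmetic difference is that you fold the cross term in at the start via the PSD inequality $(\ba+\bb)^\top\bM(\ba+\bb)\le 2(\ba^\top\bM\ba+\bb^\top\bM\bb)$, whereas the paper writes the exact three-term expansion and then bounds the cross term separately by Cauchy--Schwarz; these are equivalent. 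One small inherited imprecision worth noting: since you (like the paper) invoke Lemma~\ref{lemm:oneK2one}, which holds with probability $1-O(d^{-1/48})$, the intersection of events in fact has probability $1-O(d^{-1/48})$ rather than $1-O(d^{-1/2})$; this is a discrepancy already present in the paper's own statement and does not affect the argument.
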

\begin{proof}
Let $\vmu_0:=\E\vmu = \Tr(\bSigma^2)\1$. Due to \eqref{eq:t_i_concen}, we can conclude that
    \begin{align}\label{eq:vmu-concen_norm}
        \norm{\vmu-\vmu_0}\lesssim d^{1.6},
    \end{align}with probability at least $1-O(d^{-1})$. 
Thus,
\begin{align}
     \vmu^\top\bK_{\lambda}^{-1}\vmu= ~&  (\vmu-\vmu_0)^\top\bK_{\lambda}^{-1}(\vmu-\vmu_0)+ \vmu_0^\top\bK_{\lambda}^{-1}\vmu_0+2  (\vmu-\vmu_0)^\top\bK_{\lambda}^{-1}\vmu_0.
\end{align}
Here, we know that $\frac{1}{d^4}  (\vmu-\vmu_0)^\top\bK_{\lambda}^{-1}(\vmu-\vmu_0)\le \frac{1}{d^4}\norm{\vmu-\vmu_0}^2\le d^{-0.8}$,
and $$\frac{1}{d^4}  \vmu_0^\top\bK_{\lambda}^{-1}\vmu_0=\frac{\Tr(\bSigma^2)^2}{d^4}  \1^\top\bK_{\lambda}^{-1}\1\lesssim d^{-2}$$ 
with probability at least $1-O(d^{-1/2})$, because of \eqref{eq:K_lambda_-1} and Lemma~\ref{lemm:oneK2one}. Moreover, the last term can be bounded by Cauchy-Schwartz inequality:
\begin{align}
    \frac{1}{d^4}  |(\vmu-\vmu_0)^\top\bK_{\lambda}^{-1}\vmu_0|\le \frac{1}{d^4} \left((\vmu-\vmu_0)^\top\bK_{\lambda}^{-1}(\vmu-\vmu_0)\right)^{1/2}\left(\vmu_0^\top\bK_{\lambda}^{-1}\vmu_0\right)^{1/2}\lesssim d^{-1.4}.
\end{align}
Then we complete the proof of the lemma.
\end{proof}

\begin{lemma}\label{lemm:b0Kb0}
Under the assumptions of Theorem~\ref{thm:concentration} and Assumption~\ref{assump:limitsigma}, 
we have  with probability at least $1-O(d^{-1/48})$,
$\bb_0^\top(\bK+\lambda\bI)^{-2}\bb_0\lesssim  d^{-0.8}$ and  $\bb_0^\top(\bK+\lambda\bI)^{-1}\bb_0\lesssim 1$.
\end{lemma}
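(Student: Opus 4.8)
The plan is to replace $\bb_0$ by its leading two--term approximation $\tilde\bb_0 = (\tilde b_{0,1},\dots,\tilde b_{0,n})^\top = f(0)\,\1_n + \tfrac{f''(0)}{2d^2}\vmu$, where $\vmu=(t_1,\dots,t_n)^\top$ with $t_i=\bx_i^\top\bSigma\bx_i$ (see \eqref{eq:tildeb0i} and \eqref{eq:vmu}), and then to estimate the resulting quadratic forms directly from the resolvent bounds already established. Write $\bK_\lambda := \bK+\lambda\bI$. By Lemma~\ref{lem:asym_sigma_min} (equivalently \eqref{eq:K_lambda_-1}), $\bK_\lambda$ is positive definite with $\|\bK_\lambda^{-1}\|\le 2/a_*$ on an event of probability $1-O(d^{-1/2})$, so $\bw\mapsto (\bw^\top\bK_\lambda^{-j}\bw)^{1/2}$ is a genuine norm for $j=1,2$ and the triangle inequality applies. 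Since Lemma~\ref{lemm:b_k,i} gives $\max_{i\in[n]}|b_{0,i}-\tilde b_{0,i}|\lesssim d^{-2}$, hence $\|\bb_0-\tilde\bb_0\|\lesssim \sqrt n\,d^{-2}\lesssim d^{-1}$ in the regime $n\asymp d^2$, we obtain
\[
(\bb_0^\top\bK_\lambda^{-j}\bb_0)^{1/2}\le (\tilde\bb_0^\top\bK_\lambda^{-j}\tilde\bb_0)^{1/2}+\|\bK_\lambda^{-1}\|^{j/2}\,\|\bb_0-\tilde\bb_0\| \le (\tilde\bb_0^\top\bK_\lambda^{-j}\tilde\bb_0)^{1/2}+O(d^{-1}),
\]
so it remains only to bound $\tilde\bb_0^\top\bK_\lambda^{-j}\tilde\bb_0$.

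Applying the triangle inequality once more to $\tilde\bb_0 = f(0)\1_n+\tfrac{f''(0)}{2d^2}\vmu$ reduces the task to the two building blocks $\1_n^\top\bK_\lambda^{-j}\1_n$ and $\vmu^\top\bK_\lambda^{-j}\vmu$. For $\1_n$, Lemma~\ref{lemm:oneK2one} gives $\1_n^\top\bK_\lambda^{-1}\1_n\lesssim 1$ and $\1_n^\top\bK_\lambda^{-2}\1_n\lesssim d^{-23/24}$. For $\vmu$, Lemma~\ref{lemm:uKu} gives $\tfrac1{d^4}\vmu^\top\bK_\lambda^{-1}\vmu\lesssim d^{-0.8}$, and since $\bK_\lambda^{-2}\preccurlyeq\|\bK_\lambda^{-1}\|\,\bK_\lambda^{-1}$ the same bound (up to a constant) holds for $\tfrac1{d^4}\vmu^\top\bK_\lambda^{-2}\vmu$. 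Hence $\tfrac{|f''(0)|}{2d^2}(\vmu^\top\bK_\lambda^{-j}\vmu)^{1/2}\lesssim d^{-2}\cdot d^{1.6}=d^{-0.4}$ for both $j$, and therefore
\[
(\tilde\bb_0^\top\bK_\lambda^{-1}\tilde\bb_0)^{1/2}\lesssim 1+d^{-0.4}\lesssim 1,\qquad (\tilde\bb_0^\top\bK_\lambda^{-2}\tilde\bb_0)^{1/2}\lesssim d^{-23/48}+d^{-0.4}\lesssim d^{-0.4}.
\]
Squaring and combining with the displayed reduction yields $\bb_0^\top\bK_\lambda^{-1}\bb_0\lesssim 1$ and $\bb_0^\top\bK_\lambda^{-2}\bb_0\lesssim d^{-0.8}$, as claimed; working in the $\bK_\lambda^{-j}$--norm means the $\1_n$--$\vmu$ cross terms never need to be examined separately.

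On the probability bookkeeping: every input — the well-conditioning of $\bK_\lambda$ from Lemma~\ref{lem:asym_sigma_min}, the estimates of Lemma~\ref{lemm:oneK2one} and Lemma~\ref{lemm:uKu}, and the concentration $|t_i-\Tr\bSigma^2|\lesssim d^{1/2+1/30}$ in \eqref{eq:t_i_concen} underlying them — holds on a common event of probability at least $1-O(d^{-1/48})$, the limiting rate being inherited from Lemma~\ref{lemm:oneK2one}, which itself rests on the spectral-norm bounds of Lemma~\ref{lem:bound_XX_2}. I do not expect a genuine obstacle here; the one point that needs care is that the crude estimate $\|\vmu\|\asymp d^2$ is too lossy to make the $\vmu$-contribution vanish (it would only give $O(1)$), so one must use the centering $\vmu=\vmu_0+(\vmu-\vmu_0)$ with $\|\vmu-\vmu_0\|\lesssim d^{1.6}$ — precisely the mechanism already built into the proof of Lemma~\ref{lemm:uKu} — to extract the extra $d^{-0.4}$.
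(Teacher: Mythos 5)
Your proposal is correct and follows essentially the same route as the paper: both decompose $\bb_0 = \tilde\bb_0 + (\bb_0-\tilde\bb_0)$ with $\tilde\bb_0 = f(0)\1 + \tfrac{f''(0)}{2d^2}\vmu$, bound the error term via Lemma~\ref{lemm:b_k,i} and $\|\bK_\lambda^{-1}\|\lesssim 1$, and handle $\tilde\bb_0$ by splitting into the $\1$ and $\vmu$ pieces using Lemmas~\ref{lemm:oneK2one} and~\ref{lemm:uKu} (with the crucial $\bK_\lambda^{-2}\preccurlyeq\|\bK_\lambda^{-1}\|\bK_\lambda^{-1}$ step in both arguments). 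The only cosmetic difference is that you carry out the triangle inequality at the level of the $\bK_\lambda^{-j}$--norm rather than, as the paper does, at the level of squared norms via $2(a^2+b^2)\ge(a+b)^2$.
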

\begin{proof}
 Recall the definition of $\widetilde \bb_0$ in \eqref{eq:simple_v(2)}. We have
\begin{align}
    \norm{(\bK+\lambda\bI)^{-1}\bb_0}^2\le ~&2\norm{(\bK+\lambda\bI)^{-1}(\widetilde \bb_0 -\bb_0)}^2+2 \norm{(\bK+\lambda\bI)^{-1}\widetilde \bb_0}^2\\
    \lesssim ~& n\cdot \max_{i\in [n]}|\widetilde b_{0,i}-b_{0,i}|^2+\ones^\top(\bK+\lambda\bI)^{-2}\ones + \frac{1}{d^4}  \vmu^\top\bK_{\lambda}^{-1}\vmu\lesssim d^{-0.8},
\end{align}
with probability at least $1-O(d^{-1/48})$, where we use Lemma~\ref{lemm:b_k,i}, \eqref{eq:K_lambda_-1}, Lemma~\ref{lemm:oneK2one} and Lemma~\ref{lemm:uKu}. 
Similarly, by Lemmas~\ref{lemm:b_k,i},~\ref{lemm:oneK2one}, and~\ref{lemm:uKu}, and \eqref{eq:K_lambda_-1}, we have
\begin{align}
    \bb_0^\top(\bK+\lambda\bI)^{-1}\bb_0\lesssim ~&\norm{(\bK+\lambda\bI)^{-1/2}(\widetilde \bb_0 -\bb_0)}^2+\norm{(\bK+\lambda\bI)^{-1/2}\widetilde\bb_0}^2\\
    \lesssim ~& n\cdot \max_{i\in [n]}|\widetilde b_{0,i}-b_{0,i}|^2+\ones^\top(\bK+\lambda\bI)^{-1}\ones+ \frac{1}{d^4}  \vmu^\top\bK_{\lambda}^{-1}\vmu\lesssim 1,
\end{align}
with probability at least $1-O(d^{-1/48})$.
\end{proof}

\subsection{Proof of Theorem~\ref{thm:test_limit}}\label{sec:proof_test_random}
In this section, we analyze the asymptotic behavior of the generalization error of KRR when $f'(0)=f^{(3)}(0)=0$ in the approximated kernel \eqref{def:a1} and $f_*(\bx)=\bx^\top\bG\bx/d$ is a pure quadratic function where $\bG\in\R^{d\times d} $ is a symmetric random matrix satisfying
$
    \E[\bG_{i,j}]=0,~\E[\bG_{i,j}^2]=1$ for all $i,j\in[n]$.
Hence, under the settings of Theorem~\ref{thm:test_limit}, the prediction risk of KRR  defined in \eqref{eq:test_K} can written as 
\begin{align}
   \cR(\lambda) 
    =~&\E_{\bx,\bG} [|\vf_*(\bx)|^2]+ \Tr(\bK+\lambda\bI)^{-1} \bM (\bK+\lambda\bI)^{-1}\E_{\bG}[\vf_*\vf_*^\top] \\
    ~&+ \sigma^2_{\bepsilon} \Tr(\bK+\lambda\bI)^{-1} \bM (\bK+\lambda\bI)^{-1}   -2\Tr (\bK+\lambda\bI)^{-1}\bV  .\label{eq:decomp_test_1}
\end{align}  
where we only take expectation with respect to $\bG$, test data point $\bx$ and noise $\bepsilon$. In \eqref{eq:decomp_test_1}, $\bM$ is defined in Lemma~\ref{lemma:diff_M_v}, $\vf_*:=[f_*(\bx_1),\ldots,f_*(\bx_n)]^\top$,  with   $f_*(\bx_i)=\frac{1}{d}\bx_i^\top \bG \bx_i$ and
$\bV:= \E[\vf_*f_*(\bx)K(\bX,\bx)|\bX]\in\R^{n\times n}$,
where $K(\bX,\bx)=[K(\bx_1,\bx),\dots, K(\bx_n,\bx)] \in \R^{n}$. Notice that for any $i,j\in[n]$,
        $\bV_{i,j}= \E[K(\bx,\bx_j)f_*(\bx)f_*(\bx_i)|\bX]$.
Furthermore, Assumption~\ref{assumption:C8} provides a simpler approximation of $\bM$, and 
\begin{equation}\label{eq:simple_v(2)}
   \widetilde \bb_0= b_0\1+\frac{f^{(2)}(0)}{2d^2}\vmu,\quad \widetilde\bb_1=0, \quad a_1=0,
\end{equation} 
where $\vmu$ is defined in \eqref{eq:vmu}, and $\widetilde \bb_0$ and $\widetilde\bb_1$ are defined by \eqref{eq:tilde_b_0}.

\begin{lemma}\label{lemma:diff_v}
Under the same assumptions as Theorem~\ref{thm:concentration}, we have that
    $
        \|\bV-\bV^{(2)}\|\le ~ \frac{c }{d^{2.4}},
    $
    with probability  at least $1-O(d^{-1/48})$ for some constant $c>0$, where
    \begin{equation}
        \bV^{(2)}:=\frac{1}{d^2}(\vmu\bb_0^\top+2a_2\bM_0^{(2)})
    \end{equation} and $\bb_0$, $\bM_0^{(2)}$, and $\vmu$ are defined by \eqref{eq:b_0}, \eqref{eq:M(2)}, and \eqref{eq:vmu}.
\end{lemma}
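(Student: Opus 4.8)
The plan is to treat $\bV$ exactly as $\bM$ was treated in the proof of Lemma~\ref{lemma:diff_M_v}, with the quadratic form $f_*(\bx)=\tfrac1d\bx^\top\bG\bx=\tfrac1d\bz^\top\bSigma^{1/2}\bG\bSigma^{1/2}\bz$ now playing the role of a second kernel factor. Since $f_*(\bx)$ splits into its mean $\tfrac1d\Tr(\bSigma\bG)$ and a degree-$2$ Hermite component of $\bz$, substituting the Hermite expansion \eqref{eq:taylor_expansion} of $K(\bx_j,\bx)$ into
\[
\bV_{ij}=\tfrac1{d^2}\,\E_{\bx,\bG}\big[(\bx_i^\top\bG\bx_i)(\bx^\top\bG\bx)K(\bx_j,\bx)\big]
\]
and integrating over $\bx$ first, Assumption~\ref{assumption: testdata} together with the orthogonality relations of Lemma~\ref{lem:hermite} annihilates every Hermite component $\T_j^{(k)}$ of $K(\bx_j,\bx)$ except $k=0$ and $k=2$. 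Evaluating these two surviving channels by Lemma~\ref{lem:quadratic_moments_eq2} expresses $\E_\bx[K(\bx_j,\bx)f_*(\bx)]$ through $\Tr(\bSigma\bG)$ and $\bx_j^\top\bSigma\bG\bSigma\bx_j$ with coefficients governed by $b_{0,j}$ and $b_{2,j}$; multiplying by $f_*(\bx_i)=\tfrac1d\bx_i^\top\bG\bx_i$ and averaging over $\bG$ (using the covariance structure of a symmetric unit-variance matrix and the hypothesis that $\bSigma$ is diagonal, together with $\E_\bG[(\bx_i^\top\bG\bx_i)\Tr(\bSigma\bG)]=t_i=\vmu_i$) produces the rank-one matrix $\tfrac1{d^2}\vmu\bb_0^\top$ from the $k=0$ channel and a term proportional to $\tfrac1{d^2}(\bX\bSigma\bX^\top)^{\odot2}=\tfrac1{d^2}\bM_0^{(2)}$ from the $k=2$ channel. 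Freezing $b_{0,j},b_{2,j}$ at their deterministic values via Lemma~\ref{lemm:b_k,i} and the concentration bounds \eqref{eq:t_i_concen}, \eqref{eq:b_ki_bound} matches this leading part with $\bV^{(2)}=\tfrac1{d^2}(\vmu\bb_0^\top+2a_2\bM_0^{(2)})$.

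It then remains to bound, in operator norm, the four families of error matrices: (i) the Taylor remainder $\tfrac{f^{(9)}(\zeta_j)}{9!d^9}\langle\bx_j,\bx\rangle^9$ paired against the two quadratic forms in $\bG$, bounded using $|f^{(9)}|\le C$ from Assumption~\ref{assumption:C8}, Lemma~\ref{lemm:wick}, and the moment bound \eqref{eq:wi}; (ii) the Hermite components $k\ge3$, which give matrices of the form $\diag(\bg)(\bX\bSigma\bX^\top)^{\odot k}\diag(\bh)$ with very small coefficient vectors, bounded by splitting off the diagonal and invoking $\|(\bX\bSigma\bX^\top)^{\odot2}\|\lesssim d^3$ and $\|\bX\bSigma\bX^\top\|\lesssim d^{2+1/24}$ from Lemma~\ref{lem:bound_XX_2}, exactly as $\L^{(k)}$ and $\L^{(2)}-2a_2^2\bM_0^{(2)}$ were handled in Lemma~\ref{lemma:diff_M_v}; (iii) the error from replacing $b_{0,j},b_{2,j}$ by deterministic values, a low-rank or $\diag(\cdot)$-conjugated perturbation controlled by Lemma~\ref{lemm:b_k,i} and the spectral bounds above; and (iv) the non-leading part of the $\bG$-average, a ``diagonal'' correction built from $\sum_k\bSigma_{kk}^2\vnu_k\vnu_k^\top$, for which the decomposition \eqref{eq:M_02_decomp} and the estimate $\|\vnu_k\|\lesssim d^{1+1/22}$ of Lemma~\ref{lemm:M_02_decomp} are used (peeling off its low-rank mean component when necessary). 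Summing the four contributions gives $\|\bV-\bV^{(2)}\|\lesssim d^{-2.4}$.

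The step I expect to be the main obstacle is making all of these bounds close up at the level $d^{-2.4}$, which is sharper than the $d^{-9/4}$ of Lemma~\ref{lemma:diff_M_v}. The reason is that every error term here carries two additional quadratic forms in the random matrix $\bG$ and an extra $d^{-2}$ from the two copies of $f_*$, so the estimates cannot be done termwise: one must first carry out the $\bx$- and $\bG$-expectations so that the resulting matrices have small entries (and/or visible low-rank structure), and only then bound operator norms. The two delicate points are the Taylor remainder — where a naive factorwise bound loses too much and one must use the full expectation of the product $\langle\bx_j,\bx\rangle^9\cdot\bx^\top\bG\bx\cdot\bx_i^\top\bG\bx_i$ — and the $\bG$-coupling, which forces the $\bG$-average between the fixed vector $\bx_i$ and the test direction to be computed exactly, producing the $\sum_k\bSigma_{kk}^2\vnu_k\vnu_k^\top$ correction that has no analogue in Lemma~\ref{lemma:diff_M_v}.
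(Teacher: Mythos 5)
Your overall route is the paper's: Hermite-expand $K(\bx,\bx_i)$ as in \eqref{eq:taylor_expansion}, push the $\bx$-integral through with Lemma~\ref{lem:hermite} so that only $k=0,2$ survive against the degree-2 target $f_*$, evaluate the $\bG$-averages via Lemma~\ref{lem:quadratic_moments_eq2}, and then bound the Taylor remainder and the $b_{2,i}\to a_2$ replacement error in operator norm using Lemmas~\ref{lemm:b_k,i} and \ref{lem:bound_XX_2}. In outline you are on the right track.

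However, your item (ii) is dead weight. Because $f_*(\bx)=\bx^\top\bG\bx/d$ is a \emph{pure} degree-two polynomial of $\bz$, the orthogonality in Lemma~\ref{lem:hermite} makes $\E_\bx[\bT_i^{(k)}f_*(\bx)]$ vanish \emph{identically} for every $k\ge 3$ (and $k=1$), not merely become small. There are no $\L^{(k)}$-type matrices $\diag(\bg)(\bX\bSigma\bX^\top)^{\odot k}\diag(\bh)$ to control here, and the diagonal/off-diagonal split you import from Lemma~\ref{lemma:diff_M_v} has nothing to act on; the estimate reduces to exactly your items (i) and (iii), as in the paper's two-line derivation followed by the norm bound.

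Your item (iv), on the other hand, deserves more care than you give it — and more than the paper's displayed chain gives it. With diagonal variance $\E[\bG_{kk}^2]=1$, the exact $\bG$-average is $\E_\bG[\bx_j^\top\bG\bx_j\cdot\bv^\top\bG\bv]=2\langle\bx_j,\bv\rangle^2-\sum_k\bx_j(k)^2\bv(k)^2$ with $\bv=\bSigma\bx_i$, and $\E_\bx[\bT_i^{(2)}f_*(\bx)]=\tfrac{2}{d}\bx_i^\top\bSigma\bG\bSigma\bx_i$ by Lemma~\ref{lem:quadratic_moments_eq2}. Plugging these in, the $k=2$ channel of $\bV$ differs from the $2a_2\bM_0^{(2)}/d^2$ piece in $\bV^{(2)}$ by a matrix of the schematic form $\tfrac{a_2}{d^2}\bigl[(\bX\bSigma\bX^\top)^{\odot 2}-\sum_k\bSigma_{kk}^2\vnu_k\vnu_k^\top\bigr]$, up to the $O(d^{-3.4})$ replacement error from $|b_{2,i}-a_2|$. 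After rewriting through Lemma~\ref{lemm:M_02_decomp} the rank-one mean parts and the cross terms cancel, but the remaining centered part is a sample-covariance-type object whose operator norm is of order $d^2$, leaving a contribution of order $a_2\sim d^{-2}$ — \emph{above} the target $d^{-2.4}$. So merely "peeling off the low-rank mean component" does not close the gap. Before committing to $d^{-2.4}$ you should verify the $k=2$ channel entrywise on a small example (e.g.\ $d=2$, $\bSigma=\bI$, $\bx_1=\bx_2=(1,1)$ shows a genuine discrepancy of order $f''(0)/d^2$), and be prepared to either adjust the constants in $\bV^{(2)}$ or replace the crude $\Tr\le n\,\|\cdot\|$ bound in the downstream Lemma~\ref{lemma:diff_R_Rtilde} by a trace-norm argument that exploits the low effective rank of this correction.
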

\begin{proof}
For any $j,i\in [n]$, by the definition of $f_*(\bx)$, we have
\begin{align}
\bV_{j,i}&=\E_[K(\bx,\bx_i)f_*(\bx)f_*(\bx_j)|\bX]\\
&=\sum_{k=0}^8b_{k,i}\E_{\bG}[\E_{\bx}[ \bT_{i}^{(k)}f_*(\bx)]f_*(\bx_j)]+\E_{\bx,\bG}\left[\frac{f^{(9)}(\zeta_{i})}{9!d^9}f_*(\bx_j)f_*(\bx)\langle \bx_i,\bx \rangle^9\right]\\
&= \frac{1}{d^2}\bx_j^\top\bSigma\bx_j b_{0,i} + \frac{b_{2,i}}{d}\E_{\bG}[f_*(\bx_j)\bx_i^\top\bSigma\bG\bSigma\bx_i]+\E_{\bx,\bG}\left[\frac{f^{(9)}(\zeta_{i})}{9!d^9}f_*(\bx_j)f_*(\bx)\langle \bx_i,\bx \rangle^9\right]\\
&= \frac{1}{d^2}\bx_j^\top\bSigma\bx_j b_{0,i} + \frac{2b_{2,i}}{d^2}(\bx_j^\top\bSigma\bx_i)^2+\E_{\bx,\bG}\left[\frac{f^{(9)}(\zeta_{i})}{9!d^9}f_*(\bx_j)f_*(\bx)\langle \bx_i,\bx \rangle^9\right]
\end{align}
where in the second line we applied \eqref{eq:taylor_expansion}, Lemmas~\ref{lem:hermite} and~\ref{lem:quadratic_moments_eq2}. 
Therefore,
\begin{align}
    \|\bV-\bV^{(2)}\|\le ~&\frac{2}{d^2}\|(\bX\bSigma\bX^\top)^{\odot 2}\|\cdot\max_{i\in [n]}|a_2-b_{2,i}|+\frac{n}{d^{11}}\max_{i,j\in [n]}|\E_{\bx,\bG}[\bx^\top\bG\bx\bx_j^\top\bG\bx_j(\bx_i^\top\bx)^9]|\\
    \lesssim~& \frac{1}{d^{5.4}}\|(\bX\bSigma\bX^\top)^{\odot 2}\|+\frac{1}{d^{9}}\max_{i,j\in [n]}|\E_{\bx}[(\bx^\top\bx_j)^2 (\bx_i^\top\bx)^9]|\\
    \lesssim~& \frac{1}{d^{2.4}} +\frac{1}{d^9} \max_{i,j\in [n]}\|\bw_j\|^2\cdot\|\bw_j\|^{9}\lesssim d^{-2.4}, 
\end{align} 
with probability at least $1-O(d^{-\frac{1}{48}})$, where we utilize Lemmas~\ref{lem:bound_XX_2} and~\ref{lemm:b_k,i}, and the definition of $f_*$.  This completes the proof of the lemma.
\end{proof}
 
In the following lemma, we further approximate each term in $\bar\cR(\lambda)$. Define 
\begin{align}
    \widetilde\cR(\lambda):=~& \E [|f_*(\bx)|^2]+ \Tr(\bK+\lambda\bI)^{-1} \bM^{(2)} (\bK+\lambda\bI)^{-1}\E_{\bG}[\vf_*\vf_*^\top]\nonumber\\
    ~&+\sigma^2_{\bepsilon}\Tr(\bK+\lambda\bI)^{-1}\bM^{(2)}(\bK +\lambda\bI)^{-1}  
     -2\Tr (\bK+\lambda\bI)^{-1}\bV^{(2)}. \label{eq:decomp_test_2}
\end{align}

\begin{lemma}\label{lemma:diff_R_Rtilde}
 Under the same assumptions as Theorem~\ref{thm:train_limit}, for any $\lambda\ge 0$, we have that
$
    |\cR(\lambda)-\widetilde\cR(\lambda)|\le  c d^{{-}\frac{1}{4}} ,
$
conditioning on $\bG$ in $f_*$ defined in \eqref{eq:teacher}, with probability at least $1-O(d^{-1/48})$, for some $c>0$,  where $\cR(\lambda)$ is defined by \eqref{eq:decomp_test_1}.
\end{lemma}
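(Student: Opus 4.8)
The plan is to subtract \eqref{eq:decomp_test_2} from \eqref{eq:decomp_test_1}. Since both $\cR(\lambda)$ and $\widetilde\cR(\lambda)$ are built from the \emph{same} resolvent $(\bK+\lambda\bI)^{-1}$ and differ only through the replacements $\bM\to\bM^{(2)}$ and $\bV\to\bV^{(2)}$, the term $\E_{\bx,\bG}[|f_*(\bx)|^2]$ cancels and one is left with
\begin{align}
\cR(\lambda)-\widetilde\cR(\lambda)
=~& \Tr\Big[(\bK+\lambda\bI)^{-1}(\bM-\bM^{(2)})(\bK+\lambda\bI)^{-1}\big(\E_{\bG}[\bf_*\bf_*^\top]+\sigma_{\bepsilon}^2\bI\big)\Big] \notag\\
~& -2\,\Tr\Big[(\bK+\lambda\bI)^{-1}(\bV-\bV^{(2)})\Big]. \notag
\end{align}
Both summands will be controlled by the trace–Hölder inequality $|\Tr(\bA\bB)|\le\|\bA\|\,\|\bB\|_*$ together with the bounds already in hand: $\|\bM-\bM^{(2)}\|\lesssim d^{-9/4}$ from Lemma~\ref{lemma:diff_M_v}, $\|\bV-\bV^{(2)}\|\lesssim d^{-2.4}$ from Lemma~\ref{lemma:diff_v}, and $\|(\bK+\lambda\bI)^{-1}\|\le 2/a_*$ from Lemma~\ref{lem:asym_sigma_min} and \eqref{eq:K_lambda_-1}.

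For the second summand this is immediate: $\|(\bK+\lambda\bI)^{-1}(\bV-\bV^{(2)})\|_*\le n\,\|(\bK+\lambda\bI)^{-1}\|\,\|\bV-\bV^{(2)}\|\lesssim d^2\cdot d^{-2.4}=d^{-0.4}$. For the first summand, factor $\|(\bK+\lambda\bI)^{-1}(\bM-\bM^{(2)})(\bK+\lambda\bI)^{-1}\|\le (2/a_*)^2\|\bM-\bM^{(2)}\|\lesssim d^{-9/4}$ and bound the nuclear norm of the remaining factor using that $\E_{\bG}[\bf_*\bf_*^\top]+\sigma_{\bepsilon}^2\bI$ is positive semidefinite, so its nuclear norm equals its trace $\E_{\bG}\|\bf_*\|^2+\sigma_{\bepsilon}^2 n$. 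A direct second–moment computation for the symmetric random matrix $\bG$ with $\E[\bG_{ij}^2]=1$ gives $\E_{\bG}[(\bx_i^\top\bG\bx_i)^2]=2\|\bx_i\|^4-\sum_k\bx_i(k)^4\le 2\|\bx_i\|^4$, hence $\E_{\bG}\|\bf_*\|^2=d^{-2}\sum_{i}\E_{\bG}[(\bx_i^\top\bG\bx_i)^2]\le 2d^{-2}\sum_i\|\bx_i\|^4$, which is $O(n)=O(d^2)$ once $\max_{i\in[n]}\|\bx_i\|^2\lesssim d$; this last bound holds with probability $1-O(d^{-1})$ by the union bound of \eqref{eq:normxi} and \eqref{eq:t_i_concen}. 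Consequently the first summand is $\lesssim d^{-9/4}\cdot d^2=d^{-1/4}$, which dominates, and intersecting the high-probability events above yields $|\cR(\lambda)-\widetilde\cR(\lambda)|\le c\,d^{-1/4}$ with probability $1-O(d^{-1/48})$.

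The only place requiring care is the bookkeeping of the dimensional factor $n\asymp d^2$: one should not estimate $\|\E_{\bG}[\bf_*\bf_*^\top]\|\lesssim d$ (via $\E_{\bG}[\bf_*\bf_*^\top]\approx\tfrac{2}{d^2}(\bX\bX^\top)^{\odot2}$ and Lemma~\ref{lem:bound_XX_2}) and then pair it with the nuclear norm $n\|\cdot\|$ of the resolvent sandwich, as that would only give $d^{3/4}$; instead one must exploit positive semidefiniteness to put the factor $d^2$ on the \emph{trace} $\E_{\bG}\|\bf_*\|^2$ and keep the operator norm $d^{-9/4}$ on the sandwiched $\bM-\bM^{(2)}$. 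Everything else is a routine application of submultiplicativity of the operator and nuclear norms.
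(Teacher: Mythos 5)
Your proposal is correct and mirrors the paper's own argument: both decompose $\cR-\widetilde\cR$ into the $\bM-\bM^{(2)}$ and $\bV-\bV^{(2)}$ pieces, bound the resolvent sandwich in operator norm via $\|\bK_\lambda^{-1}\|\le 2/a_*$, place the dimensional factor $n\asymp d^2$ on the trace of the positive semidefinite matrix $\E_{\bG}[\bf_*\bf_*^\top]+\sigma_\bepsilon^2\bI$ (the paper writes this as $n\sigma_\bepsilon^2+\E_{\bG}\|\bf_*\|^2$), and invoke Lemmas~\ref{lemma:diff_M_v} and~\ref{lemma:diff_v} together with the high-probability bound $\max_i\|\bx_i\|^2\lesssim d$ from \eqref{eq:normxi}. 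The only cosmetic differences are that you fold $\sigma_\bepsilon^2\bI$ into one psd factor rather than writing it as a separate trace term, and you compute $\E_{\bG}[(\bx_i^\top\bG\bx_i)^2]$ explicitly rather than citing the auxiliary Lemma~\ref{lemm:bound_y}; both yield the same $d^{-1/4}$ rate from the same bottleneck.
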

\begin{proof}
Notice that
$
    \E_{\bG}[\|\vf_*\|^2]=\frac{1}{d^2}\sum_{i=1}^n \E_{\bG}[(\bx_i^\top\bG\bx_i)^2]\lesssim \max_{i\in[n]}\|\bx_i\|^4\lesssim  
    d^2$, with probability at least $1-O(d^{-1})$,
because of \eqref{eq:normxi}.
Applying Lemmas~\ref{lemma:diff_M_v} and~\ref{lemma:diff_v}, we can get
\begin{align}
    \Big|\widetilde\cR(\lambda)-\cR(\lambda)\Big|\le~& \Big|\Tr\bK_{\lambda}^{-1}(\bM^{(2)}-\bM)\bK_{\lambda}^{-1} \E_{\bG}[\vf_*\vf_*^\top ] \Big|
    +2\Big|\Tr\bK_{\lambda}^{-1}(\bV^{(2)}-\bV)\Big|\\
    ~& +\sigma^2_{\bepsilon}\Big|\Tr\bK_{\lambda}^{-1}(\bM^{(2)}-\bM)\bK_{\lambda}^{-1}\Big|\\
    \le ~&  (n\sigma^2_{\bepsilon}+\E_{\bG}[\|\vf_*\|^2])\|\bK_{\lambda}^{-1}\|^2 \|\bM^{(2)}-\bM\|+2n\|\bK_{\lambda}^{-1}\|  \|\bV^{(2)}-\bV\|  
    \lesssim  d^{\frac{-1}{4}},
\end{align}
with probability  $1-O(d^{-1/48})$, where in the last line, we utilize \eqref{eq:K_lambda_-1} and Lemma~\ref{lemm:bound_y}.
\end{proof}
Hence, below, we will analyze $\widetilde\cR(\lambda)$ instead of prediction risk $\cR(\lambda)$.

\begin{lemma}\label{lemm:risk_decomp}
    Under the assumptions of Theorem~\ref{thm:test_limit}, we have $|\widetilde\cR(\lambda)-( \sigma_\varepsilon^2\cV+ \cB)|\lesssim d^{-0.4}$ with probability at least $1-O(d^{-1/48})$, where
\begin{align}
    \cV:=~& 2a_2^2\Tr(\bK+\lambda\bI)^{-1} \bM_0^{(2)} (\bK+\lambda\bI)^{-1} \\
    \cB:=~&  \frac{2}{d^2}(\Tr\bSigma)^2 +\frac{4a_2^2}{d^2}\Tr\bK_{\lambda}^{-1}\bM_0^{(2)}\bK_{\lambda}^{-1}(\bX \bX^\top)^{\odot 2}- \frac{4a_2}{d^2}\Tr\bX^{(2)}\bSigma^{(2)}\bX^{(2)\top}\bK_{\lambda}^{-1}.
\end{align}

\end{lemma}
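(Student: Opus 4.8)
The plan is a careful bookkeeping argument: substitute into $\widetilde\cR(\lambda)$ from \eqref{eq:decomp_test_2} the explicit forms of $\bM^{(2)}$, of $\bV^{(2)}$ (Lemma~\ref{lemma:diff_v}), and of $\E_\bG[\bf_*\bf_*^\top]$, then split each of these matrices into a ``bulk'' piece plus a bounded number of low-rank corrections built out of $\1$, the vector $\vmu$ of \eqref{eq:vmu}, and the vectors $\vnu_k$ of Lemma~\ref{lemm:M_02_decomp}, and show that after reducing the dominant rank-one part of $\bK$ by the Sherman--Morrison--Woodbury identity every low-rank correction is either negligible or cancels, leaving precisely $\sigma_\varepsilon^2\cV+\cB$. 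Write $\bK_\lambda:=\bK+\lambda\bI$ and use freely that $\|\bK_\lambda^{-1}\|\lesssim 1$ (Lemma~\ref{lem:asym_sigma_min}) and that, under Assumption~\ref{assumption:C8}, $a_1=0$, $\tilde b_{1,i}=0$, and $a_2\asymp d^{-2}$.

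\emph{Reducing $\bM^{(2)}$ and reading off $\cV$.} Since $\tilde b_{1,i}=0$, Lemma~\ref{lemm:b_k,i} gives $|b_{1,i}|\lesssim d^{-3}$, and with $\|\bX\bSigma\bX^\top\|\lesssim d^{2+1/24}$ (Lemma~\ref{lem:bound_XX_2}) the middle block of $\bM^{(2)}$ is negligible in operator norm, so $\bM^{(2)}=\bb_0\bb_0^\top+2a_2^2\bM_0^{(2)}+O(d^{-4+1/24})$ in norm. Substituting into the noise term, the $\bb_0\bb_0^\top$ part contributes $\sigma_\varepsilon^2\,\bb_0^\top\bK_\lambda^{-2}\bb_0\lesssim d^{-0.8}$ by Lemma~\ref{lemm:b0Kb0}, and the remainder is exactly $2a_2^2\sigma_\varepsilon^2\Tr\bK_\lambda^{-1}\bM_0^{(2)}\bK_\lambda^{-1}=\sigma_\varepsilon^2\cV$, which identifies $\cV$.

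\emph{The quadratic-form term.} Computing $\E_\bG[f_*(\bx_i)f_*(\bx_j)]$ via the second moments of the symmetric matrix $\bG$ (in the spirit of Lemma~\ref{lem:quadratic_moments_eq2}) gives $\E_\bG[\bf_*\bf_*^\top]=\tfrac{2}{d^2}(\bX\bX^\top)^{\odot 2}-\tfrac{1}{d^2}\sum_k\vnu_k\vnu_k^\top$. Pairing $\tfrac{2}{d^2}(\bX\bX^\top)^{\odot 2}$ with the $2a_2^2\bM_0^{(2)}$ part of $\bM^{(2)}$ produces the term $\tfrac{4a_2^2}{d^2}\Tr\bK_\lambda^{-1}\bM_0^{(2)}\bK_\lambda^{-1}(\bX\bX^\top)^{\odot 2}$ of $\cB$. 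Pairing it with $\bb_0\bb_0^\top$ is the delicate point: using $a_1=0$ one has $(\bX\bX^\top)^{\odot 2}=a_2^{-1}\bigl(\bK_\lambda-a_0\1\1^\top-(a+\lambda)\bI\bigr)+O(a_2^{-1}d^{-1/12})$ in norm, whence $\bb_0^\top\bK_\lambda^{-1}(\bX\bX^\top)^{\odot 2}\bK_\lambda^{-1}\bb_0=a_2^{-1}\bigl(\bb_0^\top\bK_\lambda^{-1}\bb_0-a_0(\1^\top\bK_\lambda^{-1}\bb_0)^2-(a+\lambda)\bb_0^\top\bK_\lambda^{-2}\bb_0\bigr)+o(1)$; the first two terms cancel up to $O(d^{-23/24})$ since $a_0\to b_0$ and $b_0\1^\top\bK_\lambda^{-1}\1=1+O(d^{-23/24})$ (Lemma~\ref{lemm:oneK2one}), while the cross terms and remainders are controlled via $\bb_0\approx b_0\1+\tfrac{f''(0)}{2d^2}\vmu$ together with $\vmu^\top\bK_\lambda^{-1}\vmu\lesssim d^{3.2}$, $\1^\top\bK_\lambda^{-1}\vmu\lesssim d^{1.6}$ (Lemma~\ref{lemm:uKu}) and $\bb_0^\top\bK_\lambda^{-2}\bb_0\lesssim d^{-0.8}$ (Lemma~\ref{lemm:b0Kb0}); multiplying the resulting $O(a_2^{-1}d^{-0.4})=O(d^{1.6})$ by $2/d^2$ leaves $o(1)$. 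The $\sum_k\vnu_k\vnu_k^\top$ pieces, sandwiched between $\bK_\lambda^{-1}$'s against either $\bb_0\bb_0^\top$ or $a_2^2\bM_0^{(2)}$, are $o(1)$ using $\|\vnu_k\|\lesssim d^{1+1/22}$ (Lemma~\ref{lemm:M_02_decomp}), $a_2\asymp d^{-2}$, $\|\bK_\lambda^{-1}\|\lesssim 1$, and the bound $\|\bM_0^{(2)}\|\lesssim d^3$ of Lemma~\ref{lem:bound_XX_2}.

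\emph{The constant and the $\bV^{(2)}$ term.} By Lemma~\ref{lem:quadratic_moments_eq} and the moment matching in Assumption~\ref{assumption: testdata}, $\E|f_*(\bx)|^2=\tfrac{1}{d^2}\bigl(2(\Tr\bSigma)^2+\Tr\bSigma^2\bigr)=\tfrac{2}{d^2}(\Tr\bSigma)^2+O(1/d)$, the first term of $\cB$. For $-2\Tr\bK_\lambda^{-1}\bV^{(2)}$ with $\bV^{(2)}=\tfrac1{d^2}(\vmu\bb_0^\top+2a_2\bM_0^{(2)})$ (up to re-tracking the exact combinatorial constant coming from $\bG$), the $\vmu\bb_0^\top$ part equals $-\tfrac{2}{d^2}\bb_0^\top\bK_\lambda^{-1}\vmu=O(d^{-0.4})$ by the decomposition of $\bb_0$ and Lemmas~\ref{lemm:oneK2one} and \ref{lemm:uKu}, while the $a_2\bM_0^{(2)}$ part, after inserting $\bM_0^{(2)}=\tfrac12\bX^{(2)}\bSigma^{(2)}\bX^{(2)\top}-\tfrac12\sum_k\bSigma_{kk}^2\vnu_k\vnu_k^\top$ (Lemma~\ref{lemm:M_02_decomp}), yields the term $-\tfrac{4a_2}{d^2}\Tr\bX^{(2)}\bSigma^{(2)}\bX^{(2)\top}\bK_\lambda^{-1}$ of $\cB$ plus an error $O(d^{-1+1/11})$ coming from the $\sum_k\vnu_k\vnu_k^\top$ remainder. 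Collecting all contributions gives $\widetilde\cR(\lambda)=\sigma_\varepsilon^2\cV+\cB+O(d^{-0.4})$. The main obstacle is the $\bb_0\bb_0^\top$ cross term above: because $\|\bb_0\|\asymp\sqrt n$ the naive bound is of order $d^{0.2}$, and only the exact cancellation $b_0\1^\top\bK_\lambda^{-1}\1\to1$ produced by peeling off the rank-one part of $\bK$ pushes it to $o(1)$; organizing the many low-rank corrections of this type and tracking the polynomial order of each is the bulk of the work.
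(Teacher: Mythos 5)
Your overall plan is the same as the paper's: substitute the explicit forms of $\bM^{(2)}$, $\bV^{(2)}$, and $\E_{\bG}[\bf_*\bf_*^\top\,|\,\bX]$ into $\widetilde\cR(\lambda)$, read off $\sigma_\varepsilon^2\cV$ from the noise trace paired with $2a_2^2\bM_0^{(2)}$, collect the three terms of $\cB$, and bound everything else. The one place you genuinely deviate is the treatment of the quadratic form $\bb_0^\top\bK_\lambda^{-1}\E[\bf_*\bf_*^\top|\bX]\bK_\lambda^{-1}\bb_0$ (the paper's $\cR_{\mix}^{(1)}$): you substitute $a_2(\bX\bX^\top)^{\odot 2}\approx\bK_\lambda-a_0\1\1^\top-(a+\lambda)\bI$ and exploit the cancellation $1-a_0\1^\top\bK_\lambda^{-1}\1=O(d^{-23/24})$ from Lemma~\ref{lemm:oneK2one}, whereas the paper uses the PSD domination $\E[\bf_*\bf_*^\top|\bX]\preccurlyeq\frac{2}{d^2}\bX^{(2)}\bX^{(2)\top}$ together with the centered decomposition \eqref{eq:decomp:X^2} and Cauchy--Schwarz. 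Both reduce to the same collection of estimates (Lemmas~\ref{lemm:oneK2one}, \ref{lemm:uKu}, \ref{lemm:b0Kb0}) and both give $O(d^{-0.4})$, so this is a cosmetic difference.

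There is, however, a real gap in your bounding of the $\sum_k\vnu_k\vnu_k^\top$ contributions. The term coming from pairing the $-\frac1{d^2}\sum_k\vnu_k\vnu_k^\top$ part of $\E[\bf_*\bf_*^\top|\bX]$ with the $2a_2^2\bM_0^{(2)}$ block of $\bM^{(2)}$ is (up to a constant)
\begin{align}
J_1=\frac{2a_2^2}{d^2}\sum_{k=1}^d\vnu_k^\top\bK_\lambda^{-1}\bM_0^{(2)}\bK_\lambda^{-1}\vnu_k.
\end{align}
With the ingredients you list — $\|\bM_0^{(2)}\|\lesssim d^3$, $\|\vnu_k\|\lesssim d^{1+1/22}$, $\|\bK_\lambda^{-1}\|\lesssim 1$, $a_2\asymp d^{-2}$ — the bound is $\frac{a_2^2}{d^2}\cdot d\cdot d^{3}\cdot d^{2+1/11}\asymp d^{1/11}$, which \emph{diverges}, not $o(1)$. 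The crude operator-norm bound on $\bM_0^{(2)}$ wastes a factor of $d$ relative to what is available when $\bM_0^{(2)}$ is sandwiched between resolvents: Lemma~\ref{lemm:inverseK_mix_bound} gives $\bK_\lambda^{-1/2}(\bX\bX^\top)^{\odot 2}\bK_\lambda^{-1/2}\preccurlyeq a_2^{-1}\bI$, hence (since $\bM_0^{(2)}\preccurlyeq\|\bSigma\|^2(\bX\bX^\top)^{\odot 2}$) $\bK_\lambda^{-1}\bM_0^{(2)}\bK_\lambda^{-1}\preccurlyeq C\,a_2^{-1}\bK_\lambda^{-1}$, which improves $J_1$ to $O(d^{-10/11})$. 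This lemma, rather than the $\|\bM_0^{(2)}\|\lesssim d^3$ bound, is the indispensable tool for that term. The same caveat applies to your claimed $o(1)$ bound for the $\sum_k\vnu_k\vnu_k^\top$ piece against $\bb_0\bb_0^\top$: the direct estimate $\frac1{d^2}\sum_k(\bb_0^\top\bK_\lambda^{-1}\vnu_k)^2\lesssim\frac1{d^2}\cdot d\cdot\max_k\|\vnu_k\|^2\asymp d^{1+1/11}$ also diverges, and the clean way out is the paper's: do not bound that piece separately at all, but instead use $0\le\cR_{\mix}^{(1)}\le\frac{2}{d^2}\bb_0^\top\bK_\lambda^{-1}\bX^{(2)}\bX^{(2)\top}\bK_\lambda^{-1}\bb_0$ (which your SMW argument does handle correctly). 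Finally, your own caveat about ``re-tracking the exact combinatorial constant'' is warranted: the decomposition $\bM_0^{(2)}=\frac12\bX^{(2)}\bSigma^{(2)}\bX^{(2)\top}-\frac12\sum_k\bSigma_{kk}^2\vnu_k\vnu_k^\top$ turns $-\frac{4a_2}{d^2}\Tr\bK_\lambda^{-1}\bM_0^{(2)}$ into $-\frac{2a_2}{d^2}\Tr\bK_\lambda^{-1}\bX^{(2)}\bSigma^{(2)}\bX^{(2)\top}$ plus a $\vnu_k$-remainder, which does not match the factor $-\frac{4a_2}{d^2}$ appearing in the displayed $\cB$; this mismatch should be resolved before the bookkeeping is declared complete.
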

\begin{proof}
Recall the assumption of $\bG$ in $f_*(\bx)=\bx^\top\bG\bx/d$ from Theorem~\ref{thm:test_limit}. By taking expectation for $\bG$, we can easily simplify the expression of $ \widetilde\cR(\lambda) $. 
Notice that given any deterministic matrix $\bA\in\R^{n\times n}$, we have
\begin{align}
    \E_{\bG}[\vf_*^\top\bA\vf_*|\bX]=~& \frac{2}{d^2}\Tr\bA\bX^{(2)}\bX^{(2)\top}-\frac{1}{d^2}\sum_{k=1}^d\vnu_k^\top\bA\vnu_k, \label{eq:fAf_control} 
\end{align}where $\vnu_k\in\R^n$ are defined by Lemma~\ref{lemm:M_02_decomp}. Considering \eqref{eq:sample_covariance_equivalence}, Lemma~\ref{lemm:M_02_decomp} and \eqref{eq:simple_v(2)}, we have
\begin{align}
     \widetilde\cR(\lambda) =~& \E[|f_*(\bx)|^2]+\sigma^2_{\bepsilon}\Tr(\bK+\lambda\bI)^{-1}\bM^{(2)}(\bK +\lambda\bI)^{-1}\\
    ~&   + 2a_2^2\Tr(\bK+\lambda\bI)^{-1}\bM^{(2)}(\bK +\lambda\bI)^{-1} \E[\vf_*\vf_*^\top|\bX] 
     -2\Tr(\bK +\lambda\bI)^{-1}\bV^{(2)}\\
     =~& \E[|f_*(\bx)|^2]+2a_2^2\sigma^2_{\bepsilon}\Tr\bK_{\lambda}^{-1}\bM_0^{(2)}\bK_{\lambda}^{-1}\\
    ~&   + 2a_2^2\Tr\bK_{\lambda}^{-1}\bM_0^{(2)}\bK_{\lambda}^{-1} \E[\vf_*\vf_*^\top|\bX] 
     -\frac{4a_2}{d^2}\Tr\bK_{\lambda}^{-1}\bM_0^{(2)}\\
     ~& +  \bb_0^\top\bK_{\lambda}^{-1} \E[\vf_*\vf_*^\top|\bX]\bK_{\lambda}^{-1} \bb_0-\frac{2}{d^2}\bb_0^\top\bK_{\lambda}^{-1}\vmu\\
     ~& +\Tr\bK_{\lambda}^{-1}\diag(\bb_1-\widetilde\bb_1)\bX\bSigma\bX^\top\diag(\bb_1-\widetilde\bb_1) \bK_{\lambda}^{-1} \E[\vf_*\vf_*^\top|\bX] \\
    ~& +\sigma^2_{\bepsilon}\Tr\bK_{\lambda}^{-1}\bb_0\bb_0^\top\bK_{\lambda}^{-1}+\sigma^2_{\bepsilon}\Tr\bK_{\lambda}^{-1}\diag(\bb_1-\widetilde\bb_1)\bX\bSigma\bX^\top\diag(\bb_1-\widetilde\bb_1)\bK_{\lambda}^{-1}\\
    =~& \sigma^2_{\bepsilon}\cV+\cB+\cR_{\mix}-J_1+J_2,
\end{align}
where
\begin{align}
\cR_{\mix}:=~& \frac{1}{d^2} \Tr(\bSigma^2)+ \bb_0^\top\bK_{\lambda}^{-1} \E[\vf_*\vf_*^\top|\bX]\bK_{\lambda}^{-1} \bb_0-\frac{2}{d^2}\bb_0^\top\bK_{\lambda}^{-1}\vmu\\
     ~& +\Tr\bK_{\lambda}^{-1}\diag(\bb_1-\widetilde\bb_1)\bX\bSigma\bX^\top\diag(\bb_1-\widetilde\bb_1) \bK_{\lambda}^{-1} \E[\vf_*\vf_*^\top|\bX] \\
    ~& +\sigma^2_{\bepsilon}\Tr\bK_{\lambda}^{-1}\bb_0\bb_0^\top\bK_{\lambda}^{-1}+\sigma^2_{\bepsilon}\Tr\bK_{\lambda}^{-1}\diag(\bb_1-\widetilde\bb_1)\bX\bSigma\bX^\top\diag(\bb_1-\widetilde\bb_1)\bK_{\lambda}^{-1}\\
    J_1:=~& \frac{2a_2^2}{d^2}\sum_{k=1}^d\vnu_k^\top\bK_{\lambda}^{-1}\bM_0^{(2)}\bK_{\lambda}^{-1}\vnu_k, \quad 
    J_2:=~ \frac{4a_2}{d^2}\sum_{k=1}^d\bSigma_{kk}^2\vnu_k^\top\bK_{\lambda}^{-1}\vnu_k.
\end{align}
Here, we use $\bM^{(2)}=\bb_0\bb_0^\top+ \diag(\bb_1-\widetilde\bb_1)\bX\bSigma\bX^\top\diag(\bb_1-\widetilde\bb_1) +2a_2^2\bM_0^{(2)}$, and $\bb_0,\bb_1,\widetilde\bb_0$, and $\widetilde\bb_1$ are defined in \eqref{eq:b_0} and \eqref{eq:tilde_b_0}. Notice that $\widetilde\bb_1=0$. Thus, It suffices to control $J_1,J_2$ and $\cR_{\mix}$ below. Notice that with probability  $1-d^{-1}$,  due to Lemmas~\ref{lemm:inverseK_mix_bound} and~\ref{lemm:M_02_decomp}, and \eqref{eq:K_lambda_-1},
\begin{align}
J_1\lesssim ~&  \frac{1}{d^4}\sum_{k=1}^d\vnu_k^\top\bK_{\lambda}^{-1} \vnu_k\lesssim d^{-\frac{10}{11}}.
\end{align}  Similarly, we have $J_2\lesssim d^{-\frac{10}{11}}$ as well. Next, we further decompose $\cR_{\mix}$ as 
\begin{align}
\cR_{\mix}=~&\cR_{\mix}^{(0)}+\cR_{\mix}^{(1)}+\cR_{\mix}^{(2)}, \quad 
    \cR_{\mix}^{(0)}:=~\frac{1}{d^2} \Tr(\bSigma^2)+ \sigma^2_{\bepsilon} \bb_0^\top\bK_{\lambda}^{-2}\bb_0-\frac{2}{d^2}\bb_0^\top\bK_{\lambda}^{-1}\vmu,\\
    \cR_{\mix}^{(1)}:=~& \bb_0^\top\bK_{\lambda}^{-1} \E[\vf_*\vf_*^\top|\bX]\bK_{\lambda}^{-1} \bb_0,\\
    \cR_{\mix}^{(2)}:=~& \Tr\bK_{\lambda}^{-1}\diag(\bb_1-\widetilde\bb_1)\bX\bSigma\bX^\top\diag(\bb_1-\widetilde\bb_1) \bK_{\lambda}^{-1} (\sigma^2_{\bepsilon} \bI+\E[\vf_*\vf_*^\top|\bX] ).
\end{align}
Based on Assumption~\ref{assump:sigma} and Lemmas~\ref{lemm:oneK2one} and~\ref{lemm:uKu}, we can verify that $|\cR_{\mix}^{(0)}|\lesssim d^{-0.4}$  with probability at least $1-O(d^{-1/48})$. From \eqref{eq:fAf_control}, we know that 
$\E[\vf_*\vf_*^\top|\bX]=\frac{1}{d^2} \bX^{(2)}\bD_*\bX^{(2)\top}$, 
where $\bD_*\in\R^{\binom{d+1}{2}\times\binom{d+1}{2}}$ is a diagonal matrix with 
\begin{align}  
(\bD_*)_{ij, k\ell} =\begin{cases}
    0 & \text{ if } (i,j)\not=(k,\ell),\\
    2  &\text{ if } i\not=j, (i,j)=(k,\ell), \\
    1 &\text{ if } i=j=k=\ell.
\end{cases}
\end{align} 
Hence, $\bD_*\preccurlyeq 2\bI$ and 
\begin{equation}\label{eq:bound_ff^T}
    \E[\vf_*\vf_*^\top|\bX]\preccurlyeq \frac{2}{d^2} \bX^{(2)} \bX^{(2)\top}.
\end{equation}
Then by Lemma~\ref{lemm:oneK2one},
 $|\cR_{\mix}^{(1)}|\lesssim \frac{1}{d^2}\bb_0^\top\bK_{\lambda}^{-1}\bX^{(2)} \bX^{(2)\top}\bK_{\lambda}^{-1} \bb_0\lesssim a_2\bb_0^\top \bK_{\lambda}^{-1}(\bX\bX^\top)^{\odot 2} \bK_{\lambda}^{-1}\bb_0$.
Then, \eqref{eq:decomp:X^2} allows us to get
$ |\cR_{\mix}^{(1)}|\lesssim  \bb_0^\top \bK_{\lambda}^{-1}\bK_*^{(2)}\bK_{\lambda}^{-1}\bb_0+\bb_0^\top\bK_{\lambda}^{-1}\bU\bD\bU^\top\bK_{\lambda}^{-1}\bb_0,$
where $\bK_*^{(2)}$ is defined in \eqref{eq:K_*(2)}. Hence, Lemmas~\ref{lem:bound_XX_2} and~\ref{lemm:oneK2one} imply 
\begin{align}
   \bb_0^\top\bK_{\lambda}^{-1}\bK_*^{(2)}\bK_{\lambda}^{-1}\bb_0\lesssim \bb_0^\top\bK_{\lambda}^{-2} \bb_0\lesssim d^{-0.8}
\end{align}  
with probability at least $1-O(d^{-1/48})$. Then, recall \eqref{eq:decomp:X^2} and Lemma~\ref{lemm:uKu}. We can apply the Cauchy-Schwarz inequality again to get
\begin{align}
&|\bb_0^\top\bK_{\lambda}^{-1}\bU\bD\bU^\top\bK_{\lambda}^{-1}\bb_0|\\\le~&  a_2|\bb_0^\top\bK_{\lambda}^{-1}\1|\cdot\big(\Tr(\bSigma^2)|\bb_0^\top\bK_{\lambda}^{-1}\1|+|\bb_0^\top\bK_{\lambda}^{-1}\vmu|\big)\\
\lesssim~&  \frac{1}{d^2}\Tr(\bSigma^2)\cdot(\bb_0^\top\bK_{\lambda}^{-1}\bb_0)(\1^\top\bK_{\lambda}^{-1}\1) +(\bb_0^\top\bK_{\lambda}^{-1}\bb_0)^{\frac{1}{2}}\big(\frac{1}{d^4}\vmu^\top\bK_{\lambda}^{-1}\vmu\big)^{\frac{1}{2}} 
\lesssim~ d^{-0.4},
\end{align} 
with probability at least $1-O(d^{-1/48})$. Lastly, because of \eqref{eq:K_lambda_-1} and~\eqref{eq:bound_ff^T}, we have
\begin{align}
    |\cR_{\mix}^{(2)}|\lesssim ~& d \cdot \|\diag(\bb_1-\widetilde\bb_1)\|^2\|\bX\bX^\top\|(\sigma^2_{\bepsilon}+\frac{2}{d^2} \|\bX^{(2)} \bX^{(2)\top}\|)
    \lesssim \frac{1}{d}
\end{align}
with probability at least $1-O(d^{-1/48})$, where we apply Lemma~\ref{lemm:b_k,i} for $\|\diag(\bb_1-\widetilde\bb_1)\|$ and Lemma~\ref{lem:bound_XX_2} for $\|\bX\bX^\top\|$ and $\|\bX^{(2)} \bX^{(2)\top}\|$.
\end{proof}

\begin{lemma}\label{lemm:approx_variance}
Denote by
$\cV_0:= a_2^2\Tr\big(a_2\bX^{(2)}\bX^{(2)\top}+(\lambda+a )\bI\big)^{-2}  {\bX}^{(2)}\bSigma^{(2)} {\bX}^{(2)\top}$.  
Under the assumptions of Theorem~\ref{thm:test_limit}, there exist some constants $c,C>0$ such that 
$ 
    \left|\cV-\cV_0\right|\le C d^{-\frac{1}{12}},
$
with probability at least $1-cd^{-\frac{1}{48}}$ for all large $d$ and $n$, and some constant $c>0$.
\end{lemma}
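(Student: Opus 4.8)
The plan is to reduce $\cV$ to $\cV_0$ by a three-step resolvent-perturbation argument, in the spirit of Lemma~\ref{lemm:approx_train} and the computations of Section~\ref{sec:resolvent_calc}. Write $\bK_\lambda=\bK+\lambda\bI_n$ and $\bK^{(2)}_\lambda=\bK^{(2)}+\lambda\bI_n$, and recall $\cV=2a_2^2\Tr\big(\bK_\lambda^{-1}\bM_0^{(2)}\bK_\lambda^{-1}\big)$ with $\bM_0^{(2)}=(\bX\bSigma\bX^\top)^{\odot2}$. The three ingredients I will use repeatedly are: (i) $a_2\asymp d^{-2}$, since $f''(0)\neq0$ in \eqref{def:a2}; (ii) $\|\bM_0^{(2)}\|\lesssim\|(\bX\bX^\top)^{\odot2}\|\lesssim d^3$ from Lemma~\ref{lem:bound_XX_2}, while $\Tr(\bM_0^{(2)})=\sum_i(\bx_i^\top\bSigma\bx_i)^2\lesssim nd^2\lesssim d^4$ using \eqref{eq:normxi}; and (iii) the lower bounds $\lambda_{\min}(\bK),\lambda_{\min}(\bK^{(2)})\geq a_*/2$ from Lemma~\ref{lem:asym_sigma_min}, giving $\|\bK_\lambda^{-1}\|,\|(\bK^{(2)}_\lambda)^{-1}\|\leq 2/a_*$ as in \eqref{eq:K_lambda_-1}. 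Since $\bM_0^{(2)}\succeq0$, I will also use the inequality $|\Tr(E\bM_0^{(2)}F)|=|\Tr((FE)\bM_0^{(2)})|\leq\|FE\|\,\Tr(\bM_0^{(2)})$.

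\emph{Step 1 (replace $\bK$ by $\bK^{(2)}$).} By the resolvent identity $\bK_\lambda^{-1}-(\bK^{(2)}_\lambda)^{-1}=\bK_\lambda^{-1}(\bK^{(2)}-\bK)(\bK^{(2)}_\lambda)^{-1}$ and Theorem~\ref{thm:concentration}, this difference has operator norm $\lesssim d^{-1/12}$ on an event of probability $1-O(d^{-1/2})$. Writing $\bK_\lambda^{-1}\bM_0^{(2)}\bK_\lambda^{-1}-(\bK^{(2)}_\lambda)^{-1}\bM_0^{(2)}(\bK^{(2)}_\lambda)^{-1}$ as a sum of two terms of the form $E\bM_0^{(2)}F$ with $\|E\|\lesssim d^{-1/12}$ and $\|F\|\lesssim1$, the PSD trace inequality gives a contribution to $2a_2^2|\cdots|$ of order $a_2^2\cdot d^{-1/12}\cdot d^4\lesssim d^{-1/12}$. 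Hence $\cV=2a_2^2\Tr\big((\bK^{(2)}_\lambda)^{-1}\bM_0^{(2)}(\bK^{(2)}_\lambda)^{-1}\big)+O(d^{-1/12})$.

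\emph{Steps 2--3 (peel off the rank-one term; replace $\bM_0^{(2)}$).} Under Assumption~\ref{assumption:C8} we have $f'(0)=f^{(3)}(0)=0$, hence $a_1=0$, so $\bK^{(2)}_\lambda=a_0\one\one^\top+\bQ$ with $\bQ:=a_2\bX^{(2)}\bX^{(2)\top}+(a+\lambda)\bI_n$ (using $\bX^{(2)}\bX^{(2)\top}=(\bX\bX^\top)^{\odot2}$), and $\bQ\succeq(a+\lambda)\bI_n\succ0$ for large $d$. By Sherman--Morrison--Woodbury, $(\bK^{(2)}_\lambda)^{-1}=\bQ^{-1}-\bP$ with $\bP=a_0\bQ^{-1}\one\one^\top\bQ^{-1}/(1+a_0\one^\top\bQ^{-1}\one)$ of rank at most one and $\|\bP\|\leq\|\bQ^{-1}\|\lesssim1$ (using $a_0\geq0$ from Assumption~\ref{assump:analytic}). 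Expanding $(\bQ^{-1}-\bP)\bM_0^{(2)}(\bQ^{-1}-\bP)$, the two terms involving $\bP$ each have rank $\leq1$, hence nuclear norm $=$ operator norm $\lesssim\|\bP\|\,\|\bM_0^{(2)}\|\,\|\bQ^{-1}\|\lesssim d^3$; after multiplication by $2a_2^2\lesssim d^{-4}$ they contribute $O(d^{-1})$. Thus $\cV=2a_2^2\Tr\big(\bQ^{-2}\bM_0^{(2)}\big)+O(d^{-1/12})$. Finally, Lemma~\ref{lemm:M_02_decomp} gives $\bM_0^{(2)}=\tfrac12\bX^{(2)}\bSigma^{(2)}\bX^{(2)\top}-\tfrac12\sum_{k=1}^d\bSigma_{kk}^2\vnu_k\vnu_k^\top$; the main term yields $2a_2^2\cdot\tfrac12\Tr\big(\bQ^{-2}\bX^{(2)}\bSigma^{(2)}\bX^{(2)\top}\big)=\cV_0$ exactly, while the correction contributes $a_2^2\sum_k\bSigma_{kk}^2\,\vnu_k^\top\bQ^{-2}\vnu_k\leq a_2^2\|\bQ^{-1}\|^2\sum_k\bSigma_{kk}^2\|\vnu_k\|^2\lesssim a_2^2\cdot d\cdot d^{2+1/11}\lesssim d^{-10/11}$, using $\|\vnu_k\|\lesssim d^{1+1/22}$ from Lemma~\ref{lemm:M_02_decomp}. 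Summing the three errors gives $|\cV-\cV_0|\lesssim d^{-1/12}$.

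The main obstacle is the bookkeeping of dimension orders: the scalar $a_2^2\asymp d^{-4}$ must absorb the rather large operator norm ($\asymp d^3$) and trace ($\asymp d^4$) of $\bM_0^{(2)}$ as well as the operator norms of the rank-one by-products produced by Sherman--Morrison--Woodbury, so at each step one cannot merely factor out $\|\bM_0^{(2)}\|$ but must exploit either the PSD trace inequality (Step 1) or the rank-one structure (Step 2) to land at an error of the required order $d^{-1/12}$ rather than something larger. For the probability, one takes a union bound over the events of Theorem~\ref{thm:concentration}, Lemma~\ref{lem:asym_sigma_min}, Lemma~\ref{lem:bound_XX_2}, and Lemma~\ref{lemm:M_02_decomp}; the binding term is the $1-O(d^{-1/48})$ bound from Lemma~\ref{lem:bound_XX_2}, which yields the stated failure probability $O(d^{-1/48})$.
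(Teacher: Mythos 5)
Your proof is correct and is essentially the same resolvent-perturbation argument as the paper's, just with the last two steps in the opposite order: you remove the rank-one piece $a_0\one\one^\top$ from $\bK^{(2)}_\lambda$ \emph{before} replacing $\bM_0^{(2)}$ via Lemma~\ref{lemm:M_02_decomp}, whereas the paper first applies Lemma~\ref{lemm:M_02_decomp} and only then invokes Sherman--Morrison--Woodbury; both orderings run to the same bounds. Two small remarks. First, in Step 1 you control the $\bK\to\bK^{(2)}$ swap via the direct trace bound $\Tr\bM_0^{(2)}=\sum_i(\bx_i^\top\bSigma\bx_i)^2\lesssim d^4$, while the paper instead uses the PSD sandwich $\|\bK_\lambda^{-1/2}(a_2\bM_0^{(2)})\bK_\lambda^{-1/2}\|\lesssim1$ from Lemma~\ref{lemm:inverseK_mix_bound} together with $\Tr\le n\|\cdot\|$; both yield the same $O(d^{-1/12})$ and your version is slightly more self-contained. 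Second, a trivial slip: expanding $(\bQ^{-1}-\bP)\bM_0^{(2)}(\bQ^{-1}-\bP)$ produces \emph{three} terms involving $\bP$ (two cross terms and $\bP\bM_0^{(2)}\bP$), not two, but all three have rank at most one and satisfy the same nuclear-norm bound $\lesssim d^3$, so the conclusion is unaffected.
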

\begin{proof}
Denote that $\bK _{\lambda,(2)}:= (\bK^{(2)}+\lambda\bI) $. Because of \eqref{eq:K_lambda_-1}, we know that $\|\bK _{\lambda,(2)}^{-1}\|\lesssim 1$ and $ \|\bK _{\lambda}^{-1}\|\lesssim 1$. Denote by $\cV^{(2)}:=  2a_2^2\Tr\bK _{\lambda,(2)}^{-1} \bM_0^{(2)} \bK _{\lambda,(2)}^{-1}$. We first control
\begin{align}
    \left|\cV-\cV^{(2)}\right|\lesssim  \frac{a_2}{d^2}|\Tr(\bK_{\lambda}^{-1}- \bK^{-1}_{\lambda,(2)})\bM_0^{(2)} \bK_{\lambda}^{-1} |+\frac{a_2}{d^2}|\Tr \bK^{-1}_{\lambda,(2)}\bM_0^{(2)} (\bK_{\lambda}^{-1}- \bK^{-1}_{\lambda,(2)})|.\label{eq:V-V(2)1}
\end{align}
Notice that
\begin{align}
    \frac{a_2}{d^2}|\Tr(\bK_{\lambda}^{-1}- \bK^{-1}_{\lambda,(2)})\bM_0^{(2)} \bK_{\lambda}^{-1} |=~&\frac{a_2}{d^2}|\Tr\bK^{-1}_{\lambda,(2)}(\bK^{(2)}-\bK)\bK_{\lambda}^{-1}\bM_0^{(2)} \bK_{\lambda}^{-1} |\\
    \lesssim~& \frac{1}{d^2}\|\bK^{(2)}-\bK\|\cdot |\Tr \bK_{\lambda}^{-1}(a_2\bM_0^{(2)}) \bK_{\lambda}^{-1} |\\
    \lesssim~& d^{-\frac{1}{12}}\cdot \frac{n}{d^2}\norm{\bK_{\lambda}^{-1}(a_2\bX\bX^\top)^{\odot 2} \bK_{\lambda}^{-1}}\lesssim d^{-\frac{1}{12}},\label{eq:V-V(2)2}
\end{align}
with probability at least $1-O(d^{-1/2})$, where we apply Lemma~\ref{lemm:inverseK_mix_bound}  and 
 Theorem~\ref{thm:concentration}. We can get a similar argument for the second term:
 \begin{align}
     \frac{a_2}{d^2}|\Tr \bK^{-1}_{\lambda,(2)}\bM_0^{(2)} (\bK_{\lambda}^{-1}- \bK^{-1}_{\lambda,(2)})|\le ~& \frac{a_2}{d^2}|\Tr \bK^{-1}_{\lambda,(2)}\bM_0^{(2)} \bK^{-1}_{\lambda,(2)}(\bK - \bK^{(2)})\bK_{\lambda}^{-1}|\\
     \lesssim~& d^{-\frac{1}{12}}.\label{eq:V-V(2)3}
 \end{align}

Next, we approximate $\cV^{(2)}$ by $\cV_0$. Let us denote by $\cV_0^{(2)}:=a_2^2\Tr\bK _{\lambda,(2)}^{-2} \bX^{(2)}\bSigma^{(2)}\bX^{(2)\top}$. From Lemma~\ref{lemm:M_02_decomp}, we know that
$
    \cV^{(2)}= \cV_0^{(2)}- \sum_{k=1}^d\bSigma_{kk}^2a_2^2 \vnu_k^\top  \bK _{\lambda,(2)}^{-2} \vnu_k$, where the second term on the right-hand side satisfies
\begin{align}
    \left|\sum_{k=1}^d\bSigma_{kk}^2a_2^2 \vnu_k^\top  \bK _{\lambda,(2)}^{-2} \vnu_k\right|\lesssim ~& \frac{1}{d^3} \max_{k\in [d]} \vnu_k^\top  \bK _{\lambda,(2)}^{-2} \vnu_k \lesssim \frac{1}{d^3} \max_{k\in [d]} \|\vnu_k\|^2 \lesssim d^{-\frac{10}{11}},\label{eq:V-V(2)4}
\end{align} with probability at least $1-d^{-1}$. Thus, it suffices to control the difference between $\cV_0^{(2)}$ and $\cV_0$. Notice that 
$
   \cV_0^{(2)} =  a_2^2\Tr \big(a_0\1\1^\top+\bK_*\big)^{-2}  \bX^{(2)}\bSigma^{(2)}\bX^{(2)\top}$, where we define
\begin{equation}\label{eq:K_*_no_11}
    \bK_*:=a_2\bX^{(2)}\bX^{(2)\top}+(\lambda+a )\bI.
\end{equation}
Analogously to the proof of Lemma~\ref{lemm:oneK2one}, the Sherman-Morrison-Woodbury formula implies
$\big(a_0\1\1^\top+\bK_*\big)^{-1} =\bK_*^{-1}-a_0\frac{\bK_*^{-1}\1\1^\top\bK_*^{-1}}{1+a_0\1^\top\bK_*^{-1}\1}.
$ Thus, we have
\begin{small}
\begin{align}
   &\cV_0^{(2)} = \cV_0\\
  &   +\frac{a_2^2 (a_0\1^\top \bK_*^{-2}\1)\cdot (a_0\1^\top \bK_*^{-1}\bX^{(2)}\bSigma^{(2)}\bX^{(2)\top}\bK_*^{-1}\1)}{(1+a_0\1^\top\bK_*^{-1}\1)^2} -\frac{2a_2^2\cdot a_0\1^\top \bK_*^{-1}\bX^{(2)}\bSigma^{(2)}\bX^{(2)\top}\bK_*^{-2}\1 }{1+a_0\1^\top\bK_*^{-1}\1}.
\end{align}
\end{small}
Hence, we only need to control the last two terms on the right-hand side of the above equation.
By Assumption~\ref{assump:analytic} and Lemma~\ref{lem:bound_XX_2}, we know 
$cd^{-1}\bI\preccurlyeq \bK_*^{-1}\preccurlyeq C\bI$,
 with probability at least $1-O(d^{-1/48})$, for some constants $c,C>0$. And Lemma~\ref{lemm:inverseK_mix_bound} indicates that
\[a_2\bK_*^{-1/2}\bX^{(2)}\bSigma^{(2)}\bX^{(2)\top}\bK_*^{-1/2}\preccurlyeq C\cdot a_2\bK_*^{-1/2}(\bX\bX^\top)^{\odot 2} \bK_*^{-1/2}\preccurlyeq C.\]
Therefore, 
\begin{align}
    &\frac{a_2^2 (a_0\1^\top \bK_*^{-2}\1)\cdot (a_0\1^\top \bK_*^{-1}\bX^{(2)}\bSigma^{(2)}\bX^{(2)\top}\bK_*^{-1}\1) }{(1+a_0\1^\top\bK_*^{-1}\1)^2}\\
    = ~& \frac{ a_0\1^\top \bK_*^{-1}\1 }{1+a_0\1^\top\bK_*^{-1}\1} \frac{a_2\cdot ( a_0\1^\top \bK_*^{-1}(a_2\bX^{(2)}\bSigma^{(2)}\bX^{(2)\top})\bK_*^{-1}\1) }{ 1+a_0\1^\top\bK_*^{-1}\1 }\\
    \le~& Ca_2\cdot\frac{ a_0\1^\top\bK_*^{-1}\1}{{ 1+a_0\1^\top\bK_*^{-1}\1 }}\lesssim \frac{1}{d^2}.
\end{align} 
Similarly, we have
$\frac{2a_2^2\cdot a_0\1^\top \bK_*^{-1}\bX^{(2)}\bSigma^{(2)}\bX^{(2)\top}\bK_*^{-2}\1 }{1+a_0\1^\top\bK_*^{-1}\1}\le 2Ca_2\frac{ a_0\1^\top \bK_*^{-1}\1 }{1+a_0\1^\top\bK_*^{-1}\1}  \lesssim \frac{1}{d^2}$. Hence, we complete the proof of this lemma.
\end{proof}

\begin{lemma}\label{lemm:approx_bias}
Denote $$\cB_0:= \frac{2}{d^2 } \Tr\bSigma^{(2)}  + \frac{ 2a_2^2}{d^2}\Tr \bK_{*}^{-1}\bX^{(2)}\bSigma^{(2)}\bX^{(2)\top}\bK_{*}^{-1}(\bX\bX^\top)^{\odot 2}-\frac{4a_2}{d^2}\Tr\bK_{*}^{-1}\bX^{(2)}\bSigma^{(2)}\bX^{(2)\top}
$$ 
where $\bK_*$ is defined in \eqref{eq:K_*_no_11}. Under the assumptions of Theorem~\ref{thm:concentration}, there exist some constants $c,C>0$ such that 
$
    \left|\cB-\cB_0\right|\le C d^{-\frac{1}{12}},
$
with probability at least $1-cd^{-\frac{1}{48}}$.
\end{lemma}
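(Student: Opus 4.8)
The plan is to show $\cB$ is close to $\cB_0$ by replacing the full kernel resolvent $\bK_\lambda^{-1}$ by the resolvent of the quadratic kernel and then by the resolvent $\bK_*^{-1}$ of the ``stripped'' quadratic kernel in \eqref{eq:K_*_no_11}, in the same spirit as the proof of Lemma~\ref{lemm:approx_variance}. Recall from the proof of Lemma~\ref{lemm:risk_decomp} that
\[
\cB = \frac{2}{d^2}(\Tr\bSigma)^2 + \frac{4a_2^2}{d^2}\Tr\bK_\lambda^{-1}\bM_0^{(2)}\bK_\lambda^{-1}(\bX\bX^\top)^{\odot 2} - \frac{4a_2}{d^2}\Tr\bX^{(2)}\bSigma^{(2)}\bX^{(2)\top}\bK_\lambda^{-1}.
\]
First I would handle the constant term: $\frac{2}{d^2}(\Tr\bSigma)^2$ versus $\frac{2}{d^2}\Tr\bSigma^{(2)}$. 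Since $\bSigma$ is diagonal, \eqref{eq:defSigma2} gives $\Tr\bSigma^{(2)} = \sum_{i<j}2\bSigma_{ii}\bSigma_{jj} + \sum_i 3\bSigma_{ii}^2 = (\Tr\bSigma)^2 + 2\Tr(\bSigma^2)$, and since $\|\bSigma\|\le C_3$ this differs from $(\Tr\bSigma)^2$ by $O(d)$, hence the two terms differ by $O(d^{-1})$.

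Next I would perform the two resolvent swaps for the remaining two traces. Using the resolvent identity $\bK_\lambda^{-1} - \bK_{\lambda,(2)}^{-1} = \bK_{\lambda,(2)}^{-1}(\bK^{(2)}-\bK)\bK_\lambda^{-1}$ together with Theorem~\ref{thm:concentration} ($\|\bK-\bK^{(2)}\|\lesssim d^{-1/12}$), \eqref{eq:K_lambda_-1}, Lemma~\ref{lemm:inverseK_mix_bound} (to control $\frac1{d^2}\|\bK_\lambda^{-1}(a_2(\bX\bX^\top)^{\odot 2})\bK_\lambda^{-1}\| = O(1)$), and the spectral bounds $\|(\bX\bX^\top)^{\odot 2}\|\lesssim d^3$, $\|\bX^{(2)}-\E\bX^{(2)}\|\lesssim d^{1+1/12}$ from Lemma~\ref{lem:bound_XX_2}, each swap $\bK_\lambda^{-1}\rightsquigarrow\bK_{\lambda,(2)}^{-1}$ costs $O(d^{-1/12})$, exactly as in \eqref{eq:V-V(2)2}--\eqref{eq:V-V(2)3}. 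The trace with $\bX^{(2)}\bSigma^{(2)}\bX^{(2)\top}$ in place of $\bM_0^{(2)}$ needs the extra step: by Lemma~\ref{lemm:M_02_decomp}, $2\bM_0^{(2)} = \bX^{(2)}\bSigma^{(2)}\bX^{(2)\top} - \sum_k\bSigma_{kk}^2\vnu_k\vnu_k^\top$, and the rank-$d$ correction contributes $\frac{1}{d^3}\max_k\|\vnu_k^\top \bK_{\lambda,(2)}^{-1}(\cdots)\bK_{\lambda,(2)}^{-1}\vnu_k\|\lesssim \frac{1}{d^3}\max_k\|\vnu_k\|^2\cdot\|\bK_{\lambda,(2)}^{-1}(a_2(\bX\bX^\top)^{\odot2})\bK_{\lambda,(2)}^{-1}\|\lesssim d^{-10/11}$ using $\|\vnu_k\|\lesssim d^{1+1/22}$, with the linear-in-$d$ bound $\frac{2}{d^2}\Tr\bX^{(2)}\bSigma^{(2)}\bX^{(2)\top} = O(1)$ under control analogously. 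Then I would swap $\bK_{\lambda,(2)}^{-1}$ for $(a_0\one\one^\top + \bK_*)^{-1}$ (which is just regrouping the rank-$1$ term) and finally remove the rank-$1$ $a_0\one\one^\top$ via Sherman--Morrison--Woodbury exactly as in the last display block of Lemma~\ref{lemm:approx_variance}'s proof; using $cd^{-1}\bI\preccurlyeq\bK_*^{-1}\preccurlyeq C\bI$, Lemma~\ref{lemm:inverseK_mix_bound} to bound $a_2\bK_*^{-1/2}\bX^{(2)}\bSigma^{(2)}\bX^{(2)\top}\bK_*^{-1/2}\preccurlyeq C$, and $a_0\one^\top\bK_*^{-1}\one/(1+a_0\one^\top\bK_*^{-1}\one)\le 1$ with $a_0\one^\top\bK_*^{-2}\one \lesssim d^{1+1/24}/\|\one\|^2 \lesssim d^{-23/24}$, each spurious term is $O(d^{-1})$ (note the extra $\frac{1}{d^2}$ prefactors help here). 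Collecting all error contributions, the dominant one is $O(d^{-1/12})$, yielding $|\cB-\cB_0|\le Cd^{-1/12}$ with probability $1-cd^{-1/48}$.

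The main obstacle I anticipate is not any single estimate but the bookkeeping: the term $\frac{1}{d^2}\Tr\bK_\lambda^{-1}\bX^{(2)}\bSigma^{(2)}\bX^{(2)\top}\bK_\lambda^{-1}(\bX\bX^\top)^{\odot 2}$ is a product of \emph{three} large matrices sandwiched by resolvents, so one must be careful that the swap $\bK_\lambda^{-1}\rightsquigarrow\bK_*^{-1}$ is applied on both sides and that at each stage the ``leftover'' factor is grouped as $a_2(\bX\bX^\top)^{\odot 2}\bK^{-1}$ or $\bK^{-1}a_2(\bX^{(2)}\bSigma^{(2)}\bX^{(2)\top})\bK^{-1}$, which by Lemma~\ref{lemm:inverseK_mix_bound} has $O(1)$ (not $O(d^2)$) operator norm after dividing by $d^2$ — getting this grouping right is what keeps the final bound at $d^{-1/12}$ rather than a positive power of $d$. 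I would also need to double-check that $\bSigma^{(2)}\preccurlyeq C\bI$ (true since $\bSigma$ diagonal with bounded norm, by \eqref{eq:defSigma2}) so that $a_2\bK_*^{-1/2}\bX^{(2)}\bSigma^{(2)}\bX^{(2)\top}\bK_*^{-1/2}\preccurlyeq C a_2\bK_*^{-1/2}(\bX\bX^\top)^{\odot 2}\bK_*^{-1/2}\preccurlyeq C$ via Lemma~\ref{lemm:inverseK_mix_bound}, which is what makes the Sherman--Morrison cross terms manageable.
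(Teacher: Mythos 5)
Your proposal is correct and follows essentially the same route as the paper: first swap $\bK_\lambda^{-1}\rightsquigarrow\bK_{\lambda,(2)}^{-1}$ via the resolvent identity plus Theorem~\ref{thm:concentration} (costing $O(d^{-1/12})$), then use Lemma~\ref{lemm:M_02_decomp} to trade $\bM_0^{(2)}$ for $\bX^{(2)}\bSigma^{(2)}\bX^{(2)\top}$ and absorb both the $\vnu_k$ correction and the $(\Tr\bSigma)^2\rightsquigarrow\Tr\bSigma^{(2)}$ shift into a $O(d^{-10/11})$ remainder, and finally strip the rank-one $a_0\one\one^\top$ by Sherman--Morrison--Woodbury using Lemma~\ref{lemm:inverseK_mix_bound} to keep each resolvent-sandwiched factor $O(1)$. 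The only nit is notational: the quantity controlled by Lemma~\ref{lemm:inverseK_mix_bound} is $\|\bK_\lambda^{-1}(a_2(\bX\bX^\top)^{\odot 2})\bK_\lambda^{-1}\|=O(1)$ (the $n/d^2=O(1)$ factor then comes from the trace), so the $\tfrac{1}{d^2}$ you wrote in front of that norm should not be there; this does not affect the argument.
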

\begin{proof}
Recall $\bK_{\lambda,(2)} = (\bK^{(2)}+\lambda\bI) $ and the definition of $\cB$ in Lemma~\ref{lemm:risk_decomp}. Define
\[\cB^{(2)}:=\frac{2}{d^2}(\Tr\bSigma)^2 + \frac{4a_2^2}{d^2}\Tr \bK_{\lambda,(2)}^{-1}\bM_0^{(2)}\bK_{\lambda,(2)}^{-1} (\bX\bX^\top)^{\odot 2}-\frac{4a_2}{d^2}\Tr\bK_{\lambda,(2)}^{-1}\bX^{(2)}\bSigma^{(2)}\bX^{(2)\top}.\]
Then, following the same analysis as \eqref{eq:V-V(2)1}, \eqref{eq:V-V(2)2}, and \eqref{eq:V-V(2)3}, we can obtain that
\begin{align}
       |\cB^{(2)}- \cB|
       \lesssim ~&   \frac{a_2^2}{d^2}|\Tr(\bK_{\lambda}^{-1}-\bK_{\lambda,(2)}^{-1})\bM_0^{(2)}\bK_{\lambda}^{-1}(\bX\bX^\top)^{\odot 2}|\\
       ~&+\frac{a_2^2}{d^2}|\Tr \bK_{\lambda,(2)}^{-1} \bM_0^{(2)}(\bK_{\lambda}^{-1}-\bK_{\lambda,(2)}^{-1})(\bX\bX^\top)^{\odot 2}|\\
       ~&+\frac{ a_2}{d^2}|\Tr\bX^{(2)}\bSigma^{(2)}\bX^{(2)\top}(\bK_{\lambda}^{-1}-\bK_{\lambda,(2)}^{-1})|\\
       \lesssim ~&  \|\bK-\bK^{(2)}\|\cdot\big(a_2^2 \|\bK_{\lambda}^{-1}\bM_0^{(2)}\bK_{\lambda,(2)}^{-1}(\bX\bX^\top)^{\odot 2}\bK_{\lambda}^{-1}\|\\
       &+a_2^2 \|\bK_{\lambda,(2)}^{-1}\bM_0^{(2)}\bK_{\lambda,(2)}^{-1}(\bX\bX^\top)^{\odot 2}\bK_{\lambda}^{-1}\|+a_2 \|\bK_{\lambda }^{-1}\bX^{(2)}\bSigma^{(2)}\bX^{(2)\top}\bK_{\lambda,(2)}^{-1} \|\big)\\
       \lesssim ~& d^{-\frac{1}{12}},
\end{align}with probability at least $1-O(d^{-1/2})$,
where we apply Theorem~\ref{thm:concentration} and Lemma~\ref{lemm:inverseK_mix_bound}. Next, we apply Lemma~\ref{lemm:M_02_decomp} and define
\begin{align}
    \cB^{(2)} =~&\cB_0^{(2)}-\bDelta_{\cB},\\
    \cB_0^{(2)}:=~& \frac{2}{d^2 } \Tr\bSigma^{(2)}  + \frac{ 2a_2^2}{d^2}\Tr \bK_{\lambda,(2)}^{-1}\bX^{(2)}\bSigma^{(2)}\bX^{(2)\top}\bK_{\lambda,(2)}^{-1} (\bX\bX^\top)^{\odot 2}\\
    ~&-\frac{4a_2}{d^2}\Tr\bK_{\lambda,(2)}^{-1}\bX^{(2)}\bSigma^{(2)}\bX^{(2)\top},\\
    \bDelta_{\cB} :=~& \frac{4\Tr(\bSigma^2)}{d^2}+\frac{ a_2^2}{d^2}\sum_{k=1}^d \bSigma_{kk}^2\vnu_k^\top\bK_{\lambda,(2)}^{-1} (\bX\bX^\top)^{\odot 2}\bK_{\lambda,(2)}^{-1} \vnu_k.
\end{align}
Then, analogously to \eqref{eq:V-V(2)4}, we can have
$ |\bDelta_{\cB} |\lesssim \frac{4\Tr(\bSigma^2)}{d^2}+\frac{ a_2 }{d^2}\sum_{k=1}^d \vnu_k^\top\bK_{\lambda,(2)}^{-1}   \vnu_k\lesssim d^{-\frac{10}{11}}$.
with probability at least $1-O(d^{-1})$. Finally, the difference between $\cB_0^{(2)}$ and $\cB_0$ can be controlled similar as the bound of $|\cV_0-\cV_0^{(2)}|$ from the proof of Lemma~\ref{lemm:approx_variance}. We ignore the details for the last step here.
\end{proof}

\begin{proofoftheorem}{\ref{thm:test_limit}}
Based on all above Lemmas~\ref{lemma:diff_R_Rtilde},~\ref{lemm:risk_decomp},~\ref{lemm:approx_variance}, and \ref{lemm:approx_bias}, we have already known that $|\cR_0-\cR(\lambda)|\to 0$ in probability, as $d^2/(2n)\to\alpha$ and  $d\to\infty$, where $\cR_0:=\sigma_{\bepsilon}^2\cV_0+\cB_0$.
Here $\cV_0$ and $\cB_0$ are defined in Lemmas~\ref{lemm:approx_variance}, and \ref{lemm:approx_bias}, respectively. Hence, to prove Theorem~\ref{thm:test_limit}, it suffices to analyze the asymptotic behavior of $\cR_0$, as $d^2/(2n)\to\alpha$ and $d\to\infty$. Recall the definition of $\bK_*$ in \eqref{eq:K_*_no_11} and $(\bX\bX^\top)^{\odot 2} = \bX^{(2)}\bX^{(2)\top}$. As $d\to\infty$ and $d^2/(2n)\to\alpha\in(0,\infty)$, it is easy to check that
\begin{small}
\begin{align}
    \cB_0 =~&   \frac{2}{d^2 } \Tr\bSigma^{(2)}  + \frac{ 2a_2^2}{d^2}\Tr \bK_{*}^{-1}\bX^{(2)}\bSigma^{(2)}\bX^{(2)\top}\bK_{*}^{-1}(\bX\bX^\top)^{\odot 2}-\frac{4a_2}{d^2}\Tr\bK_{*}^{-1}\bX^{(2)}\bSigma^{(2)}\bX^{(2)\top}\\
    =~& \frac{2}{d^2}\Tr \big(\bI-a_2\bX^{(2)\top}\bK_{*}^{-1}\bX^{(2)}\big)\bSigma^{(2)}\big(\bI-a_2\bX^{(2)\top}\bK_{*}^{-1}\bX^{(2)}\big)\\
    =~& \frac{2(a+\lambda)^2}{d^2}\Tr\big((a +\lambda)\bI+a_2\bX^{(2)\top}\bX^{(2)}\big)^{-1}\bSigma^{(2)} \big((a +\lambda)\bI+a_2\bX^{(2)\top}\bX^{(2)}\big)^{-1}\\
     =~& \frac{2(a_*+\lambda)^2}{d^2}\Tr\big((a +\lambda)\bI+a_2\overline\bX^{(2)\top}\overline\bX^{(2)}\big)^{-1}\bSigma^{(2)} \big((a +\lambda)\bI+a_2\overline\bX^{(2)\top}\overline\bX^{(2)}\big)^{-1}+o(1),
\end{align}
\end{small}
and
\begin{align}
    \cV_0 =~&a_2^2\Tr\big(a_2\bX^{(2)}\bX^{(2)\top}+(\lambda+a )\bI\big)^{-2}  {\bX}^{(2)}\bSigma^{(2)} {\bX}^{(2)\top}\label{eq:cV_0}\\
    =~& a_2 \Tr\big((a +\lambda)\bI+a_2\bX^{(2)\top}\bX^{(2)}\big)^{-1}\bSigma^{(2)} \big((a+\lambda)\bI+a_2\bX^{(2)\top}\bX^{(2)}\big)^{-1}\big(a_2\bX^{(2)\top}\bX^{(2)}\big)\\=~&  a_2 \Tr\big((a +\lambda)\bI+a_2\bX^{(2)\top}\bX^{(2)}\big)^{-1}\bSigma^{(2)}\\
    ~&-  a_2(a+\lambda)\Tr\big((a+\lambda)\bI+a_2\bX^{(2)\top}\bX^{(2)}\big)^{-1}\bSigma^{(2)} \big((a+\lambda)\bI+a_2\bX^{(2)\top}\bX^{(2)}\big)^{-1}\\
    =~&  a_2 \Tr\big((a_*+\lambda)\bI+a_2\overline\bX^{(2)\top}\overline\bX^{(2)}\big)^{-1}\bSigma^{(2)}\\
    ~&- a_2(a_*+\lambda)\Tr\big((a+\lambda)\bI+a_2\overline\bX^{(2)\top}\overline\bX^{(2)}\big)^{-1}\bSigma^{(2)} \big((a_*+\lambda)\bI+a_2\overline\bX^{(2)\top}\overline\bX^{(2)}\big)^{-1}+o(1),
\end{align} 
where \(\bSigma^{(2)}\) is the population covariance matrix of $\bx_i^{(2)}$ defined in \eqref{eq:defSigma2}. Recall that $\bSigma^{(2)}$ has a limiting spectral distribution $\mu_{\bSigma^{(2)}}$ as $d^2/(2n)\to\alpha$ and $n\to\infty$. Therefore, we can apply Lemma~\ref{eq:deterministic_equ} to conclude this theorem.
 
\end{proofoftheorem}

\subsection{Proof of Theorem~\ref{thm:test_limit_deterministic}}
Following the same notions in Section~\ref{sec:quadratic_appro}, in the setting of Theorem~\ref{thm:test_limit_deterministic}, we know that  
\begin{align}\label{eq:decomp_test_3}
    \cR(\lambda)
    =~&\E_{\bx} [|\vf_*(\bx)|^2]+ \vf_*^\top (\bK+\lambda\bI)^{-1} \bM (\bK+\lambda\bI)^{-1}\vf_*\\
    ~&+ \sigma^2_{\bepsilon} \Tr(\bK+\lambda\bI)^{-1} \bM (\bK+\lambda\bI)^{-1}   -2\bv^\top  (\bK+\lambda\bI)^{-1}\vf_* . 
\end{align}
Let us redefine that
\begin{align}
    \bv^{(2)}:=~& \frac{1}{d}\Tr(\bSigma^2) \bb_0   +\frac{2 a_2}{d}\bv^{(2)}_0,\quad
    \bv^{(2)}_0: =[\bx_1^\top\bSigma^3\bx_1,\ldots,\bx_n^\top\bSigma^3\bx_n]^\top.\label{eq:v_02}
\end{align}
In the following, we first provide the approximations of $\bv$ in terms of $\bv^{(2)}$. And analogously to Lemma~\ref{lemma:diff_R_Rtilde}, in the following, we will use 
\begin{align}\label{eq:decomp_test_3_appro}
   \widetilde \cR(\lambda)
    =~&\E_{\bx} [|\vf_*(\bx)|^2]+ \vf_*^\top (\bK+\lambda\bI)^{-1} \bM^{(2)}(\bK+\lambda\bI)^{-1}\vf_*\\
    ~&+ \sigma^2_{\bepsilon} \Tr(\bK+\lambda\bI)^{-1} \bM^{(2)} (\bK+\lambda\bI)^{-1}   -2\bv^{(2)\top}  (\bK+\lambda\bI)^{-1}\vf_*
\end{align}
to approximate generalization error $\cR(\lambda)$. 
Notice that, under the assumptions of Theorem~\ref{thm:test_limit_deterministic}, $\vf_*=\frac{1}{d}\vmu$ where $\vmu$ is defined by \eqref{eq:vmu}, and 
\[\bM^{(2)} = \bb_0\bb_0^\top+\diag(\bb_1-\widetilde\bb_1)\bX\bSigma\bX^\top\diag(\bb_1-\widetilde\bb_1)+2a_2^2\bM_0^{(2)}.\]

\begin{lemma}\label{lemma:diff_v_det}
Under the same assumptions as Theorem~\ref{thm:concentration}, we have that
        $\|\bv-\bv^{(2)}\|\le ~ \frac{c }{d^{2}}$,
    with probability  at least $1-O(d^{-1})$ for some constant $c>0$.
\end{lemma}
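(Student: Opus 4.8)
The plan is to mimic the proof of Lemma~\ref{lemma:diff_v} (the random--teacher case), but now with the deterministic quadratic teacher $f_*(\bx)=\bx^\top\bSigma\bx/d$ replacing the random one; the structure is essentially identical, only the relevant Wick/moment computations change. First I would write, for fixed $i,j\in[n]$, the entry $\bv_i=\E_\bx[K(\bx,\bx_i)f_*(\bx)]$ and plug in the Taylor expansion \eqref{eq:taylor_expansion} of $K(\bx_i,\bx)$ in terms of the $\T_i^{(k)}$ together with the remainder term involving $f^{(9)}(\zeta_i)$. Since $f_*(\bx)=\bx^\top\bSigma\bx/d$ is a degree-$2$ polynomial in $\bx$ and $\T_i^{(k)}$ corresponds to the $k$-th Hermite polynomial, by the orthogonality Lemma~\ref{lem:hermite} (using the moment-matching Assumption~\ref{assumption: testdata}, which gives enough matched moments for these low-degree computations) only the $k=0$ and $k=2$ terms survive. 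The $k=0$ term produces $b_{0,i}\cdot\E_\bx[f_*(\bx)]=b_{0,i}\Tr(\bSigma^2)/d$, and the $k=2$ term produces, via Lemma~\ref{lem:quadratic_moments_eq2} applied with Gaussian $\bx$ (justified by moment matching), a contribution proportional to $b_{2,i}\cdot\bx_i^\top\bSigma^3\bx_i/d$ — matching exactly the definition of $\bv^{(2)}$ in \eqref{eq:v_02} with leading coefficient $a_2$.

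Next I would collect the error terms. There are two sources: (a) replacing $b_{0,i}$ and $b_{2,i}$ by their idealized values (i.e.\ $\tilde b_{0,i}$ and $a_2$), which is controlled by Lemma~\ref{lemm:b_k,i} giving $|a_2-b_{2,i}|\lesssim d^{-3.4}$ and the analogous bound for $b_{0,i}$ under the concentration $|t_i-\Tr\bSigma^2|\lesssim d^{1/2+1/30}$ of \eqref{eq:t_i_concen}; and (b) the Taylor remainder $\E_{\bx}[f^{(9)}(\zeta_i)\langle\bx_i,\bx\rangle^9 f_*(\bx)]/(9!d^9)$, bounded using $|f^{(9)}|\le C$ from Assumption~\ref{assumption:C8} together with $\E_\bx[\langle\bx_i,\bx\rangle^9\cdot(\bx^\top\bSigma\bx)]\lesssim \|\bw_i\|^9\cdot d$ by Wick's formula (Lemma~\ref{lemm:wick}), and then $\E\|\bw_i\|^{2s}\lesssim d^s$ from \eqref{eq:wi}, Markov's inequality, and a union bound over $i\in[n]$. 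For the $k=2$ term I also need to control the difference between using $\bx_i^\top\bSigma^3\bx_i$ (which is $\|\bSigma^{3/2}\w_i\|$-type) versus the truncated coefficient; this is handled by the norm bound on $\vmu$-like vectors. Collecting exponents: $b_{2,i}$-error against $\|(\bX\bSigma\bX^\top)^{\odot 2}\|\lesssim d^3$ gives $d^{-0.4}\cdot d^{-3.4+2}=o(d^{-2})$-scale after the $1/d$ prefactor, the $b_{0,i}$-error against $\|\vmu\|\lesssim d^{1.6}$ gives a comparable bound, and the remainder is $O(n d^{-11}\max_i\|\bw_i\|^{11})=O(d^{-9}\cdot d^{5.5})=o(1)$ entrywise, hence $O(d^{-9/2})$ in Frobenius norm after union bound. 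The claimed $\|\bv-\bv^{(2)}\|\le c/d^2$ then follows by taking the worst of these.

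The main obstacle I expect is purely bookkeeping: making sure the remainder estimates and the coefficient-replacement estimates all come out at least $O(d^{-2})$ in operator (or Frobenius) norm after the leading $1/d$ factor from $f_*$ — in particular tracking that the degree-$2$ teacher contributes only one extra power of $\|\bw\|$ (or one extra $d$) compared with the degree-$0$ case, which is what keeps the remainder subleading. A secondary subtlety is verifying that Assumption~\ref{assumption: testdata} (matching the first $18$ moments of $\bz$ with Gaussian) supplies enough matched moments for the Wick computation of $\E_\bx[\langle\bx_i,\bx\rangle^9(\bx^\top\bSigma\bx)]$ and for the exact evaluation of the $k=2$ Hermite term via Lemma~\ref{lem:quadratic_moments_eq2}; since those involve moments of order at most $11$ and $4$ respectively in $\bx$, the assumption is comfortably sufficient. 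Once $\|\bv-\bv^{(2)}\|\lesssim d^{-2}$ is established, the rest of the proof of Theorem~\ref{thm:test_limit_deterministic} proceeds exactly as in Section~\ref{sec:proof_test_random}: substitute $\bv^{(2)}$ and $\bM^{(2)}$ into \eqref{eq:decomp_test_3}, use Lemmas~\ref{lem:bound_XX_2}, \ref{lemm:oneK2one}, \ref{lemm:uKu}, and the resolvent identities to discard the low-rank and cross terms, observe that the deterministic-teacher bias term now cancels against the cross term $-2\bv^{(2)\top}\bK_\lambda^{-1}\bf_*$ (this is the reason the bias vanishes, in contrast to the random-teacher case), and finally apply the deterministic equivalence Lemma~\ref{eq:deterministic_equ} to identify the limit $\sigma_\bepsilon^2\cV(\lambda_*)$.
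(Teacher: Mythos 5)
Your overall plan — Taylor-expand $K(\bx_i,\bx)$, kill all but the $k=0$ and $k=2$ Hermite terms against the degree-$2$ teacher by Lemma~\ref{lem:hermite}, evaluate the surviving $k=2$ term with Lemma~\ref{lem:quadratic_moments_eq2}, and bound the ninth-order Taylor remainder — is exactly the route the paper takes, and the identification of the exact $k=0$ and $k=2$ contributions as $\tfrac{b_{0,i}}{d}\Tr(\bSigma^2)$ and $\tfrac{2b_{2,i}}{d}\bx_i^\top\bSigma^3\bx_i$ is correct.

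However, there is a genuine gap in your error accounting, triggered by a misreading of \eqref{eq:v_02}. The paper defines $\bv^{(2)}=\tfrac{1}{d}\Tr(\bSigma^2)\,\bb_0+\tfrac{2a_2}{d}\bv_0^{(2)}$ \emph{with $\bb_0$, not $\tilde\bb_0$}: the $k=0$ contribution is matched exactly, and the only coefficient that needs replacing is $b_{2,i}\mapsto a_2$. You instead propose also replacing $b_{0,i}\mapsto\tilde b_{0,i}$ and controlling that error via Lemma~\ref{lemm:b_k,i}. This step is not just superfluous — it would break the bound. Lemma~\ref{lemm:b_k,i} only gives $\max_i|b_{0,i}-\tilde b_{0,i}|\lesssim d^{-2}$, and the prefactor $\Tr(\bSigma^2)/d\to\tau$ is order one; summing over $n\asymp d^2$ entries in Euclidean norm yields
\begin{align}
\frac{\Tr(\bSigma^2)}{d}\Bigl(\sum_{i=1}^n|b_{0,i}-\tilde b_{0,i}|^2\Bigr)^{1/2}
\lesssim 1\cdot\sqrt{n}\cdot d^{-2}\asymp d^{-1},
\end{align}
which exceeds the target $c/d^2$ by a full factor of $d$. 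So as written your proof would only establish $\|\bv-\bv^{(2)}_{\text{yours}}\|\lesssim d^{-1}$, which is too weak for the subsequent use in Lemma~\ref{lemma:diff_R_Rtilde_det}. The fix is simply to leave $\bb_0$ untouched, as the paper does.

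Separately, your exponent bookkeeping for the $b_{2,i}$-error is garbled: you invoke $\|(\bX\bSigma\bX^\top)^{\odot 2}\|\lesssim d^3$, which is a matrix operator-norm bound relevant to $\bV$ in Lemma~\ref{lemma:diff_v}, but irrelevant here since $\bv$ is a vector; the relevant quantity is the vector norm $\|\bv_0^{(2)}\|\lesssim\sqrt{n}\max_i\bx_i^\top\bSigma^3\bx_i\lesssim d^{2+1/15}$. Also, your claimed exponent $d^{-0.4}\cdot d^{-3.4+2}$ equals $d^{-1.8}$, which is \emph{not} $o(d^{-2})$; the correct chain $\tfrac{2}{d}\max_i|b_{2,i}-a_2|\cdot\|\bv_0^{(2)}\|\lesssim d^{-1}\cdot d^{-3.4}\cdot d^{2+1/15}\lesssim d^{-2.3}$ does work out, but you should rederive it rather than import numbers from the matrix case. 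The bound $\|\vmu\|\lesssim d^{1.6}$ you quote is also wrong: by \eqref{eq:vmu-concen_norm2} it is $\|\bar\vmu\|\lesssim d^{1.6}$ while $\|\vmu\|\lesssim d^2$ — though once the $b_{0,i}$ replacement is dropped this quantity does not appear. Your treatment of the ninth-order remainder is correct in spirit and, when done carefully, gives $O(d^{-3.5})$, comfortably smaller than $c/d^2$.
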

\begin{proof}
For any $i\in [n]$, by the definition of $f_*(\bx)$ and \eqref{eq:taylor_expansion}, we have
\begin{align}
\bv_i=\E_{\bx}[K(\bx,\bx_i)f_*(\bx)]
&=\sum_{k=0}^8b_{k,i}\E_{\bx}[ \bT_{i}^{(k)}f_*(\bx)]+\E_{\bx}\left[\frac{f^{(9)}(\zeta_{i})}{9!d^9}f_*(\bx)\langle \bx_i,\bx \rangle^9\right]\\
&=\frac{b_{0,i}}{d}\Tr(\bSigma^2) + \frac{2b_{2,i}}{d}\bx_i^\top\bSigma^3\bx_i+\E_{\bx}\left[\frac{f^{(9)}(\zeta_{i})}{9!d^9}f_*(\bx)\langle \bx_i,\bx \rangle^9\right]
\end{align}
where in the second line we applied Lemmas~\ref{lem:quadratic_moments_eq2} and~\ref{lem:hermite}. Notice that
\begin{align}\label{eq:wi_quard}
0<\bx_i^\top\bSigma^3\bx_i=\bw_i^\top\bSigma^{2} \bw_i\le \norm{\bw_i}^2\norm{\bSigma}^2\lesssim d^{1+\frac{1}{15}},
\end{align}with probability at least $1-d^{-1}$ for all $i\in[n]$,
where we applied \eqref{eq:wi}.
Therefore,
\begin{align}
    \|\bv-\bv^{(2)}\|\le ~&\frac{2}{d}\|\bv_0^{(2)}\|\cdot\max_{i\in [n]}|a_2-b_{2,i}|+\frac{C}{d^9}\E_{\bx}[\|(\bX\bx)^{\odot 9} f_*(\bx)\|]\\
    \lesssim~& \frac{1}{d^{4.4}}\|\bv_0^{(2)}\|+\frac{1}{d^9} \cdot \E[\|(\bX\bx)^{\odot 9}\|^2]^{1/2}\E[f_*(\bx)^2]^{1/2}\\
    \lesssim~& \frac{\sqrt{n}}{d^{4.4}}\max_{i\in [n]}\bx_i^\top\bSigma^3\bx_i+\frac{\sqrt{n}}{d^9} \max_{i\in [n]}\norm{\bw_i}^9\lesssim d^{-2.3}, 
\end{align} 
with probability at least $1-O(d^{-1})$, where we utilize \eqref{eq:wi_quard}, \eqref{eq:wi}, Lemma~\ref{lemm:b_k,i}, and the definition of $f_*$. This completes the proof of the lemma.
\end{proof}

\begin{lemma}\label{lemma:diff_R_Rtilde_det}
 Under the same assumptions as Theorem~\ref{thm:test_limit_deterministic}, for any $\lambda\ge 0$, we have that
   $ | \cR(\lambda)-\widetilde\cR(\lambda)|\lesssim d^{{-}\frac{1}{4}}$,
with probability  $1-O(d^{-1/48})$,  where $\cR(\lambda)$ is defined by \eqref{eq:decomp_test_3}.
\end{lemma}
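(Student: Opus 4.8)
The plan is to compare the two expansions \eqref{eq:decomp_test_3} and \eqref{eq:decomp_test_3_appro} term by term. The first term $\E_{\bx}[|f_*(\bx)|^2]$ is identical in $\cR(\lambda)$ and $\widetilde\cR(\lambda)$ and cancels, so it suffices to control the three differences
\begin{align}
    D_1 &:= \bf_*^\top(\bK+\lambda\bI)^{-1}(\bM-\bM^{(2)})(\bK+\lambda\bI)^{-1}\bf_*,\\
    D_2 &:= \sigma^2_{\bepsilon}\,\Tr\!\big((\bK+\lambda\bI)^{-1}(\bM-\bM^{(2)})(\bK+\lambda\bI)^{-1}\big),\\
    D_3 &:= (\bv-\bv^{(2)})^\top(\bK+\lambda\bI)^{-1}\bf_*,
\end{align}
since $|\cR(\lambda)-\widetilde\cR(\lambda)|\le |D_1|+|D_2|+2|D_3|$. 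I would work on the event $\mathcal G$ on which Lemma~\ref{lemma:diff_M_v}, Lemma~\ref{lemma:diff_v_det}, the resolvent bound \eqref{eq:K_lambda_-1} (via Lemma~\ref{lem:asym_sigma_min}), and the uniform concentration \eqref{eq:t_i_concen} of the $t_i$ all hold simultaneously; a union bound gives $\Prob{\mathcal G}\ge 1-O(d^{-1/48})$, matching the claimed probability.

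First I would record the a priori sizes of the factors that appear. Under the deterministic teacher, $\bf_*=\tfrac1d\vmu$ with $\vmu$ as in \eqref{eq:vmu}; since $t_i=\bx_i^\top\bSigma\bx_i$ concentrates around $\Tr(\bSigma^2)=O(d)$ uniformly in $i\in[n]$ by \eqref{eq:t_i_concen} and Assumption~\ref{assump:sigma}, we get $\|\bf_*\|^2=\tfrac1{d^2}\|\vmu\|^2\lesssim \tfrac1{d^2}\cdot n\cdot d^2=n=O(d^2)$, hence $\|\bf_*\|=O(d)$. On $\mathcal G$ we also have $\|(\bK+\lambda\bI)^{-1}\|\le 2/a_*$. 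Combining these with submultiplicativity of the operator norm, $\Tr(\bA)\le n\|\bA\|$, Lemma~\ref{lemma:diff_M_v} ($\|\bM-\bM^{(2)}\|\lesssim d^{-9/4}$) and Lemma~\ref{lemma:diff_v_det} ($\|\bv-\bv^{(2)}\|\lesssim d^{-2}$) yields
\begin{align}
    |D_1| &\le \|(\bK+\lambda\bI)^{-1}\|^2\,\|\bM-\bM^{(2)}\|\,\|\bf_*\|^2 \lesssim d^{-9/4}\cdot d^2 = d^{-1/4},\\
    |D_2| &\le \sigma^2_{\bepsilon}\, n\,\|(\bK+\lambda\bI)^{-1}\|^2\,\|\bM-\bM^{(2)}\| \lesssim d^{2}\cdot d^{-9/4} = d^{-1/4},\\
    |D_3| &\le \|\bv-\bv^{(2)}\|\,\|(\bK+\lambda\bI)^{-1}\|\,\|\bf_*\| \lesssim d^{-2}\cdot d = d^{-1}.
\end{align}
Adding these gives $|\cR(\lambda)-\widetilde\cR(\lambda)|\lesssim d^{-1/4}$ on $\mathcal G$, which is the assertion.

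The only point requiring care — and it is bookkeeping rather than a genuine obstacle — is that the bottleneck is $D_1$ and $D_2$, where the $O(d^{-9/4})$ gain from the kernel‑product approximation $\bM\approx\bM^{(2)}$ is almost entirely consumed by the $n\asymp d^2$ factor coming from $\|\bf_*\|^2$ and from $\Tr$. One must therefore rely on the exponent in Lemma~\ref{lemma:diff_M_v} being strictly better than $d^{-2}$ (indeed $-9/4<-2$) so that a positive power of $d$ survives in the error. All the substantive analysis — establishing the quadratic approximations $\bM\approx\bM^{(2)}$ and $\bv\approx\bv^{(2)}$ via the Hermite expansion of $f$ and the concentration estimates of Section~\ref{sec:appendix_prelim} — has already been carried out in Lemmas~\ref{lemma:diff_M_v} and~\ref{lemma:diff_v_det}, so the present lemma is just the propagation of those bounds through the resolvents, exactly as in the proof of Lemma~\ref{lemma:diff_R_Rtilde} for the random teacher.
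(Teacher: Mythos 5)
Your proposal is correct and takes essentially the same route as the paper: you decompose $\cR-\widetilde\cR$ into the same three terms, bound $\|\bf_*\|\lesssim d$ via concentration of $t_i=\bx_i^\top\bSigma\bx_i$, and then invoke Lemmas~\ref{lemma:diff_M_v}, \ref{lemma:diff_v_det} and the resolvent bound \eqref{eq:K_lambda_-1}, obtaining $d^{-1/4}$ from the $D_1$ and $D_2$ terms and a better $d^{-1}$ from $D_3$ exactly as the authors do. The only difference is a slightly more verbose discussion of why the $-9/4$ exponent beats the $n\asymp d^2$ factor; the mathematics is identical.
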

\begin{proof}
Since $\vf_*=\frac{1}{d}\vmu$, \eqref{eq:vmu-concen_norm} implies that $\norm{\vf_*}\lesssim d$ with probability at least $1-O(d^{-1})$. Then, applying Lemmas~\ref{lemma:diff_M_v} and~\ref{lemma:diff_v_det}, we can get
\begin{small}
\begin{align}
    \Big|\widetilde\cR(\lambda)- \cR(\lambda)\Big|\le~& \Big|\vf_*^\top \bK_{\lambda}^{-1}(\bM^{(2)}-\bM)\bK_{\lambda}^{-1} \vf_* \Big|
    +2\Big|\vf_*^\top\bK_{\lambda}^{-1}(\bv^{(2)}-\bv)\Big|\\
    ~& +\sigma^2_{\bepsilon}\Big|\Tr\bK_{\lambda}^{-1}(\bM^{(2)}-\bM)\bK_{\lambda}^{-1}\Big|\\
    \le ~& (n\sigma^2_{\bepsilon}+\|\vf_*\|^2)\|\bK_{\lambda}^{-1}\|^2\cdot \|\bM^{(2)}-\bM\|+2\|\vf_*\|\cdot\|\bK_{\lambda}^{-1}\| \cdot \|\bv^{(2)}-\bv\|  
    \lesssim  d^{\frac{-1}{4}},
\end{align}
\end{small}
with probability at least $1-O(d^{-1/48})$, where in the last line, we also utilize \eqref{eq:K_lambda_-1}.
\end{proof}

Notice that $\widetilde\cR(\lambda)$ defined in \eqref{eq:decomp_test_3_appro} can be further decomposed by 
\begin{equation}\label{eq:decomp_test_3_appro_decomp}
    \widetilde\cR(\lambda) = \sigma_\varepsilon^2\cV+\cR_1+\cR_2 +\cR_{\mix},
\end{equation}  
where $\cV$ is defined in Lemma~\ref{lemm:risk_decomp}, and we redefine the terms:
\begin{align}
\cR_1:=~& \big(d^{-1}\Tr(\bSigma^2)-(a_2\vmu+a_0\1)^\top\bK_{\lambda}^{-1} \vf_*)^2\label{eq:bias1_deterministic}\\
\cR_2:=~& \frac{2}{d^2}\Tr( \bSigma^4) + 2a_2^2\vf_*^\top \bK_{\lambda}^{-1}(\bX\bSigma\bX^\top)^{\odot 2}\bK_{\lambda}^{-1} \vf_*-\frac{4a_2}{d}\bv_0^{(2)}\bK_{\lambda}^{-1} \vf_* \label{eq:bias_deterministic}\\
\cR_{\mix}:=~&   \vf_*^\top \bK_{\lambda}^{-1}(\bb_0\bb_0^\top-\widetilde\bb\widetilde\bb_0^\top)\bK_{\lambda}^{-1} \vf_*\\
~&+\vf_*^\top \bK_{\lambda}^{-1}\diag(\bb_1-\widetilde\bb_1)\bX\bSigma\bX^\top\diag(\bb_1-\widetilde\bb_1)\bK_{\lambda}^{-1} \vf_* -2\frac{\Tr( \bSigma^2)}{d}(\bb_0-\widetilde\bb )^\top\bK_{\lambda}^{-1} \vf_*\\
~&+\sigma^2_{\bepsilon}\Tr\bK_{\lambda}^{-1}\bb_0\bb_0^\top\bK_{\lambda}^{-1}+\sigma^2_{\bepsilon}\Tr\bK_{\lambda}^{-1}\diag(\bb_1-\widetilde\bb_1)\bX\bSigma\bX^\top\diag(\bb_1-\widetilde\bb_1)\bK_{\lambda}^{-1}.\label{eq:bias2_deterministic}
\end{align}
Here, we denote \begin{align}\label{eq:def_tilde_b}
\widetilde\bb:=a_2\vmu+a_0\1,
\end{align}
and $ \bb_0, \bb_1,$  $\widetilde\bb_1$ are defined in Lemma~\ref{lemm:b_k,i}.
The analysis of $\cV$ is the same as the proof of Theorem~\ref{thm:test_limit}. Now recall some notations introduced in Section~\ref{sec:resolvent_calc}. We denote by
\begin{align}
    \bU=~&[\1,\vmu]\in\R^{n\times 2}\label{def:U}\\
    \bD:=~& \begin{pmatrix}
        a_0-a_2 \Tr(\bSigma^2) & a_2 \\
        a_2 & 0
    \end{pmatrix}\label{def:D}
\end{align}
Then, we have 
$
    \bK_{\lambda } = \bU\bD\bU^\top+\bK_*$,
where $\bK_*$ satisfies
\begin{equation}\label{eq:bound_K_*}
    c\bI \preccurlyeq\bK_*\preccurlyeq Cd^{\frac{1}{6}}\bI,
\end{equation}
with probability at least $1-O(d^{-\frac{1}{48}})$, for some constants $c,C>0$. This is based on Theorem~\ref{thm:concentration} and Lemma~\ref{lem:bound_XX_2}. Then, applying the Sherman-Morrison-Woodbury formula again, we can derive that
\begin{align}
    \bU^\top\bK_{\lambda }^{-1}\bU  =  ~& \bU^\top\bK_*^{-1}\bU-  \bU^\top\bK_*^{-1}\bU(\bD^{-1}+\bU^\top\bK_*\bU)^{-1}\bU^\top\bK_*^{-1}\bU \\
    =~&(\bI-  \bU^\top\bK_*^{-1}\bU(\bD^{-1}+\bU^\top\bK_*\bU)^{-1})\bU^\top\bK_*^{-1}\bU \\
    =~& \bD^{-1}(\bD^{-1}+\bU^\top\bK_*\bU)^{-1}\bU^\top\bK_*^{-1}\bU\\
  =  ~& \bD^{-1}-\bD^{-1}(\bD^{-1}+\bU^\top\bK_*\bU)^{-1}\bD^{-1}\\
    =~& \bD^{-1}-(\bD+\bD\bU^\top\bK_*\bU\bD)^{-1}.\label{eq:UKU_decomp}
\end{align}
\begin{lemma}\label{lemm:bias1_deterministic}
    Under the assumptions of Theorem~\ref{thm:test_limit_deterministic}, we have
    $|\cR_1|\lesssim d^{-0.4}$,
    with probability at least $1-O(d^{-\frac{1}{48}})$,
    where $\cR_1$ is defined in \eqref{eq:bias1_deterministic}.
\end{lemma}
\begin{proof}
Recall that $ \vmu =d\cdot \vf_* = [\bx_1^\top\bSigma\bx_1,\ldots,\bx_n^\top\bSigma\bx_n]^\top$. Then $\E[ \vmu ]=\Tr( \bSigma^2)\1$. 
 Define $\bar \vmu:= \vmu-\Tr(\bSigma^2)\ones$. Thus, \eqref{eq:vmu-concen_norm} indicates that
 \begin{align}\label{eq:vmu-concen_norm2}
     \|\bar \vmu\|\lesssim d^{1.6},\quad \| \vmu\|\lesssim d^{2},
 \end{align}with probability at least $1-d^{-1}$.
Recall the definitions of $\bU$ and $\bD$ in \eqref{def:U} and \eqref{def:D}. From the definition of $\cR_1$,  we can simplify it as 
\begin{align}
    \cR_1=~&\frac{1}{d^2}\big( \Tr(\bSigma^2)-(a_2\vmu+a_0\1)^\top\bK_{\lambda}^{-1} \vmu)^2 \\
    =~& \frac{1}{d^2}\left(\Tr(\bSigma^2)- \begin{pmatrix}
        \frac{a_0}{\sqrt{a_2}}& \sqrt{a_2}
    \end{pmatrix}\bU^\top\bK_{\lambda }^{-1}\bU\begin{pmatrix}
        0\\ \sqrt{a_2}
    \end{pmatrix} \right)^2.
\end{align}
Then, applying \eqref{eq:UKU_decomp}, we can get
\begin{align}
   & \Tr(\bSigma^2)-  \begin{pmatrix}
      \frac{a_0}{\sqrt{a_2}}& \sqrt{a_2}
    \end{pmatrix}\bU^\top\bK_{\lambda }^{-1}\bU\begin{pmatrix}
        0\\ \sqrt{a_2}
    \end{pmatrix}
    =\begin{pmatrix}
      \frac{a_0}{\sqrt{a_2}}& \sqrt{a_2}
    \end{pmatrix}(\bD+\bD\bU^\top\bK_*\bU\bD)^{-1}\begin{pmatrix}
        0\\ \sqrt{a_2}
    \end{pmatrix},
\end{align}
where we employ the identity:
$
    \begin{pmatrix}
        \frac{a_0}{\sqrt{a_2}}& \sqrt{a_2}
    \end{pmatrix}\bD^{-1} \begin{pmatrix}
        0\\ \sqrt{a_2}
    \end{pmatrix}=\Tr(\bSigma^2)$.
Moreover, by calculation of the inverse of the $2\times 2$ matrix, we know that
\begin{small}
\begin{align}
    &\begin{pmatrix}
      \frac{a_0}{\sqrt{a_2}}& \sqrt{a_2}
    \end{pmatrix}(\bD+\bD\bU^\top\bK_*\bU\bD)^{-1}\begin{pmatrix}
        0\\ \sqrt{a_2}
    \end{pmatrix}=\\
    ~& \frac{(a_0-a_2\Tr(\bSigma^2))(\ones^\top\bK_*^{-1}\bar\vmu)+a_2\vmu^\top\bK_*^{-1}\vmu-a_2\Tr(\bSigma^2)\vmu^\top\bK_*^{-1}\ones }{-1-a_0\ones^\top\bK_*^{-1}\ones +2a_2\ones^\top \bK_*^{-1}\bar\vmu-a_2\Tr(\bSigma^2)\ones^\top\bK_*^{-1}\ones +a_2^2(\vmu^\top\bK_*^{-1}\vmu\cdot \ones^\top\bK_*^{-1}\ones -(\ones^\top\bK_*^{-1}\vmu)^2)}.
\end{align}
\end{small}
Then, we control each term in the above fraction. For the numerator, by \eqref{eq:vmu-concen_norm2}, we have
\begin{align}\label{eq:numerator}
    \big|(a_0-a_2\Tr(\bSigma^2))(\ones^\top\bK_*^{-1}\bar\vmu)+a_2\vmu^\top\bK_*^{-1}\vmu-a_2\Tr(\bSigma^2)\vmu^\top\bK_*^{-1}\ones \big|\lesssim d^{2.6}
\end{align}
with probability at least $1-d^{-1}$. For the denominator, from \eqref{eq:bound_K_*}, we can easily see that
\begin{align}\label{eq:denominator1}
     O(d^{\frac{11}{6}})=nd^{-\frac{1}{6}}\lesssim a_0\ones^\top\bK_*^{-1}\ones\lesssim d^2,
\end{align} with high probability. Meanwhile, by \eqref{eq:bound_K_*} and \eqref{eq:vmu-concen_norm2}, 
\begin{align}\label{eq:denominator2}
    a_2|\ones^\top \bK_*^{-1}\bar\vmu|\lesssim d^{0.6},\quad a_2\Tr(\bSigma^2)\ones^\top\bK_*^{-1}\ones \lesssim d
\end{align}with high probability. Lastly, \eqref{eq:bound_K_*} and \eqref{eq:vmu-concen_norm2} also indicate that
\begin{align}
    &a_2^2(\vmu^\top\bK_*^{-1}\vmu\cdot \ones^\top\bK_*^{-1}\ones -(\ones^\top\bK_*^{-1}\vmu)(\ones^\top\bK_*^{-1}\vmu))\\
    =~& a_2^2(\vmu^\top\bK_*^{-1}\vmu\cdot \ones^\top\bK_*^{-1}\ones -(\ones^\top\bK_*^{-1}\bar\vmu+\Tr(\bSigma^2)\cdot\ones^\top\bK_*^{-1}\ones)(\ones^\top\bK_*^{-1}\vmu))\\
    =~& a_2^2(\bar\vmu^\top\bK_*^{-1}\vmu\cdot \ones^\top\bK_*^{-1}\ones -(\ones^\top\bK_*^{-1}\bar\vmu)(\ones^\top\bK_*^{-1}\vmu))=O(d^{1.6}) \label{eq:denominator3}
\end{align}with high probability. Combining \eqref{eq:denominator1}, \eqref{eq:denominator2}, and \eqref{eq:denominator3}, we can get 
\begin{small}
  \begin{equation}
   \big| -1-a_0\ones^\top\bK_*^{-1}\ones +2a_2\ones^\top \bK_*^{-1}\bar\vmu-a_2\Tr(\bSigma^2)\ones^\top\bK_*^{-1}\ones +a_2^2(\vmu^\top\bK_*^{-1}\vmu\cdot \ones^\top\bK_*^{-1}\ones -(\ones^\top\bK_*^{-1}\vmu)^2)\big|\ge d^{\frac{11}{6}}.
\end{equation}  
\end{small}
Therefore, with \eqref{eq:numerator}, we can conclude this lemma.
\end{proof}

\begin{lemma}\label{lemm:bias2_deterministic}
    Under the assumptions of Theorem~\ref{thm:test_limit_deterministic}, we have $|\cR_2|\lesssim d^{-1/2}$,
    with probability at least $1-O(d^{-1/2})$,
    where $\cR_2$ is defined in \eqref{eq:bias_deterministic}.
\end{lemma}
\begin{proof}
    By the assumption of $\bSigma$, we know that $|\Tr[\bSigma^4]|\lesssim d$ and $ \vmu:=d\vf_*$. Then for the second term in $\cR_2$, we have
    \begin{align}
        a_2^2\vf_*^\top \bK_{\lambda}^{-1}(\bX\bSigma\bX^\top)^{\odot 2}\bK_{\lambda}^{-1} \vf_*\lesssim~&  \frac{1}{d^4} \vmu^\top\bK_{\lambda}^{-1}a_2(\bX\bSigma\bX^\top)^{\odot 2}\bK_{\lambda}^{-1}  \vmu
        \lesssim~ \frac{1}{d^4} \vmu^\top\bK_{\lambda}^{-1} \vmu \lesssim \frac{1}{d} 
    \end{align}
    with probability at least $1-O(d^{-\frac{1}{2}})$, where we employ Lemmas~\ref{lemm:inverseK_mix_bound} and~\ref{lemm:uKu}. 
    Lastly, in the third term of $\cR_2$, by the definition of $\bv_0^{(2)}$ in \eqref{eq:v_02}, with a slight modification of Lemma~\ref{lemm:uKu}, we can derive $
        \frac{4a_2}{d}|\bv_0^{(2)}\bK_{\lambda}^{-1} \vf_*|\lesssim \frac{1}{d^4}|\bv_0^{(2)}\bK_{\lambda}^{-1} \vmu |\lesssim \frac{1}{d}$
 with probability at least $1-O(d^{-\frac{1}{2}})$.
\end{proof}

\begin{lemma}\label{lemm:bias3_deterministic}
Under the assumptions of Theorem~\ref{thm:test_limit_deterministic}, we have
    $|\cR_{\mix}|\lesssim d^{-0.3}$,
    with probability at least $1-O(d^{-\frac{1}{48}})$,
    where 
  $\cR_{\mix}$ is defined  by \eqref{eq:bias2_deterministic}.
\end{lemma}
\begin{proof}
We control the terms in \eqref{eq:bias2_deterministic}, respectively. Firstly, recall $\widetilde\bb:=a_2\vmu+a_0\1$ from \eqref{eq:def_tilde_b} and $\bb_0$ from Lemma~\ref{eq:b0i}. Then, for any $i\in [n]$, the $i$-th entry 
\begin{align}
   ( \bb_0-\widetilde\bb)_i= \frac{f^{(4)}(0)}{8d^4}(t_i-\Tr(\bSigma^2))^2+\frac{15t_i^3f^{(6)}(0)}{6!d^6}.
\end{align}Therefore, by \eqref{eq:t_i_concen}, we know that
$\| \bb_0-\widetilde\bb\|\lesssim d^{-1.9}$, with probability at least $1-O(d^{-1})$. Hence, by \eqref{eq:vmu-concen_norm2} and \eqref{eq:K_lambda_-1}, we have 
  $$  |\vf_*^\top \bK_{\lambda}^{-1}(\bb_0-\widetilde\bb)|\lesssim \frac{1}{d}\norm{\vmu}\cdot\norm{\bb_0-\widetilde\bb}\lesssim d^{-0.9}.$$
Moreover, Lemma~\ref{lemm:uKu} verifies that with probability at least $1-O(d^{-1/2})$,
 $$ |\vf_*^\top \bK_{\lambda}^{-1} \widetilde\bb |\lesssim \frac{1}{d^3}\vmu^\top \bK_{\lambda}^{-1}\vmu+\frac{1}{d}|\ones^\top \bK_{\lambda}^{-1} \vmu|\lesssim  d^{0.6}.$$
Thus, combining all the above, we have with probability at least $1-O(d^{-1/2})$,
\begin{align}
   \big|\vf_*^\top \bK_{\lambda}^{-1}(\bb_0\bb_0^\top-\widetilde\bb\widetilde\bb_0^\top)\bK_{\lambda}^{-1} \vf_*\big|
   \le& |\vf_*^\top \bK_{\lambda}^{-1}(\bb_0-\widetilde\bb)|^2+|\vf_*^\top \bK_{\lambda}^{-1}(\bb_0-\widetilde\bb)||\vf_*^\top \bK_{\lambda}^{-1} \widetilde\bb |\lesssim d^{-0.3}.
\end{align} 
Similarly, we can verify
$\big|\frac{\Tr( \bSigma^2)}{d}(\bb_0-\widetilde\bb )^\top\bK_{\lambda}^{-1} \vf_*\big|\lesssim d^{-0.9}$.
Next, by \eqref{eq:K_lambda_-1}, Lemmas~\ref{lem:bound_XX_2}, ~\ref{lemm:b_k,i} and~\ref{lemm:uKu}, we have with probability at least $1-O(d^{-\frac{1}{48}})$,
\begin{align}
    &\vf_*^\top \bK_{\lambda}^{-1}\diag(\bb_1-\widetilde\bb_1)\bX\bSigma\bX^\top\diag(\bb_1-\widetilde\bb_1)\bK_{\lambda}^{-1} \vf_*\\
    \le~& \frac{1}{d^2}\vmu^\top\bK_{\lambda}^{-1}\vmu\cdot\max_{i\in[n]}|b_{1,i}-\widetilde b_{1,i}|^2\cdot \|\bX\bSigma\bX^\top\|\lesssim d^{-2}.
\end{align}
Moreover, Lemma~\ref{lemm:b0Kb0} shows that $\Tr\bK_{\lambda}^{-1}\bb_0\bb_0^\top\bK_{\lambda}^{-1} = \bb_0^\top\bK_{\lambda}^{-2}\bb_0\lesssim d^{-0.8}$ with probability $1-O(d^{-\frac{1}{48}})$. Lastly, by \eqref{eq:K_lambda_-1}, Lemmas~\ref{lem:bound_XX_2}, ~\ref{lemm:b_k,i} and~\ref{lemm:uKu},  with probability $1-O(d^{-\frac{1}{48}})$,
\begin{align} 
 &\Tr\bK_{\lambda}^{-1}\diag(\bb_1-\widetilde\bb_1)\bX\bSigma\bX^\top\diag(\bb_1-\widetilde\bb_1)\bK_{\lambda}^{-1}\\
 \le ~& \sqrt{d} \|\bK_{\lambda}^{-1}\|^2\cdot \|\bX\bSigma\bX^\top\|\cdot \max_{i\in[n]}|b_{1,i}-\widetilde b_{1,i}|^2\lesssim d^{-3}.
\end{align}
\end{proof}

\begin{proofoftheorem}{\ref{thm:test_limit_deterministic}}
Combining Lemmas~\ref{lemma:diff_R_Rtilde_det}, \ref{lemm:bias1_deterministic}, \ref{lemm:bias2_deterministic}, and \ref{lemm:bias3_deterministic},  we can obtain that
$|\cR(\lambda)-\sigma_\varepsilon^2\cV|\lesssim d^{-1/4}$,
with probability at least $1-O(d^{-1/48})$ for any $\lambda\ge 0$.
Here we utilized the decomposition of $\widetilde\cR(\lambda)$ in \eqref{eq:decomp_test_3_appro_decomp}. Hence, it suffices to analyze the limit of the variance term $\cV$ defined in Lemma~\ref{lemm:risk_decomp}. Because of Lemma~\ref{lemm:approx_variance} and the approximation of $\cV_0$ in \eqref{eq:cV_0}, we can copy the analysis of $\cV_0$ in the proof of Theorem~\ref{thm:test_limit} to conclude that 
$|\cR(\lambda)-\sigma_{\varepsilon}^2\cV(\lambda_*)|\to 0$,
in probability, as $d\to\infty$ and $d^2/(2n)\to\alpha$, for any $\lambda\ge 0$, where $\cV(\lambda_*) $ is defined in \eqref{eq:variance_limit}. This completes the proof of 
Theorem~\ref{thm:test_limit_deterministic}.  
\end{proofoftheorem}

\subsection{Proof of Corollary~\ref{cor:gcv}}
    Based on the proof of Theorem~\ref{thm:train_limit} and Theorem~\ref{thm:globallaw}, we have
    \begin{align}
      \frac{1}{n}\by^\top (\bK+\lambda \bI)^{-2} \by&\to \lambda^2\int \frac{\frac{1}{\alpha} x+\sigma_{\bepsilon}^2}{\left(\frac{f''(0)}{4\alpha}x+a_*+\lambda\right)^2}~d\mu_{\alpha,\bSigma^{(2)}}(x),\\
        \frac{1}{n}\Tr ((\bK+\lambda \bI)^{-1}) &\to \left(\frac{4\alpha}{f''(0)}\right) \cdot \int \frac{1}{(x+\frac{4\alpha}{f''(0)}(a_*+\lambda))}d\mu_{\bSigma^{(2)}}( x),
    \end{align}
    in probability. For simplicity, we denote $A=\frac{4\alpha}{f''(0)}$ and $z = -A(a_*+\lambda)$. Let the Stieltjes transform of $\mu_{\bSigma^{(2)}}$ be $m(z)$. Then, we have
    \begin{align}
        \mathrm{GCV_{\lambda}(\bK,\by)}&\to \frac{A}{\alpha}\left(-z\frac{m'(-z)}{m(-z)}+1+\frac{1}{2z}-\frac{1}{2zm(-z)}\right)+\sigma_{\bepsilon}^2A \frac{m'(-z)}{m(-z)}.\label{eq:gcv_limit}
    \end{align} in probability as $n\to\infty$. Recall the companion Stieltjes transform $\widetilde m(z)$ for $m(z)$ defined in Definition~\ref{def:free_convolution} and the relation between $m(z)$ and $\widetilde m(z)$: $\widetilde m(z):=\alpha m(z)+(1-\alpha)(-1/z).$
Then we can rewrite \eqref{eq:gcv_limit} in terms of $\widetilde m(z)$.
Then, we can apply (4) and Lemma 2.2 by \cite{dobriban2018high}, and the proof of Theorem~\ref{thm:test_limit} to conclude the proof.


\bibliography{aux/ref}

\end{document}